\documentclass[12pt]{article}

\usepackage[utf8]{inputenc} % allow utf-8 input
\usepackage[T1]{fontenc}    % use 8-bit T1 fonts
\usepackage{hyperref}       % hyperlinks
\usepackage{url}            % simple URL typesetting
\usepackage{booktabs}       % professional-quality tables
\usepackage{amsfonts}       % blackboard math symbols
\usepackage{nicefrac}       % compact symbols for 1/2, etc.
\usepackage{microtype}      % microtypography
\usepackage[pdftex]{graphicx}  
\usepackage{float}
\usepackage{amsthm}
\usepackage{mathtools}
\usepackage{amsfonts}
\usepackage{amssymb}
\floatstyle{ruled}
\usepackage{xcolor}
\usepackage{bbm}

\usepackage{adjustbox}
\usepackage{wrapfig}
\usepackage{caption}
\usepackage{subcaption}
\usepackage{cprotect}
\usepackage{etoc}
\usepackage{titlesec}
\usepackage{enumitem}
%\usepackage{sectsty}
%\sectionfont{\centering}

\usepackage{tikz}
\usetikzlibrary{arrows,automata, calc, positioning}
\usepackage{adjustbox}
\usepackage{wrapfig}

\pdfminorversion=4

\interfootnotelinepenalty=10000 

\newcommand{\indep}{\raisebox{0.05em}{\rotatebox[origin=c]{90}{$\models$}}}

\theoremstyle{definition}\newtheorem{theorem}{Theorem}
\newtheorem{assumption}{Assumption}
\newtheorem{algorithm}{Algorithm}
\newtheorem{proposition}{Proposition}
\newtheorem{definition}{Definition}
\newtheorem{remark}{Remark}
\newtheorem{corollary}{Corollary}
\newtheorem{lemma}{Lemma}

\DeclareMathOperator*{\argmin}{\arg\!\min}

%\pdfminorversion=4
% NOTE: To produce blinded version, replace "0" with "1" below.

% DON'T change margins - should be 1 inch all around.
\addtolength{\oddsidemargin}{-.5in}%
\addtolength{\evensidemargin}{-1in}%
\addtolength{\textwidth}{1in}%
\addtolength{\textheight}{1.7in}%
\addtolength{\topmargin}{-1in}%

\begin{document}

\def\spacingset#1{\renewcommand{\baselinestretch}%
{#1}\small\normalsize} \spacingset{1}

%%%%%%%%%%%%%%%%%%%%%%%%%%%%%%%%%%%%%%%%%%%%%%%%%%%%%%%%%%%%%%%%%%%%%%%%%%%%%%

  \title{\bf Kernel Methods for Unobserved Confounding: Negative Controls, Proxies, and Instruments}
  \author{Rahul Singh\thanks{
  Rahul Singh is a PhD Candidate in Economics and Statistics, MIT Department of Economics, Cambridge MA 02143 (e-mail: \url{rahul.singh@mit.edu}). I am particularly grateful to Olivia Foley for sharing medical expertise that is critical for the empirical application. I thank Alberto Abadie, Xiaohong Chen, Victor Chernozhukov, Ben Deaner, Anna Mikusheva, Whitney Newey, and Vasilis Syrgkanis for helpful comments. I thank Douglas Almond, Kenneth Chay, and David Lee for building the data set, as well as Matias Cattaneo for sharing it with their permission.  I am grateful to the Jerry A Hausman Graduate Dissertation Fellowship for financial support.
  }\hspace{.2cm}\\
    MIT Economics}
    \date{Original draft: December 18, 2020. This draft: March 23, 2023.}
  \maketitle

\bigskip
\begin{abstract}
Negative control is a strategy for learning the causal relationship between treatment and outcome in the presence of unmeasured confounding. The treatment effect can nonetheless be identified if two auxiliary variables are available: a negative control treatment (which has no effect on the actual outcome), and a negative control outcome (which is not affected by the actual treatment). These auxiliary variables can also be viewed as proxies for a traditional set of control variables, and they bear resemblance to instrumental variables. I propose a family of algorithms based on kernel ridge regression for learning nonparametric treatment effects with negative controls. Examples include dose response curves, dose response curves with distribution shift, and heterogeneous treatment effects. Data may be discrete or continuous, and low, high, or infinite dimensional. I prove uniform consistency and provide finite sample rates of convergence. I estimate the dose response curve of cigarette smoking on infant birth weight adjusting for unobserved confounding due to household income, using a data set of singleton births in the state of Pennsylvania between 1989 and 1991.
\end{abstract}

\noindent%
{\it Keywords:} potential outcome, reproducing kernel Hilbert space, dose response
\vfill

\newpage

\spacingset{1.5}
\section{Introduction}\label{section:intro}

%\subsection{Motivation}

\textit{Selection on observables} is the popular assumption in causal inference that the assignment of treatment $D$ is as good as random after conditioning on covariates $X$. It is a strong causal assumption which is often violated even in laboratory settings. Negative controls, widely used in laboratory science, guard against unobserved confounding. The idea is to check for spurious relationships that would only be nonzero in the presence of an unobserved confounder $U$--an approach sometimes called \textit{falsification} or \textit{specificity} testing. Consider two auxiliary variables: a negative control treatment $Z$ (which a priori has no effect on the actual outcome $Y$), and a negative control outcome $W$ (which a priori is not affected by the actual treatment $D$). \cite{miao2018confounding} and \cite{deaner2018nonparametric} carefully formalize a learning problem in which negative controls $(Z,W)$ can not only check for the presence of unobserved confounding $U$, but also recover the causal relationship of interest.

As a concrete example, consider the empirical strategy of \cite{lousdal2020negative}. The goal is to measure the effect of mammography screening $D$ on death from breast cancer $Y$. The set of covariates $X$ includes marriage status, number of children, age at first birth, years of education, annual income, and hormone drug use. The authors used negative controls to document that, even after taking into account the covariates $X$, unobserved confounding $U$ drives spurious correlations. Specifically, dental care participation $Z$ decreases the likelihood of death from breast cancer $Y$ in the dataset. Mammography screening $D$ decreases the likelihood of death from other causes $W$ in the dataset. The authors conclude that unobserved confounding contaminates treatment effect estimation in this setting.

In the present work, I propose a family of nonparametric algorithms based on kernel ridge regression that use negative controls to not only detect but also adjust for unobserved confounding. I consider treatment effects of the population, of subpopulations, and of alternative populations with alternative covariate distributions. Moreover, I allow for treatments, covariates, and negative controls that may be discrete or continuous, and low, high, or infinite dimensional.  Due to the intuitive nature of negative controls, I use such terminology throughout the paper. In recent work, \cite{tchetgen2020introduction} refer to negative controls as proxy variables in order to emphasize that they may arise in not only experimental but also observational settings. Due to the formal resemblance between negative controls and instrumental variables, the new statistical results I provide also apply to nonparametric instrumental variable regression (NPIV). Altogether, I provide conceptual, algorithmic, and statistical contributions.

%\subsection{Contribution}

{\bf Conceptual.} I unify a variety of learning problems with unobserved confounding into one general nonparametric learning problem. In semiparametric causal inference, treatment $D$ is restricted to be binary. I consider nonparametric causal inference, allowing the treatment $D$ to be binary, discrete, or continuous. It appears that this is the first work on \textcolor{black}{conditional} dose response curves and heterogeneous treatment effects using negative controls. \textcolor{black}{See Section~\ref{sec:related} for a discussion of related work on related estimands, e.g. \cite{mastouri2021proximal,kallus2021causal,ghassami2021minimax}}. I provide a template for future epidemiology research to estimate dose response curves and heterogeneous treatment effects from medical records despite unobserved confounding.

{\bf Algorithmic.} 
I propose a family of novel estimators with closed form solutions that are straightforward to implement by matrix operations. To do so, I assume that the true causal relationship is a function in a reproducing kernel Hilbert space (RKHS), which is a popular nonparametric setting in machine learning. The hyperparameters are ridge regression penalties and kernel hyperparameters. For the former, I derive the closed form solution for leave-one-out cross validation. The latter have well known heuristics. I evaluate the estimators in simulations against alternative estimators that ignore unobserved confounding.

{\bf Statistical.} I prove uniform ($\sup$ norm) consistency with finite sample rates. A uniform guarantee encodes caution about worst case scenarios when informing policy decisions. The finite sample rates of convergence do not directly depend on the data dimension but rather the smoothness of the true causal relationship. An important intermediate result is finite sample analysis of NPIV in $\sup$ norm. Of independent interest, I relate assumptions required in ill posed inverse problems--existence and completeness--to the RKHS setting. This characterization appears to be absent from previous work on NPIV in the RKHS.

To illustrate how the proposed estimators are useful, I conduct a case study. Estimating the effect of cigarette smoking on infant birth weight is challenging for several reasons. First, pregnant women are classified as a vulnerable population, so they are typically excluded from clinical trials; observational data are the only option. Second, pregnancy induces many physiological changes, so medical knowledge predicts different dose response curves for women who are pregnant compared to women who are not pregnant. Third, medical records exclude an \textit{unobserved confounder} known to be crucial for maternal-fetal health: household income. I argue that medical records include variables that satisfy the properties of \textit{negative controls} for unobserved income, and discuss what issues may arise if there are additional unobserved confounders. I provide preliminary results and outline directions for future work on this important topic.

%I estimate the dose response curve using a data set of singleton births in the state of Pennsylvania between 1989 and 1991.

The structure of the paper is as follows. Section~\ref{sec:related} describes related work. Section~\ref{sec:problem} formalizes the learning problem. Section~\ref{section:algorithm} proposes the new algorithms. Section~\ref{sec:consistency} proves uniform consistency. Section~\ref{section:experiments} conducts simulation experiments and estimates the dose response curve of cigarette smoking on infant birth weight, adjusting for unobserved confounding due to household income. Section~\ref{sec:conclusion} concludes.
\section{Related work}\label{sec:related}

I view dose response curves and heterogeneous treatment effects as reweightings of a structural function defined by an ill posed inverse problem. As such, I extend the \textit{partial means} framework \cite{newey1994kernel}. Existing work on partial means considers consumer surplus \cite{newey1994kernel} and certain causal parameters \cite{singh2020kernel} to be reweightings of a \textit{regression function}. By contrast, I consider causal parameters that are reweightings of a \textit{structural function}. My uniform analysis therefore generalizes uniform analysis in previous work. To express causal parameters in this way, I generalize identification theorems for treatment effects that use negative controls \cite{miao2018identifying,miao2018confounding,deaner2018nonparametric,tchetgen2020introduction}. 

Early work on negative controls emphasized their role in \textit{detection} of unobserved confounding. As early as the 1950s, epidemiologists proposed the principle of causal specificity as a diagnostic tool \cite{berkson1958smoking,yerushalmy1959methodology,ab1965environment}. Subsequent work formalized these concepts \cite{rosenbaum1989role,weiss2002can,lipsitch2010negative}. A more recent literature emphasizes the role of negative controls in \textit{adjustment} for unobserved confounding. Many papers eliminate
the bias from unobserved confounding by imposing additional structure: linearity and normality \cite{gagnon2012using,wang2017confounder}; joint normality \cite{kuroki2014measurement}; rank preservation of individual potential outcomes \cite{tchetgen2014control}; or monotonicity of confounding effects \cite{sofer2016negative}. I generalize identification results that relax such additional structure.

In econometrics, closely related strategies adjust for unobserved confounding in dynamic settings: \textit{difference-in-difference} \cite{card1990impact,meyer1995natural,abadie2005semiparametric}, and \textit{panel proxy control} \cite{deaner2018nonparametric}. Traditional difference-in-difference analysis requires strong assumptions such as linearity and additive separability of confounding. \cite{athey2006identification} present a more general approach, called \textit{changes-in-changes}, articulated in terms of a nonseparable, nonlinear structural model. A key assumption is monotonicity of confounding effects. Importantly, the model I present allows nonlinearity \textit{without} requiring monotonicity of confounding effects. The panel proxy control approach is also articulated in terms of a nonseparable, nonlinear structural model, and its static special case closely resembles the negative control model. \cite{deaner2018nonparametric} presents a series estimator as well as innovative strategies to handle ill posedness and completeness for both static and dynamic settings. It is straightforward to use the techniques developed in this paper to derive an RKHS estimator for the panel proxy setting. See \cite{sofer2016negative} and \cite{deaner2018nonparametric} for explicit comparisons of negative control with difference-in-difference and panel proxy control, respectively.

As previewed above, the causal parameters studied in this work are reweightings of a structural function called a confounding bridge, which closely resembles a nonparametric instrumental variable regression (NPIV). NPIV has a rich literature, including the seminal works of \cite{newey2003instrumental,hall2005nonparametric,blundell2007semi,darolles2011nonparametric,chen2011rate,chen2012estimation}, among others. The kernel ridge regression approach in this work employs RKHS-norm Tikhonov regularization over an infinite dimensional RKHS with a low effective dimension. See e.g. \cite{darolles2011nonparametric,hall2005nonparametric,horowitz2005nonparametric,carrasco2007linear, chen2012estimation}, and references therein, for a rich variety NPIV estimators that employ various types of Tikhonov regularizations over various infinite dimensional function spaces. In Appendix~\ref{sec:source}, I compare my approximation assumptions to the approximation assumptions employed in this literature, building on the discussion of \cite{chen2011rate}.

I contribute to a growing literature that adapts RKHS methods to treatment effect estimation. \cite{nie2017quasi} propose an RKHS estimator of heterogeneous treatment effects under selection on observables and prove mean square error rates. I pursue a more general definition of heterogeneous treatment effects conditional on some interpretable subvector $V\subset X$ \cite{abrevaya2015estimating} and allow for unobserved confounding. \cite{singh2019kernel} present an RKHS approach for nonparametric instrumental variable regression and prove projected mean square error rates in the sense of \cite{ai2003efficient}. \cite{singh2020kernel} present an RKHS approach for treatment effects identified by selection on observables and prove uniform rates. I unify the RKHS constructions in both works in order to handle both ill posedness and reweighting. My work is complementary in that I consider a new causal setting. My uniform analysis applies to not only negative control treatment effects but also nonparametric instrumental variable regression, providing alternative results under the same assumptions as \cite{singh2019kernel}. I build on fundamental statistical contributions from \cite{smale2005shannon,smale2007learning,fischer2017sobolev}.

\textcolor{black}{%
This draft subsumes \cite{singh2020kernel_original}. 
Several other works have proposed alternative RKHS estimators for the negative control setting. Independently and contemporaneously to \cite{singh2020kernel_original},
\cite{mastouri2021proximal} propose estimators for the dose response curve. \cite{mastouri2021proximal} formulate a kernel two stage regression approach and a kernel moment restriction approach, and formalize connections between them. \cite{mastouri2021proximal} analyze excess risk of a surrogate loss for the former, and consistency for the latter. Excess risk of a surrogate loss corresponds to projected mean square error for the confounding bridge. 
See Sections~\ref{section:algorithm} and~\ref{sec:consistency} for further comparisons. \cite{kallus2021causal,ghassami2021minimax} study the semiparametric problem rather than the nonparametric problem considered here. Both works propose doubly robust estimators that combine nuisance functions estimated by a minimax procedure. \cite{chernozhukov2021simple} provide abstract conditions to translate learning theory rates into semiparametric inference when treatment is binary. I summarize the connection to semiparametrics in Appendix~\ref{sec:semi}.} 

\textcolor{black}{The emphasis of this work is uniform consistency for the nonparametric case.
The main theoretical results of this paper are (i) uniform consistency of the confounding bridge and (ii) uniform consistency of causal functions estimated using a kernel two stage regression approach.
Uniform nonparametric inference remains an open question for future research.}
\section{Learning problem}\label{sec:problem}

\subsection{Treatment effects}

Treatment effects are statements about counterfactual outcomes given hypothetical interventions. Though we observe outcome $Y$, we seek to infer means of counterfactual outcomes $\{Y^{(d)}\}$, where $Y^{(d)}$ is the potential outcome given the hypothetical intervention $D=d$. The treatment effect literature aims to measure a rich variety of treatment effects, which I quote from \cite[Definition 3.1]{singh2020kernel}.

\begin{definition}[Treatment effects]\label{def:TE}
I define the following treatment effects.
\begin{enumerate}
    \item Dose response: $\theta_0^{ATE}(d):=\mathbb{E}[Y^{(d)}]$ is the counterfactual mean outcome given intervention $D=d$ for the entire population.
     \item Dose response with distribution shift: $ \theta_0^{DS}(d,\tilde{\mathbb{P}}):=\mathbb{E}_{\tilde{\mathbb{P}}}[Y^{(d)}]$ is the counterfactual mean outcome given intervention $D=d$ for an alternative population with data distribution $\tilde{\mathbb{P}}$ (elaborated in Assumption~\ref{assumption:covariate}).
    \item Conditional dose response: $ \theta_0^{ATT}(d,d'):=\mathbb{E}[Y^{(d')}|D=d]$ is the counterfactual mean outcome given intervention $D=d'$ for the subpopulation who actually received treatment $D=d$.
     \item Heterogeneous treatment effect: $\theta_0^{CATE}(d,v):=\mathbb{E}[Y^{(d)}|V=v]$ is the counterfactual mean outcome given intervention $D=d$ for the subpopulation with covariate value $V=v$.
\end{enumerate}
\end{definition}

The superscipt of each nonparametic treatment effect corresponds to its semiparametric analogue. If treatment is binary, then average treatment effect (ATE) is $\mathbb{E}[Y^{(1)}-Y^{(0)}]$; average treatment effect with distribution shift (DS) is $\mathbb{E}_{\tilde{\mathbb{P}}}[Y^{(1)}-Y^{(0)}]$; average treatment on the treated (ATT) is $\mathbb{E}[Y^{(1)}-Y^{(0)}|D=1]$; and conditional average treatment effect (CATE) is $\mathbb{E}[Y^{(1)}-Y^{(0)}|V=v]$. Rather than differences of potential outcomes indexed by binary treatment, I analyze potential outcomes indexed by discrete or continuous treatment.

$\theta_0^{ATE}(d)$ has many names: dose response curve, continuous treatment effect, and average structural function. If treatment is binary, then  $\theta_0^{ATE}(d)$ is a vector in $\mathbb{R}^2$ and the learning problem is semiparametric. If the treatment is discrete or continuous, then $\theta_0^{ATE}(d)$ is a function and the learning problem is nonparametric. $\theta_0^{DS}(d,\tilde{\mathbb{P}})$ is a closely related variant that handles the scenario where the covariate distribution has shifted. This variant may be called distribution shift, covariate shift, policy effect, or transfer learning.

Both $\theta_0^{ATT}(d,d')$ and $\theta_0^{CATE}(d,v)$  involve conditioning on a particular subpopulation. If treatment is binary, then $\theta_0^{ATT}(d,d')$ is a matrix in $\mathbb{R}^{2\times 2}$ and the learning problem is semiparametric. If the treatment is discrete or continuous, then $\theta_0^{ATT}(d,d')$ is a surface and the learning problem is nonparametric.  Likewise for $\theta_0^{CATE}(d,v)$. $\theta_0^{ATT}(d,d')$ is called the conditional dose response, and  $\theta_0^{CATE}(d,v)$ is called the heterogeneous treatment effect. The possibility for $D$ to be discrete or continuous and for $V$ to be a particular covariate, rather than the full set of covariates required for identification, is more general than the typical heterogeneous treatment effect \cite{nie2017quasi}. For $\theta_0^{CATE}$, I slightly abuse notation by denoting the complete set of identifying covariates as $(V,X)$.

\subsection{Negative control identification}

In pioneering work, \cite{tchetgen2020introduction} propose a potential outcome model in which treatment effects can be measured from outcomes $Y$, treatments $D$, and covariates $(V,X)$ despite unobserved confounding $U$. The technique involves two auxiliary variables: negative control treatment $Z$, and negative control outcome $W$. In this model, potential outcomes $\{Y^{(d,z)}\}$ and potential negative control outcomes $\{W^{(d,z)}\}$ are initially indexed by both the treatment value $D=d$ and the negative control treatment value $Z=z$. The identification strategy requires prior knowledge of how the unobserved confounder, which may be a vector, relates to the observed variables. The validity of negative controls as articulated in Assumptions~\ref{assumption:negative} and~\ref{assumption:solution} is \textit{relative} to a conjectured unobserved confounder. 

\begin{assumption}[Negative controls]\label{assumption:negative}
Assume
\begin{enumerate}
    \item No interference: if $D=d$ and $Z=z$ then $Y=Y^{(d,z)}$ and $W=W^{(d,z)}$.
    \item Latent exchangeability: $\{Y^{(d,z)}\},\{W^{(d,z)}\} \indep D,Z| U,X $.
    \item Overlap: if $f(u,x)>0$ then $f(d,z|u,x)>0$, where $f(u,x)$ and $f(d,z|u,x)$ are densities.
    \item Negative control treatment and outcome: $Y^{(d,z)}=Y^{(d)}$ and $W^{(d,z)}=W$.
\end{enumerate}
For $\theta_0^{CATE}$, replace $X$ with $(V,X)$.
\end{assumption}

No interference is also called consistency or the stable unit treatment value assumption in causal inference, and it rules out network effects. Latent exchangeability states that conditional on covariates $X$ and unobserved confounder $U$, treatment assignment and negative control treatment assignment are as good as random. Latent exchangeability relaxes the classic assumption of conditional exchangeability in which $U=\varnothing$, i.e. in which there is no unobserved confounder. Overlap ensures that there is no confounder-covariate stratum such that treatment and negative control treatment have a restricted support; for any stratum, any value of treatment or negative control treatment can occur. 

In a graphical causal model, there could be many sets of observed variables that could serve as covariates $X$ and many sets of unobserved variables that could serve as the unobserved confounder $U$ based on these initial criteria. The subsequent criteria provide guidance in how to choose $(X,U)$. We will see that the set of covariates $X$ should be chosen to block as much unobserved confounding as possible, because the variation in unobserved confounding that remains must be tied to variation in negative controls.

The negative control treatment condition imposes that the negative control treatment $Z$ only affects the outcome $Y$ via actual treatment $D$. It is identical to the exclusion restriction assumed for instrumental variables \cite{angrist1996identification}. The negative control outcome condition imposes that the negative control outcome $W$ is unaffected by the treatment $D$ and negative control treatment $Z$. It is an even stronger exclusion restriction. Altogether, Assumption~\ref{assumption:negative} formalizes the intuition that \textit{if there are spurious correlations then there is unobserved confounding}. It also implies $Y\indep Z | D,U,X$ and $W\indep D,Z | U,X$, which are weaker conditions used in the identification argument \cite{miao2018identifying}.

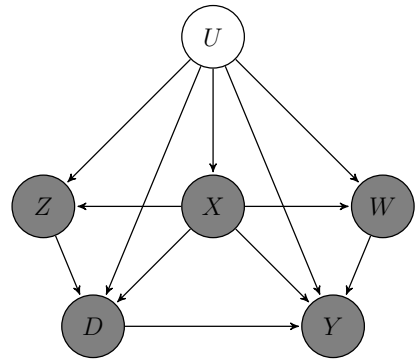
\begin{wrapfigure}{R}{0.33\textwidth}
\vspace{-10pt}
\begin{center}
\begin{adjustbox}{width=.33\textwidth}
\begin{tikzpicture}[->,>=stealth',shorten >=1pt,auto,node distance=2.8cm,
                    semithick]
  \tikzstyle{every state}=[draw=black,text=black]

  \node[state]         (x) [fill=gray]                   {$X$};
  \node[state]         (w) [right of=x, fill=gray]       {$W$};
  \node[state]         (z) [left of=x, fill=gray]       {$Z$};
  \node[state]         (d) [below left of=x, fill=gray]       {$D$};
    \node[state]         (y) [below right of=x, fill=gray]       {$Y$};
  \node[state]         (u) [above of=x]                  {$U$};

  \path (u) edge              node {$ $} (z)
             edge           node {$ $} (x)
             edge           node {$ $} (w)
              edge           node {$ $} (d)
             edge           node {$ $} (y)
        (x) edge              node {$ $} (w)
            edge            node {$ $} (z)
             edge           node {$ $} (d)
             edge           node {$ $} (y)
             (z) edge              node {$ $} (d)
            (w) edge              node {$ $} (y)
            (d) edge              node {$ $} (y);;
\end{tikzpicture}
\end{adjustbox}
\vspace{-15pt}
%\caption{Relationships among RKHSs}
\caption{Negative control DAG}
\label{dag:nc}
\end{center}
%\vspace{-20pt}
\vspace{-15pt}
\end{wrapfigure}

Figure~\ref{dag:nc} visualizes a representative directed acyclic graph (DAG). Despite access to covariates $X$, unobserved confounding $U$ has unblocked paths to treatment $D$ and outcome $Y$. In the DAG, we also see the proxy interpretation of this learning problem. Covariates $X$, negative control treatment $Z$, and negative control outcome $W$ are all imperfect proxies for a set of control variables that would block unobserved confounding. Covariates $X$ are proxies that induce treatment and outcome; negative control treatment $Z$ is a proxy that induces treatment only; and negative control outcome $W$ is a  proxy that induces outcome only.

Next, I quote a high level technical condition, which I will later verify for the RKHS setting. Define the regression $\gamma_0(d,x,z):=\mathbb{E}[Y|D=d,X=x,Z=z]$.
\begin{assumption}[Confounding bridge]\label{assumption:solution}
Assume
\begin{enumerate}
    \item Existence: there exists a solution $h_0$ to the operator equation $$\gamma_0(d,x,z)=\mathbb{E}[h(D,X,W)|D=d,X=x,Z=z].$$
    \item Completeness: for any function $f$,
    $$
    \mathbb{E}[f(U)|D=d,X=x,Z=z]=0\quad \forall(d,x,z) \iff f(U)=0.
    $$
\end{enumerate}
\end{assumption}

I call $h_0$ the \textit{confounding bridge}, following \cite{miao2018confounding}. Here, we see the formal resemblance to the nonparametric instrumental variable regression problem (NPIV) \cite{newey2003instrumental}. In the language of NPIV, the LHS $\gamma_0(d,x,z)$ is the \textit{reduced form}, while the RHS is a composition of a \textit{stage 1} conditional expectation operator
and \textit{stage 2} structural function $h_0$. In the language of functional analysis, the operator equation is a Fredholm integral equation of the first kind. Solving this operator equation for $h_0$ involves inverting a linear operator with infinite dimensional domain; it is an ill posed problem. Indeed, existence will require conditions on the spectrum of the conditional expectation operator formalized in Appendix~\ref{section:existence}. Completeness is a technical condition from the NPIV literature. Taking $f=f_1-f_2$, it states that the observed variables $(D,X,Z)$ have sufficiently rich variation in the sense that different functions of unobserved confounding $f_1(U)$ and $f_2(U)$ lead to different projections onto $(D,X,Z)$; if they lead to the same projections, then $f_1=f_2$.

There is a subtle yet fundamental connection between the existence of the confounding bridge and the relevance of negative controls.
\begin{proposition}[Relevance]\label{prop:relevance}
Suppose Assumption~\ref{assumption:negative} holds and $\gamma_0$ varies in $z$, i.e. there exist $(z,z')$ such that $\gamma_0(d,x,z)\neq \gamma_0(d,x,z')$. 
\begin{enumerate}
%     \item The existence condition in Assumption~\ref{assumption:solution} holds if and only if there exists a solution $h_0$ to the operator equation
% $$
% \gamma_0(d,x,z)=\int h_0(d,x,w)\mathrm{d}\mathbb{P}(w|u,x)\mathrm{d}\mathbb{P}(u|d,x,z).
% $$
    \item If the negative control treatment is irrelevant to the unobserved confounder in the sense that $Z\indep U |D,X$, then no confounding bridge exists.
    \item If the negative control outcome is irrelevant to the unobserved confounder in the sense that $W\indep U |X$, then no confounding bridge exists.
\end{enumerate}
\end{proposition}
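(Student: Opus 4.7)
The plan is to argue by contrapositive: assume a confounding bridge $h_0$ exists and that the respective irrelevance condition holds; then derive that $\gamma_0(d,x,z)$ cannot depend on $z$, contradicting the hypothesis of the proposition. Both parts share the same opening move and differ only in the last conditional independence invoked.

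First, starting from the defining operator equation for $h_0$, I would expand using the tower property, inserting $U$:
\[
\gamma_0(d,x,z)=\mathbb{E}[h_0(D,X,W)\mid D=d,X=x,Z=z]=\mathbb{E}\bigl[\mathbb{E}[h_0(d,x,W)\mid U,D=d,X=x,Z=z]\,\big|\,D=d,X=x,Z=z\bigr].
\]
I would then invoke the implication $W\indep D,Z\mid U,X$ of Assumption~\ref{assumption:negative} (explicitly noted after that assumption), which collapses the inner conditional expectation to $g(U,d,x):=\mathbb{E}[h_0(d,x,W)\mid U,X=x]$. Thus
\[
\gamma_0(d,x,z)=\mathbb{E}[g(U,d,x)\mid D=d,X=x,Z=z].
\]

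For part (1), the hypothesis $Z\indep U\mid D,X$ means that conditioning further on $Z=z$ does not alter the conditional law of $U$ given $(D,X)$, so
\[
\mathbb{E}[g(U,d,x)\mid D=d,X=x,Z=z]=\mathbb{E}[g(U,d,x)\mid D=d,X=x],
\]
which is manifestly free of $z$. For part (2), the hypothesis $W\indep U\mid X$ forces $g(U,d,x)=\mathbb{E}[h_0(d,x,W)\mid X=x]$, a function of $(d,x)$ alone; the outer conditional expectation then returns this same function of $(d,x)$, again free of $z$. In either case, $\gamma_0(d,x,z)=\gamma_0(d,x,z')$ for all $z,z'$, contradicting the stipulation that $\gamma_0$ varies in $z$. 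Hence no $h_0$ satisfying the bridge equation can exist.

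The conceptual content of the argument is light; the main obstacle is simply being meticulous about which conditioning set each conditional independence is stated against, and verifying that the implications of Assumption~\ref{assumption:negative} do give the needed $\sigma$-algebra reductions on the inner expectation before the irrelevance hypothesis is applied. Beyond that bookkeeping, no extra regularity is required, since the argument only manipulates iterated expectations of the given (integrable) bridge function.
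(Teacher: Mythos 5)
Your proof is correct and follows essentially the same route as the paper's: the paper first establishes (as a separate lemma) that $\gamma_0(d,x,z)=\int h_0(d,x,w)\,\mathrm{d}\mathbb{P}(w|u,x)\,\mathrm{d}\mathbb{P}(u|d,x,z)$ via $W\indep D,Z\mid U,X$, which is exactly your tower-property insertion of $U$ followed by the collapse of the inner expectation, and then applies each irrelevance hypothesis to kill the $z$-dependence. The only difference is notational (conditional expectations versus densities), so there is nothing substantive to distinguish the two arguments.
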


See Appendix~\ref{section:existence} for the proof, as well as further discussion of the cases in which $(U,Z,W)$ are discrete or continuous. Assumption~\ref{assumption:solution} formalizes the converse intuition of Assumption~\ref{assumption:negative}: \textit{if there is unobserved confounding, then there are spurious correlations}. In order to use $(Z,W)$ to adjust for unobserved confounding $U$, it must be the case that $(Z,W)$ can detect $U$ well enough.
%; for the confounding bridge to exist, the negative controls must be relevant enough. 
In practice, an analyst must collaborate with domain experts in order to assess (i) what are the sources of unobserved confounding, (ii) whether the negative control exclusion restrictions hold, and (iii) whether the negative controls are relevant. In summary, the key assumptions of negative control identification apply to settings where \textit{there are spurious correlations if and only if there is unobserved confounding}. When domain knowledge is insufficient to verify these assumptions, then the strategy of negative controls is inappropriate.
%The key assumptions must be verified on a case-by-case basis appealing to domain expertise (akin to the key assumptions of instrumental variable identification).

To handle $\theta_0^{DS}$, I generalize a standard assumption in transfer learning.
\begin{assumption}[Distribution shift]\label{assumption:covariate}
Assume
\begin{enumerate}
    \item The difference in population distributions $\mathbb{P}$ and $\tilde{\mathbb{P}}$ is only in the marginal distribution of treatments, negative control treatments, and covariates:
$$
\tilde{\mathbb{P}}(Y,W,D,X,Z)=\mathbb{P}(Y,W|D,X,Z)\tilde{\mathbb{P}}(D,X,Z).
$$
    \item $\tilde{\mathbb{P}}(D,X,Z)$ is absolutely continuous with respect to $\mathbb{P}(D,X,Z)$.
\end{enumerate}
\end{assumption}

\begin{proposition}[Invariance of confounding bridge]\label{prop:covariate}
Under Assumptions~\ref{assumption:solution} and~\ref{assumption:covariate}, the confounding bridge $h_0$ remains the same across the different populations $\mathbb{P}$ and $\tilde{\mathbb{P}}$.
\end{proposition}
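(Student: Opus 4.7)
My plan is to show that both sides of the operator equation defining the confounding bridge are pointwise unchanged when we move from $\mathbb{P}$ to $\tilde{\mathbb{P}}$, so any $h_0$ that solves the equation under $\mathbb{P}$ automatically solves it under $\tilde{\mathbb{P}}$. The key ingredient is that Assumption~\ref{assumption:covariate} part~1 preserves the conditional law $\mathbb{P}(Y,W\mid D,X,Z)$, which is exactly the only piece of the distribution that enters the bridge equation; part~2 guarantees that the conditioning events appearing under $\tilde{\mathbb{P}}$ are a subset of those appearing under $\mathbb{P}$, so there is no issue of the bridge being ``undefined'' on newly visited $(d,x,z)$.

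First I would write out the reduced form under $\tilde{\mathbb{P}}$ explicitly as
\[
\tilde{\gamma}_0(d,x,z)=\mathbb{E}_{\tilde{\mathbb{P}}}[Y\mid D=d,X=x,Z=z]=\int y\, d\tilde{\mathbb{P}}(y\mid d,x,z),
\]
and observe via Assumption~\ref{assumption:covariate} that $\tilde{\mathbb{P}}(y\mid d,x,z)=\mathbb{P}(y\mid d,x,z)$, hence $\tilde{\gamma}_0(d,x,z)=\gamma_0(d,x,z)$ on the support of $\tilde{\mathbb{P}}(D,X,Z)$. I would do the same calculation on the right-hand side of the bridge equation: since $\tilde{\mathbb{P}}(w\mid d,x,z)=\mathbb{P}(w\mid d,x,z)$,
\[
\mathbb{E}_{\tilde{\mathbb{P}}}[h(D,X,W)\mid D=d,X=x,Z=z]=\int h(d,x,w)\, d\mathbb{P}(w\mid d,x,z)=\mathbb{E}_{\mathbb{P}}[h(D,X,W)\mid D=d,X=x,Z=z]
\]
for every candidate $h$.

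Combining the two displays, the operator equation in Assumption~\ref{assumption:solution} reads identically under $\mathbb{P}$ and $\tilde{\mathbb{P}}$ on the support of $\tilde{\mathbb{P}}(D,X,Z)$. By part~2 of Assumption~\ref{assumption:covariate}, this support is contained in that of $\mathbb{P}(D,X,Z)$, so the solution $h_0$ from Assumption~\ref{assumption:solution} continues to satisfy the bridge equation pointwise $\tilde{\mathbb{P}}$-almost surely. Therefore $h_0$ is a valid confounding bridge for $\tilde{\mathbb{P}}$, establishing the claim.

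The argument is essentially bookkeeping with conditional distributions, so I do not anticipate a serious obstacle; the only subtlety to be careful about is ensuring that ``the same $h_0$'' is a meaningful statement, which is why the absolute continuity condition is needed to keep the domain of evaluation consistent. Completeness is not required for this direction (it was used in Assumption~\ref{assumption:solution} to pin down uniqueness/identification), but invoking it here would also let me phrase the conclusion as: \emph{the} confounding bridge, viewed as an equivalence class of solutions, is preserved.
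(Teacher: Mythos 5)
Your proof is correct and takes essentially the same route as the paper: both sides of the bridge equation are invariant because $\mathbb{P}(Y\mid D,X,Z)$ and $\mathbb{P}(W\mid D,X,Z)$ are both preserved by Assumption~\ref{assumption:covariate}, and absolute continuity handles the support bookkeeping. (One minor caveat on your closing aside: the paper does not use completeness to make $h_0$ unique --- $h_0$ may indeed be non-unique --- rather completeness is invoked later, in the identification argument, to ensure the resulting treatment effect is well defined regardless of which solution is chosen.)
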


See Appendix~\ref{sec:id} for the proof. It appears that Assumption~\ref{assumption:covariate} and Proposition~\ref{prop:covariate} are the first formalization of distribution shift in negative control and NPIV settings. 

In summary, I place three assumptions: availability of negative controls (Assumption~\ref{assumption:negative}); existence and completeness of the confounding bridge (Assumption~\ref{assumption:solution});  and invariance of the confounding bridge for transfer learning (Assumption~\ref{assumption:covariate}). Formally, the theorem that uses these assumptions to express treatment effects in terms of data is known as an identification result. I present the main identification result below, extending the powerful insights of \cite{miao2018identifying,miao2018confounding,deaner2018nonparametric,tchetgen2020introduction} to additional treatment effects beyond $\theta_0^{ATE}(d)$.

\begin{theorem}[Identification of treatment effects]\label{theorem:id_treatment}
If Assumptions~\ref{assumption:negative} and~\ref{assumption:solution} hold then
\begin{enumerate}
    \item $\theta_0^{ATE}(d)=\int h_0(d,x,w)\mathrm{d}\mathbb{P}(x,w)$.
    \item If in addition Assumption~\ref{assumption:covariate} holds, then $\theta_0^{DS}(d,\tilde{\mathbb{P}})=\int h_0(d,x,w)\mathrm{d}\tilde{\mathbb{P}}(x,w)$.
    \item $\theta_0^{ATT}(d,d')=\int h_0(d',x,w)\mathrm{d}\mathbb{P}(x,w|d)$.
    \item $\theta_0^{CATE}(d,v)=\int h_0(d,v,x,w)\mathrm{d}\mathbb{P}(x,w|v)$.
\end{enumerate}
\end{theorem}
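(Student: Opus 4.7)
The plan is to reduce all four identification formulas to a single key lemma: $\mathbb{E}[Y^{(d)}\mid X,U]=\mathbb{E}[h_0(d,X,W)\mid X,U]$ almost surely. Given this lemma, each statement follows by iterating expectations against an appropriate outer distribution: $\mathbb{P}(x,u)$ for part (1), $\tilde{\mathbb{P}}(x,u)$ for part (2), $\mathbb{P}(x,u\mid d)$ for part (3), and $\mathbb{P}(x,u\mid v)$ for part (4), where in the last case one replaces $X$ by $(V,X)$ throughout as Assumption~\ref{assumption:negative} instructs.

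To prove the lemma, I would first extract two conditional independence facts from Assumption~\ref{assumption:negative}: (a) $Y\indep Z\mid D,U,X$, from no interference combined with $Y^{(d,z)}=Y^{(d)}$ and latent exchangeability; and (b) $W\indep D,Z\mid U,X$, directly from $W^{(d,z)}=W$ and latent exchangeability applied to $W^{(d,z)}$. Fix $(d,x)$ and apply the tower property through $U$ to each side of the confounding bridge equation $\gamma_0(d,x,z)=\mathbb{E}[h_0(D,X,W)\mid D=d,X=x,Z=z]$. Using (a) and another appeal to latent exchangeability, the left side becomes $\int\mathbb{E}[Y^{(d)}\mid X=x,U=u]\,\mathrm{d}\mathbb{P}(u\mid d,x,z)$; using (b), the right side becomes $\int\mathbb{E}[h_0(d,x,W)\mid X=x,U=u]\,\mathrm{d}\mathbb{P}(u\mid d,x,z)$. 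Subtracting, the function $\psi_{d,x}(u):=\mathbb{E}[Y^{(d)}\mid X=x,U=u]-\mathbb{E}[h_0(d,x,W)\mid X=x,U=u]$ satisfies $\mathbb{E}[\psi_{d,x}(U)\mid D=d,X=x,Z=z]=0$ for every $z$, and the completeness clause of Assumption~\ref{assumption:solution} forces $\psi_{d,x}\equiv 0$, giving the lemma.

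I would then derive the four formulas in turn. For (1), latent exchangeability and the lemma yield $\theta_0^{ATE}(d)=\mathbb{E}[\mathbb{E}[Y^{(d)}\mid X,U]]=\mathbb{E}[\mathbb{E}[h_0(d,X,W)\mid X,U]]=\mathbb{E}[h_0(d,X,W)]=\int h_0(d,x,w)\,\mathrm{d}\mathbb{P}(x,w)$. Part (2) follows by combining (1) with Proposition~\ref{prop:covariate}, which guarantees that $h_0$ remains the confounding bridge under $\tilde{\mathbb{P}}$, and repeating the argument under $\tilde{\mathbb{P}}$. For (3), latent exchangeability yields $Y^{(d')}\indep D\mid X,U$, so
\[
\theta_0^{ATT}(d,d')=\mathbb{E}[\mathbb{E}[Y^{(d')}\mid X,U]\mid D=d]=\mathbb{E}[\mathbb{E}[h_0(d',X,W)\mid X,U]\mid D=d]=\mathbb{E}[h_0(d',X,W)\mid D=d],
\]
where the final equality invokes (b) to conclude $\mathbb{E}[h_0(d',X,W)\mid X,U]=\mathbb{E}[h_0(d',X,W)\mid X,U,D=d]$; this rewrites as $\int h_0(d',x,w)\,\mathrm{d}\mathbb{P}(x,w\mid d)$. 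Part (4) is the argument of (1) applied conditionally on $V=v$ after enlarging the identifying covariate set from $X$ to $(V,X)$.

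The main obstacle is the completeness step in the lemma: the condition in Assumption~\ref{assumption:solution} is stated with a universal quantifier over $(d,x,z)$, while the application requires the $L^2$-type completeness of the operator $g\mapsto\mathbb{E}[g(U)\mid D=d,X=x,Z=\cdot]$ at each fixed $(d,x)$. I would read the assumption in this conditional sense, which is standard in the NPIV literature, and check that $\psi_{d,x}$ is admissible as a test function (measurable and suitably integrable, which is implicit in the existence clause). The remaining bookkeeping, in particular the tower-property interchanges and the rewriting of joint expectations as integrals against $(X,W)$-marginals, is routine under the same moment conditions.
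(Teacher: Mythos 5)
Your proposal is correct and follows essentially the same route as the paper: your key lemma $\mathbb{E}[Y^{(d)}\mid X,U]=\mathbb{E}[h_0(d,X,W)\mid X,U]$ is a reformulation (via consistency and latent exchangeability) of the paper's Proposition~\ref{prop:part1}, $\mathbb{E}[Y\mid d,u,x]=\int h_0(d,x,w)\,\mathrm{d}\mathbb{P}(w\mid u,x)$, and your derivations of the four formulas by iterating expectations over $(U,X)$ — together with the per-$(d,x)$ reading of completeness that you correctly flag — match the paper's appendix proof step for step. Your handling of part (3), using $W\indep D\mid U,X$ to insert the conditioning on $D=d$, is exactly the paper's move of rewriting $\mathrm{d}\mathbb{P}(w\mid u,x)$ as $\mathrm{d}\mathbb{P}(w\mid d,u,x)$.
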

See Appendix~\ref{sec:id} for the proof. In Theorem~\ref{theorem:id_treatment}, we see how negative controls $(Z,W)$ allow us to adjust for unobserved confounding $U$ and to thereby recover the treatment effect of interest. Each treatment effect is a reweighting of the confounding bridge $h_0$ defined in Assumption~\ref{assumption:solution} with respect to some distribution $\mathbb{Q}$. In this sense, each treatment effect is an example of the same general nonparametric learning problem: $\int h_0(d,x,w)\mathrm{d}\mathbb{Q}$,%\footnote{Formally, I should write $\int h_0(d,x,w)\mathrm{d}\mathbb{Q}(d,x,w)$.  To avoid confusion between the differential $\mathrm{d}$ and the treatment value $d$, I omit the former, following \cite{singh2020kernel}. I also omit arguments of $\mathbb{Q}$.} 
where
$\mathbb{Q}$ may be an unconditional distribution such as $\mathbb{P}(x,w)$ or a conditional distribution such as $\mathbb{P}(x,w|d)$.

\subsection{RKHS background}

Until this point, I have placed only causal assumptions formalized in Assumptions~\ref{assumption:negative},~\ref{assumption:solution}, and~\ref{assumption:covariate}. For computational and analytical tractability, I now place additional structure on the learning problem: I assume key quantities are elements of a reproducing kernel Hilbert space (RKHS). The RKHS is a canonical setting in machine learning, and it is a space of smooth functions that generalizes the Sobolev space. For a broad statistical audience, I organize ideas from RKHS learning theory that underpin the algorithm derivation (Section~\ref{section:algorithm}) and consistency guarantee (Section~\ref{sec:consistency}) to follow.

\textbf{Kernel and feature map.} I begin with basic kernel and feature map notation. Consider the RKHS $\mathcal{H}$ which consist of functions of the form $f:\mathcal{A}\rightarrow\mathbb{R}$, where $\mathcal{A}$ is a Polish space (defined formally below). An RKHS $\mathcal{H}$ is characterized by its feature map $\phi(a)$, which can be interpreted as the dictionary of basis functions for the RKHS in the sense that, for any $f\in\mathcal{H}$, $f(a)=\langle f,\phi(a)\rangle_{\mathcal{H}}$. The kernel $k:\mathcal{A}\times \mathcal{A}\rightarrow \mathbb{R}$ is a positive definite, symmetric, and continuous function such that $k(a,a')=\langle \phi(a),\phi(a') \rangle_{\mathcal{H}}$; it is the inner product of features, so it encodes the geometry of the RKHS. Alternatively, one may define the kernel first, then define the feature map as $\phi:\mathcal{A}\rightarrow \mathcal{H}$, $a\mapsto k(a,\cdot)$. The feature map perspective is helpful for theory, but the kernel perspective is helpful for practice, since $k(a,a')$ is a scalar that can be computed. Ultimately, I will reduce the algorithm to kernel evaluations.

\textbf{Kernel mean embedding.} We have seen how the feature map helps to evaluate a function. A related object, called the kernel mean embedding, helps to take the expectation of a function. Suppose we wish to calculate $\mathbb{E}[f(A)]$. The idea of kernel mean embedding is to write
$$
\mathbb{E}[f(A)]=\int f(a)\mathrm{d}\mathbb{P}(a) = \int \langle f,\phi(a)\rangle_{\mathcal{H}} \mathrm{d}\mathbb{P}(a) =  \left\langle f,\int \phi(a)\mathrm{d}\mathbb{P}(a) \right\rangle_{\mathcal{H}} =\langle f,\mu \rangle_{\mathcal{H}}
$$
where the exchange of expectation and inner product requires weak regularity conditions (defined formally below). The object $\mu:=\int \phi(a)\mathrm{d}\mathbb{P}(a)$ is called the \textit{mean embedding} of the distribution $\mathbb{P}(a)$. A kernel is \textit{characteristic} when the mapping $\mathbb{P}(a)\mapsto \mu$ is injective. The geometry of the RKHS implies that, to calculate the expectation of a function, it suffices to take the product of the function and the mean embedding of the corresponding distribution. This idea extends to conditional distributions over a subset of the arguments of the function. I will use this idea extensively when deriving the algorithm.

\textbf{Tensor product.} What if a function is defined over multiple variables, e.g. $f:\mathcal{A}\times \mathcal{B} \rightarrow \mathbb{R}$? A natural approach is to define an RKHS $\mathcal{H}$ for such functions as the combination of RKHSs $\mathcal{H}_{\mathcal{A}}$ and $\mathcal{H}_{\mathcal{B}}$ that contain functions of the form $f_1:\mathcal{A}\rightarrow \mathbb{R}$ and $f_2:\mathcal{B}\rightarrow \mathbb{R}$, respectively. Denote the individual feature maps by $\phi_{\mathcal{A}}(a)$ and $\phi_{\mathcal{B}}(b)$, then define the tensor product feature map $\phi(a,b)=\phi_{\mathcal{A}}(a) \otimes \phi_{\mathcal{B}}(b)$ for the tensor product RKHS $\mathcal{H}$. The tensor product is a generalization of the outer product; formally, $[a\otimes b]c=a \langle b,c \rangle$. Then, for any $f\in\mathcal{H}$, $f(a,b)=\langle f,\phi_{\mathcal{A}}(a)\otimes \phi_{\mathcal{B}}(b)\rangle_{\mathcal{H}}$. It turns out that the kernel of this RKHS is simply the product of the kernels of the individual RKHSs: $k(a,b;a',b')=k_{\mathcal{A}}(a,a')\cdot k_{\mathcal{B}}(b,b')$. As such, $k(a,b;a',b')$ is a scalar that can be computed. In this work, I will extensively use tensor product constructions. For this reason, the algorithm statements will have the symbol $\odot$ for the elementwise product of objects that contain kernel evaluations.

\textbf{RKHS for operators.} So far, I have defined RKHSs for functions of one or more variables. RKHSs also exist for operators. I denote by $\mathcal{L}_2(\mathcal{H}_{\mathcal{A}},\mathcal{H}_{\mathcal{B}})$ the space of Hilbert-Schmidt operators of the form $E:\mathcal{H}_{\mathcal{A}}\rightarrow \mathcal{H}_{\mathcal{B}}$. It turns out that this space is an RKHS in its own right. The operators of interest are conditional expectation operators, which correspond to conditional mean embeddings. For example, consider the goal of calculating $\mathbb{E}[f(a)|B=b]$. As before, one can express
$$
\int f(a)\mathrm{d}\mathbb{P}(a|b) = \int \langle f,\phi_{\mathcal{A}}(a)\rangle_{\mathcal{H}_{\mathcal{A}}} \mathrm{d}\mathbb{P}(a|b) =  \left\langle f,\int \phi_{\mathcal{A}}(a)\mathrm{d}\mathbb{P}(a|b) \right\rangle_{\mathcal{H}_{\mathcal{A}}} =\langle f,\mu_{a}(b) \rangle_{\mathcal{H}_{\mathcal{A}}}
$$
where $\mu_{a}(b):=\int \phi_{\mathcal{A}}(a)\mathrm{d}\mathbb{P}(a|b)$ is called the \textit{conditional mean embedding} of the distribution $\mathbb{P}(a|b)$. Observe that
$$
\mu_{a}(b)=\int \phi(a)\mathrm{d}\mathbb{P}(a|b)=[E\phi(\cdot)](b)=[E^* \phi(b)](\cdot)
$$
where $E^*:\mathcal{H}_{\mathcal{B}}\rightarrow \mathcal{H}_{\mathcal{A}}$ is the adjoint of $E: \mathcal{H}_{\mathcal{A}}\rightarrow \mathcal{H}_{\mathcal{B}}$, and both are conditional expectation operators. Formally, the operators $E:f(\cdot)\mapsto \mathbb{E}[f(A)|B=\cdot]$ and $E^*:g(\cdot)\mapsto \mathbb{E}[g(B)|A=\cdot]$ encode the same information as the conditional mean embedding $\mu_{a}(b)$. This relationship will facilitate estimation and analysis.

\textbf{Closed form solution.} The final piece of RKHS machinery necessary for the algorithm derivation (Section~\ref{section:algorithm}) is the so-called \textit{kernel trick}. We have seen how a kernel evaluation is, in the end, simply a scalar. So, if an analyst can express an algorithm exclusively in terms of kernel evaluations, then the algorithm can be easily computed. A virtue of kernel methods is that they tend to have closed form solutions in terms of kernel evaluations.  Conceptually, an RKHS algorithm $\hat{f}(a)=\langle \hat{f},\phi(a)\rangle_{\mathcal{H}}$ involves a possibly nonlinear feature map $\phi(\cdot)$ applied to the data, so such an algorithm maintains computational simplicity while allowing for rich nonlinearity. Consider, for example, the kernel ridge regression
\begin{equation}\label{eq:loss}
    \hat{f}=\argmin_{f\in \mathcal{H}} \frac{1}{n}\sum_{i=1}^n \{y_i-\langle f,\phi(a_i) \rangle_{\mathcal{H}} \}^2+\lambda\|f\|^2_{\mathcal{H}}
\end{equation}
with regularization hyperparameter $\lambda>0$. Its closed form solution is
\begin{equation}\label{eq:sol}
    \hat{f}(a)=\langle \hat{f},\phi(a)\rangle_{\mathcal{H}}=\mathbf{Y}^{\top} (\mathbf{K}_{AA}+n\lambda \mathbf{I})^{-1}\mathbf{K}_{Aa}
\end{equation}
where $\mathbf{Y}\in\mathbb{R}^n$ is the vector of outcomes with $i$-th entry $y_i$, $\mathbf{K}_{AA}\in\mathbb{R}^{n\times n}$ is the kernel matrix with $(i,j)$-th entry $k(a_i,a_j)$, and $\mathbf{K}_{Aa}\in\mathbb{R}^{n}$ is the evaluation vector with $i$-th entry $k(a_i,a)$. The algorithms I propose generalize kernel ridge regression. Sometimes, instead of regressing the outcome $Y$ on features $\phi(A)$, I will regress the outcome $Y$ on mean embeddings $\mu_a(B)$. At other times, I will regress one collection of features $\phi(A)$ on another collection of features $\phi(B)$. These generalizations haves losses and closed form solutions that generalize those of kernel ridge regression. As such, they are simple combinations of kernel matrices and evaluation vectors despite being nonparametric. 

\textbf{Spectral view.} The statistical guarantees of Section~\ref{sec:consistency} require the \textit{spectral} view of the RKHS $\mathcal{H}$. The spectral view is more challenging, but it is the only way to articular RKHS learning theory. Let $\mathbb{L}_2$ denote the space of square integrable functions mapping from $\mathcal{A}$ to $\mathbb{R}$ with respect to measure $\mathbb{P}$. For a fixed kernel $k$, define the convolution operator $L:\mathbb{L}_2\rightarrow \mathbb{L}_2$, $f\mapsto \int k(a,\cdot)f(a)\mathrm{d}\mathbb{P}(a)$. By the spectral theorem, we can express the operator $L$ in terms of its countable eigenvalues $\{\eta_j\}$ and eigenfunctions $\{\varphi_j\}$: $Lf=\sum_{j=1}^{\infty} \eta_j \langle f, \varphi_j \rangle \cdot \varphi_j$. Without loss of generality, $\{\eta_j\}$ is a weakly decreasing sequence and $\{\varphi_j\}$ forms an orthonormal basis of $\mathbb{L}_2$. With this spectral notation, we are ready to formalize the sense in which the RKHS $\mathcal{H}$ is a smooth subset of $\mathbb{L}_2$. Since $\{\varphi_j\}$ forms an orthonormal basis of $\mathbb{L}_2$, any $f,g\in \mathbb{L}_2$ can be expressed as $f=\sum_{j=1}^{\infty}f_j\varphi_j$ and $g=\sum_{j=1}^{\infty}g_j\varphi_j$. By \cite[Theorem 4]{cucker2002mathematical}, $\mathbb{L}_2$ and the RKHS $\mathcal{H}$ can be explicitly represented as
\begin{align*}
    \mathbb{L}_2&=\left\{f=\sum_{j=1}^{\infty}f_j\varphi_j:\; \sum_{j=1}^{\infty}f_j^2<\infty\right\},\quad \langle f,g \rangle_{\mathbb{L}_2}=\sum_{j=1}^{\infty} f_jg_j \\
    \mathcal{H}&=\left\{f=\sum_{j=1}^{\infty}f_j\varphi_j:\;\sum_{j=1}^{\infty} \frac{f_j^2}{\eta_j}<\infty\right\},\quad \langle f,g \rangle_{\mathcal{H}}=\sum_{j=1}^{\infty} \frac{f_jg_j}{\eta_j}.
\end{align*}
The RKHS $\mathcal{H}$ is the subset of $\mathbb{L}_2$ for which higher order terms in the series $\{\varphi_j\}$ have a smaller contribution. In the RKHS, there is a penalty on higher order coefficients, and the magnitude of the penalty corresponds to how small the eigenvalue is.

\textbf{Main assumptions.} Finally, I articulate the main approximation assumptions of this paper. Formally, to analyze bias, I assume that a statistical target $f_0$ satisfies
\begin{equation}\label{eq:prior}
    f_0\in \mathcal{H}^c:=\left\{f=\sum_{j=1}^{\infty}f_j\varphi_j:\;\sum_{j=1}^{\infty} \frac{f_j^2}{\eta^c_j}<\infty\right\},\quad c\in(1,2].
\end{equation}
For $c=1$, we see that $\mathcal{H}^1=\mathcal{H}$; I am simply assuming that $f_0$ is correctly specified as an element of the RKHS. For $c>1$, I am assuming that $f_0$ is in the interior of the RKHS. This assumption is called the source condition in statistical learning theory and econometrics \cite{smale2007learning,caponnetto2007optimal,carrasco2007linear}. As we will see, a larger value of $c$ corresponds to a smoother target $f_0$ and a faster uniform rate. I allow $c$ to be as large as $c=2$, which is the highest degree of smoothness to which kernel ridge estimators can adapt. In Appendix~\ref{sec:source}, I compare this main approximation assumption with alternative approximation assumptions in the negative control and NPIV literatures.

The second main approximation is a spectral decay assumption called the effective dimension of the basis $\{\varphi_j\}$. I quantify the effective dimension as the rate at which the eigenvalues $\{\eta_j\}$ decay. Formally, to analyze variance, I assume that there exists some constant $C$ such that each $\eta_j$ satisfies 
\begin{equation}\label{eq:prior2}
\eta_j\leq C j^{-b},\quad b\geq1.
\end{equation}
\cite[Lemma 10]{fischer2017sobolev} shows that a bounded kernel $k$ satisfies this condition with $b$ that is at least one. A higher value of $b$ corresponds to a lower effective dimension, better control of the variance, and a faster rate. The limit $b\rightarrow \infty$ corresponds to an RKHS with finite dimension \cite{caponnetto2007optimal}.

\textbf{Special case: Sobolev space.} The abstract approximation conditions are easy to interpret in the context of Sobolev spaces. Denote by $\mathbb{H}_2^s$ the Sobolev space of functions of the form $f:\mathcal{A}\rightarrow \mathbb{R}$ with $\mathcal{A}\subset \mathbb{R}^p$. The parameter $s$ denotes how many derivatives of $f$ are square integrable. The Sobolev space $\mathbb{H}_2^s$ is an RKHS if and only if $s>p/2$ \cite[Theorem 132]{berlinet2011reproducing}. Its kernel is known as the \textit{Mat\'ern} kernel. Suppose we take $\mathcal{H}=\mathbb{H}_2^s$ with $s>p/2$ as the RKHS for estimation. If the true target $f_0$ is in $\mathbb{H}_2^{s_0}$, then $c=s_0/s$ \cite{fischer2017sobolev}. In the notation of~\eqref{eq:prior}, $\mathbb{H}_2^{s_0}=[\mathbb{H}_2^s]^c$. Clearly $c>1$ means that the target $f_0$ is in the interior of $\mathbb{H}_2^s$. Moreover, the effective dimension of the RKHS $\mathbb{H}_2^s$ is quantified by $b=2s/p$ \cite{fischer2017sobolev}. Rates in terms of $(b,c)$ adapt to the smoothness of $f_0$ and the effective dimension of the RKHS $\mathcal{H}$. They are invariant to dimension as long as $s_0>s>p/2$.
\section{Algorithm}\label{section:algorithm}

\subsection{RKHS construction}

I provide a new RKHS construction for negative control treatment effect estimation, generalizing and unifying the constructions in \cite{singh2019kernel,singh2020kernel}. In my construction, I define RKHSs for treatment $D$, negative controls $(Z,W)$, and covariates $(V,X)$. For example, for treatment $D$ define the RKHS $\mathcal{H}_{\mathcal{D}}$ with feature map $
\phi_{\mathcal{D}}(d)
$ and kernel $k_{\mathcal{D}}(d,d')=\langle \phi_{\mathcal{D}}(d),\phi_{\mathcal{D}}(d') \rangle_{\mathcal{H}_{\mathcal{D}}}$. Formally,
$
\phi_{\mathcal{D}}:\mathcal{D}\rightarrow \mathcal{H}_{\mathcal{D}}
$ and $k_{\mathcal{D}}:\mathcal{D} \times \mathcal{D} \rightarrow \mathbb{R}$.
To lighten notation, I suppress subscripts when arguments are provided, e.g. I write $\phi(d)=\phi_{\mathcal{D}}(d)$.

From these individual RKHSs, I construct a tensor product RKHS $\mathcal{H}$ for the confounding bridge $h_0$. For clarity of exposition, I initially focus on the case without $V$, i.e. excluding $\theta_0^{CATE}$. I assume the confounding bridge $h_0$ is an element of the RKHS with tensor product feature map $\phi(d,x,w):=\phi(d)\otimes \phi(x)\otimes \phi(w)$, i.e. $h_0\in \mathcal{H}:=\mathcal{H}_{\mathcal{D}}\otimes \mathcal{H}_{\mathcal{X}}\otimes \mathcal{H}_{\mathcal{W}}$. As before, the feature map can be interpreted as the dictionary of basis functions since
$
    h_0(d,x,w)=\langle h_0, \phi(d,x,w) \rangle_{\mathcal{H}}=\langle h_0, \phi(d)\otimes \phi(x)\otimes \phi(w)\rangle_{\mathcal{H}}
$. In Appendix~\ref{section:existence}, I discuss how an analogous assumption for the regression $\gamma_0$ relates to Assumption~\ref{assumption:solution}.

This tensor product RKHS construction plays a central role in deriving simple representations of treatment effects and hence deriving simple estimators, under weak regularity conditions. The RKHS aspect allows for the technique of kernel mean embedding: an analyst can reweight a function $h_0(d,x,w)$ according to some counterfactual distribution by reweighting its feature map $\phi(d,x,w)$ by that distribution. The tensor product aspect ensures separability of the different variables in the feature map; since $\phi(d,x,w)=\phi(d)\otimes \phi(x)\otimes \phi(w)$, the reweighting can apply to the specific variables $\phi(x)\otimes \phi(w)$. Using these properties, I represent the causal quantities defined in Definition~\ref{def:TE} and identified in Theorem~\ref{theorem:id_treatment} in a more tractable form. To begin, I state the regularity conditions. 

\begin{assumption}[RKHS regularity conditions]\label{assumption:RKHS}
Assume 
\begin{enumerate}
    \item $k_{\mathcal{D}}$, $k_{\mathcal{X}}$, $k_{\mathcal{W}}$, and $k_{\mathcal{Z}}$ are continuous and bounded:
    $$
    \sup_{d\in\mathcal{D}}\|\phi(d)\|_{\mathcal{H}_{\mathcal{D}}}\leq \kappa_d,\;  \sup_{x\in\mathcal{X}}\|\phi(x)\|_{\mathcal{H}_{\mathcal{X}}}\leq \kappa_x,\;  \sup_{w\in\mathcal{W}}\|\phi(w)\|_{\mathcal{H}_{\mathcal{W}}}\leq \kappa_w,\;   \sup_{z\in\mathcal{Z}}\|\phi(z)\|_{\mathcal{H}_{\mathcal{Z}}}\leq \kappa_z;
    $$
    \item $\phi(d)$, $\phi(x)$, $\phi(w)$, and $\phi(z)$ are measurable;
    \item $k_{\mathcal{X}}$ and $k_{\mathcal{W}}$ are characteristic.
\end{enumerate}
For $\theta_0^{CATE}$, extend the stated assumptions from $X$ to $(V,X)$.
\end{assumption}
Continuity, boundedness, and measurability are weak conditions satisfied by commonly used kernels. The characteristic property is a regularity condition for embedding a distribution in an RKHS, and it is satisfied by commonly used kernels as well \cite{sriperumbudur2010relation}. Formally, $k_{\mathcal{X}}$ is characteristic if and only if, for all Borel probability measures $\mathbb{Q}$, the mapping
    $\mathbb{Q}\mapsto \int \phi(x)\mathrm{d}\mathbb{Q}$ is injective. I explain the role of the characteristic property below.

\begin{theorem}[Representation via kernel mean embedding]\label{theorem:representation_treatment}
Suppose the conditions of Theorem~\ref{theorem:id_treatment} hold. Further suppose Assumption~\ref{assumption:RKHS} holds and $h_0\in\mathcal{H}$. Then
$$
\gamma_0(d,x,z)=\langle h_0,\phi(d)\otimes \phi(x)\otimes \mu_w(d,x,z) \rangle_{\mathcal{H}} \text{ where } \mu_w(d,x,z):=\int \phi(w) \mathrm{d}\mathbb{P}(w|d,x,z).
$$
Moreover
\begin{enumerate}
    \item $\theta_0^{ATE}(d)=\langle h_0, \phi(d)\otimes \mu \rangle_{\mathcal{H}} $ where $\mu:=\int[\phi(x)\otimes \phi(w)]\mathrm{d}\mathbb{P}(x,w)$;
    \item $\theta_0^{DS}(d,\tilde{\mathbb{P}})=\langle h_0, \phi(d)\otimes \nu \rangle_{\mathcal{H}} $ where $\nu:=\int[\phi(x)\otimes \phi(w)]\mathrm{d}\tilde{\mathbb{P}}(x,w)$;
    \item $\theta_0^{ATT}(d,d')=\langle h_0, \phi(d')\otimes \mu(d)\rangle_{\mathcal{H}} $ where $\mu(d):=\int[\phi(x)\otimes \phi(w)] \mathrm{d}\mathbb{P}(x,w|d)$;
    \item $\theta_0^{CATE}(d,v)=\langle h_0, \phi(d)\otimes \phi(v)\otimes \mu(v)\rangle_{\mathcal{H}} $ where $\mu(v):= \int [\phi(x)\otimes \phi(w)] \mathrm{d}\mathbb{P}(x,w|v)$.
\end{enumerate}
\end{theorem}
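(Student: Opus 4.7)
The proof naturally splits into two steps: first establish the representation of the reduced form $\gamma_0$, then use the identification result (Theorem~\ref{theorem:id_treatment}) to obtain each of the four representations by rewriting the integrals of $h_0$ as inner products in the tensor product RKHS. The unifying tool throughout is the reproducing property in $\mathcal{H} = \mathcal{H}_{\mathcal{D}} \otimes \mathcal{H}_{\mathcal{X}} \otimes \mathcal{H}_{\mathcal{W}}$ combined with Bochner integrability, which lets expectations and integrals pass through the inner product.

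The plan for the $\gamma_0$ identity is as follows. Assumption~\ref{assumption:solution} gives $\gamma_0(d,x,z) = \mathbb{E}[h_0(D,X,W) \mid D=d, X=x, Z=z]$. Since $h_0 \in \mathcal{H}$, the reproducing property in the tensor product RKHS yields $h_0(D,X,W) = \langle h_0, \phi(D) \otimes \phi(X) \otimes \phi(W)\rangle_{\mathcal{H}}$. Conditioning on $(D=d, X=x, Z=z)$ freezes the first two factors, giving $\mathbb{E}[\langle h_0, \phi(d) \otimes \phi(x) \otimes \phi(W)\rangle_{\mathcal{H}} \mid D=d, X=x, Z=z]$. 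I would then invoke Bochner integrability of $\phi(W)$, which holds because $\sup_w \|\phi(w)\|_{\mathcal{H}_{\mathcal{W}}} \le \kappa_w$ (Assumption~\ref{assumption:RKHS}.1) and $\phi(w)$ is measurable (Assumption~\ref{assumption:RKHS}.2), to swap the conditional expectation with the inner product. The result is $\langle h_0, \phi(d) \otimes \phi(x) \otimes \mu_w(d,x,z)\rangle_{\mathcal{H}}$, where $\mu_w(d,x,z) := \int \phi(w)\,\mathrm{d}\mathbb{P}(w \mid d,x,z)$ is a well-defined element of $\mathcal{H}_{\mathcal{W}}$ by the same Bochner argument.

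For each treatment effect, I would start from the identification formula of Theorem~\ref{theorem:id_treatment}, substitute the reproducing property to rewrite $h_0$ as an inner product against the tensor-product feature map, and again use Bochner integrability to pull the integral inside. For $\theta_0^{ATE}(d) = \int h_0(d,x,w)\,\mathrm{d}\mathbb{P}(x,w)$, this gives $\int \langle h_0, \phi(d) \otimes \phi(x) \otimes \phi(w)\rangle_{\mathcal{H}}\,\mathrm{d}\mathbb{P}(x,w) = \langle h_0, \phi(d) \otimes \mu\rangle_{\mathcal{H}}$ with $\mu := \int \phi(x) \otimes \phi(w)\,\mathrm{d}\mathbb{P}(x,w)$; the mean embedding $\mu$ exists in $\mathcal{H}_{\mathcal{X}} \otimes \mathcal{H}_{\mathcal{W}}$ since the tensor product feature map has uniformly bounded norm $\kappa_x \kappa_w$. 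The arguments for $\theta_0^{DS}$, $\theta_0^{ATT}$, and $\theta_0^{CATE}$ are structurally identical; they differ only in which distribution is used to embed (either $\tilde{\mathbb{P}}(x,w)$, the conditional $\mathbb{P}(x,w \mid d)$, or the conditional $\mathbb{P}(x,w \mid v)$), with Assumption~\ref{assumption:covariate} required only in the DS case to guarantee that the integral against $\tilde{\mathbb{P}}$ equals the claimed expression.

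The only nontrivial step is justifying the exchange of inner product with (conditional) expectation. This reduces to verifying Bochner integrability of the relevant Hilbert-space-valued random element, which follows cleanly from the uniform boundedness and measurability in Assumption~\ref{assumption:RKHS}: for any Borel probability measure $\mathbb{Q}$ on the product space and any measurable bounded feature map $\phi$, one has $\int \|\phi\|\,\mathrm{d}\mathbb{Q} < \infty$, so the Bochner integral exists and commutes with continuous linear functionals, in particular with $\langle h_0, \cdot \rangle_{\mathcal{H}}$. The characteristic property in Assumption~\ref{assumption:RKHS}.3 is not needed for this theorem per se; it matters later for uniqueness of the embeddings used in the algorithm, but existence (which is all the representation requires) follows from boundedness and measurability alone.
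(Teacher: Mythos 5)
Your proposal is correct and follows essentially the same route as the paper: apply the reproducing property in the tensor product RKHS to rewrite $h_0$ evaluations as inner products against $\phi(d)\otimes\phi(x)\otimes\phi(w)$, then invoke Bochner integrability (guaranteed by the bounded, measurable feature maps of Assumption~\ref{assumption:RKHS}) to commute the (conditional) integral with the inner product, yielding the mean embedding in the appropriate slot. Your remark that the characteristic kernel property is not actually used for this representation result, but only later for injectivity of the embedding map, is also accurate and matches the paper's discussion.
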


% \begin{proof}[Sketch]
% Using the integral operator equation for the confounding bridge $h_0$,
% \begin{align*}
%     \gamma_0(d,x,z)&=\int h_0(d,x,w) \mathrm{d}\mathbb{P}(w|d,x,z) \\
%     &=\int \langle h_0, \phi(d)\otimes \phi(x) \otimes \phi(w)\rangle_{\mathcal{H}} \mathrm{d}\mathbb{P}(w|d,x,z) \\
%     &=  \left\langle h_0, \phi(d)\otimes \phi(x) \otimes \int \phi(w)\mathrm{d}\mathbb{P}(w|d,x,z)  \right\rangle_{\mathcal{H}} \\
%     &= \left\langle h_0, \phi(d)\otimes \phi(x) \otimes \mu_w(d,x,z) \right\rangle_{\mathcal{H}}.
% \end{align*}
% Using the identification of $\theta_0^{ATE}(d)$,
% \begin{align*}
%     \theta_0^{ATE}(d)&= \int h_0(d,x,w)\mathrm{d}\mathbb{P}(x,w) \\
%     &=  \int \langle h_0, \phi(d)\otimes \phi(x) \otimes \phi(w) \rangle_{\mathcal{H}} \mathrm{d}\mathbb{P}(x,w)  \\
%     &=  \left\langle h_0, \phi(d)\otimes \int [\phi(x) \otimes \phi(w)] \mathrm{d}\mathbb{P}(x,w) \right\rangle_{\mathcal{H}} \\
%     &= \left\langle h_0, \phi(d)\otimes \mu \right\rangle_{\mathcal{H}}.
% \end{align*}
% \end{proof}

I present the proof in Appendix~\ref{section:alg_deriv}. Whereas the expressions in Theorem~\ref{theorem:id_treatment} are reweightings of the confounding bridge $h_0$, the expressions in Theorem~\ref{theorem:representation_treatment} are inner products of $h_0$. The quantity $\mu_w(d,x,z)$ encodes the conditional distribution $\mathbb{P}(w|d,x,z)$ from the integral equation that defines the confounding bridge $h_0$. The quantities $\mu,\nu,\mu(d),\mu(v)$ embed various reweighting distributions: $\mathbb{P}(x,w)$, $\tilde{\mathbb{P}}(x,w)$, $\mathbb{P}(x,w|d)$, and $\mathbb{P}(x,w|v)$, respectively. In general, the quantity $\int[\phi(x)\otimes \phi(w)]\mathrm{d}\mathbb{Q}$ encodes the distribution $\mathbb{Q}$ as a function in $\mathcal{H}_{\mathcal{X}}\otimes \mathcal{H}_{\mathcal{W}}$. The characteristic property for $k_{\mathcal{X}}$ and $k_{\mathcal{W}}$ ensures that the mapping $\mathbb{Q}\mapsto \int [\phi(x)\otimes \phi(w)]\mathrm{d}\mathbb{Q}$ is injective, so that the RKHS representation of the reweighting distribution $\mathbb{Q}$ is unique.

These representations abstractly suggest estimators. For example, for $\theta_0^{ATE}(d)$ the estimator should be of the form $\hat{\theta}^{ATE}(d)=\langle \hat{h}, \phi(d)\otimes \hat{\mu}\rangle_{\mathcal{H}}$, where $\hat{h}$ is an estimator of the confounding bridge $h_0$ and $\hat{\mu}$ is an estimator of the mean embedding $\mu$. I propose a regularized kernel estimator of the confounding bridge function in the spirit of two stage least squares (2SLS): first project $\phi(W)$ onto $\phi(D)\otimes \phi(X)\otimes \phi(Z)$ to obtain $\hat{\mu}_w(D,X,Z)$, then project $Y$ onto $\phi(D)\otimes \phi(X)\otimes \hat{\mu}_w(D,X,Z)$ to obtain $\hat{h}$. I estimate unconditional mean embeddings $\hat{\mu},\hat{\nu}$ with simple averages, and I estimate conditional mean embeddings $\hat{\mu}(d),\hat{\mu}(v)$ with projections similar to $\hat{\mu}_w(d,x,z)$.\footnote{When $V$ is discrete, a simple average suffices; see the discussion in Appendix~\ref{section:application_details}.} 

\subsection{Generalized regression loss}

It is not obvious that $\hat{\theta}^{ATE}(d)=\langle \hat{h}, \phi(d)\otimes \hat{\mu}\rangle_{\mathcal{H}}$ has a closed form expression in terms of kernel matrices. In this section, I state the generalized regression losses that define the estimator. These losses generalize the loss in~\eqref{eq:loss}. Along the way, I discuss the connection to 2SLS and provide intuition for these techniques. In the next section, I formally prove that a closed form expression exists and then solve for it, generalizing the expression in~\eqref{eq:sol}. 

Similar to 2SLS, I estimate the confounding bridge $\hat{h}$ in two stages. In the first stage, I estimate the conditional mean embedding $\hat{\mu}_w(d,x,z)$. Let $n$ be the number of observations of $(d_i,x_i,w_i,z_i)$ used to estimate the conditional mean embedding $\hat{\mu}_w(d,x,z)$ with regularization parameter $\lambda$. The generalized regression loss for the regression of $\phi(W)$ on $\phi(D,X,Z)$ is
$$
\hat{E}=\argmin_{E \in \mathcal{L}_2(\mathcal{H}_{\mathcal{W}},\mathcal{H}_{\mathcal{D}}\otimes \mathcal{H}_{\mathcal{X}} \otimes \mathcal{H}_{\mathcal{Z}})} \frac{1}{n}\sum_{i=1}^n \|\phi(w_i)-E^* \phi(d_i,x_i,z_i) \|^2_{\mathcal{H}_{\mathcal{W}}} + \lambda \|E\|^2_{\mathcal{L}_2(\mathcal{H}_{\mathcal{W}},\mathcal{H}_{\mathcal{D}}\otimes \mathcal{H}_{\mathcal{X}} \otimes \mathcal{H}_{\mathcal{Z}})} 
$$
so that $\hat{\mu}_w(d,x,z)=\hat{E}^* \phi(d,x,z)$. In this notation, $E^*$ is the adjoint operator of $E$, and $\mathcal{L}_2(\mathcal{H}_{\mathcal{W}},\mathcal{H}_{\mathcal{D}}\otimes \mathcal{H}_{\mathcal{X}} \otimes \mathcal{H}_{\mathcal{Z}})$ is an RKHS whose elements are operators of the form $E:\mathcal{H}_{\mathcal{W}} \rightarrow \mathcal{H}_{\mathcal{D}}\otimes \mathcal{H}_{\mathcal{X}} \otimes \mathcal{H}_{\mathcal{Z}}$.

In the second stage, I estimate the confounding bridge $\hat{h}$. Let $m$ be the number of observations of $(\dot{y}_i,\dot{d}_i,\dot{x}_i,\dot{z}_i)$ used to estimate the confounding bridge $\hat{h}$ with regularization parameter $\xi$. This notation allows the analyst to use different quantities of observations $(n,m)$ to estimate $\hat{\mu}_w(d,x,z)$ and $\hat{h}$, or to reuse the same observations. To estimate $\hat{h}$, I regress $Y$ on $\hat{\mu}(D,X,Z):=\phi(D)\otimes \phi(X)\otimes \hat{\mu}_w(D,X,Z)$. The generalized regression loss for the regression of $Y$ on $\hat{\mu}(D,X,Z)$ is
$$
\hat{h}=\argmin_{h\in\mathcal{H}}\frac{1}{m}\sum_{i=1}^{m}\left\{\dot{y}_i-\langle h, \hat{\mu}(\dot{d}_i,\dot{x}_i,\dot{z}_i)\rangle_{\mathcal{H}} \right\}^2+\xi\|h\|^2_{\mathcal{H}}.
$$

\textcolor{black}{\cite[eqs. 5, 6]{mastouri2021proximal} propose the same generalized regression losses for the confounding bridge. The original draft of this paper \cite{singh2020kernel_original} misquoted the regression loss for $\hat{E}$ from \cite[Section 4.1]{singh2019kernel}.\footnote{By misplacing the parentheses as $\mathbb{E}[\phi(D)\otimes \phi(X) \otimes \phi(W)|D,X,Z]$ instead of $\phi(D)\otimes \phi(X) \otimes \mathbb{E}[\phi(W)|D,X,Z]$, $\hat{E}$ was undefined.} This correction was pointed out by \cite{mastouri2021proximal} and an anonymous referee prior to the current draft. However, the closed form \cite[Proposition 2]{mastouri2021proximal} differs as detailed below.}

Finally, when estimating $\hat{\theta}^{ATT}$ and $\hat{\theta}^{CATE}$, one must also estimate the conditional mean embeddings $\hat{\mu}(d)$ and $\hat{\mu}(v)$. The losses for these conditional mean embeddings mirror the loss for the conditional mean embedding $\hat{\mu}_w(d,x,z)$.

\subsection{Closed form}

I present a closed form solution for the confounding bridge estimator, generalizing kernel instrumental variable regression \cite[Algorithm 1]{singh2019kernel} to my extended RKHS construction. 

\begin{algorithm}[Estimation of confounding bridge]\label{algorithm:bridge}
Let $\odot$ mean elementwise product. Then
\begin{align*}
  \mathbf{A}&=\mathbf{K}_{DD}\odot \mathbf{K}_{XX}\odot \mathbf{K}_{ZZ}\in\mathbb{R}^{n\times n},\quad 
   \dot{\mathbf{A}}=\mathbf{K}_{D\dot{D}}\odot \mathbf{K}_{X\dot{X}}\odot \mathbf{K}_{Z\dot{Z}}\in\mathbb{R}^{n\times m}, \\
  \mathbf{B}&=(\mathbf{A}+n\lambda \mathbf{I})^{-1} \dot{\mathbf{A}}\in\mathbb{R}^{n\times m},\quad \quad \quad 
    \mathbf{M}=\mathbf{K}_{\dot{D}\dot{D}}\odot \mathbf{K}_{\dot{X}\dot{X}}\odot \{\mathbf{B}^{\top}\mathbf{K}_{WW} \mathbf{B}\}  \in\mathbb{R}^{m\times m}, \\
    \boldsymbol{\hat{\alpha}}&=(\mathbf{M}\mathbf{M}^{\top}+m\xi \mathbf{M})^{-1}\mathbf{M}\dot{\mathbf{Y}} \in \mathbb{R}^{m} ,\quad 
    \hat{h}(d,x,w)=\boldsymbol{\hat{\alpha}}^{\top}[\mathbf{K}_{\dot{D}d}\odot \mathbf{K}_{\dot{X}x}\odot \{\mathbf{B}^{\top} \mathbf{K}_{Ww}\}]\in\mathbb{R}
\end{align*}
where $(\lambda,\xi)$ are ridge penalty hyperparameters.
\end{algorithm}
See Appendix~\ref{section:alg_deriv} for the derivation, which begins with an original proof that such an $\boldsymbol{\hat{\alpha}} \in\mathbb{R}^m$ even exists. \textcolor{black}{\cite[Proposition 2]{mastouri2021proximal} show that a matrix representation $\boldsymbol{\hat{\alpha}}\in \mathbb{R}^{n\times m}$ exists, rather than the vector representation $\boldsymbol{\hat{\alpha}} \in\mathbb{R}^m$ in Algorithm~\ref{algorithm:bridge}. The vector of representation of Algorithm~\ref{algorithm:bridge} is similar to kernel ridge regression.
} The elementwise products arise because tensor product RKHSs correspond to product kernels. For example, the kernel of $\mathcal{H}_{\mathcal{D}}\otimes \mathcal{H}_{\mathcal{X}}\otimes \mathcal{H}_{\mathcal{Z}}$ is $k(d,x,z;d',x',z')=k_{\mathcal{D}}(d,d')k_{\mathcal{X}}(x,x')k_{\mathcal{Z}}(z,z')$ so its kernel matrix is $\mathbf{K}_{DD}\odot \mathbf{K}_{XX}\odot \mathbf{K}_{ZZ}$. $\mathbf{M}$ is essentially the kernel matrix for the conditional mean embedding $\hat{\mu}(d,x,z)=\phi(d)\otimes \phi(x)\otimes \hat{\mu}_w(d,x,z)$. Interpreting these expressions, $\boldsymbol{\hat{\alpha}}$ is clearly the regularized empirical projection of $Y$ onto $\phi(D)\otimes \phi(X)\otimes \hat{\mu}_w(D,X,Z)$ in the spirit of 2SLS. 

Next, I present closed form solutions for treatment effects, e.g. $\hat{\theta}^{ATE}(d)=\langle \hat{h}, \phi(d)\otimes \hat{\mu}\rangle_{\mathcal{H}}$, building on $\hat{h}$ from Algorithm~\ref{algorithm:bridge}. Whereas \cite[Algorithm 3.1]{singh2020kernel} estimate treatment effects assuming selection on observables, Algorithm~\ref{algorithm:treatment} estimates treatment effects assuming access to negative controls. For $\theta_0^{DS}$, let $\tilde{n}$ be the number of observations of $(\tilde{x}_i,\tilde{w}_i)$ drawn from population $\tilde{\mathbb{P}}$.

\begin{algorithm}[Estimation of treatment effects]\label{algorithm:treatment}
Treatment effect estimators have the closed form solutions
\begin{enumerate}
    \item $\hat{\theta}^{ATE}(d)=n^{-1}\sum_{i=1}^n \boldsymbol{\hat{\alpha}}^{\top}[\mathbf{K}_{\dot{D}d}\odot \mathbf{K}_{\dot{X}x_i}\odot \{\mathbf{B}^{\top} \mathbf{K}_{Ww_i}\}]$
     \item $\hat{\theta}^{DS}(d,\tilde{\mathbb{P}})=\tilde{n}^{-1}\sum_{i=1}^{\tilde{n}} \boldsymbol{\hat{\alpha}}^{\top}[\mathbf{K}_{\dot{D}d}\odot \mathbf{K}_{\dot{X}\tilde{x}_i}\odot \{\mathbf{B}^{\top} \mathbf{K}_{W\tilde{w}_i}\}]$
    \item $\hat{\theta}^{ATT}(d,d')=\boldsymbol{\hat{\alpha}}^{\top}[\mathbf{K}_{\dot{D}d'}\odot \{[\mathbf{K}_{\dot{X}X}\odot \{\mathbf{B}^{\top}\mathbf{K}_{WW}\}](\mathbf{K}_{DD}+n\lambda_1\mathbf{I})^{-1}\mathbf{K}_{Dd}\}]$
    \item $\hat{\theta}^{CATE}(d,v)=\boldsymbol{\hat{\alpha}}^{\top}[\mathbf{K}_{\dot{D}d}\odot \mathbf{K}_{\dot{V}v}\odot \{[\mathbf{K}_{\dot{X}X}\odot \{\mathbf{B}^{\top}\mathbf{K}_{WW}\}](\mathbf{K}_{VV}+n\lambda_2\mathbf{I})^{-1}\mathbf{K}_{Vv}\}] $
\end{enumerate}
where $(\lambda_1,\lambda_2)$ are ridge regression penalty hyperparameters. In $\hat{\theta}^{CATE}(d,v)$, $\boldsymbol{\hat{\alpha}}$ is the coefficient for the confounding bridge that includes $V$.
\end{algorithm}

See Appendix~\ref{section:alg_deriv} for the derivation. I give theoretical values for the regularization parameters that balance bias and variance in Section~\ref{sec:consistency} below. In particular, I specify $(\lambda,\xi)$ in Theorem~\ref{theorem:consistency_bridge} and $(\lambda_1,\lambda_2)$ in Theorem~\ref{theorem:consistency_treatment}. In Appendix~\ref{section:tuning}, I propose a practical tuning procedure based on the closed form solution of leave-one-out cross validation (LOOCV) to empirically balance bias and variance, and I discuss the time complexity.

\subsection{Summary}

To fix ideas, I summarize the end-to-end procedure for $\hat{\theta}^{ATE}(d)$. For simplicity, I suppose that the analyst re-uses observations in the two stages of confounding bridge estimation. I provide additional discussion in Appendix~\ref{section:alg_deriv} for researchers who are new to kernel methods.

\begin{algorithm}[End-to-end details]\label{algorithm:concrete_main}
Given $n$ observations of outcome $Y$, treatment $D$, covariates $X$, negative control outcome $W$, and negative control treatment $Z$,
\begin{enumerate}
 \item Specify the kernels $k_{\mathcal{D}},k_{\mathcal{X}},k_{\mathcal{W}},k_{\mathcal{Z}}$.
 \begin{enumerate}
     \item For multivariate objects, e.g. $X=(X_1,...,X_p)$, use the product of scalar kernels
     $$
     k_{\mathcal{X}}(x,x')=\prod_{j=1}^p k_{\mathcal{X}_1}(x_1,x_1')\cdot ...\cdot k_{\mathcal{X}_p}(x_p,x_p').
     $$
     \item Tune the scalar kernel hyperparameters. For example, if the treatment kernel $k_{D}$ is chosen as the Gaussian kernel, a standard heuristic is to use the median interpoint distance among observed treatment values.
     \item Compute the kernel matrices, e.g. $\mathbf{K}_{DD}\in\mathbb{R}^{n\times n}$ with $(i,j)$-th entry $k_{\mathcal{D}}(d_i,d_j)$.
 \end{enumerate}
    \item Specify the regularization hyperparameters $(\lambda,\xi)$.
    \begin{enumerate}
        \item For $\lambda$, I derive the closed form solution of LOOCV in Appendix~\ref{section:tuning}.
        \item The same procedure applies to $\xi$, plugging in the chosen value of $\lambda$.
    \end{enumerate}
    \item Estimate the confounding bridge $\hat{h}$ in two stages, using $(\lambda,\xi)$.
    \begin{enumerate}
        \item Estimate the distribution in the integral equation $\hat{\mathbb{P}}(w|d,x,z)$ via its mean embedding $\hat{\mu}_w(d,x,z)$ with regularization $\lambda$ as
 $$
[\hat{\mu}_w(d,x,z)](w)=\mathbf{K}_{wW} (\mathbf{K}_{DD}\odot \mathbf{K}_{XX}\odot \mathbf{K}_{ZZ}+n\lambda \mathbf{I})^{-1}[\mathbf{K}_{Dd}\odot \mathbf{K}_{Xx} \odot \mathbf{K}_{Zz}].
$$
        \item Regress $Y$ onto $\phi(D)\otimes \phi(X)\otimes \hat{\mu}_w(D,X,Z)$ with regularization $\xi$. It turns out that $\hat{h}(d,x,w)=\boldsymbol{\hat{\alpha}}^{\top}[\mathbf{K}_{Dd}\odot \mathbf{K}_{Xx}\odot \{\mathbf{B}^{\top} \mathbf{K}_{Ww}\}]\in\mathbb{R}$ where
        \begin{align*}
  \mathbf{A}&=\mathbf{K}_{DD}\odot \mathbf{K}_{XX}\odot \mathbf{K}_{ZZ}\in\mathbb{R}^{n\times n},\quad 
  \mathbf{B}=(\mathbf{A}+n\lambda \mathbf{I})^{-1} \mathbf{A}\in\mathbb{R}^{n\times n}, \\
    \mathbf{M}&=\mathbf{K}_{DD}\odot \mathbf{K}_{XX}\odot \{\mathbf{B}^{\top}\mathbf{K}_{WW} \mathbf{B}\}  \in\mathbb{R}^{n\times n},\quad  
    \boldsymbol{\hat{\alpha}}=(\mathbf{M}\mathbf{M}^{\top}+n\xi \mathbf{M})^{-1}\mathbf{M}\mathbf{Y} \in \mathbb{R}^{n}.
\end{align*}
    \end{enumerate}
    \item Estimate the counterfactual distribution $\hat{\mathbb{P}}(x,w)$ via its mean embedding $\hat{\mu}$ as
    $$
    [\hat{\mu}](x,w)=\frac{1}{n}\sum_{i=1}^n k_{\mathcal{X}}(x_i,x)k_{\mathcal{W}}(w_i,w)\in\mathbb{R}.
    $$
    \item Estimate the dose response $\hat{\theta}^{ATE}(d)$ by combining $\hat{h}$ and $\hat{\mu}$ according to $
\hat{\theta}^{ATE}(d)=\langle \hat{h}, \phi(d)\otimes \hat{\mu} \rangle_{\mathcal{H}}
$. To do so, match the common arguments $(x,w)$ of $\hat{h}$ and $\hat{\mu}$. In summary,
$$
\hat{\theta}^{ATE}(d)=\frac{1}{n}\sum_{i=1}^n \boldsymbol{\hat{\alpha}}^{\top}[\mathbf{K}_{Dd}\odot \mathbf{K}_{Xx_i}\odot \{\mathbf{B}^{\top} \mathbf{K}_{Ww_i}\}]\in\mathbb{R}.
$$
\end{enumerate}
\end{algorithm}
\section{Consistency}\label{sec:consistency}

To define the learning problem in Section~\ref{sec:problem}, I placed three assumptions: availability of negative controls (Assumption~\ref{assumption:negative}); existence and completeness of the confounding bridge (Assumption~\ref{assumption:solution});  and invariance of the confounding bridge for transfer learning (Assumption~\ref{assumption:covariate}). To construct an algorithm in Section~\ref{section:algorithm}, I assumed RKHS regularity (Assumption~\ref{assumption:RKHS}). To guarantee uniform consistency in this section, I place three final assumptions: original space regularity (Assumption~\ref{assumption:original}); smoothness and effective dimension of conditional expectation operators (Assumption~\ref{assumption:smooth_op}); and smoothness and effective dimension of the confounding bridge (Assumption~\ref{assumption:smooth_bridge}). I first prove uniform consistency of the confounding bridge, then uniform consistency of treatment effects. As before, for $\theta_0^{CATE}$ I extend the stated assumptions from $X$ to $(V,X)$. 

\textcolor{black}{\cite[Theorem 2]{mastouri2021proximal} analyze excess risk of a surrogate loss for a kernel two stage regression estimator of the confounding bridge, with finite sample rates, under smoothness and effective dimension assumptions. Excess risk of a surrogate loss corresponds to projected mean square error for the confounding bridge. 
}  

\subsection{Confounding bridge}

I require weak regularity conditions on the original spaces of the outcome $Y$, treatment $D$, covariates $(V,X)$, and negative controls $(W,Z)$.

\begin{assumption}[Original space regularity conditions]\label{assumption:original}
Assume
\begin{enumerate}
\item $Y \in\mathcal{Y}\subset \mathbb{R}$ is bounded, i.e. there exists $C<\infty$ such that $|Y|\leq C$ almost surely.
    \item $\mathcal{D}$, $\mathcal{X}$, $\mathcal{W}$, and $\mathcal{Z}$ are Polish spaces.
\end{enumerate}
\end{assumption}

To simplify notation and analysis, I require that the outcome $Y\in\mathbb{R}$ is a bounded scalar. More generally, $\mathcal{Y}$ could be a separable Hilbert space. I preserve generality for treatment, covariates, and negative controls. A Polish space is a separable and completely metrizable topological space. Random variables with support in a Polish space may be discrete or continuous and low, high, or infinite dimensional.  As such, I allow for treatment, covariates, and negative controls that could even be texts, graphs, or images. 

Next, I place a smoothness and effective dimension conditions in the sense of~\eqref{eq:prior} and~\eqref{eq:prior2} for the conditional mean embedding $\mu_{w}(d,x,z)$. In anticipation of later analysis, I articulate this assumption abstractly. Consider the abstract conditional mean embedding $\mu_{a}(b):=\int \phi(a)\mathrm{d}\mathbb{P}(a|b)$ where $a\in\mathcal{A}_{\ell}$ and $b\in\mathcal{B}_{\ell}$. I will ultimately consider the three different conditional mean embeddings $\mu_w(d,x,z)$, $\mu(d)$, and $\mu(v)$ indexed by $\ell \in\{0,1,2\}$. As previewed in Section~\ref{sec:problem}, the conditional expectation operator $E_{\ell}:\mathcal{H}_{\mathcal{A}_{\ell}}\rightarrow\mathcal{H}_{\mathcal{B}_{\ell}}$, $f(\cdot)\mapsto \mathbb{E}[f(A_{\ell})|B_{\ell}=\cdot]$ encodes the same information as $\mu_{a}(b)$. In particular,
$$
\mu_{a}(b)=\int \phi(a)\mathrm{d}\mathbb{P}(a|b) =[E_{\ell}\phi(\cdot)](b) =[E_{\ell}^* \phi(b)](\cdot),\quad a\in\mathcal{A}_{\ell},\quad  b\in\mathcal{B}_{\ell}
$$
where $E_{\ell}^*:\mathcal{H}_{\mathcal{B}_{\ell}} \rightarrow \mathcal{H}_{\mathcal{A}_{\ell}}$, $g(\cdot)\mapsto  \mathbb{E}[g(B_{\ell})|A_{\ell}=\cdot]$ is the adjoint of $E_{\ell}$. I denote the space of Hilbert-Schmidt operators between $\mathcal{H}_{\mathcal{A}_{\ell}}$ and $\mathcal{H}_{\mathcal{B}_{\ell}}$ by $\mathcal{L}_2(\mathcal{H}_{\mathcal{A}_{\ell}},\mathcal{H}_{\mathcal{B}_{\ell}})$, which is an RKHS in its own right.  

\begin{assumption}[Smoothness and spectral decay for conditional expectation operator]\label{assumption:smooth_op}
Assume $E_{\ell}\in [\mathcal{L}_2(\mathcal{H}_{\mathcal{A}_{\ell}},\mathcal{H}_{\mathcal{B}_{\ell}})]^{^{c_{\ell}}}$ and $\eta_j(\mathcal{H}_{\mathcal{B}_{\ell}})\leq C j^{-b_{\ell}}$.
\end{assumption}

To specialize the assumption, all one has to do is specify $\mathcal{A}_{\ell}$ and $\mathcal{B}_{\ell}$. For example, for $\mu_{w}(d,x,z)$, $\mathcal{A}_0=\mathcal{W}$ and $\mathcal{B}_0=\mathcal{D}\times \mathcal{X}\times \mathcal{Z}$. By assuming smoothness and effective dimension of $E_0$, I assume smoothness and effective dimension of $\mu_w(d,x,z)$. I explicitly specialize the assumption in Appendices~\ref{section:consistency_proof1} and~\ref{section:consistency_proof2}.

The final assumption that I place in order to prove uniform consistency of the confounding bridge estimator $\hat{h}$ is that the confounding bridge $h_0$ is smooth in the sense of~\eqref{eq:prior} with low effective dimension in the sense of~\eqref{eq:prior2}. Recall that the features for the RKHS $\mathcal{H}$ are $\phi(d,x,w)=\phi(d)\otimes \phi(x)\otimes \phi(w)$. Recall from Theorem~\ref{theorem:representation_treatment} that we must solve the integral equation $\gamma_0(d,x,z)=\langle h_0, \mu(d,x,z) \rangle_{\mathcal{H}}$ where $\mu(d,x,z)=\phi(d)\otimes \phi(x)\otimes \mu_w(d,x,z)$ is a mean embedding. By construction,
$$
\langle \mu(d,x,z), \mu(d',x',z') \rangle_{\mathcal{H}}=k_{\mathcal{D}}(d,d')k_{\mathcal{X}}(x,x')\int k_{\mathcal{W}}(w,w')\mathrm{d}\mathbb{P}(w|d,x,z)\mathrm{d}\mathbb{P}(w'|d',x',z'),
$$
which we may interpret as the inner product of a space $\mathcal{H}_{\mu}\subset \mathcal{H}$. In words, the kernel of $\mathcal{H}_{\mu}$ is simply the kernel of $\mathcal{H}$ after integrating out $(w,w')$ according to the conditional distribution $\mathbb{P}(w|d,x,z)$ from the integral equation. Formally, $\mathcal{H}_{\mu}$ can be viewed as an RKHS of functions evaluated on mean embeddings instead of features \cite{szabo2016learning}.

\begin{assumption}[Smoothness and spectral decay for confounding bridge]\label{assumption:smooth_bridge}
Assume $h_0\in \mathcal{H}_{\mu}^c$ and $\eta_j(\mathcal{H}_{\mu})\leq C j^{-b}$.
\end{assumption}

As a technical aside, an analyst may introduce additional nonlinearity by enriching the model and enriching this assumption. A richer model would instead allow $\gamma_0(d,x,z)=H_0\mu(d,x,z)$ where $H_0:\mathcal{H}\rightarrow \mathbb{R}$ is a nonlinear mapping in a richer RKHS. In such case, the smoothness and effective dimension assumptions would be placed on $H_0$ rather than $h_0$ and a notion of H\"older continuity is required \cite[Table 1]{szabo2016learning}. For clarity, I omit this complexity. %See the working paper for details.

Under these conditions, I arrive at the first main result: uniform consistency of the confounding bridge. This result appears to be the first finite sample uniform analysis of nonparametric instrumental variable regression in the RKHS. By allowing $n\neq m$, I allow
the possibility of asymmetric sample splitting and the use of observations from different
data sets. The finite sample rates are expressed in terms of $(n,m)$. The parameter $a>0$ characterizes the ratio between the sample sizes.

\begin{theorem}[Consistency of confounding bridge]\label{theorem:consistency_bridge}
Suppose Assumptions~\ref{assumption:solution}, \ref{assumption:RKHS}, \ref{assumption:original}, \ref{assumption:smooth_op} with $\mathcal{A}_0=\mathcal{W}$ and $\mathcal{B}_0=\mathcal{D}\times \mathcal{X}\times \mathcal{Z}$, and~\ref{assumption:smooth_bridge} hold. Set $\lambda=n^{-\frac{1}{c_0+1/b_0}}$ and $n=m^{\frac{a(c_0+1/b_0)}{c_0-1}}$ where $a>0$.
\begin{enumerate}
    \item If $a\leq (c+3)/(c+1/b)$ then 
    $\|\hat{h}-h_0\|_{\infty}=O_p(m^{-\frac{1}{2}\frac{a(c-1)}{c+3}})$ with $\xi=m^{-\frac{a}{c+3}}$.
    \item If $a\geq (c+3)/(c+1/b)$ then $\|\hat{h}-h_0\|_{\infty}=O_p(m^{-\frac{1}{2}\frac{c-1}{c+1/b}})$ with $\xi=m^{-\frac{1}{c+1/b}}$.
\end{enumerate}
\end{theorem}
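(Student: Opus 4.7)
The plan is to (i) reduce the sup-norm error to the RKHS norm via bounded kernels, (ii) decompose the RKHS error into a first-stage contribution (convergence of $\hat{\mu}_w$ to $\mu_w$) and a second-stage contribution (ridge regression under a source condition), and (iii) balance these pieces by optimizing $(\lambda,\xi)$, with the two regimes in the theorem arising from which contribution dominates. For the sup-norm reduction, the reproducing property and Cauchy-Schwarz give
$$|\hat{h}(d,x,w) - h_0(d,x,w)| \leq \|\hat{h} - h_0\|_{\mathcal{H}}\cdot\|\phi(d)\otimes\phi(x)\otimes\phi(w)\|_{\mathcal{H}} \leq \kappa_d\kappa_x\kappa_w\,\|\hat{h} - h_0\|_{\mathcal{H}}$$
by Assumption~\ref{assumption:RKHS}, so it suffices to control $\|\hat{h} - h_0\|_{\mathcal{H}}$.

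For the first stage, $\hat{\mu}_w = \hat{E}_0^*\phi(\cdot)$ is an operator-valued kernel ridge regression. Under Assumption~\ref{assumption:smooth_op} with exponent $c_0$ and the specified $\lambda = n^{-1/(c_0+1)}$, the Smale-Zhou integral-operator framework of \cite{smale2007learning}, adapted to operator-valued targets, yields $\|\hat{E}_0 - E_0\| = O_p(n^{-(c_0-1)/(2(c_0+1))})$ in the relevant Hilbert-Schmidt norm. Substituting $n = m^{a(c_0+1)/(c_0-1)}$ reparameterizes this as $O_p(m^{-a/2})$, and the boundedness of $\phi(d,x,z)$ converts the operator rate into a uniform bound $\sup_{(d,x,z)}\|\hat{\mu}_w(d,x,z) - \mu_w(d,x,z)\|_{\mathcal{H}_{\mathcal{W}}} = O_p(m^{-a/2})$; this is the object that will be fed into the second-stage analysis.

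For the second stage, introduce a population regularized solution $h_\xi$ that uses the true conditional mean embedding $\mu$ in place of $\hat{\mu}$, and split $\hat{h} - h_0 = (\hat{h} - h_\xi) + (h_\xi - h_0)$. Under Assumption~\ref{assumption:smooth_bridge}, the approximation term admits the classical source-condition bias $\|h_\xi - h_0\|_{\mathcal{H}} = O(\xi^{(c-1)/2})$. The estimation term splits further into (a) an oracle sampling error at the true features, of order $(m\xi)^{-1/2}$, and (b) a first-stage propagation error from regressing on $\hat{\mu}$ rather than $\mu$. A perturbation analysis of the regularized normal equations, exploiting the first-stage uniform rate and the source condition, gives the latter contribution at order $m^{-a/2}\xi^{-2}$. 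Combining,
$$\|\hat{h} - h_0\|_{\mathcal{H}} = O_p\!\left(\xi^{(c-1)/2} + (m\xi)^{-1/2} + m^{-a/2}\xi^{-2}\right).$$
For small $a$, the propagation term dominates the oracle term; balancing it against the bias via $\xi = m^{-a/(c+3)}$ yields rate $m^{-a(c-1)/(2(c+3))}$. For large $a$, the propagation term is negligible; the classical balance $(m\xi)^{-1/2} \sim \xi^{(c-1)/2}$ via $\xi = m^{-1/(c+1)}$ returns the standard kernel ridge rate $m^{-(c-1)/(2(c+1))}$. The threshold $a = (c+3)/(c+1)$ is precisely where both $\xi$-schedules and both rates coincide, giving the stated dichotomy after multiplication by the constant $\kappa_d\kappa_x\kappa_w$ to translate back to $\|\cdot\|_\infty$.

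The hard part will be pinning down the sharp $\xi^{-2}$ dependence in the first-stage propagation term, rather than the naive $\xi^{-1}$ that a direct Lipschitz bound on the regularized inverse would produce. This demands a perturbation analysis of the regularized operator equation in which both the empirical covariance operator and the cross-covariance with $Y$ are perturbed by replacing $\mu$ with $\hat{\mu}$, combined with interpolation-space arguments that exploit the smoothness of $h_0$ in $\mathcal{H}_\mu^c$ to absorb part of the $\xi^{-1}$ blow-up. Once this exponent is established, the remaining concentration steps (Bernstein-type inequalities in Hilbert space for the empirical covariance operators, controlled separately over the $n$ first-stage samples and the $m$ second-stage samples to accommodate asymmetric sample-splitting) follow the standard RKHS learning-theory toolbox.
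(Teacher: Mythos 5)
Your proposal follows essentially the same route as the paper's proof: reduce the sup norm to the RKHS norm via bounded kernels, bound the first-stage conditional mean embedding uniformly at rate $O_p(n^{-\frac{1}{2}\frac{c_0-1}{c_0+1}})=O_p(m^{-a/2})$, decompose the second stage into the approximation error $\xi^{(c-1)/2}$, the oracle sampling error $(\sqrt{m}\xi)^{-1}$, and the first-stage propagation error $m^{-a/2}\xi^{-2}$, and balance to get the two regimes with threshold $a=(c+3)/(c+1)$. The only inaccuracy is your closing paragraph: the $\xi^{-2}$ propagation term is not a "hard part" requiring interpolation-space arguments to sharpen a naive $\xi^{-1}$ bound--it is itself the crude bound, obtained directly from the resolvent identity $(\hat{\mathbf{T}}+\xi)^{-1}-(\mathbf{T}+\xi)^{-1}=(\mathbf{T}+\xi)^{-1}(\mathbf{T}-\hat{\mathbf{T}})(\hat{\mathbf{T}}+\xi)^{-1}$ together with $\|(\mathbf{T}+\xi)^{-1}\|\leq \xi^{-1}$ and $\|\hat{\mathbf{T}}-\mathbf{T}\|\lesssim r_{\mu}$, which is exactly what the paper does.
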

See Appendix~\ref{section:consistency_proof1} for exact finite sample rates and intermediate results in RKHS norm. At $a=(c+3)/(c+1/b)$, the convergence rate $m^{-\frac{1}{2}\frac{c-1}{c+1/b}}$ attains the rate of single stage kernel ridge regression with respect to $m$ \cite{fischer2017sobolev}. This rate is calibrated by $c$, the smoothness of confounding bridge and $b$, the effective dimension of its RKHS.

The rate of Theorem~\ref{theorem:consistency_bridge} requires the ratio between sample sizes to be $n=m^{\frac{c+3}{c+1/b}\cdot \frac{(c_0+1/b_0)}{c_0-1}}$, implying $n\gg m$. In practice, the analyst often uses the same observations to estimate $\hat{\mu}_w(d,x,z)$ and $\hat{h}$.

\begin{corollary}[Reusing samples]\label{corollary:reuse}
If samples are reused to estimate $\hat{\mu}_w(d,x,z)$ and $\hat{h}$, then $n=m$, $a=(c_0-1)/(c_0+1/b_0)$, $\xi=n^{-\frac{c_0-1}{(c_0+1/b_0)(c+3)}}$, and $\|\hat{h}-h_0\|_{\infty}=O_p(n^{-\frac{1}{2}\frac{c_0-1}{c_0+1/b_0}\frac{c-1}{c+3}})$.
\end{corollary}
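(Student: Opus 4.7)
The corollary is a direct specialization of Theorem~\ref{theorem:consistency_bridge} to the case $n=m$, so the proof is purely algebraic: I would substitute $n=m$ into the sample-size balancing constraint of the theorem, solve for the resulting exponent $a$, check which of the two cases of the theorem applies, and then read off the choice of $\xi$ and the rate.

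\textbf{Step 1 (solve for $a$).} Theorem~\ref{theorem:consistency_bridge} requires $n=m^{a(c_0+1)/(c_0-1)}$. Reusing samples forces $n=m$, so I would equate exponents to obtain
\[
\frac{a(c_0+1)}{c_0-1}=1 \quad\Longleftrightarrow\quad a=\frac{c_0-1}{c_0+1}.
\]
Since $c_0>1$, this ratio is strictly less than $1$.

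\textbf{Step 2 (identify the applicable case).} I would then verify $a\le (c+3)/(c+1)$ so that Case~1 of Theorem~\ref{theorem:consistency_bridge} applies. Because $c\in(1,2]$, we have $(c+3)/(c+1)=1+2/(c+1)>1$, and by Step~1, $a<1$, so the inequality is automatic. Hence Case~1 is active, and the formulas for $\xi$ and the rate from Case~1 may be used directly.

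\textbf{Step 3 (plug in and simplify).} Substituting $a=(c_0-1)/(c_0+1)$ and $m=n$ into the Case~1 prescription $\xi=m^{-a/(c+3)}$ yields
\[
\xi=n^{-\frac{1}{c+3}\cdot\frac{c_0-1}{c_0+1}}=n^{-\frac{c_0-1}{(c_0+1)(c+3)}},
\]
and substituting into the Case~1 rate $\|\hat h-h_0\|_\infty=O_p\bigl(m^{-\frac{1}{2}\frac{a(c-1)}{c+3}}\bigr)$ gives
\[
\|\hat h-h_0\|_\infty=O_p\!\left(n^{-\frac{1}{2}\frac{c_0-1}{c_0+1}\frac{c-1}{c+3}}\right),
\]
which is exactly the stated conclusion. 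No step is a genuine obstacle; the only thing to be careful about is keeping the exponents straight and confirming $a\le(c+3)/(c+1)$ before invoking Case~1 rather than Case~2. All three assumptions (Assumptions~\ref{assumption:RKHS},~\ref{assumption:smooth_op},~\ref{assumption:smooth_bridge}) transfer unchanged from Theorem~\ref{theorem:consistency_bridge} because nothing about the reuse of observations affects them; only the sample-size bookkeeping changes.
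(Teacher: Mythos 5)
Your proof is correct and matches the paper's intended argument: the corollary is obtained by setting $n=m$ in Theorem~\ref{theorem:consistency_bridge}, solving $a(c_0+1)/(c_0-1)=1$ to get $a=(c_0-1)/(c_0+1)<1$, noting that $(c+3)/(c+1)>1$ for $c\in(1,2]$ so Case~1 applies, and substituting into the Case~1 formulas for $\xi$ and the rate. The only cosmetic slip is that you cite only Assumptions~\ref{assumption:RKHS}, \ref{assumption:smooth_op}, \ref{assumption:smooth_bridge} as carrying over, whereas Theorem~\ref{theorem:consistency_bridge} also requires Assumptions~\ref{assumption:solution} and~\ref{assumption:original}; all five transfer unchanged.
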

This rate adapts to the smoothness $c_0$ of the conditional distribution as well as the smoothness $c$ of the confounding bridge. The slow rate reflects the challenge of a uniform norm guarantee in an ill posed inverse problem. A faster rate could be possible under further assumptions, which I leave to future work.

\subsection{Treatment effects}

Recall from Theorem~\ref{theorem:representation_treatment} that $\theta_0^{ATT}$ and $\theta_0^{CATE}$ contain conditional mean embeddings $\mu(d)$ and $\mu(v)$, respectively. I estimate these conditional mean embeddings by regularized projections in Algorithm~\ref{algorithm:treatment}. To control bias and variance, I place smoothness and effective dimension conditions in the sense of~\eqref{eq:prior} and~\eqref{eq:prior2} for $\mu(d)$ and $\mu(v)$ as well. As previewed in the discussion about $\mu_w(d,x,z)$ and $E_0$, a conditional mean embedding corresponds to a conditional expectation operator. As before, all one has to do is specify $\mathcal{A}_{\ell}$ and $\mathcal{B}_{\ell}$ to specialize the assumption. For $\mu(d)$, $\mathcal{A}_1=\mathcal{X}\times \mathcal{W}$ and $\mathcal{B}_1=\mathcal{D}$; for $\mu(v)$, $\mathcal{A}_2=\mathcal{X}\times \mathcal{W}$ and $\mathcal{B}_2=\mathcal{V}$. 

Under these conditions, I arrive at the second main result: uniform consistency of the treatment effect estimators. For simplicity, I specialize to the scenario with the fastest rates. Recall $\tilde{n}$ is the number of observations drawn from the alternative population $\tilde{\mathbb{P}}$.

\begin{theorem}[Consistency of treatment effects]\label{theorem:consistency_treatment}
Suppose Assumption~\ref{assumption:negative} holds, as well as the conditions of Theorem~\ref{theorem:consistency_bridge}. Set $(\lambda,\lambda_1,\lambda_2)=(n^{-\frac{1}{c_0+1/b_0}},n^{-\frac{1}{c_1+1/b_1}},n^{-\frac{1}{c_2+1/b_2}})$, $\xi=m^{-\frac{1}{c+1/b}}$, and $n=m^{\frac{c+3}{c+1/b}\cdot \frac{(c_0+1/b_0)}{c_0-1}}$.
\begin{enumerate}
    \item Then
    $$
    \|\hat{\theta}^{ATE}-\theta_0^{ATE}\|_{\infty}=O_p\left(m^{-\frac{1}{2}\frac{c-1}{c+1/b}}+n^{-\frac{1}{2}}\right).
    $$
    \item If in addition Assumption~\ref{assumption:covariate} holds, then 
      $$
    \|\hat{\theta}^{DS}(\cdot,\tilde{\mathbb{P}})-\theta_0^{DS}(\cdot,\tilde{\mathbb{P}})\|_{\infty}=O_p\left( m^{-\frac{1}{2}\frac{c-1}{c+1/b}}+\tilde{n}^{-\frac{1}{2}}\right).
    $$
    \item If in addition Assumption~\ref{assumption:smooth_op} holds with $\mathcal{A}_1=\mathcal{X}\times \mathcal{W}$ and $\mathcal{B}_1=\mathcal{D}$, then
      $$
    \|\hat{\theta}^{ATT}-\theta_0^{ATT}\|_{\infty}=O_p\left(m^{-\frac{1}{2}\frac{c-1}{c+1/b}}+n^{-\frac{1}{2}\frac{c_1-1}{c_1+1/b_1}}\right).
    $$
    \item If in addition Assumption~\ref{assumption:smooth_op} holds with $\mathcal{A}_2=\mathcal{X}\times \mathcal{W}$ and $\mathcal{B}_2=\mathcal{V}$, then
      $$
    \|\hat{\theta}^{CATE}-\theta_0^{CATE}\|_{\infty}=O_p\left(m^{-\frac{1}{2}\frac{c-1}{c+1/b}}+n^{-\frac{1}{2}\frac{c_2-1}{c_2+1/b_2}}\right).
    $$
\end{enumerate}
\end{theorem}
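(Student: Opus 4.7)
My approach is to combine the uniform rate for the confounding bridge (Theorem~\ref{theorem:consistency_bridge} and Corollary~\ref{corollary:reuse}) with convergence rates for the various kernel mean embeddings that appear in the representations of Theorem~\ref{theorem:representation_treatment}. In every case, the estimator has the form $\langle \hat{h}, \tau \rangle_{\mathcal{H}}$ for some (tensor product) embedding $\tau$, and the estimand has the analogous form $\langle h_0, \tau_0\rangle_{\mathcal{H}}$. I decompose the error as
\begin{equation*}
\langle \hat{h} - h_0, \tau_0 \rangle_{\mathcal{H}} + \langle \hat{h}, \tau - \tau_0 \rangle_{\mathcal{H}}.
\end{equation*}
The first summand always reduces to an integral of $\hat{h} - h_0$ against a probability measure, so it is uniformly bounded by $\|\hat{h} - h_0\|_{\infty}$. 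The second summand is controlled by Cauchy--Schwarz in the tensor product RKHS using Assumption~\ref{assumption:RKHS}, together with $\|\hat{h}\|_{\mathcal{H}} = O_p(1)$, a standard bound on the ridge-type solution in Algorithm~\ref{algorithm:bridge}.

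For $\hat{\theta}^{ATE}$ and $\hat{\theta}^{DS}$ the relevant embedding is unconditional: $\hat{\mu} = n^{-1}\sum_i \phi(x_i)\otimes \phi(w_i)$, or its analogue $\hat{\nu}$ over the $\tilde{n}$ draws from $\tilde{\mathbb{P}}$. Bounded-kernel concentration gives $\|\hat{\mu} - \mu\| = O_p(n^{-1/2})$ and $\|\hat{\nu} - \nu\| = O_p(\tilde{n}^{-1/2})$. Proposition~\ref{prop:covariate} guarantees that the same $\hat{h}$, trained on the source population, remains valid in the shifted population. Since the bridge rate of Corollary~\ref{corollary:reuse} is slower than $n^{-1/2}$, it dominates and the stated rates in parts (1) and (2) follow.

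For $\hat{\theta}^{ATT}$ and $\hat{\theta}^{CATE}$ the embeddings $\mu(d)$ and $\mu(v)$ are themselves conditional and must be estimated by regularized RKHS regressions analogous to the stage 1 estimator of $\hat{\mu}_w$. I apply Assumption~\ref{assumption:smooth_op} with $(\mathcal{A}_1,\mathcal{B}_1) = (\mathcal{X}\times \mathcal{W},\mathcal{D})$ and $(\mathcal{A}_2,\mathcal{B}_2) = (\mathcal{X}\times \mathcal{W},\mathcal{V})$ respectively. Mimicking the stage 1 analysis that underlies Theorem~\ref{theorem:consistency_bridge}, with $\lambda_j = n^{-1/(c_j+1)}$ one obtains $\sup_d\|\hat{\mu}(d) - \mu(d)\| = O_p(n^{-\frac{1}{2}\frac{c_1-1}{c_1+1}})$ and $\sup_v\|\hat{\mu}(v) - \mu(v)\| = O_p(n^{-\frac{1}{2}\frac{c_2-1}{c_2+1}})$. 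Combining with the bridge rate via the decomposition above produces the two-term bounds in parts (3) and (4).

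The main obstacle is obtaining the uniform-in-conditioning-argument rate $\sup_b \|\hat{\mu}_a(b) - \mu_a(b)\|$, rather than a pointwise or integrated rate. The plan is to pass from a bound on the operator error $\|\hat{E}_j - E_j\|_{\mathcal{L}_2(\mathcal{H}_{\mathcal{A}_j},\mathcal{H}_{\mathcal{B}_j})}$ to a sup-norm bound on the conditional mean embedding via the identity $\mu_a(b) = E_j^*\phi(b)$ and boundedness of $\phi(b)$ from Assumption~\ref{assumption:RKHS}; this is precisely the tool already needed to prove Theorem~\ref{theorem:consistency_bridge}. The remaining steps, Cauchy--Schwarz in tensor product RKHSs, concentration of empirical means of bounded-kernel features, and invocation of Theorem~\ref{theorem:consistency_bridge} are essentially mechanical.
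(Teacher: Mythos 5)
Your proposal is correct and follows essentially the same route as the paper: the paper also writes each estimator as an inner product $\langle \hat{h},\phi(\cdot)\otimes\hat{\mu}_{\bullet}\rangle_{\mathcal{H}}$, decomposes the error into a bridge-error term and embedding-error terms (its three-term decomposition is just your two-term one with $\hat{h}$ in the cross term further split into $(\hat{h}-h_0)+h_0$, which is also the clean way to justify your claim that $\|\hat{h}\|_{\mathcal{H}}=O_p(1)$), bounds the unconditional embeddings at $n^{-1/2}$ and $\tilde{n}^{-1/2}$ by Hilbert-space concentration, and obtains the uniform-in-$(d,v)$ conditional-embedding rates exactly as you propose, via the operator error and $\mu_a(b)=E_j^*\phi(b)$ with bounded $\phi(b)$. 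The only cosmetic difference is that the paper controls the bridge term through $\|\hat{h}-h_0\|_{\mathcal{H}}$ times kernel bounds rather than through $\|\hat{h}-h_0\|_{\infty}$ directly; both yield the stated rates.
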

See Appendix~\ref{section:consistency_proof2} for exact finite sample rates. Inspecting the rates, we see that each one is a sum of the rate for the confounding bridge from Theorem~\ref{theorem:consistency_bridge} with the rate for the appropriate mean embedding estimation procedure. The rates adapt to the smoothness parameters $(c,c_0,c_1,c_2)$ and effective dimension parameters $(b,b_0,b_1,b_2)$ of the confounding bridge $h_0$ and the conditional expectation operators $E_0$, $E_1$, and $E_2$. Equivalently, the rates adapt to the smoothness and effective dimension of the confounding bridge $h_0$ and the conditional distributions $\mathbb{P}(w|d,x,z)$, $\mathbb{P}(x,w|d)$, and $\mathbb{P}(x,w|v)$.

The goal of this project is to propose dose response and heterogeneous treatment effect estimators to ultimately inform policy and medical decisions. For this reason, I prove a uniform guarantee that strictly controls error for \textit{any} level of treatment, rather than a mean square guarantee that controls error for the \textit{average} level of treatment. Uniform guarantees come at the cost of slower rates. In negative control treatment effect estimation, the ill posedness of the confounding bridge learning problem compounds this phenomenon. Theorem~\ref{theorem:consistency_treatment} appears to be the first finite sample analysis for nonparametric negative control treatment effects, and it holds under weak assumptions. Obtaining faster rates, perhaps via further assumptions, is an important direction for future work.
\section{Simulation and application}\label{section:experiments}

\subsection{Simulations}

I evaluate the empirical performance of the new estimators. I focus on dose response with negative controls, and consider various designs with varying sample sizes. Specifically, I compare the new algorithm that uses negative controls (\verb|N.C.|) with an existing RKHS algorithm for nonparametric treatment effects (\verb|T.E.|) \cite{singh2020kernel} that ignores unobserved confounding and instead classifies negative controls as additional covariates. Whereas the new algorithm involves reweighting a confounding bridge, the previous algorithm involves reweighting a regression. For each design, sample size, and algorithm, I implement 100 simulations and calculate mean square error (MSE) with respect to the true counterfactual function. 

\begin{figure}[ht]
\begin{centering}
     \begin{subfigure}[b]{0.30\textwidth}
         \centering
         \includegraphics[width=\textwidth]{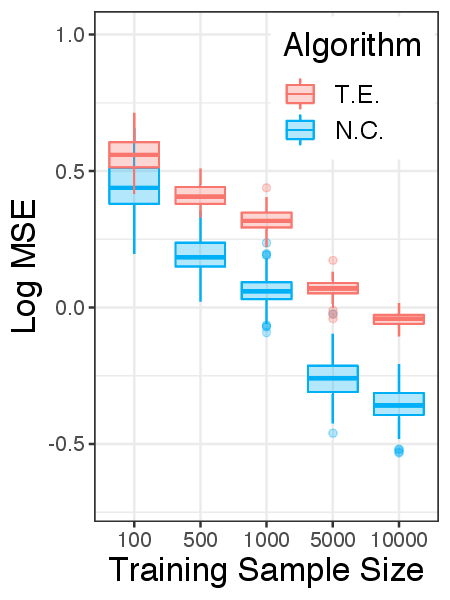}
         \vspace{-15pt}
         \caption{Quadratic}
     \end{subfigure}
     \hfill
     \begin{subfigure}[b]{0.30\textwidth}
         \centering
         \includegraphics[width=\textwidth]{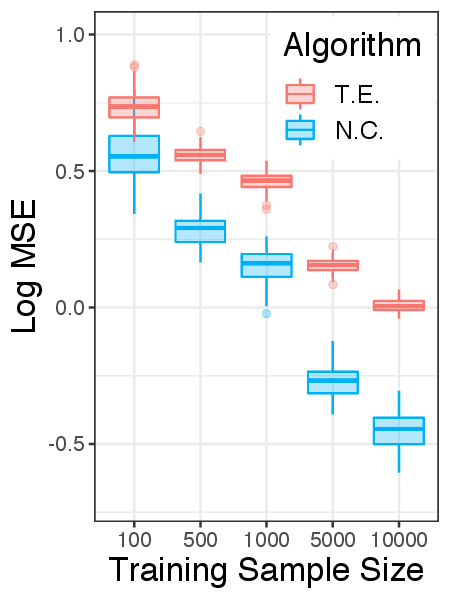}
         \vspace{-15pt}
         \caption{Sigmoid}
     \end{subfigure}
      \hfill
     \begin{subfigure}[b]{0.30\textwidth}
         \centering
         \includegraphics[width=\textwidth]{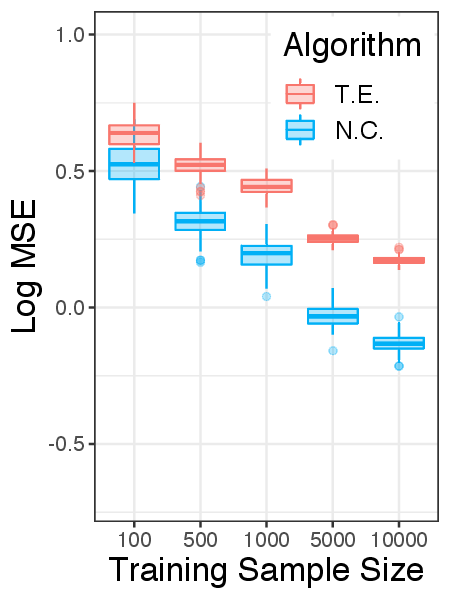}
         \vspace{-15pt}
         \caption{Peaked}
     \end{subfigure}
\par
\vspace{-5pt}
\caption{\label{fig:sim_design}
Simulation results for various designs}
\end{centering}
\end{figure}

Specifically, I adapt the continuous treatment effect design proposed by \cite{colangelo2020double}. Whereas the original setting studied by \cite{colangelo2020double} has no unobserved confounding, my modification does have unobserved confounding. The goal is to learn the counterfactual function $\theta_0^{ATE}(d)$, which may be a quadratic, sigmoid, or peaked function. A single observation consists of the tuple $(Y,W,D,Z,X)$ for outcome, negative control outcome, treatment, negative control treatment, and covariates. $(Y,D)$ are continuous scalars. In the baseline experiment, $X\in\mathbb{R}^5$ and $(Z,W)\in\mathbb{R}$. 

To explore the role of sample size, I consider $n\in\{100,500,1000,5000,10000\}$. To explore the role of dimension, I focus on the quadratic design, fix sample size at $n=1000$, and then vary $dim(X)\in\{1,5,10,50,100\}$, $dim(Z)\in\{1,5,10\}$, or $dim(W)\in\{1,5,10\}$. This range of sample sizes and dimensions is common in epidemiology research. Figures~\ref{fig:sim_design} and~\ref{fig:sim_dimension} visualize results. Across designs, sample sizes, and dimensions, the use of negative controls to adjust for unobserved confounding improves performance. The improvement is generally increasing in $n$ and $dim(Z)$ but decreasing in $dim(X)$ and $dim(W)$. Intuitively, $(X,W)$ are the variables used in the reweighting step, which is common across the two estimators \verb|N.C.| and \verb|T.E.|; as this step becomes relatively more important, the estimators become more similar. See Appendix~\ref{section:simulation_details} for implementation details as well as additional simulations that confirm: (i) robustness to tuning; (ii) improvement when treatment is discrete; and (iii) inefficiency in the absence of unobserved confounding.

\begin{figure}[ht]
\begin{centering}
     \begin{subfigure}[b]{0.3\textwidth}
         \centering
         \includegraphics[width=\textwidth]{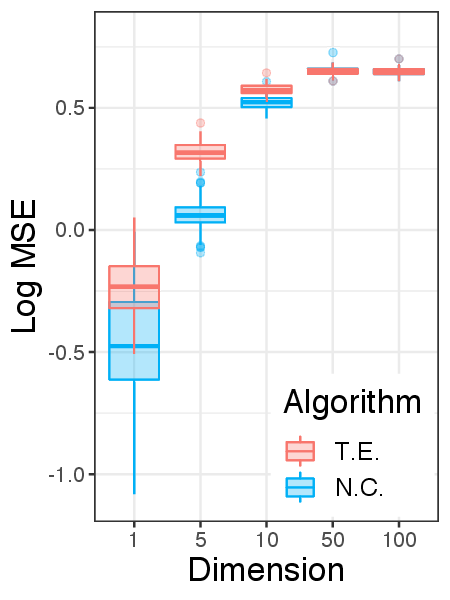}
         \vspace{-15pt}
         \caption{Covariate}
     \end{subfigure}
     \hfill
     \begin{subfigure}[b]{0.3\textwidth}
         \centering
         \includegraphics[width=\textwidth]{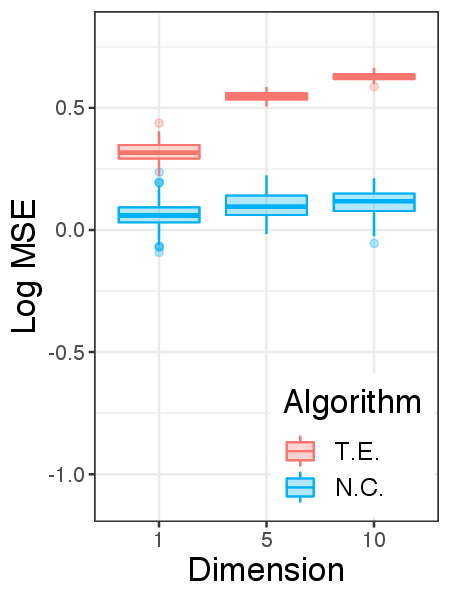}
         \vspace{-15pt}
         \caption{N.C. treatment}
     \end{subfigure}
      \hfill
     \begin{subfigure}[b]{0.3\textwidth}
         \centering
         \includegraphics[width=\textwidth]{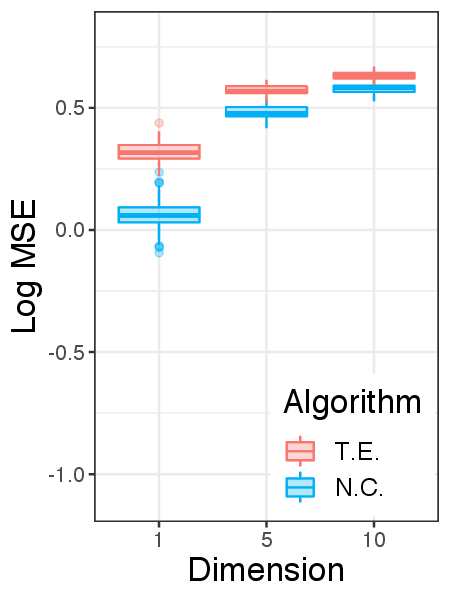}
         \vspace{-15pt}
         \caption{N.C. outcome}
     \end{subfigure}
\par
\vspace{-5pt}
\caption{\label{fig:sim_dimension}
Simulation results for various dimensions}
\end{centering}
\end{figure}

\subsection{Dose response of cigarette smoking}

% \cite{cfr}. 
Estimating the effect of cigarette smoking on infant birth weight is challenging for several reasons. First, pregnant women are classified as a vulnerable population, so they are typically excluded from clinical trials of any kind. When the treatment of interest causes harm, ethical considerations preclude randomization. Therefore \textit{observational} data are the only option. Second, pregnancy induces many physiological changes, so medical knowledge predicts different dose response curves for women who are pregnant compared to women who are not pregnant. For example, plasma volume increases 35\%, cardiac output increases 40\%, and glomerular filtration rate (a measure of kidney function) increases 50\% during pregnancy \cite{cunningham2014williams}. Therefore the shape of the dose response curve for pregnant women is an unknown, \textit{nonparametric} quantity. Third, medical records exclude an \textit{unobserved confounder} known to be crucial for maternal-fetal health: household income \cite{joseph2007socioeconomic}. 

In this section, I argue that medical records include variables that satisfy the properties of \textit{negative controls} for unobserved income. I provide preliminary results and outline directions for future work on this topic. Finally, I discuss what issues may arise if there are additional unobserved confounders. The purpose of this case study is to illustrate how the proposed estimators may be useful in epidemiology research, though the findings are not conclusive.

I estimate the dose response curve of cigarette smoking on infant birth weight using a data set of singleton births in the state of Pennsylvania between 1989 and 1991 assembled by \cite{almond2005costs} and subsequently analyzed by \cite{cattaneo2010efficient}. I focus on Pennsylvania because smoking data are available for over 95\% of mothers. I focus on singleton births because multiple gestations reflect a variety of factors and result in different fetal growth trajectories. 21\% of women report smoking during pregnancy, and I subset to this sample. I consider the subpopulations of (a) nonhispanic white women who smoke ($n=73,834$), (b) nonhispanic black women who smoke ($n=17,625$), and (c) hispanic women who smoke ($n=2,152$). Formally, I estimate $\theta_0^{CATE}(d,v)$ where $D\in\mathbb{R}$ is the number of cigarettes smoked per day, and $V$ concatenates mother's race $V_1 \in\{\text{white, black, hispanic}\}$ and mother's smoking status $V_2 \in\{0,1\}$. See Appendix~\ref{section:application_details} for further discussion.  

The classification of variables extensively relies on domain knowledge, so I sought the expertise of physicians from the Department of Obstetrics, Gynecology \& Reproductive Biology at Harvard Medical School. Together, we arrived at the classification given in Appendix~\ref{section:application_details}, based on a canonical textbook \cite{cunningham2014williams}. Figure~\ref{dag:nc_smoking} illustrates the model. Demographics, alcohol consumption, prenatal care, existing medical conditions, county, and year serve as covariates $X$ since they may be associated with both smoking $D$ and birth weight $Y$.

\begin{wrapfigure}{H}{0.45\textwidth}
\vspace{-30pt}
%\vspace{15pt}
\begin{center}
\begin{adjustbox}{width=.45\textwidth}
\begin{tikzpicture}[->,>=stealth',shorten >=1pt,auto,node distance=2.8cm,
                    semithick]
  \tikzstyle{every state}=[draw=black,text=black]

  \node[state]         (x) [fill=gray]                   {$X$};
  \node[state]         (w) [right of=x, fill=gray, label={[align=center]above: birth order,\\[-10pt]sex,\\[-10pt]Rh}]       {$W$};
  \node[state]         (z) [left of=x, fill=gray, label={[align=center]above:edu-\\[-10pt]cation}]       {$Z$};
   \node[state]         (d) [below left of=x, fill=gray, label={below:smoking}]       {$D$};
    \node[state]         (y) [below right of=x, fill=gray, label={below:birth weight}]       {$Y$};
   \node[state]         (u) [above of=x, label={income}]                  {$U$};

  \path (u) edge              node {$ $} (z)
             edge           node {$ $} (x)
             edge           node {$ $} (w)
              edge           node {$ $} (d)
             edge           node {$ $} (y)
        (x) edge              node {$ $} (w)
            edge            node {$ $} (z)
             edge           node {$ $} (d)
             edge           node {$ $} (y)
             (z) edge              node {$ $} (d)
            (w) edge              node {$ $} (y)
            (d) edge              node {$ $} (y);;
\end{tikzpicture}
\end{adjustbox}
\vspace{-25pt}
\caption{Smoking DAG}
\label{dag:nc_smoking}
\end{center}
\vspace{-20pt}
%\vspace{15pt}
\end{wrapfigure}
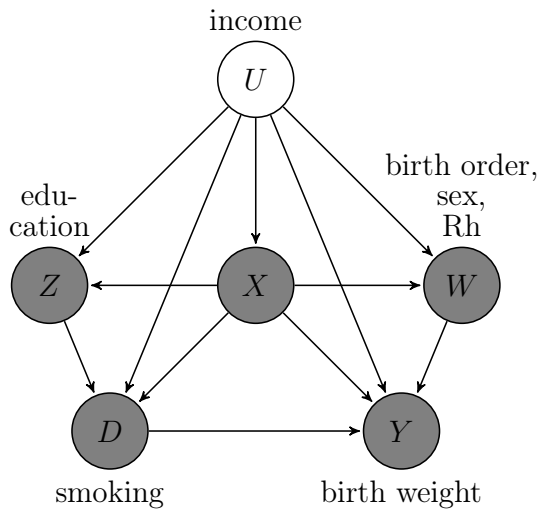

Education serves as a negative control treatment $Z$ because it reflects unobserved confounding due to household income $U$ but has no direct medical effect on birth weight $Y$. Formally, we require $Z\indep Y | D,U,X$: education is independent of birth weight after conditioning on smoking, income, and observed covariates. Prenatal care and weight gain are the observed covariates that, along with smoking and income, justify the conditional independence between education and birth weight. 

Infant birth order and sex serve as a negative control outcomes $W$ because family size reflects household income $U$ but is not directly caused by smoking $D$ or education $Z$. Formally, we require $W\indep D,Z | U,X$: family size is independent of smoking and education after conditioning on income and observed covariates. Age and marriage status are the observed covariates that, along with income, justify the conditional independence between education and family size. We also include Rh sensitization as a negative control outcome because it is one of the few medical conditions not affected by smoking (it is caused by blood type).

%\vspace{-15pt}
\begin{figure}[ht]
\begin{centering}
     \begin{subfigure}[b]{0.45\textwidth}
         \centering
         \includegraphics[width=\textwidth]{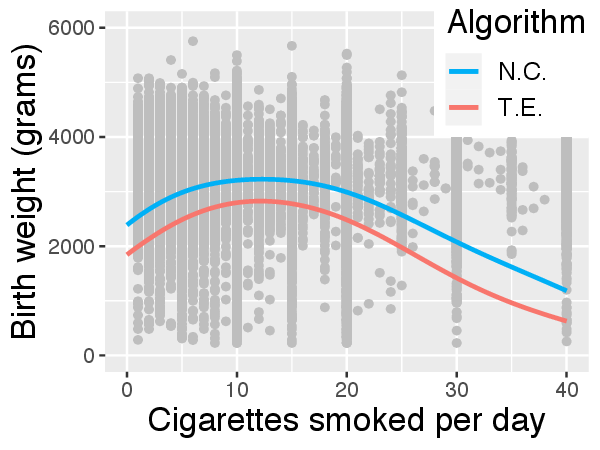}
\vspace{-25pt}
         \caption{White smoking mothers}
     \end{subfigure}
     \hfill
     \begin{subfigure}[b]{0.45\textwidth}
         \centering
         \includegraphics[width=\textwidth]{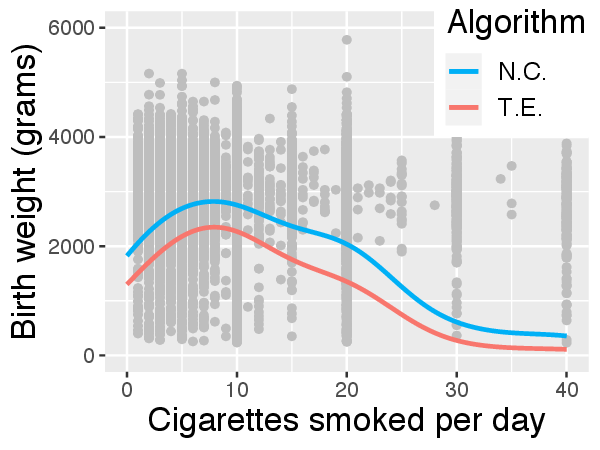}
\vspace{-25pt}
         \caption{Black smoking mothers}
     \end{subfigure}
\par
\vspace{-10pt}
\caption{\label{fig:smoke}
Effect of cigarette smoking on birth weight for different subpopulations; $\theta_0^{CATE}(d,v)$ where $D\in\mathbb{R}$ is the number of cigarettes smoked per day, and $V$ concatenates mother's race $V_1 \in\{\text{white, black, hispanic}\}$ and mother's smoking status $V_2 \in\{0,1\}$.
}
\end{centering}
\end{figure}
%\vspace{-5pt}

\begin{wrapfigure}{R}{0.45\textwidth}
\vspace{-10pt}
%\vspace{-30pt}
\begin{center}
\begin{subfigure}[b]{0.45\textwidth}
\setcounter{subfigure}{2}
         \centering
         \includegraphics[width=\textwidth]{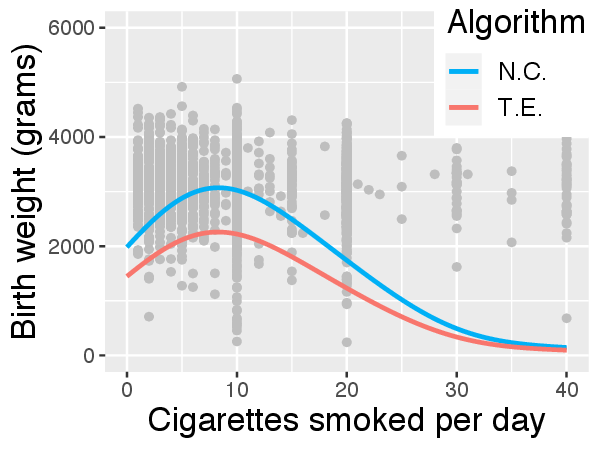}
\vspace{-25pt}
         \caption{Hispanic smoking mothers}
     \end{subfigure}
\vspace{-25pt}
\caption{Effect of cigarette smoking on birth weight for different subpopulations; $\theta_0^{CATE}(d,v)$ where $D\in\mathbb{R}$ is the number of cigarettes smoked per day, and $V$ concatenates mother's race $V_1 \in\{\text{white, black, hispanic}\}$ and mother's smoking status $V_2 \in\{0,1\}$.}
\label{fig:smoke2}
\end{center}
\vspace{-20pt}
\end{wrapfigure}

I implement both the new algorithm (\verb|N.C.|) and an existing RKHS algorithm for continuous treatment effect (\verb|T.E|) \cite{singh2020kernel} that ignores unobserved confounding. For the method that ignores unobserved confounding, I classify negative controls as additional covariates. Figures~\ref{fig:smoke} and~\ref{fig:smoke2} visualize results for white, black, and hispanic smoking mothers. The effect of cigarettes smoked per day $D$ on birth weight in grams $Y$ is generally negative with similar shapes across subpopulations. The counterfactual birth weights for black and hispanic mothers are lower than for white mothers when the number of cigarettes is high. The main finding is that using negative controls leads to higher dose response curves. Under the stated causal assumptions, the gap between \verb|N.C.| and \verb|T.E.| is the magnitude of unobserved confounding due to income. These preliminary results support the clinical hypothesis that poverty is an unmeasured confounder that affects infant birth weight. Unobserved poverty may substantially mislead observational studies that fail to account for it. See Appendix~\ref{section:application_details} for implementation details.

An unanticipated result is that the dose response curves appear nonmonotonic; estimated counterfactual birth weight increases before it decreases. This phenomenon prevails across subpopulations, and it can be seen in not only \verb|N.C.| but also \verb|T.E.| and the raw data. We propose two conjectures based on the data and domain knowledge. Both of these conjectures are ways in which the data generating process may violate the causal assumptions in Assumption~\ref{assumption:negative}.

First, it may be that measurement error contaminates observations. In the raw data, it appears that when the number of cigarettes was between one and 10 it may have been rounded up to 10. Indeed, Figures~\ref{fig:smoke} and~\ref{fig:smoke2} document substantial point masses at multiples of 10. This phenomenon would violate our causal model, since it would mean that when the true treatment value $d$ was less than 10, we observe $D=10$, $Z=z$, yet $Y=Y^{(d,z)}$. In such case, estimates of the dose response for $d<10$ may be unreliable. How to account for measurement error in negative control estimation remains an open question.

Second, it could be that another unobserved confounder exists, is not detected by the negative controls, and disproportionately affects women who reported smoking less than 10 cigarettes. Previous studies suggest that rural-urban classification, poverty, and psychosocial stress are possible confounders \cite{hobel2008psychosocial}. In our analysis, we account for rural-urban classification as an observed covariate and we account for poverty via negative controls, but we did not find plausible negative controls for stress in this data set; see Appendix~\ref{section:application_details} for further discussion. Indeed, psychosocial stress is notoriously difficult to measure, and it may cause both smoking and low birth weight. 
%It could be that intense stress may cause individuals who would not have smoked to instead smoke a few cigarettes per day. 
We pose for future work a further analysis that adjusts for unobserved confounding due to both income and stress.
%\vspace{-15pt}

\section{Conclusion}\label{sec:conclusion}

I propose a new family of nonparametric algorithms for learning treatment effects with negative controls. The estimators are easily implemented and uniformly consistent. As a contribution to the negative control literature, I propose methods to estimate dose response curves and heterogeneous treatment effects under the assumption that treatment effects are smooth. As a contribution to the kernel methods literature, I show how the RKHS is well suited to causal inference in the presence of unobserved confounding. As a contribution to maternal-fetal medicine, I propose a toolkit for estimating dose response curves for pregnant women from medical records despite unobserved confounding. The results suggest that RKHS methods may be an effective bridge between epidemiology and machine learning.

\bibliographystyle{apalike}

\newpage

\appendix

\section{Semiparametric inference}\label{sec:semi}

While the main contribution of this paper is uniform consistency in nonparametric settings, in this section I present complementary results for semiparametric settings. In particular, I articulate conditions that imply Gaussian approximation and valid confidence intervals for the setting with binary treatment, appealing to the semiparametric theory of \cite{cui2020semiparametric,kallus2021causal,ghassami2021minimax,chernozhukov2021simple}.

To lighten notation, fix the treatment value $d'\in\{0,1\}$ and denote $\theta_0=\theta_0^{ATE}(d')$. Observe that for each treatment value,
$$
\theta_0=\int h_0(d',x,w)\mathrm{d}\mathbb{P}(x,w)=\int h_0(d,x,w) \tau_0(d,x,w) \mathrm{d}\mathbb{P}(d,x,w)
$$
where by standard propensity score arguments
$$
\tau_0(d,x,w)=\frac{1\{d=d'\}}{\pi_0(x,w)},\quad \pi_0(x,w)=\mathbb{P}(D=d'|X=x,W=w).
$$
Just as the primary confounding bridge $h_0$ is defined as the solution to the operator equation
$$\gamma_0(d,x,z)=\mathbb{E}[h(D,X,W)|D=d,X=x,Z=z]$$
one may define a secondary confounding bridge $\alpha_0$ as the solution to the operator equation
$$\tau_0(d,x,w)=\mathbb{E}[\alpha(D,X,Z)|D=d,X=x,W=w].$$

\begin{proposition}[Secondary confounding bridge]\label{prop:secondary}
Suppose Assumptions~\ref{assumption:negative} and~\ref{assumption:solution} hold. In addition, suppose there exists a solution $\alpha_0$ to the operator equation
$$
\tau_0(d,x,w)=\mathbb{E}[\alpha(D,X,Z)|D=d,X=x,W=w].
$$
Then
$$
\theta_0=\int y \alpha_0(d,x,z)\mathrm{d}\mathbb{P}(y,d,x,z).
$$
\end{proposition}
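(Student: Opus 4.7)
The plan is to chain two applications of the tower property, using each confounding bridge's defining operator equation in turn. The starting point is the identity given in the preamble,
$$
\theta_0 = \int h_0(d,x,w)\tau_0(d,x,w)\mathrm{d}\mathbb{P}(d,x,w) = \mathbb{E}[h_0(D,X,W)\tau_0(D,X,W)].
$$
From here there are two substitutions to perform, and the whole argument is essentially a change of conditioning variables carried out inside an expectation.

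First, I would substitute the operator equation for the secondary confounding bridge, $\tau_0(d,x,w) = \mathbb{E}[\alpha_0(D,X,Z)\mid D=d,X=x,W=w]$, into the identity above. Pulling the resulting conditional expectation out by iterated expectations gives
$$
\theta_0 = \mathbb{E}\bigl[h_0(D,X,W)\,\mathbb{E}[\alpha_0(D,X,Z)\mid D,X,W]\bigr] = \mathbb{E}[h_0(D,X,W)\alpha_0(D,X,Z)],
$$
where in the second step I use that $h_0(D,X,W)$ is measurable with respect to the conditioning sigma-algebra so it can be brought inside the conditional expectation.

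Second, I would rewrite the right-hand side using the operator equation for the primary confounding bridge, $\gamma_0(d,x,z) = \mathbb{E}[h_0(D,X,W)\mid D=d,X=x,Z=z]$, together with the tower property in the reverse direction. Conditioning on $(D,X,Z)$ and using that $\alpha_0(D,X,Z)$ is measurable with respect to this sigma-algebra,
$$
\mathbb{E}[h_0(D,X,W)\alpha_0(D,X,Z)] = \mathbb{E}\bigl[\alpha_0(D,X,Z)\,\mathbb{E}[h_0(D,X,W)\mid D,X,Z]\bigr] = \mathbb{E}[\gamma_0(D,X,Z)\alpha_0(D,X,Z)].
$$
A final application of iterated expectations replaces $\gamma_0(D,X,Z) = \mathbb{E}[Y\mid D,X,Z]$ with $Y$, yielding
$$
\theta_0 = \mathbb{E}[Y\alpha_0(D,X,Z)] = \int y\alpha_0(d,x,z)\mathrm{d}\mathbb{P}(y,d,x,z).
$$

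There is no real obstacle: the only substantive inputs are the two operator equations and the tower property, and the existence of $\alpha_0$ is assumed in the statement while existence of $h_0$ is guaranteed by Assumption~\ref{assumption:solution}. The mild technical point worth flagging is the use of Fubini to exchange the order of integration implicit in pulling $h_0(D,X,W)$ (respectively $\alpha_0(D,X,Z)$) inside the conditional expectation; this is justified by boundedness of $Y$ (Assumption~\ref{assumption:original}) and the implied integrability of both bridge functions under the overlap condition in Assumption~\ref{assumption:negative}.
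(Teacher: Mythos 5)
Your proof is correct and takes essentially the same route as the paper: the paper's argument is exactly this chain of substitutions of the two operator equations combined with the law of iterated expectations, just written out as integrals against $\mathrm{d}\mathbb{P}$ rather than in expectation notation. The minor difference is that you explicitly flag the integrability needed to justify Fubini, which the paper leaves implicit.
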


\begin{proof}
The result follows from the law of iterated expectations and the definitions of the primary and secondary confounding bridges. Write
\begin{align*}
    \theta_0
    &=\int h_0(d',x,w)\mathrm{d}\mathbb{P}(x,w)\\
    &=\int h_0(d,x,w) \tau_0(d,x,w) \mathrm{d}\mathbb{P}(d,x,w) \\
    &=\int h_0(d,x,w) \mathbb{E}[\alpha_0(D,X,Z)|D=d,X=x,W=w] \mathrm{d}\mathbb{P}(d,x,w) \\
    &=\int h_0(d,x,w) \alpha_0(d,x,z) \mathrm{d}\mathbb{P}(d,z,x,w) \\
    &=\int \mathbb{E}[h_0(D,X,W)|D=d,X=x,Z=z] \alpha_0(d,x,z) \mathrm{d}\mathbb{P}(d,x,z) \\
    &=\int \gamma_0(d,x,z) \alpha_0(d,x,z) \mathrm{d}\mathbb{P}(d,x,z) \\
    &=\int y \alpha_0(d,x,z) \mathrm{d}\mathbb{P}(y,d,z,x).
\end{align*}
\end{proof}

The secondary confounding bridge formulation in Proposition~\ref{prop:secondary} generalizes the familiar inverse propensity weight formulation for treatment effects. Instead of using the propensity score $\pi_0$ encoded by $\tau_0$, the formulation uses a secondary confounding bridge $\alpha_0$ defined as the solution to an ill posed inverse problem that involves the propensity score. Analogously, the primary confounding bridge formulation in Theorem~\ref{theorem:id_treatment} generalizes the familiar g-formula for treatment effects, using the confounding bridge $h_0$ rather than the regression $\gamma_0$.

Both the primary and secondary confounding bridges $(h_0,\alpha_0)$ appear in the semiparametrically efficient asymptotic variance, so it is natural to incorporate them into semiparametric estimation and inference \cite{cui2020semiparametric}. The targeted and debiased machine learning literatures provide a meta algorithm to do so, as well as rate conditions that suffice for Gaussian approximation. I quote the meta algorithm and rate conditions, adapting them to the treatment effect $\theta_0$ identified by negative controls.

\begin{algorithm}[Debiased machine learning]\label{algorithm:DML}
Given a sample $(Y_i,W_i,D_i,X_i,Z_i)$ $(i=1,...,n)$, partition the sample into folds $(I_{\ell})$ $(\ell=1,...,L)$. Denote by $I_{\ell}^c$ the complement of $I_{\ell}$.
\begin{enumerate}
    \item For each fold $\ell$, estimate $\hat{h}_{\ell}$ and $\hat{\alpha}_{\ell}$ from observations in $I_{\ell}^c$.
    \item Estimate
    $
  \hat{\theta}=n^{-1}\sum_{\ell=1}^L\sum_{i\in I_{\ell}} [\hat{h}_{\ell}(d,X_i,W_i)+\hat{\alpha}_{\ell}(D_i,X_i,Z_i)\{Y_i-\hat{h}_{\ell}(D_i,X_i,W_i)\}]
    $.
    \item Estimate its $95$\% confidence interval as
    $
    \hat{\theta}\pm 1.96\hat{\sigma} n^{-1/2}$, where $$\hat{\sigma}^2=n^{-1}\sum_{\ell=1}^L\sum_{i\in I_{\ell}} [\hat{h}_{\ell}(d,X_i,W_i)+\hat{\alpha}_{\ell}(D_i,X_i,Z_i)\{Y_i-\hat{h}_{\ell}(D_i,X_i,W_i)\}-\hat{\theta}]^2.
    $$
\end{enumerate}
\end{algorithm}

Towards a formal statement of the inference result, define the following moments:
$$
\sigma^2=\mathbb{E}[\psi_0(Y,W,D,Z,X)^2],\quad \chi^3=\mathbb{E}[|\psi_0(Y,W,D,Z,X)|^3],\quad \omega^4=\mathbb{E}[\psi_0(Y,W,D,Z,X)^4],
$$
where
$$
\psi_0(Y,W,D,Z,X)=h_0(d,X,W)+\alpha_0(D,X,Z)\{Y-h_0(D,X,W)\}-\theta_0
$$
is the asymptotic influence of each observation. Next, I define mean square error.

\begin{definition}[Mean square error]
Write the mean square error $\mathcal{R}(\hat{h}_{\ell})$ and the projected mean square error $\mathcal{P}(\hat{h}_{\ell})$ of $\hat{h}_{\ell}$ trained on observations indexed by $I^c_{\ell}$ as \begin{align*}
      \mathcal{R}(\hat{h}_{\ell})&=\mathbb{E}[\{\hat{h}_{\ell}(D,X,W)-h_0(D,X,W)\}^2\mid I^c_{\ell}]\\
     \mathcal{P}(\hat{h}_{\ell})&=\mathbb{E}([ \mathbb{E}\{\hat{h}_{\ell}(D,X,W)-h_0(D,X,W)\mid D,X,Z, I^c_{\ell}\} ]^2\mid I^c_{\ell}).
\end{align*}
Likewise define $\mathcal{R}(\hat{\alpha}_{\ell})$ and $\mathcal{P}(\hat{\alpha}_{\ell})$:
\begin{align*}
      \mathcal{R}(\hat{\alpha}_{\ell})&=\mathbb{E}[\{\hat{\alpha}_{\ell}(D,X,Z)-\alpha_0(D,X,Z)\}^2\mid I^c_{\ell}]\\
     \mathcal{P}(\hat{\alpha}_{\ell})&=\mathbb{E}([ \mathbb{E}\{\hat{\alpha}_{\ell}(D,X,Z)-\alpha_0(D,X,Z)\mid D,X,W, I^c_{\ell}\} ]^2\mid I^c_{\ell}).
\end{align*}
\end{definition}
Sufficiently fast rates of mean square error and projected mean square error imply Gaussian approximation. The following lemma summarizes the rate conditions.

\begin{lemma}[Semiparametric inference; Corollary 5.1 of \cite{chernozhukov2021simple}]\label{lemma:inference}
Assume the propensity score $\pi_0(x,w)$ is bounded away from zero and one and the following regularity conditions hold for some absolute constant $C<\infty$:
$$
\mathbb{E}[\{Y-h_0(D,X,W)\}^2 \mid D,X,W]\leq C,\quad \|\alpha_0\|_{\infty}\leq C,\quad  \|\hat{\alpha}_{\ell}\|_{\infty}\leq C.
$$  
Further assume the following learning rate conditions, as $n\rightarrow \infty$
\begin{enumerate}
\item $\left\{\left(\chi/\sigma\right)^3+\omega^2\right\}n^{-1/2}=o(1)$;
    \item $\{\mathcal{R}(\hat{h}_{\ell})\}^{1/2}=o_p(1)$ and $\{\mathcal{R}(\hat{\alpha}_{\ell})\}^{1/2}=o_p(1)$;
    \item $\{n \mathcal{R}(\hat{h}_{\ell}) \mathcal{R}(\hat{\alpha}_{\ell})\}^{1/2} \wedge \{n\mathcal{P}(\hat{h}_{\ell})\mathcal{R}(\hat{\alpha}_{\ell})\}^{1/2} \wedge \{n\mathcal{R}(\hat{h}_{\ell})\mathcal{P}(\hat{\alpha}_{\ell})\}^{1/2} =o_p(1)$.
\end{enumerate}
Then the estimator $\hat{\theta}$ in Algorithm~\ref{algorithm:DML} is consistent and asymptotically Gaussian, and the confidence interval in Algorithm~\ref{algorithm:DML} includes $\theta_0$ with probability approaching the nominal level. Formally,
$$
\hat{\theta}\overset{p}{\rightarrow}\theta_0,\quad \sigma^{-1}n^{1/2}(\hat{\theta}-\theta_0)\overset{d}{\rightarrow}\mathcal{N}(0,1),\quad \mathbb{P} \left\{\theta_0 \in  \left(\hat{\theta}\pm 1.96\hat{\sigma} n^{-1/2} \right)\right\}\rightarrow 0.95.
$$
\end{lemma}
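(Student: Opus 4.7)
The plan is to follow the standard debiased machine learning template, verifying that the moment function
$$\psi_0(Y,W,D,Z,X;h,\alpha) = h(d,X,W) + \alpha(D,X,Z)\{Y - h(D,X,W)\} - \theta_0$$
is Neyman orthogonal at $(h_0,\alpha_0)$, and then controlling a three-term decomposition. First I would verify the zero-mean property at truth, $\mathbb{E}[\psi_0(Y,W,D,Z,X;h_0,\alpha_0)] = 0$, which is exactly the statement of Proposition~\ref{prop:secondary} combined with the identification in Theorem~\ref{theorem:id_treatment}. Next I would verify Neyman orthogonality: the Gateaux derivative of $(h,\alpha)\mapsto \mathbb{E}[\psi_0(\cdot;h,\alpha)]$ at $(h_0,\alpha_0)$ vanishes in every direction. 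In the direction $\Delta_h$, the derivative equals $\mathbb{E}[\Delta_h(d,X,W) - \alpha_0(D,X,Z)\Delta_h(D,X,W)]$, which vanishes because $\alpha_0$ solves $\tau_0(d,x,w) = \mathbb{E}[\alpha_0(D,X,Z)\mid D{=}d,X{=}x,W{=}w]$ and a change of measure via $\tau_0$ absorbs the first term. In the direction $\Delta_\alpha$, the derivative equals $\mathbb{E}[\Delta_\alpha(D,X,Z)\{Y - h_0(D,X,W)\}]$, which by the law of iterated expectations and the definition of $h_0$ through $\gamma_0(d,x,z) = \mathbb{E}[h_0(D,X,W)\mid D{=}d,X{=}x,Z{=}z]$ also vanishes.

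With orthogonality in hand, I would decompose
$$n^{1/2}(\hat{\theta} - \theta_0) = n^{-1/2}\sum_{i=1}^n \psi_0(Y_i,W_i,D_i,Z_i,X_i;h_0,\alpha_0) + n^{1/2}\,\mathrm{Bias} + R,$$
where the bias captures $\mathbb{E}[\psi_0(\cdot;\hat h_\ell,\hat\alpha_\ell) - \psi_0(\cdot;h_0,\alpha_0)\mid I_\ell^c]$ and $R$ is the centered empirical process of $\psi_0(\cdot;\hat h_\ell,\hat\alpha_\ell) - \psi_0(\cdot;h_0,\alpha_0)$. The leading term converges in distribution to $\mathcal{N}(0,\sigma^2)$ by a Berry-Esseen argument; here condition~(1) involving $\chi/\sigma$ and $\omega$ supplies the third and fourth moment control needed for a quantitative CLT. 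By sample splitting, $\hat h_\ell$ and $\hat\alpha_\ell$ are independent of the evaluation fold, so conditional on $I_\ell^c$ the term $R$ has mean zero and conditional variance bounded by $\mathcal{R}(\hat h_\ell) + \mathcal{R}(\hat\alpha_\ell)$ up to constants from the boundedness of $Y$, $\alpha_0$, $\hat\alpha_\ell$, and the conditional second moment of the residual; Chebyshev then gives $R = o_p(1)$ from condition~(2).

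The heart of the argument, and the main obstacle, is showing that $n^{1/2}\,\mathrm{Bias} = o_p(1)$ under the product-of-rates condition~(3). Orthogonality kills the first-order terms, so by a second-order Taylor expansion the bias is a bilinear form $\mathbb{E}[(\hat\alpha_\ell - \alpha_0)(D,X,Z)\,\{h_0 - \hat h_\ell\}(D,X,W)\mid I_\ell^c]$. Applying Cauchy-Schwarz directly gives the bound $\{\mathcal{R}(\hat h_\ell)\mathcal{R}(\hat\alpha_\ell)\}^{1/2}$; but using the integral equations defining the two bridges, one can replace one $\mathbb{L}_2$ factor by its projection onto $(D,X,Z)$ or $(D,X,W)$ via iterated expectations, yielding the two alternative forms $\{\mathcal{P}(\hat h_\ell)\mathcal{R}(\hat\alpha_\ell)\}^{1/2}$ and $\{\mathcal{R}(\hat h_\ell)\mathcal{P}(\hat\alpha_\ell)\}^{1/2}$. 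This projection trick is delicate because the ill-posedness of the negative control operator equations means the projected norm can be much smaller than the $\mathbb{L}_2$ norm. Condition~(3) then gives $n^{1/2}\,\mathrm{Bias} = o_p(1)$.

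Finally, for the variance estimator and coverage, I would show $\hat\sigma^2 \overset{p}{\to} \sigma^2$ by adding and subtracting $\psi_0(\cdot;h_0,\alpha_0)$, using the law of large numbers for the leading sample second moment, and bounding the cross terms by $\mathcal{R}(\hat h_\ell)^{1/2}$ and $\mathcal{R}(\hat\alpha_\ell)^{1/2}$ times bounded factors that vanish by condition~(2). Slutsky's theorem combined with the Gaussian limit then yields the asymptotic coverage statement $\mathbb{P}\{\theta_0\in (\hat\theta \pm 1.96\hat\sigma n^{-1/2})\} \to 0.95$. Since the lemma is quoted as Corollary 5.1 of \cite{chernozhukov2021simple}, the proof could in principle be omitted; but carrying it out in the negative control setting amounts to verifying their abstract orthogonality and rate hypotheses by leveraging the two operator equations of Proposition~\ref{prop:secondary}.
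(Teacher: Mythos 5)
The paper does not actually prove this lemma: it is quoted verbatim as Corollary~5.1 of \cite{chernozhukov2021simple}, and the appendix explicitly says ``I quote the meta algorithm and rate conditions, adapting them to the treatment effect $\theta_0$ identified by negative controls.'' Your proposal, by contrast, reconstructs the argument from scratch. The reconstruction is sound. In particular, you correctly identify that the moment function is bilinear in $(h,\alpha)$, so after orthogonality (which you verify correctly using the two operator equations of Proposition~\ref{prop:secondary}: the $\Delta_h$ direction cancels via the change of measure encoded in $\tau_0$, the $\Delta_\alpha$ direction cancels via the definition of $\gamma_0$) the bias term collapses exactly, not just to second order, to the single bilinear remainder $\mathbb{E}[(\hat\alpha_\ell - \alpha_0)(D,X,Z)\,(h_0 - \hat h_\ell)(D,X,W)\mid I_\ell^c]$, and the three forms in condition~(3) are precisely what falls out of Cauchy--Schwarz applied directly, after first conditioning on $(D,X,Z)$, or after first conditioning on $(D,X,W)$. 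What the paper's approach buys is brevity and deference to an established abstract result; what your approach buys is that it makes visible \emph{why} the product-rate conditions involve the projected norms $\mathcal{P}(\cdot)$ as alternatives to $\mathcal{R}(\cdot)$, which is the substantive content in the ill-posed negative control setting. Two minor things you could tighten: the first term $\hat h(d',X,W)-h_0(d',X,W)$ in the empirical process has second moment controlled by $\mathcal{R}(\hat h_\ell)$ only after invoking overlap (propensity score bounded away from zero), which you assume but do not explicitly use at that point; and ``second-order Taylor expansion'' understates the structure---the expansion is exact because $\psi$ is affine in each of $h$ and $\alpha$.
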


In summary, the algorithmic techniques developed in this paper may be combined with semiparametric theory as long as the rate conditions in Lemma~\ref{lemma:inference} are satisfied. Theorem~\ref{theorem:consistency_bridge} in the main text provides a $\sup$ norm guarantee for the primary confounding bridge $\hat{h}$, which implies slow rates of $\mathcal{P}(\hat{h}_{\ell})$ and $\mathcal{R}(\hat{h}_{\ell})$. \cite[Theorem 4]{singh2019kernel} directly provides fast rates of $\mathcal{P}(\hat{h}_{\ell})$. \cite[Theorems 4 and 8]{kallus2021causal} and \cite[Theorem 5 and Lemma 1]{ghassami2021minimax} provide fast rates of $\mathcal{P}(\hat{\alpha}_{\ell})$ and slow rates of $\mathcal{R}(\hat{\alpha}_{\ell})$ for minimax kernel estimators of the secondary confounding bridge. An interesting direction for future work would be to develop a kernel ridge regression estimator for the secondary confounding bridge.
\section{Relevance and existence}\label{section:existence}

In this appendix, I revisit the high level conditions in Assumption~\ref{assumption:solution}. These conditions are standard in the negative control and instrumental variable literatures. I begin by illustrating how irrelevance of negative controls violates existence. Then I characterize existence and completeness in the RKHS. This characterization appears to be absent from previous work on nonparametric instrumental variable regression in the RKHS.

\subsection{Relevance}

Existence of the confounding bridge is fundamentally connected to the relevance of negative controls, as articulated in Proposition~\ref{prop:relevance}. The heart of the argument is as follows.

\begin{lemma}[Rephrasing existence]\label{lemma:exist}
Suppose Assumption~\ref{assumption:negative} holds. The existence condition in Assumption~\ref{assumption:solution} holds if and only if there exists a solution $h_0$ to the operator equation
$$
\gamma_0(d,x,z)=\int h_0(d,x,w)\mathrm{d}\mathbb{P}(w|u,x)\mathrm{d}\mathbb{P}(u|d,x,z).
$$
\end{lemma}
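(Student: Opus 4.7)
The plan is to show that the two operator equations have identical right-hand sides, so the set of solutions coincides; hence existence of one is equivalent to existence of the other. The entire argument is an application of the tower property combined with a conditional independence extracted from Assumption~\ref{assumption:negative}.

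First I would derive the needed conditional independence. By no interference and the negative control outcome condition, $W = W^{(d,z)}$ whenever $D=d, Z=z$; by latent exchangeability, $\{W^{(d,z)}\} \indep D, Z \mid U, X$. Combining these, one obtains $W \indep D, Z \mid U, X$, which the paper already notes is implied by Assumption~\ref{assumption:negative}.

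Next I would rewrite the conditional expectation appearing in Assumption~\ref{assumption:solution}. For any candidate $h$,
$$
\mathbb{E}[h(D,X,W)\mid D=d, X=x, Z=z] = \int h(d,x,w) \,\mathrm{d}\mathbb{P}(w\mid d,x,z),
$$
and the law of total probability gives
$$
\mathrm{d}\mathbb{P}(w\mid d,x,z) = \int \mathrm{d}\mathbb{P}(w\mid u,d,x,z)\,\mathrm{d}\mathbb{P}(u\mid d,x,z).
$$
The conditional independence $W \indep D, Z \mid U, X$ collapses $\mathrm{d}\mathbb{P}(w\mid u,d,x,z)$ to $\mathrm{d}\mathbb{P}(w\mid u,x)$, yielding
$$
\mathbb{E}[h(D,X,W)\mid D=d, X=x, Z=z] = \int h(d,x,w)\,\mathrm{d}\mathbb{P}(w\mid u,x)\,\mathrm{d}\mathbb{P}(u\mid d,x,z).
$$
Since this identity holds for every $h$, the two operator equations in question define the same subset of solutions, and the claimed biconditional follows immediately.

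There is no real obstacle here; the only thing to be careful about is invoking the correct conditional independence ($W \indep D, Z \mid U, X$ rather than a stronger statement involving the potential outcomes $\{W^{(d,z)}\}$ themselves) and ensuring that the Fubini-type exchange of integrals is justified, which holds under the overlap and density conditions in Assumption~\ref{assumption:negative}. The value of the rewritten equation is that it exposes the conditional distributions $\mathbb{P}(w\mid u,x)$ and $\mathbb{P}(u\mid d,x,z)$ explicitly, and these are exactly the objects whose degeneracy under irrelevance ($W \indep U \mid X$ or $Z \indep U \mid D, X$) will be used to prove Proposition~\ref{prop:relevance}.
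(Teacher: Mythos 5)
Your proposal is correct and follows essentially the same route as the paper's proof: extract $W \indep D,Z \mid U,X$ from Assumption~\ref{assumption:negative}, expand $\mathbb{P}(w\mid d,x,z)$ by the law of total probability over $u$, and collapse $\mathbb{P}(w\mid u,d,x,z)$ to $\mathbb{P}(w\mid u,x)$ so that the two operator equations have identical right-hand sides for every candidate $h$. Your explicit derivation of the conditional independence from the sub-conditions of Assumption~\ref{assumption:negative} is slightly more detailed than the paper, which simply cites it as a known implication, but the substance is the same.
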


\begin{proof}
As in \cite[Theorem 1]{miao2018identifying}, Assumption~\ref{assumption:negative} implies
\begin{align*}
    &W\indep D,Z |U,X.
\end{align*}
Hence
\begin{align*}
    \mathbb{P}(w|d,x,z)&= \int \mathbb{P}(w,u|d,x,z)\mathrm{d}u \\
    &=\int \mathbb{P}(w|u,d,x,z)\mathrm{d}\mathbb{P}(u|d,x,z) \\
    &=\int \mathbb{P}(w|u,x)\mathrm{d}\mathbb{P}(u|d,x,z).
\end{align*}
Therefore
\begin{align*}
    \int h_0(d,x,w)\mathrm{d}\mathbb{P}(w|d,x,z) 
    &=\int h_0(d,x,w)\mathrm{d}\mathbb{P}(w|u,x)\mathrm{d}\mathbb{P}(u|d,x,z).
\end{align*}
\end{proof}

For interpretation, consider the special case where $(U,Z,W)$ are discrete with finite supports $(\mathcal{U},\mathcal{Z},\mathcal{W})$, respectively. Then for any fixed $(\bar{d},\bar{x})$, the following representations are possible \cite{miao2018identifying,shi2020multiply}:
$$
\gamma_0(\bar{d},\bar{x},z)\in\mathbb{R}^{1\times |\mathcal{Z}|},\quad h_0(\bar{d},\bar{x},w) \in \mathbb{R}^{1\times |\mathcal{W}|},\quad \mathbb{P}(w|u,\bar{x}) \in \mathbb{R}^{|\mathcal{W}|\times |\mathcal{U}|},\quad \mathbb{P}(u|\bar{d},\bar{x},z) \in \mathbb{R}^{|\mathcal{U}|\times |\mathcal{Z}|}.
$$
For this special case, it is clear from the expression in Lemma~\ref{lemma:exist}
$$
\gamma_0(\bar{d},\bar{x},z)=\int h_0(\bar{d},\bar{x},w)\mathrm{d}\mathbb{P}(w|u,\bar{x})\mathrm{d}\mathbb{P}(u|\bar{d},\bar{x},z)
$$
that there exists a solution $h_0$ when $|\mathcal{W}|\geq |\mathcal{U}|$, $|\mathcal{Z}|\geq |\mathcal{U}|$, and both matrices $\mathbb{P}(w|u,\bar{x})$ and $\mathbb{P}(u|\bar{d},\bar{x},z)$ are full rank for any fixed values $(\bar{d},\bar{x})$ \cite[Lemma 1]{shi2020multiply}. $|\mathcal{W}|\geq |\mathcal{U}|$ and $|\mathcal{Z}|\geq |\mathcal{U}|$ mean that the negative controls are expressive enough relative to the unobserved confounder. Full rank $\mathbb{P}(w|u,\bar{x})$ and $\mathbb{P}(u|\bar{d},\bar{x},z)$ mean that the conditional distributions are non-degenerate; variation in the negative controls is relevant for recovering variation in the unobserved confounder. When $|\mathcal{W}|> |\mathcal{U}|$ or $|\mathcal{Z}|>|\mathcal{U}|$, the solution $h_0$ is not unique. Nonetheless, the completeness condition ensures that treatment effects are point identified.\footnote{In particular, completeness is deployed in the proof of Theorem~\ref{theorem:id_treatment} to argue that, despite the fact that $h_0$ may be non-unique, the partial mean $\int h_0(d,x,w)\mathrm{d}\mathbb{P}(w|u,x)$ is unique.} 

Next, I interpret the special case in which $(U,Z,W)$ are continuous with supports $(\mathcal{U},\mathcal{Z},\mathcal{W})$, respectively. I uncover the sense in which the existence assumption may be quite stringent. As before, fix $(\bar{d},\bar{x})$. Now, fix $(z,z')$ such that $\gamma_0(\bar{d},\bar{x},z)\neq \gamma_0(\bar{d},\bar{x},z')$. If, for these choices, $\mathbb{P}(u|\bar{d},\bar{x},z)=\mathbb{P}(u|\bar{d},\bar{x},z')$, then by Lemma~\ref{lemma:exist} the confounding bridge does not exist. In other words, to violate Assumption~\ref{assumption:solution}, there simply needs to exist some $(\bar{d},\bar{x})$ stratum such that $\gamma_0$ takes on different values at $z$ versus $z'$ yet the conditional densities of unobserved confounding coincide. 

Finally, I prove the result given in the main text.
\begin{proof}[Proof of Proposition~\ref{prop:relevance}]
With Lemma~\ref{lemma:exist}, I prove each claim separately.
\begin{enumerate}
    \item $Z\indep U |D,X$ means that $\mathbb{P}(u|d,x,z)=\mathbb{P}(u|d,x)$. Towards a contradiction, suppose $h_0$ exists. Then by Lemma~\ref{lemma:exist},
    $$
    \gamma_0(d,x,z)
    =\int h_0(d,x,w)\mathrm{d}\mathbb{P}(w|u,x)\mathrm{d}\mathbb{P}(u|d,x).
    $$
    The RHS does not depend on $z$, implying $\gamma_0(d,x,z)=\gamma_0(d,x,z')$, which violates the hypothesis that $\gamma_0$ varies in $z$.
     \item $W\indep U | X$ means that $\mathbb{P}(w|u,x)=\mathbb{P}(w|x)$. Towards a contradiction, suppose $h_0$ exists. Then by Lemma~\ref{lemma:exist},
    $$
    \gamma_0(d,x,z)
    =\int h_0(d,x,w)\mathrm{d}\mathbb{P}(w|x)\mathrm{d}\mathbb{P}(u|d,x,z)=\int h_0(d,x,w)\mathrm{d}\mathbb{P}(w|x).
    $$
    The RHS does not depend on $z$, implying $\gamma_0(d,x,z)=\gamma_0(d,x,z')$, which violates the hypothesis that $\gamma_0$ varies in $z$.
\end{enumerate}
\end{proof}

\subsection{Existence}

Next I revisit the existence condition. In particular, I present (i) a lemma to characterize existence in ill posed inverse problems; (ii) a proposition verifying existence in the nonparametric negative control problem; and (iii) a proposition verifying existence in the RKHS negative control problem.

\subsubsection{Picard's criterion}

\begin{lemma}[Picard's criterion; Theorem 15.18 of \cite{kress1989linear}]\label{lemma:picard}
Let $K:H_1\rightarrow H_2$ be a compact operator from the Hilbert space $H_1$ to the Hilbert space $H_2$, with singular value decomposition $(e^1_k,\eta_k,e_k^2)^{\infty}_{k=1}$. Given the function $g\in H_2$, the equation $K f=g$ has a solution if and only if
\begin{enumerate}
    \item Inclusion: $g\in \mathcal{N}(K^*)^{\perp}$ i.e. $g$ is an element of the orthogonal complement to the null space of the adjoint operator $K^*$;
    \item Penalized square summability: $\sum_{k=1}^{\infty} \eta_k^{-2}\langle g, e_k^2 \rangle_{H_2}<\infty$.
\end{enumerate}
\end{lemma}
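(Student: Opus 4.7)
The plan is to prove both directions using the singular value decomposition structure, in particular the fact that for a compact operator $K\colon H_1\to H_2$ with SVD $(e^1_k,\eta_k,e^2_k)$, the family $\{e^1_k\}$ is an orthonormal basis of $\overline{\mathcal{R}(K^*)}=\mathcal{N}(K)^{\perp}$ and $\{e^2_k\}$ is an orthonormal basis of $\overline{\mathcal{R}(K)}=\mathcal{N}(K^*)^{\perp}$, with $K e^1_k=\eta_k e^2_k$ and $K^* e^2_k=\eta_k e^1_k$. These identities will do essentially all the work.

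For the forward direction, suppose there exists $f\in H_1$ with $Kf=g$. Necessity of condition (1) follows from a one-line adjoint computation: for any $y\in\mathcal{N}(K^*)$, one has $\langle g,y\rangle_{H_2}=\langle Kf,y\rangle_{H_2}=\langle f,K^*y\rangle_{H_1}=0$, so $g\perp \mathcal{N}(K^*)$. For necessity of condition (2), I decompose $f=\sum_k \langle f,e^1_k\rangle_{H_1} e^1_k + f_0$ with $f_0\in\mathcal{N}(K)$, apply $K$ termwise using $Ke^1_k=\eta_k e^2_k$, and read off $\langle g,e^2_k\rangle_{H_2}=\eta_k \langle f,e^1_k\rangle_{H_1}$. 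Bessel's inequality then gives $\sum_k \eta_k^{-2}|\langle g,e^2_k\rangle_{H_2}|^2=\sum_k|\langle f,e^1_k\rangle_{H_1}|^2\le \|f\|_{H_1}^2<\infty$.

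For the converse, assume (1) and (2). I define the candidate solution
\[
f:=\sum_{k=1}^{\infty}\eta_k^{-1}\langle g,e^2_k\rangle_{H_2}\, e^1_k,
\]
where convergence in $H_1$ is guaranteed precisely by hypothesis (2), since $\{e^1_k\}$ is orthonormal. Applying $K$ termwise (justified by continuity) and using $Ke^1_k=\eta_k e^2_k$ collapses the $\eta_k$ factors to yield $Kf=\sum_k \langle g,e^2_k\rangle_{H_2}\, e^2_k$. Because $\{e^2_k\}$ is an orthonormal basis of $\mathcal{N}(K^*)^{\perp}$, hypothesis (1) lets me identify this sum with $g$ itself, completing the proof that $Kf=g$.

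The only genuinely delicate point is the spectral geometry behind those basis statements, namely that $\overline{\mathcal{R}(K)}=\mathcal{N}(K^*)^{\perp}$ (a general Hilbert-adjoint fact) and that the SVD vectors $\{e^2_k\}$ actually span this closure (a consequence of compactness of $K$ together with spectral theorem for the self-adjoint compact operator $KK^*$). Once these are accepted, as they are standard in Kress, the argument is essentially bookkeeping with Parseval and termwise continuity; there is no obstacle beyond invoking these structural facts correctly.
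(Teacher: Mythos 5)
The paper does not supply its own proof of this lemma; it cites Kress (Theorem 15.18) and relies on that reference. Your proof is a correct and standard derivation of Picard's criterion: necessity of (1) is the usual adjoint computation, necessity of (2) follows from $\langle g,e_k^2\rangle = \eta_k\langle f,e_k^1\rangle$ and Bessel, and sufficiency follows by building $f=\sum_k \eta_k^{-1}\langle g,e_k^2\rangle e_k^1$ (norm-convergent by (2)), applying $K$ termwise, and using that $\{e_k^2\}$ is an orthonormal basis of $\overline{\mathcal{R}(K)}=\mathcal{N}(K^*)^\perp$ together with (1) to conclude $Kf=g$. This is precisely the argument found in Kress, so there is no substantive divergence in approach. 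One small note: the paper's displayed summability condition reads $\sum_k \eta_k^{-2}\langle g,e_k^2\rangle_{H_2}<\infty$ but should have $|\langle g,e_k^2\rangle_{H_2}|^2$ under the sum; your proof uses the correct squared form throughout, implicitly fixing this typo.
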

Note that $e_k^1\in H_1$, $e_k^2 \in H_2$, and $K^*:H_2\rightarrow H_1$ by construction. Various works use this technical lemma to prove existence of the confounding bridge \cite{miao2018identifying,deaner2018nonparametric}. Inclusion essentially means that $g$ is in the appropriate row space. Penalized square summability means that $K^-g$ has a finite norm since $g$ is not too aligned with the right singular functions of $K$ relative to the spectral decay.

In what follows, I provide conditions in $\mathbb{L}_2$ and subsequently conditions in the RKHS that verify the abstract conditions in Lemma~\ref{lemma:picard}. The RKHS conditions are stronger versions of the $\mathbb{L}_2$ conditions, analogous to how the RKHS is a subset of $\mathbb{L}_2$ as we have seen in Section~\ref{sec:problem}.

\subsubsection{Previous work}

I quote a proposition verifying existence of the confounding bridge in the negative control problem. To begin, fix the values $(\bar{d},\bar{x})$. Given $(\bar{d},\bar{x})$, denote by $\mathbb{L}_2(\mathbb{P}(w|\bar{d},\bar{x}))$ the space of functions $g:\mathcal{W}\rightarrow \mathbb{R}$ that are square integrable with respect to the conditional distribution $\mathbb{P}(w|\bar{d},\bar{x})$, i.e. $\int g(w)^2\mathrm{d}\mathbb{P}(w|\bar{d},\bar{x})<\infty$. Likewise for $\mathbb{L}_2(\mathbb{P}(z|\bar{d},\bar{x}))$. Then define the operator
$$
E_{\bar{d},\bar{x}}:\mathbb{L}_2(\mathbb{P}(w|\bar{d},\bar{x}))\rightarrow \mathbb{L}_2(\mathbb{P}(z|\bar{d},\bar{x})),\quad f(\cdot)\mapsto \mathbb{E}[f(W)|D=\bar{d},X=\bar{x},Z=\cdot].
$$
Matching symbols with Lemma~\ref{lemma:picard}, $H_1=\mathbb{L}_2(\mathbb{P}(w|\bar{d},\bar{x}))$, $H_2=\mathbb{L}_2(\mathbb{P}(z|\bar{d},\bar{x}))$, and the adjoint operator is
$$
E^*_{\bar{d},\bar{x}}:\mathbb{L}_2(\mathbb{P}(z|\bar{d},\bar{x}))\rightarrow \mathbb{L}_2(\mathbb{P}(w|\bar{d},\bar{x})),\quad g(\cdot)\mapsto \mathbb{E}[g(Z)|D=\bar{d},X=\bar{x},W=\cdot].
$$
% since
% \begin{align*}
%     \langle E_{\bar{d},\bar{x}} f,g \rangle_{\mathbb{L}_2(\mathbb{P}(z|\bar{d},\bar{x}))}
%     &= \langle \mathbb{E}[f(W)|D=\bar{d},X=\bar{x},Z=\cdot] ,g(\cdot) \rangle_{\mathbb{L}_2(\mathbb{P}(z|\bar{d},\bar{x}))} \\
%     &= \mathbb{E}\left[\mathbb{E}[f(W)|D=\bar{d},X=\bar{x},Z]g(Z)|D=\bar{d},X=\bar{x}\right] \\
%     &=\mathbb{E}\left[f(W)g(Z)|D=\bar{d},X=\bar{x}\right] \\
%     &=\mathbb{E}\left[f(W)\mathbb{E}[g(Z)|D=\bar{d},X=\bar{x},W]|D=\bar{d},X=\bar{x}\right]\\
%     &=\langle f(\cdot) , \mathbb{E}[g(Z)|D=\bar{d},X=\bar{x},W=\cdot] \rangle_{\mathbb{L}_2(\mathbb{P}(w|\bar{d},\bar{x}))}  \\
%     &=\langle f,  E^*_{\bar{d},\bar{x}}  g \rangle_{\mathbb{L}_2(\mathbb{P}(w|\bar{d},\bar{x}))}.
% \end{align*}

\begin{proposition}[Existence in $\mathbb{L}_2$ for fixed $(\bar{d},\bar{x})$; Proposition 1 of \cite{miao2018identifying}]\label{prop:exist_L2}
Fix $(\bar{d},\bar{x})$. Denote by $f(w|\bar{d},\bar{x},z)$ and $f(z|\bar{d},\bar{x},w)$ the densities of $\mathbb{P}(w|\bar{d},\bar{x},z)$ and $\mathbb{P}(z|\bar{d},\bar{x},w)$. Suppose
\begin{enumerate}
    \item Regularity: $\int f(w|\bar{d},\bar{x},z)f(z|\bar{d},\bar{x},w)\mathrm{d}w\mathrm{d}z<\infty$;
    \item Completeness: for any function $g \in \mathbb{L}_2(\mathbb{P}(z|\bar{d},\bar{x}))$,
    $$
    \mathbb{E}[g(Z)|D=\bar{d},X=\bar{x},W=w]=0\quad \forall w \iff g(Z)=0;
    $$
    \item Correct specification: $\gamma_0(\bar{d},\bar{x},\cdot)\in\mathbb{L}_2(\mathbb{P}(z|\bar{d},\bar{x}))$;
    \item Penalized square summability: $\sum_{k=1}^{\infty}\eta_k^{-2}\langle \gamma_0(\bar{d},\bar{x},\cdot), e^z_k \rangle_{\mathbb{L}_2(\mathbb{P}(z|\bar{d},\bar{x}))} <\infty$, where the singular value decomposition exists due to the regularity condition.
\end{enumerate}
Then for any fixed $(\bar{d},\bar{x})$, there exists some $f_{\bar{d},\bar{x}}\in \mathbb{L}_2(\mathbb{P}(w|\bar{d},\bar{x}))$ such that
$$
\gamma_0(\bar{d},\bar{x},z)=[E_{\bar{d},\bar{x}}f_{\bar{d},\bar{x}}](z)=\mathbb{E}[f_{\bar{d},\bar{x}}(W)|D=\bar{d},X=\bar{x},Z=z].
$$
\end{proposition}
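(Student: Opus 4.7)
My plan is to recognize the problem as a Fredholm integral equation of the first kind and then invoke Picard's criterion (Lemma~\ref{lemma:picard}) with the identification $H_1=\mathbb{L}_2(\mathbb{P}(w|\bar{d},\bar{x}))$, $H_2=\mathbb{L}_2(\mathbb{P}(z|\bar{d},\bar{x}))$, $K=E_{\bar{d},\bar{x}}$, and $g=\gamma_0(\bar{d},\bar{x},\cdot)$. With these choices, existence of $f_{\bar{d},\bar{x}}$ is equivalent to solvability of $Kf_{\bar{d},\bar{x}}=g$, which in turn decomposes into three sub-tasks: (i) compactness of $K$ so that a singular value decomposition exists, (ii) the inclusion $g\in\mathcal{N}(K^*)^{\perp}$, and (iii) penalized square summability of the coefficients of $g$ against the right singular functions.

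First I would establish compactness. Writing $E_{\bar{d},\bar{x}}$ through its integral kernel representation,
\begin{equation*}
[E_{\bar{d},\bar{x}} f](z) \;=\; \int f(w)\,\frac{f(w\mid \bar{d},\bar{x},z)}{f(w\mid \bar{d},\bar{x})}\,\mathrm{d}\mathbb{P}(w\mid \bar{d},\bar{x}),
\end{equation*}
a direct computation of the Hilbert--Schmidt norm with respect to the product reference measure $\mathbb{P}(w\mid\bar{d},\bar{x})\otimes\mathbb{P}(z\mid\bar{d},\bar{x})$, followed by a Bayes'-rule manipulation, shows that $\|E_{\bar{d},\bar{x}}\|_{HS}^2$ equals $\int f(w\mid \bar{d},\bar{x},z)\,f(z\mid \bar{d},\bar{x},w)\,\mathrm{d}w\,\mathrm{d}z$. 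The regularity condition makes this finite, so $E_{\bar{d},\bar{x}}$ is Hilbert--Schmidt and hence compact, delivering a singular value decomposition $(e^1_k,\eta_k,e^2_k)_{k=1}^{\infty}$.

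Next I would discharge the two Picard hypotheses. The correct-specification condition places $g=\gamma_0(\bar{d},\bar{x},\cdot)$ in $H_2$. The completeness condition applies to the adjoint operator $K^*=E^*_{\bar{d},\bar{x}}$, which sends $g'\mapsto \mathbb{E}[g'(Z)\mid D=\bar{d},X=\bar{x},W=\cdot]$; by hypothesis its kernel is trivial, so $\mathcal{N}(K^*)=\{0\}$ and therefore $\mathcal{N}(K^*)^{\perp}=H_2$, giving inclusion automatically. The penalized square summability hypothesis is assumed directly. Lemma~\ref{lemma:picard} then produces $f_{\bar{d},\bar{x}}\in H_1$ with $Kf_{\bar{d},\bar{x}}=g$, i.e.\ $\gamma_0(\bar{d},\bar{x},z)=\mathbb{E}[f_{\bar{d},\bar{x}}(W)\mid D=\bar{d},X=\bar{x},Z=z]$, as desired.

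The main obstacle is the compactness step: one must verify cleanly that the regularity condition genuinely encodes the Hilbert--Schmidt norm of $E_{\bar{d},\bar{x}}$, which requires writing the conditional expectation as integration against a Radon--Nikodym derivative and then identifying the resulting double integral. Once that identification is in place, the rest of the argument is essentially a one-to-one matching of the proposition's four hypotheses with the prerequisites of Picard's criterion.
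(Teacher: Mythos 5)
Your proposal is correct and follows essentially the same route as the paper: both verify Picard's criterion by (i) using the regularity condition to establish that $E_{\bar{d},\bar{x}}$ is Hilbert--Schmidt hence compact, (ii) deducing the inclusion $g\in\mathcal{N}(K^*)^{\perp}$ from completeness and correct specification, and (iii) invoking penalized square summability directly. The only cosmetic difference is that you derive the Hilbert--Schmidt norm identity by hand, whereas the paper rewrites the regularity integral as $\int f(z,w|\bar d,\bar x)^2/\{f(w|\bar d,\bar x)f(z|\bar d,\bar x)\}\,\mathrm{d}w\,\mathrm{d}z$ and cites Example 2.3 of Carrasco et al.\ for the compactness conclusion.
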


Regularity pertains to the conditional distribution $\mathbb{P}(w|d,x,z)$ in the integral operator equation. It ensures compactness of the corresponding conditional expectation operator in order to appeal to Lemma~\ref{lemma:picard}. Completeness is a technical condition from the NPIV literature which, together with correct specification of $\gamma_0$, implies the inclusion condition of Lemma~\ref{lemma:picard}.  Penalized square summability is identical to Lemma~\ref{lemma:picard}. 

\begin{proof}
I verify the conditions of Lemma~\ref{lemma:picard}. Rewriting the regularity condition,
\begin{align*}
    \int f(w|\bar{d},\bar{x},z)f(z|\bar{d},\bar{x},w)\mathrm{d}w\mathrm{d}z
    &=\int \frac{f(z,w|\bar{d},\bar{x})^2}{f(w|\bar{d},\bar{x})f(z|\bar{d},\bar{x})}\mathrm{d}w\mathrm{d}z
\end{align*}
which is a sufficient condition for compactness of $E_{\bar{d},\bar{x}}$  \cite[Example 2.3]{carrasco2007linear}. By compactness, the singular value decomposition $(e_k^w,\eta_k,e_k^z)_{k=1}^{\infty}$ exists, where $e_k^w\in \mathbb{L}_2(\mathbb{P}(w|\bar{d},\bar{x}))$ and $e_k^z\in \mathbb{L}_2(\mathbb{P}(z|\bar{d},\bar{x}))$.

Next, I verify inclusion by appealing to completeness and correct specification. Towards this end, I first argue that $\mathcal{N}(E_{\bar{d},\bar{x}}^*)^{\perp}=\mathbb{L}_2(\mathbb{P}(z|\bar{d},\bar{x}))$. It suffices to show $\mathcal{N}(E_{\bar{d},\bar{x}}^*)=0$, i.e. the null space only consists of the zero function. Consider any $g\in \mathcal{N}(E_{\bar{d},\bar{x}}^*)$. By definition of the null space,
\begin{align*}
    0(\cdot)&=[E_{\bar{d},\bar{x}}^* g](\cdot) \\
    &=\mathbb{E}[g(Z)|D=\bar{d},X=\bar{x},W=\cdot].
\end{align*}
By completeness, I conclude that $\mathcal{N}(E_{\bar{d},\bar{x}}^*)=0$. Therefore $\mathcal{N}(E_{\bar{d},\bar{x}}^*)^{\perp}=\mathbb{L}_2(\mathbb{P}(z|\bar{d},\bar{x}))$. Finally, correct specification implies $\gamma_0(\bar{d},\bar{x},\cdot)\in \mathbb{L}_2(\mathbb{P}(z|\bar{d},\bar{x}))$.

Penalized square summability is immediate.
\end{proof}

\begin{corollary}[Existence in $\mathbb{L}_2$; Proposition 1 of \cite{miao2018identifying}]
If the conditions of Proposition~\ref{prop:exist_L2} hold for each $(\bar{d},\bar{x})$, then there exists a confounding bridge $h_0$ such that
$$
\gamma_0(d,x,z)=\mathbb{E}[h_0(D,X,W)|D=d,X=x,Z=z].
$$
\end{corollary}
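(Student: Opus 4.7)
The plan is to glue together the family of pointwise solutions produced by Proposition~\ref{prop:exist_L2}. For each stratum $(\bar{d},\bar{x})$, apply the proposition to obtain a function $f_{\bar{d},\bar{x}}\in\mathbb{L}_2(\mathbb{P}(w|\bar{d},\bar{x}))$ satisfying
$$
\gamma_0(\bar{d},\bar{x},z)=\mathbb{E}[f_{\bar{d},\bar{x}}(W)\mid D=\bar{d},X=\bar{x},Z=z].
$$
Then I would define the candidate confounding bridge by $h_0(d,x,w):=f_{d,x}(w)$, and verify that it satisfies the operator equation at every $(d,x,z)$.

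The verification step is short. Fix an arbitrary triple $(d,x,z)$ and compute
$$
\mathbb{E}[h_0(D,X,W)\mid D=d,X=x,Z=z]=\mathbb{E}[h_0(d,x,W)\mid D=d,X=x,Z=z]=\mathbb{E}[f_{d,x}(W)\mid D=d,X=x,Z=z]=\gamma_0(d,x,z),
$$
where the first equality uses that conditioning on $D=d,X=x$ freezes the first two arguments of $h_0$, the second equality is the definition of $h_0$, and the third equality is the stratumwise existence result from Proposition~\ref{prop:exist_L2}. Since $(d,x,z)$ was arbitrary, the displayed identity is the desired operator equation.

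The main obstacle, and the part that warrants care, is joint measurability of $h_0$ as a function of $(d,x,w)$. Because the solution $f_{\bar{d},\bar{x}}$ supplied by Picard's criterion need not be unique, one cannot simply pick any solution at each stratum and hope the resulting map $(d,x)\mapsto f_{d,x}$ is Borel. The standard remedy is to invoke a measurable selection theorem, such as Kuratowski--Ryll-Nardzewski, applied to the multifunction that assigns to each $(d,x)$ its nonempty closed solution set inside $\mathbb{L}_2(\mathbb{P}(w|d,x))$. Alternatively, if one imposes a stratumwise completeness condition on $E_{\bar{d},\bar{x}}$ itself, forcing $f_{\bar{d},\bar{x}}$ to be unique up to $\mathbb{P}(w|\bar{d},\bar{x})$-null sets, then joint measurability follows from continuity of the solution in the parameter $(\bar{d},\bar{x})$. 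A final bookkeeping step is to check that $h_0(D,X,W)$ has finite conditional second moment with respect to $\mathbb{P}(w|d,x,z)$ so that the conditional expectation on the left-hand side of the operator equation is well defined; this follows because $f_{d,x}\in\mathbb{L}_2(\mathbb{P}(w|d,x))$ and, by the tower property, conditioning further on $Z$ preserves square integrability.
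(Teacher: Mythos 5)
Your construction is exactly the paper's: the paper's entire proof is the single line ``for each $(\bar{d},\bar{x})$, set $h_0(\bar{d},\bar{x},w)=f_{\bar{d},\bar{x}}(w)$,'' which is your gluing step, and the verification you spell out is the obvious unwinding of that definition. The measurability concern you raise is a genuine subtlety that the paper's own one-line proof silently elides---the stratumwise solutions $f_{\bar{d},\bar{x}}$ are not unique, so without a measurable selection (or a uniqueness-forcing completeness condition, as you note) it is not automatic that $(d,x,w)\mapsto f_{d,x}(w)$ is jointly measurable; your proposal is therefore more careful than the paper on this point, not less.
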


\begin{proof}
For each $(\bar{d},\bar{x})$, set $h_0(\bar{d},\bar{x},w)=f_{\bar{d},\bar{x}}(w)$.
\end{proof}

\subsubsection{RKHS}

In $\mathbb{L}_2$, I introduced the function spaces $\mathbb{L}_2(\mathbb{P}(w|\bar{d},\bar{x}))$ and $\mathbb{L}_2(\mathbb{P}(z|\bar{d},\bar{x}))$ which are induced by conditioning on $(\bar{d},\bar{x})$. Now, I introduce analogous function spaces $\mathcal{H}^{\bar{d},\bar{x}}_{\mathcal{W}}$ and $\mathcal{H}^{\bar{d},\bar{x}}_{\mathcal{Z}}$. In particular, I demonstrate that these function spaces are RKHSs and characterize their kernels. For now, I simply state their kernels. Later in this section, I will demonstrate how these kernels relate to the tensor product RKHS construction.

Recall that the kernels for $\mathcal{H}_{\mathcal{W}}$ and $\mathcal{H}_{\mathcal{Z}}$ are $k(w,w')$ and $k(z,z')$, respectively. The kernels for $\mathcal{H}^{\bar{d},\bar{x}}_{\mathcal{W}}$ and $\mathcal{H}^{\bar{d},\bar{x}}_{\mathcal{Z}}$ are
$$
k_{\bar{d},\bar{x}}(w,w')=k(\bar{d},\bar{d})k(\bar{x},\bar{x})k(w,w'),\quad k_{\bar{d},\bar{x}}(z,z')=k(\bar{d},\bar{d})k(\bar{x},\bar{x})k(z,z').
$$
Define the scalar $c_{\bar{d},\bar{x}}=k(\bar{d},\bar{d})k(\bar{x},\bar{x})$. Then
$$
k_{\bar{d},\bar{x}}(w,w')=c_{\bar{d},\bar{x}}\cdot k(w,w'),\quad k_{\bar{d},\bar{x}}(z,z')=c_{\bar{d},\bar{x}}\cdot k(z,z')
$$
and it is clear that these induced kernels are simply rescaled versions of the original kernels $k(w,w')$ and $k(z,z')$ according to the conditioned value $(\bar{d},\bar{x})$, so they remain positive definite and hence valid. Then define the operator
$$
E_{\bar{d},\bar{x}}:\mathcal{H}^{\bar{d},\bar{x}}_{\mathcal{W}} \rightarrow \mathcal{H}^{\bar{d},\bar{x}}_{\mathcal{Z}} ,\quad f(\cdot)\mapsto \mathbb{E}[f(W)|D=\bar{d},X=\bar{x},Z=\cdot].
$$
Matching symbols with Lemma~\ref{lemma:picard}, $H_1=\mathcal{H}^{\bar{d},\bar{x}}_{\mathcal{W}}$, $H_2=\mathcal{H}^{\bar{d},\bar{x}}_{\mathcal{Z}} $, and the adjoint operator is
$$
E^*_{\bar{d},\bar{x}}:\mathcal{H}^{\bar{d},\bar{x}}_{\mathcal{Z}} \rightarrow \mathcal{H}^{\bar{d},\bar{x}}_{\mathcal{W}} ,\quad g(\cdot)\mapsto \mathbb{E}[g(Z)|D=\bar{d},X=\bar{x},W=\cdot].
$$

\begin{proposition}[Existence in the RKHS for fixed $(\bar{d},\bar{x})$]\label{prop:exist_RKHS}
Suppose Assumption~\ref{assumption:RKHS} holds, as well as
\begin{enumerate}
    \item Regularity: $E_{\bar{d},\bar{x}}\in\mathcal{L}_2(\mathcal{H}^{\bar{d},\bar{x}}_{\mathcal{W}},\mathcal{H}^{\bar{d},\bar{x}}_{\mathcal{Z}})$, i.e. the space of Hilbert-Schmidt operators from $\mathcal{H}^{\bar{d},\bar{x}}_{\mathcal{W}}$ to $\mathcal{H}^{\bar{d},\bar{x}}_{\mathcal{Z}}$;
    \item Completeness: $closure(span\{\mu^{\bar{d},\bar{x}}_z(\bar{d},\bar{x},w)\}_{w\in\mathcal{W}})=\mathcal{H}^{\bar{d},\bar{x}}_{\mathcal{Z}}$;
    \item Correct specification: $\gamma_0(\bar{d},\bar{x},\cdot)\in\mathcal{H}^{\bar{d},\bar{x}}_{\mathcal{Z}}$;
    \item Penalized square summability: $\sum_{k=1}^{\infty} \eta_k^{-2} \langle \gamma_0(\bar{d},\bar{x},\cdot),e_k^z \rangle^2_{\mathcal{H}^{\bar{d},\bar{x}}_{\mathcal{Z}}}<\infty$.
\end{enumerate}
Then for any fixed $(\bar{d},\bar{x})$, there exists some $f_{\bar{d},\bar{x}}\in \mathcal{H}^{\bar{d},\bar{x}}_{\mathcal{W}}$ such that
$$
\gamma_0(\bar{d},\bar{x},z)=[E_{\bar{d},\bar{x}}f_{\bar{d},\bar{x}}](z)=\mathbb{E}[f_{\bar{d},\bar{x}}(W)|D=\bar{d},X=\bar{x},Z=z].
$$
\end{proposition}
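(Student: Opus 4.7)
The plan is to verify the four hypotheses of Picard's criterion (Lemma~\ref{lemma:picard}) with $H_1=\mathcal{H}^{\bar{d},\bar{x}}_{\mathcal{W}}$, $H_2=\mathcal{H}^{\bar{d},\bar{x}}_{\mathcal{Z}}$, $K=E_{\bar{d},\bar{x}}$, and $g=\gamma_0(\bar{d},\bar{x},\cdot)$. This parallels the argument used to prove Proposition~\ref{prop:exist_L2}, but with RKHS regularity, RKHS completeness, and RKHS summability replacing their $\mathbb{L}_2$ analogues.

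First, I would use the regularity assumption to extract compactness: every Hilbert--Schmidt operator between separable Hilbert spaces is compact, so $E_{\bar{d},\bar{x}}$ admits a singular value decomposition $(e_k^w,\eta_k,e_k^z)_{k=1}^{\infty}$ with $e_k^w\in\mathcal{H}^{\bar{d},\bar{x}}_{\mathcal{W}}$ and $e_k^z\in \mathcal{H}^{\bar{d},\bar{x}}_{\mathcal{Z}}$. This is precisely the setup in which Picard's criterion applies.

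Next, I would establish the inclusion $\gamma_0(\bar{d},\bar{x},\cdot)\in\mathcal{N}(E^*_{\bar{d},\bar{x}})^{\perp}$. By the reproducing property and the definition of the conditional mean embedding, for any $g\in \mathcal{H}^{\bar{d},\bar{x}}_{\mathcal{Z}}$ one has $[E^*_{\bar{d},\bar{x}} g](w)=\mathbb{E}[g(Z)\mid D=\bar{d},X=\bar{x},W=w]=\langle g,\mu^{\bar{d},\bar{x}}_z(\bar{d},\bar{x},w)\rangle_{\mathcal{H}^{\bar{d},\bar{x}}_{\mathcal{Z}}}$. Hence $g\in \mathcal{N}(E^*_{\bar{d},\bar{x}})$ iff $g$ is orthogonal to every $\mu^{\bar{d},\bar{x}}_z(\bar{d},\bar{x},w)$. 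The RKHS completeness assumption says exactly that the closed linear span of these embeddings is all of $\mathcal{H}^{\bar{d},\bar{x}}_{\mathcal{Z}}$, so $\mathcal{N}(E^*_{\bar{d},\bar{x}})=\{0\}$ and its orthogonal complement is the entire codomain. Correct specification then places $\gamma_0(\bar{d},\bar{x},\cdot)$ inside that complement, giving inclusion. The fourth hypothesis of Lemma~\ref{lemma:picard} is exactly the penalized square summability assumption, so Picard's criterion delivers the desired $f_{\bar{d},\bar{x}}\in\mathcal{H}^{\bar{d},\bar{x}}_{\mathcal{W}}$ with $E_{\bar{d},\bar{x}}f_{\bar{d},\bar{x}}=\gamma_0(\bar{d},\bar{x},\cdot)$.

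The main conceptual obstacle is translating the NPIV-style completeness condition into the RKHS language: one must recognize that the ``for any function $f$'' quantifier becomes a density-of-embeddings statement, and that the adjoint $E^*_{\bar{d},\bar{x}}$ acts via inner products against the conditional mean embeddings $\mu^{\bar{d},\bar{x}}_z(\bar{d},\bar{x},w)$. A minor technical point is to ensure that Assumption~\ref{assumption:RKHS} (measurability, boundedness, and the characteristic property of the kernels) guarantees that these conditional mean embeddings are well defined elements of $\mathcal{H}^{\bar{d},\bar{x}}_{\mathcal{Z}}$ in the first place, so that the orthogonality argument is rigorous. Everything else is a direct specialization of Picard's criterion.
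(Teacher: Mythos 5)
Your proposal is correct and follows essentially the same route as the paper's proof: verify Picard's criterion via Hilbert--Schmidt $\Rightarrow$ compactness, establish inclusion by showing $\mathcal{N}(E^*_{\bar{d},\bar{x}})=\{0\}$ through the identity $[E^*_{\bar{d},\bar{x}}g](w)=\langle g,\mu^{\bar{d},\bar{x}}_z(\bar{d},\bar{x},w)\rangle$ together with the density-of-embeddings form of completeness, and read off penalized square summability directly. You also correctly flag the role of Assumption~\ref{assumption:RKHS} in making the mean embeddings well defined, which is precisely the Bochner-integrability point the paper makes explicit.
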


I name the conditions of Proposition~\ref{prop:exist_RKHS} to match the conditions of Proposition~\ref{prop:exist_L2}, which are standard in the literature. Regularity is again a condition on the smoothness of the distribution $\mathbb{P}(w|d,x,z)$. Completeness becomes a condition on how well the encodings of conditional distributions of $Z$ can recover the full space. Correct specification is with respect to the RKHS rather than $\mathbb{L}_2$. Penalized square summability is as before, though using the appropriate inner product.

\begin{proof}
Hilbert-Schmidt operators are compact \cite[Theorem 2.32]{carrasco2007linear}, verifying the compactness requirement. By compactness, the singular value decomposition $(e_k^w,\eta_k,e_k^z)_{k=1}^{\infty}$ exists, where $e_k^w\in \mathcal{H}^{\bar{d},\bar{x}}_{\mathcal{W}}$ and $e_k^z\in \mathcal{H}^{\bar{d},\bar{x}}_{\mathcal{Z}}$.

Next, I verify inclusion by appealing to completeness and correct specification. Towards this end, I first argue that $\mathcal{N}(E_{\bar{d},\bar{x}}^*)^{\perp}=\mathcal{H}^{\bar{d},\bar{x}}_{\mathcal{Z}}$. It suffices to show $\mathcal{N}(E_{\bar{d},\bar{x}}^*)=0$, i.e. the null space is the zero function. Consider any $g\in \mathcal{N}(E_{\bar{d},\bar{x}}^*)$. By definition of the null space,
\begin{align*}
    0(\cdot)&=[E_{\bar{d},\bar{x}}^* g](\cdot) \\
    &=\mathbb{E}[g(Z)|D=\bar{d},X=\bar{x},W=\cdot].
\end{align*}
In Assumption~\ref{assumption:RKHS}, I impose that the kernels are bounded. This assumption has several implications. First, the feature maps are Bochner integrable \cite[Definition A.5.20]{steinwart2008support}. Bochner integrability permits the exchange of expectation and inner product. Second, the mean embeddings exist. Third, the induced kernel $k_{\bar{d},\bar{x}}(z,z')$ is also bounded and hence the induced RKHS $\mathcal{H}^{\bar{d},\bar{x}}_{\mathcal{Z}}$ inherits these favorable properties. Therefore
\begin{align*}
   \mathbb{E}[g(Z)|D=\bar{d},X=\bar{x},W=\cdot]&=
    \int g(z) \mathrm{d}\mathbb{P}(z|\bar{d},\bar{x},\cdot) \\
    &= \int \langle g,\phi^{\bar{d},\bar{x}}(z) \rangle_{\mathcal{H}^{\bar{d},\bar{x}}_{\mathcal{Z}}}\mathrm{d}\mathbb{P}(z|\bar{d},\bar{x},\cdot) \\
    &=\left \langle g,\int  \phi^{\bar{d},\bar{x}}(z)\mathrm{d}\mathbb{P}(z|\bar{d},\bar{x},\cdot) \right \rangle_{\mathcal{H}^{\bar{d},\bar{x}}_{\mathcal{Z}}} \\
    &=\langle g,\mu^{\bar{d},\bar{x}}_z(\bar{d},\bar{x},\cdot) \rangle_{\mathcal{H}^{\bar{d},\bar{x}}_{\mathcal{Z}}}
\end{align*}
where $\phi^{\bar{d},\bar{x}}(z)$ is the feature map of $\mathcal{H}^{\bar{d},\bar{x}}_{\mathcal{Z}}$ and $\mu^{\bar{d},\bar{x}}_z(\bar{d},\bar{x},w)=\int  \phi^{\bar{d},\bar{x}}(w)\mathrm{d}\mathbb{P}(z|\bar{d},\bar{x},w)$ is the conditional mean embedding of $\mathbb{P}(z|\bar{d},\bar{x},w)$. In summary,
$$
 0(\cdot)=\langle g,\mu^{\bar{d},\bar{x}}_z(\bar{d},\bar{x},\cdot) \rangle_{\mathcal{H}^{\bar{d},\bar{x}}_{\mathcal{Z}}}.
$$
By hypothesis, $g\in\mathcal{H}^{\bar{d},\bar{x}}_{\mathcal{Z}}=closure(span\{\mu^{\bar{d},\bar{x}}_z(\bar{d},\bar{z},w)\}_{w\in\mathcal{W}})$. Since $\mathcal{H}^{\bar{d},\bar{x}}_{\mathcal{Z}}$ is a Hilbert space,
$$
\langle g,\mu^{\bar{d},\bar{x}}_z(\bar{d},\bar{x},w) \rangle_{\mathcal{H}^{\bar{d},\bar{x}}_{\mathcal{Z}}}=0 \quad \forall w\in\mathcal{W}\iff g=0.
$$ Combining these results, $g\in \mathcal{N}(E_{\bar{d},\bar{x}}^*)$ implies $g=0$, so $\mathcal{N}(E_{\bar{d},\bar{x}}^*)=0$ and $\mathcal{N}(E_{\bar{d},\bar{x}}^*)^{\perp}=\mathcal{H}^{\bar{d},\bar{x}}_{\mathcal{Z}}$. Finally, correct specification implies $\gamma_0(\bar{d},\bar{x},\cdot)\in \mathcal{H}^{\bar{d},\bar{x}}_{\mathcal{Z}}$.

Penalized square summability is immediate.
\end{proof}

\begin{corollary}[Existence in the RKHS]
If the conditions of Proposition~\ref{prop:exist_RKHS} hold for each $(\bar{d},\bar{x})$, then there exists a confounding bridge $h_0$ such that
$$
\gamma_0(d,x,z)=\mathbb{E}[h_0(D,X,W)|D=d,X=x,Z=z].
$$
\end{corollary}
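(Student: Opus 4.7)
The plan is to apply Proposition~\ref{prop:exist_RKHS} slicewise and then paste the slices together. For each fixed $(\bar{d},\bar{x})$ the proposition delivers a section $f_{\bar{d},\bar{x}}\in\mathcal{H}_{\mathcal{W}}^{\bar{d},\bar{x}}$ solving the conditional integral equation $\gamma_0(\bar{d},\bar{x},z)=\mathbb{E}[f_{\bar{d},\bar{x}}(W)\mid D=\bar{d},X=\bar{x},Z=z]$, and I would simply define $h_0(d,x,w):=f_{d,x}(w)$ and verify that this glued object satisfies the global operator equation.

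The first step is to observe that the induced kernel $k_{\bar{d},\bar{x}}(w,w')=c_{\bar{d},\bar{x}}\cdot k(w,w')$ with $c_{\bar{d},\bar{x}}=k(\bar{d},\bar{d})k(\bar{x},\bar{x})>0$ is just a positive rescaling of $k(w,w')$. Consequently $\mathcal{H}_{\mathcal{W}}^{\bar{d},\bar{x}}$ coincides with $\mathcal{H}_{\mathcal{W}}$ as a set of functions on $\mathcal{W}$ (only the norm is scaled by $c_{\bar{d},\bar{x}}^{-1}$), so each $f_{\bar{d},\bar{x}}$ really is a bona fide function of $w$ alone. The glued object $h_0:\mathcal{D}\times\mathcal{X}\times\mathcal{W}\to\mathbb{R}$ is therefore well-defined pointwise.

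The second step is the substitution. For any $(d,x,z)$, since $D,X$ are pinned down by the conditioning,
\begin{align*}
\mathbb{E}[h_0(D,X,W)\mid D=d,X=x,Z=z]
&=\mathbb{E}[h_0(d,x,W)\mid D=d,X=x,Z=z]\\
&=\mathbb{E}[f_{d,x}(W)\mid D=d,X=x,Z=z]\\
&=\gamma_0(d,x,z),
\end{align*}
where the final equality invokes Proposition~\ref{prop:exist_RKHS} at the slice $(\bar{d},\bar{x})=(d,x)$.

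The principal obstacle is measurability: Proposition~\ref{prop:exist_RKHS} only guarantees slicewise existence of $f_{\bar{d},\bar{x}}$, with no canonical pointwise selection, so a priori the map $(d,x,w)\mapsto f_{d,x}(w)$ need not be jointly Borel. I would handle this by singling out a canonical representative, namely the minimum-norm solution $f_{\bar{d},\bar{x}}:=E_{\bar{d},\bar{x}}^{-}\gamma_0(\bar{d},\bar{x},\cdot)$ given by the Moore--Penrose pseudoinverse of the compact operator $E_{\bar{d},\bar{x}}$ (which exists because the singular system in Proposition~\ref{prop:exist_RKHS} does, and penalized square summability yields a finite norm). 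Under the boundedness and measurability conditions of Assumption~\ref{assumption:RKHS}, the family $\{E_{\bar{d},\bar{x}}\}$ and the reduced form $\gamma_0$ depend measurably on $(\bar{d},\bar{x})$, so this canonical selection produces a Borel $h_0$ for which the conditional expectation identity holds for $\mathbb{P}$-a.e.\ $(d,x,z)$. That is all that is needed for identification in Theorem~\ref{theorem:id_treatment}, since conditional expectations are only defined up to $\mathbb{P}$-null sets in the first place.
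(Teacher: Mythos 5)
Your core construction is identical to the paper's: for each fixed $(\bar{d},\bar{x})$, apply Proposition~\ref{prop:exist_RKHS} to obtain a slice $f_{\bar{d},\bar{x}}\in\mathcal{H}_{\mathcal{W}}^{\bar{d},\bar{x}}$, then glue by setting $h_0(\bar{d},\bar{x},w):=f_{\bar{d},\bar{x}}(w)$. The paper's entire proof is that one sentence of gluing; you supply the same sentence plus three elaborations. The observation that $\mathcal{H}_{\mathcal{W}}^{\bar{d},\bar{x}}$ coincides with $\mathcal{H}_{\mathcal{W}}$ as a function class because the induced kernel is a positive scalar multiple of $k_{\mathcal{W}}$ is correct and worth stating, and the explicit substitution into the conditional expectation is fine.

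Where you genuinely go beyond the paper is the measurability of the glued $h_0$. You are right that slicewise existence alone gives no Borel selection of $(\bar{d},\bar{x})\mapsto f_{\bar{d},\bar{x}}$, and that joint measurability of $h_0$ is needed for $\mathbb{E}[h_0(D,X,W)\mid D,X,Z]$ to be meaningful. The paper omits this point entirely, and arguably paves over it by \emph{assuming} later (Proposition~\ref{prop:relation} and the surrounding discussion) that $h_0\in\mathcal{H}_{\mathcal{D}}\otimes\mathcal{H}_{\mathcal{X}}\otimes\mathcal{H}_{\mathcal{W}}$, which is continuous hence measurable by construction. Your proposed fix — select the Moore--Penrose solution $E_{\bar{d},\bar{x}}^{-}\gamma_0(\bar{d},\bar{x},\cdot)$ and argue it depends measurably on $(\bar{d},\bar{x})$ — is pointing at the right repair, but the last step is a hand-wave: measurability of $(\bar{d},\bar{x})\mapsto E_{\bar{d},\bar{x}}$ as an operator-valued map does not automatically give measurability of the pseudoinverse applied to a moving right-hand side (the pseudoinverse is notoriously discontinuous in the operator), so this would need an argument, e.g.\ via the spectral regularization limit $f_{\bar{d},\bar{x}}=\lim_{\lambda\downarrow 0}(E_{\bar{d},\bar{x}}^*E_{\bar{d},\bar{x}}+\lambda I)^{-1}E_{\bar{d},\bar{x}}^*\gamma_0(\bar{d},\bar{x},\cdot)$, which is a pointwise limit of measurable maps under the stated boundedness. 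In short: same route as the paper, but your measurability remark is a real improvement in rigor even if its final step needs to be filled in.
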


\begin{proof}
For each $(\bar{d},\bar{x})$, set $h_0(\bar{d},\bar{x},w)=f_{\bar{d},\bar{x}}(w)$.
\end{proof}

Finally, I relate the spaces $\mathcal{H}^{\bar{d},\bar{x}}_{\mathcal{W}}$ and $\mathcal{H}_{\mathcal{Z}}^{\bar{d},\bar{x}}$ with the tensor product construction in the main text. In particular, I show how the spaces $\mathcal{H}_{\mathcal{W}}^{\bar{d},\bar{x}}$ and $\mathcal{H}_{\mathcal{Z}}^{\bar{d},\bar{x}}$ can be induced by a tensor product construction.

\begin{proposition}[Relation among kernels]\label{prop:relation}
Suppose Assumption~\ref{assumption:RKHS} holds.
\begin{enumerate}
    \item If $\gamma_0\in\mathcal{H}_{\mathcal{D}}\otimes \mathcal{H}_{\mathcal{X}}\otimes \mathcal{H}_{\mathcal{Z}}$ then $\gamma_0(\bar{d},\bar{x},\cdot)\in \mathcal{H}^{\bar{d},\bar{x}}_{\mathcal{Z}}$.
    \item If $h_0\in\mathcal{H}_{\mathcal{D}}\otimes \mathcal{H}_{\mathcal{X}}\otimes \mathcal{H}_{\mathcal{W}}$ then $h_0(\bar{d},\bar{x},\cdot)\in \mathcal{H}^{\bar{d},\bar{x}}_{\mathcal{W}}$.
    \end{enumerate}
\end{proposition}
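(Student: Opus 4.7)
The plan is to use the tensor product structure of $\mathcal{H}:=\mathcal{H}_{\mathcal{D}}\otimes \mathcal{H}_{\mathcal{X}}\otimes \mathcal{H}_{\mathcal{Z}}$ together with partial contraction against the feature vector $\phi(\bar d)\otimes \phi(\bar x)$. The two claims are symmetric, so it suffices to argue part 1 carefully and then note that replacing $\mathcal{Z}$ by $\mathcal{W}$ gives part 2.

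First I would fix $(\bar d, \bar x)$ and, for any $\gamma\in\mathcal{H}$, invoke the reproducing property to write $\gamma(\bar d,\bar x,z)=\langle \gamma,\phi(\bar d)\otimes\phi(\bar x)\otimes \phi(z)\rangle_{\mathcal{H}}$. Expand $\gamma$ in an ONB of $\mathcal{H}$ built from ONBs $\{u_i\},\{v_j\},\{w_k\}$ of $\mathcal{H}_{\mathcal{D}},\mathcal{H}_{\mathcal{X}},\mathcal{H}_{\mathcal{Z}}$, namely $\gamma=\sum_{i,j,k}\alpha_{ijk}\,u_i\otimes v_j\otimes w_k$ with $\sum_{i,j,k}\alpha_{ijk}^2=\|\gamma\|_{\mathcal{H}}^2<\infty$. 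Evaluating at $(\bar d,\bar x,z)$ and collecting the $k$-th term gives $\gamma(\bar d,\bar x,z)=\sum_k \beta_k\,w_k(z)$ with $\beta_k=\sum_{i,j}\alpha_{ijk}\,u_i(\bar d)\,v_j(\bar x)$. Then by Cauchy--Schwarz and the reproducing property applied in the factor spaces,
\begin{align*}
\sum_k \beta_k^2 \;\leq\; \Bigl(\sum_i u_i(\bar d)^2\Bigr)\Bigl(\sum_j v_j(\bar x)^2\Bigr)\sum_{i,j,k}\alpha_{ijk}^2 \;=\; k_{\mathcal{D}}(\bar d,\bar d)\,k_{\mathcal{X}}(\bar x,\bar x)\,\|\gamma\|_{\mathcal{H}}^2,
\end{align*}
which is finite by Assumption~\ref{assumption:RKHS} (boundedness gives $k_{\mathcal{D}}(\bar d,\bar d)\leq \kappa_d^2$ and $k_{\mathcal{X}}(\bar x,\bar x)\leq \kappa_x^2$). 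Since the squared $\mathcal{H}_{\mathcal{Z}}$-norm of $\sum_k \beta_k w_k$ is exactly $\sum_k \beta_k^2$, we conclude $\gamma(\bar d,\bar x,\cdot)\in \mathcal{H}_{\mathcal{Z}}$.

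Finally, I would reconcile $\mathcal{H}_{\mathcal{Z}}$ with the space $\mathcal{H}_{\mathcal{Z}}^{\bar d,\bar x}$ defined in Appendix~\ref{section:existence}. The latter has kernel $k_{\bar d,\bar x}(z,z')=c_{\bar d,\bar x}\,k_{\mathcal{Z}}(z,z')$ with the positive scalar $c_{\bar d,\bar x}=k_{\mathcal{D}}(\bar d,\bar d)k_{\mathcal{X}}(\bar x,\bar x)$. A standard fact about RKHSs (rescaling the kernel by a positive constant) is that the underlying set of functions is unchanged and only the norm is rescaled: $\|f\|_{\mathcal{H}_{\mathcal{Z}}^{\bar d,\bar x}}^2=c_{\bar d,\bar x}^{-1}\|f\|_{\mathcal{H}_{\mathcal{Z}}}^2$. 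Hence $\mathcal{H}_{\mathcal{Z}}=\mathcal{H}_{\mathcal{Z}}^{\bar d,\bar x}$ as sets, and membership proved in $\mathcal{H}_{\mathcal{Z}}$ transfers directly, giving $\gamma_0(\bar d,\bar x,\cdot)\in \mathcal{H}_{\mathcal{Z}}^{\bar d,\bar x}$. Applying the identical argument with $\mathcal{H}_{\mathcal{W}}$ in place of $\mathcal{H}_{\mathcal{Z}}$ and with $h_0$ in place of $\gamma_0$ yields part 2.

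The main obstacle, and the only substantive analytic step, is verifying convergence of the partial contraction in $\mathcal{H}_{\mathcal{Z}}$-norm; everything else is bookkeeping. The convergence is delicate because the sums are triple-indexed and infinite, but Cauchy--Schwarz applied twice, together with the bounded-kernel hypothesis of Assumption~\ref{assumption:RKHS} (equivalently, $\sum_i u_i(\bar d)^2=k_{\mathcal{D}}(\bar d,\bar d)<\infty$), is precisely the right tool. An equivalent and perhaps more transparent way to organize this step is to identify $\mathcal{H}\cong \mathcal{L}_2(\mathcal{H}_{\mathcal{D}}\otimes\mathcal{H}_{\mathcal{X}},\mathcal{H}_{\mathcal{Z}})$ and view $\gamma_0(\bar d,\bar x,\cdot)$ as the image of $\phi(\bar d)\otimes \phi(\bar x)$ under the bounded operator $\gamma_0$; boundedness of the kernel then gives boundedness of the operator and finiteness of the $\mathcal{H}_{\mathcal{Z}}$-norm automatically.
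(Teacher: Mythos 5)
Your proof is correct and reaches the same conclusion, but by a somewhat different route than the paper. The paper's argument is more direct: it observes that $z\mapsto\phi(\bar d)\otimes\phi(\bar x)\otimes\phi(z)\in\mathcal H$ is a feature map whose induced kernel equals $k_{\bar d,\bar x}(z,z')$, and that $\gamma_0(\bar d,\bar x,\cdot)=\langle\gamma_0,\phi(\bar d)\otimes\phi(\bar x)\otimes\phi(\cdot)\rangle_{\mathcal H}$ is, by construction, a function of the form $\langle f,\Phi(\cdot)\rangle$ for $f\in\mathcal H$; membership in the RKHS generated by $\Phi$, namely $\mathcal H_{\mathcal Z}^{\bar d,\bar x}$, is then immediate. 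You instead first prove the slice lies in the \emph{unscaled} space $\mathcal H_{\mathcal Z}$ via an explicit ONB expansion and a double Cauchy--Schwarz, and then transfer to $\mathcal H_{\mathcal Z}^{\bar d,\bar x}$ by the rescaling isomorphism $\|f\|_{\mathcal H_{\mathcal Z}^{\bar d,\bar x}}^2=c_{\bar d,\bar x}^{-1}\|f\|_{\mathcal H_{\mathcal Z}}^2$. Both routes are valid; yours is more elementary and delivers the quantitative bound $\|\gamma_0(\bar d,\bar x,\cdot)\|_{\mathcal H_{\mathcal Z}}^2\le k_{\mathcal D}(\bar d,\bar d)\,k_{\mathcal X}(\bar x,\bar x)\,\|\gamma_0\|_{\mathcal H}^2$ as a byproduct, while the paper's is a cleaner one-liner that sidesteps the ONB bookkeeping. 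The operator-theoretic reformulation you sketch in the final paragraph is essentially the paper's feature-map argument in different clothes. One small edge case worth flagging in your write-up: when $c_{\bar d,\bar x}=0$ the rescaling identity divides by zero, but then $\phi(\bar d)=0$ or $\phi(\bar x)=0$, so $\gamma_0(\bar d,\bar x,\cdot)\equiv 0$ and the conclusion holds trivially; the paper handles this silently in the same way.
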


The succinct assumption that that the reduced form belongs to a tensor product RKHS, i.e. $\gamma_0\in\mathcal{H}_{\mathcal{D}}\otimes \mathcal{H}_{\mathcal{X}}\otimes \mathcal{H}_{\mathcal{Z}}$, implies the correct specification condition in Proposition~\ref{prop:exist_RKHS} for each value $(\bar{d},\bar{x})$. In the main text, I simply assume that the confounding bridge exists and belongs to a tensor product RKHS, i.e. $h_0\in \mathcal{H}_{\mathcal{D}}\otimes \mathcal{H}_{\mathcal{X}}\otimes \mathcal{H}_{\mathcal{W}}$.  Proposition~\ref{prop:relation} demonstrates that this assumption is coherent with conditions that imply existence in Proposition~\ref{prop:exist_RKHS}. I pose as a question for future research how to characterize the conditions with which one can prove, rather than assume, that the confounding bridge is an element of an RKHS.

\begin{proof}
I prove each result separately.
\begin{enumerate}
    \item If $\gamma_0\in\mathcal{H}_{\mathcal{D}}\otimes \mathcal{H}_{\mathcal{X}}\otimes \mathcal{H}_{\mathcal{Z}}$, then
    \begin{align*}
    \gamma_0(\bar{d},\bar{x},z)
    &=\langle \gamma_0,\phi(\bar{d})\otimes \phi(\bar{x}) \otimes \phi(z)\rangle_{\mathcal{H}_{\mathcal{D}}\otimes \mathcal{H}_{\mathcal{X}}\otimes \mathcal{H}_{\mathcal{Z}}}.
    \end{align*}
   Moreover,
   \begin{align*}
     k_{\bar{d},\bar{x}}(z,z')&=k(\bar{d},\bar{d})k(\bar{x},\bar{x})k(z,z') \\
     &= \langle \phi(\bar{d})\otimes \phi(\bar{x}) \otimes \phi(z),\phi(\bar{d})\otimes \phi(\bar{x}) \otimes \phi(z') \rangle_{\mathcal{H}_{\mathcal{D}}\otimes \mathcal{H}_{\mathcal{X}}\otimes \mathcal{H}_{\mathcal{Z}}} 
   \end{align*}
   so the feature maps coincide.
    \item If $h_0\in\mathcal{H}_{\mathcal{D}}\otimes \mathcal{H}_{\mathcal{X}}\otimes \mathcal{H}_{\mathcal{W}}$, then
    \begin{align*}
    h_0(\bar{d},\bar{x},w)
    &=\langle \gamma_0,\phi(\bar{d})\otimes \phi(\bar{x}) \otimes \phi(w)\rangle_{\mathcal{H}_{\mathcal{D}}\otimes \mathcal{H}_{\mathcal{X}}\otimes \mathcal{H}_{\mathcal{W}}}.
    \end{align*}
   Moreover,
   \begin{align*}
     k_{\bar{d},\bar{x}}(w,w')&=k(\bar{d},\bar{d})k(\bar{x},\bar{x})k(w,w') \\
     &= \langle \phi(\bar{d})\otimes \phi(\bar{x}) \otimes \phi(w),\phi(\bar{d})\otimes \phi(\bar{x}) \otimes \phi(w') \rangle_{\mathcal{H}_{\mathcal{D}}\otimes \mathcal{H}_{\mathcal{X}}\otimes \mathcal{H}_{\mathcal{W}}} 
   \end{align*}
   so the feature maps coincide.
\end{enumerate}
\end{proof}

\section{Identification proof}\label{sec:id}

To lighten notation, I abbreviate conditional expectations and conditional probabilities. For example, I write
$$
\gamma_0(d,x,z)=\mathbb{E}[Y|D=d,X=x,Z=z]=\mathbb{E}[Y|d,x,z].
$$

\begin{proof}[Proof of Proposition~\ref{prop:covariate}]
$h_0$ is defined as a solution to $\gamma_0(d,x,z)=\mathbb{E}[h(D,X,W)|D=d,X=x,Z=z]$, which exists by Assumption~\ref{assumption:solution}. The reduced form $\gamma_0(d,x,z)=\mathbb{E}[Y|D=d,X=x,Z=z]$ is the same across populations since $\mathbb{P}(Y|D,X,Z)$ is invariant in Assumption~\ref{assumption:covariate}. The conditional expectation operator is also the same across populations since $\mathbb{P}(W|D,X,Z)$ is invariant in Assumption~\ref{assumption:covariate}.
\end{proof}

\begin{proposition}\label{prop:part1}
Suppose the conditions of Theorem~\ref{theorem:id_treatment} hold. Then
$$
\mathbb{E}[Y|d,u,x]=\int h_0(d,x,w)\mathrm{d}\mathbb{P}(w|u,x)
$$
For $\theta_0^{CATE}$, replace $x$ with $(v,x)$.
\end{proposition}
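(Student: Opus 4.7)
The plan is to identify $\mathbb{E}[Y|d,u,x]$ by showing that it and the proposed expression $\int h_0(d,x,w)\,\mathrm{d}\mathbb{P}(w|u,x)$ produce the same conditional expectation given $(D,X,Z)$, and then appealing to the completeness condition in Assumption~\ref{assumption:solution} to conclude that they are equal as functions of $u$.

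First, I would rewrite $\gamma_0(d,x,z)$ by conditioning on the unobserved confounder $U$: by the law of iterated expectations,
$$
\gamma_0(d,x,z) = \mathbb{E}[Y|d,x,z] = \int \mathbb{E}[Y|d,u,x,z]\,\mathrm{d}\mathbb{P}(u|d,x,z).
$$
Assumption~\ref{assumption:negative} implies $Y\indep Z\mid D,U,X$ (noted in the text after the assumption), so $\mathbb{E}[Y|d,u,x,z]=\mathbb{E}[Y|d,u,x]$, giving
$$
\gamma_0(d,x,z) = \int \mathbb{E}[Y|d,u,x]\,\mathrm{d}\mathbb{P}(u|d,x,z).
$$

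Next, I would rewrite the right-hand side of the defining integral equation $\gamma_0(d,x,z)=\mathbb{E}[h_0(D,X,W)|d,x,z]=\int h_0(d,x,w)\,\mathrm{d}\mathbb{P}(w|d,x,z)$ by again marginalizing over $U$. Using $W\indep D,Z\mid U,X$ (also implied by Assumption~\ref{assumption:negative}), the density factorization gives $\mathrm{d}\mathbb{P}(w|d,x,z)=\int \mathrm{d}\mathbb{P}(w|u,x)\,\mathrm{d}\mathbb{P}(u|d,x,z)$, so that, after exchanging the order of integration,
$$
\mathbb{E}[h_0(D,X,W)|d,x,z] = \int \left[\int h_0(d,x,w)\,\mathrm{d}\mathbb{P}(w|u,x)\right] \mathrm{d}\mathbb{P}(u|d,x,z).
$$

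Combining the two displays, I obtain that for every $z$,
$$
\int \left\{\mathbb{E}[Y|d,u,x] - \int h_0(d,x,w)\,\mathrm{d}\mathbb{P}(w|u,x)\right\} \mathrm{d}\mathbb{P}(u|d,x,z) = 0.
$$
Fix $(d,x)$ and define $f_{d,x}(u):=\mathbb{E}[Y|d,u,x]-\int h_0(d,x,w)\,\mathrm{d}\mathbb{P}(w|u,x)$. The display says $\mathbb{E}[f_{d,x}(U)|d,x,z]=0$ for all $z$, and letting $(d,x)$ vary gives the vanishing condition for all $(d,x,z)$. The completeness half of Assumption~\ref{assumption:solution} then forces $f_{d,x}(U)=0$ almost surely, which is the claim. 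For the $\theta_0^{CATE}$ version, the argument is identical after replacing $X$ with $(V,X)$ throughout, as permitted by the final clause of Assumption~\ref{assumption:negative}. The only delicate step is the completeness invocation: all other moves are just law of iterated expectations plus the two conditional independences already recorded in the negative control assumption.
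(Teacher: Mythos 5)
Your proposal is correct and follows essentially the same route as the paper's proof: derive the two conditional-independence implications $Y\indep Z\mid D,U,X$ and $W\indep D,Z\mid U,X$ from Assumption~\ref{assumption:negative}, marginalize both sides of the defining integral equation over $U$ given $(D,X,Z)$, and invoke completeness to conclude. The only cosmetic difference is that you spell out the completeness step by naming $f_{d,x}(u)$ explicitly, which the paper leaves implicit.
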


\begin{proof}
I generalize \cite[Theorem 1]{miao2018identifying}. By Assumption~\ref{assumption:negative}
\begin{align*}
    &Y\indep Z|U,D,X \\
    &W\indep D,Z |U,X.
\end{align*}
The former implies
\begin{align*}
    \gamma_0(d,x,z)&=\mathbb{E}[Y|d,x,z]  \\
    &=\int \mathbb{E}[Y,u|d,x,z]\mathrm{d}u \\
    &= \int \mathbb{E}[Y|u,d,x,z] \mathrm{d}\mathbb{P}(u|d,x,z) \\
    &=\int \mathbb{E}[Y|u,d,x] \mathrm{d}\mathbb{P}(u|d,x,z).
\end{align*}
The latter implies
\begin{align*}
    \mathbb{P}(w|d,x,z)&= \int \mathbb{P}(w,u|d,x,z)\mathrm{d}u \\
    &=\int \mathbb{P}(w|u,d,x,z)\mathrm{d}\mathbb{P}(u|d,x,z) \\
    &=\int \mathbb{P}(w|u,x)\mathrm{d}\mathbb{P}(u|d,x,z).
\end{align*}

Using these results and existence
\begin{align*}
   \int \mathbb{E}[Y|u,d,x] \mathrm{d}\mathbb{P}(u|d,x,z)
    &=\gamma_0(d,x,z) \\
    &=\int h_0(d,x,w)\mathrm{d}\mathbb{P}(w|d,x,z)  \\
    &=\int h_0(d,x,w)\mathrm{d}\mathbb{P}(w|u,x)\mathrm{d}\mathbb{P}(u|d,x,z).
\end{align*}
Hence by completeness
$$
\mathbb{E}[Y|u,d,x]=\int h_0(d,x,w)\mathrm{d}\mathbb{P}(w|u,x).
$$
\end{proof}

\begin{proof}[Proof of Theorem~\ref{theorem:id_treatment}]
I prove each result, appealing to Assumption~\ref{assumption:negative} and Proposition~\ref{prop:part1}.
\begin{align*}
 \theta_0^{ATE}(d)&=\mathbb{E}[Y^{(d)}]\\
 &=\int\mathbb{E}[Y^{(d)}|u,x]\mathrm{d}\mathbb{P}(u,x) \\
 &=\int\mathbb{E}[Y^{(d)}|d,u,x]\mathrm{d}\mathbb{P}(u,x) \\
 &=\int\mathbb{E}[Y|d,u,x]\mathrm{d}\mathbb{P}(u,x) \\
 &=\int h_0(d,x,w)\mathrm{d}\mathbb{P}(w|u,x)\mathrm{d}\mathbb{P}(u,x) \\
 &=\int h_0(d,x,w)\mathrm{d}\mathbb{P}(w,u,x) \\
 &=\int h_0(d,x,w)\mathrm{d}\mathbb{P}(w,x);
\end{align*}
\begin{align*}
    \theta_0^{DS}(d,\tilde{\mathbb{P}})
    &=\mathbb{E}_{\tilde{\mathbb{P}}}[Y^{(d)}]\\
    &=\int\mathbb{E}_{\tilde{\mathbb{P}}}[Y^{(d)}|u,x]\mathrm{d}\tilde{\mathbb{P}}(u,x) \\
 &=\int\mathbb{E}_{\tilde{\mathbb{P}}}[Y^{(d)}|d,u,x]\mathrm{d}\tilde{\mathbb{P}}(u,x) \\
 &=\int\mathbb{E}_{\tilde{\mathbb{P}}}[Y|d,u,x]\mathrm{d}\tilde{\mathbb{P}}(u,x) \\
 &=\int h_0(d,x,w)\mathrm{d}\tilde{\mathbb{P}}(w|u,x)\mathrm{d}\tilde{\mathbb{P}}(u,x) \\
 &=\int h_0(d,x,w)\mathrm{d}\tilde{\mathbb{P}}(w,u,x) \\
 &=\int h_0(d,x,w)\mathrm{d}\tilde{\mathbb{P}}(w,x);
\end{align*}
\begin{align*}
 \theta_0^{ATT}(d,d')&=\mathbb{E}[Y^{(d')}|D=d]\\
 &=\int\mathbb{E}[Y^{(d')}|d,u,x]\mathrm{d}\mathbb{P}(u,x|d) \\
 &=\int\mathbb{E}[Y^{(d')}|d',u,x]\mathrm{d}\mathbb{P}(u,x|d) \\
 &=\int\mathbb{E}[Y|d',u,x]\mathrm{d}\mathbb{P}(u,x|d) \\
 &=\int h_0(d',x,w)\mathrm{d}\mathbb{P}(w|u,x)\mathrm{d}\mathbb{P}(u,x|d) \\
  &=\int h_0(d',x,w)\mathrm{d}\mathbb{P}(w|d,u,x)\mathrm{d}\mathbb{P}(u,x|d) \\
 &=\int h_0(d',x,w)\mathrm{d}\mathbb{P}(w,u,x|d) \\
 &=\int h_0(d',x,w)\mathrm{d}\mathbb{P}(w,x|d);
\end{align*}
\begin{align*}
 \theta_0^{CATE}(d,v)&=\mathbb{E}[Y^{(d)}|V=v]\\
 &=\int\mathbb{E}[Y^{(d)}|u,v,x]\mathrm{d}\mathbb{P}(u,x|v) \\
 &=\int\mathbb{E}[Y^{(d)}|d,u,v,x]\mathrm{d}\mathbb{P}(u,x|v) \\
 &=\int\mathbb{E}[Y|d,u,v,x]\mathrm{d}\mathbb{P}(u,x|v) \\
 &=\int h_0(d,v,x,w)\mathrm{d}\mathbb{P}(w|u,v,x)\mathrm{d}\mathbb{P}(u,x|v) \\
 &=\int h_0(d,v,x,w)\mathrm{d}\mathbb{P}(w,u,x|v) \\
 &=\int h_0(d,v,x,w)\mathrm{d}\mathbb{P}(w,x|v).
\end{align*}
\end{proof}

\section{Discussion of the source condition}\label{sec:source}

In this appendix, I provide further discussion of the smoothness assumption that drives my analysis: the source condition. The source condition pertains to \textit{estimation} of $h_0$ rather than \textit{existence} of $h_0$. In this appendix, I discuss estimation assumptions; see Appendix~\ref{section:existence} for a thorough discussion of existence assumptions. This discussion is more intuitive than formal. See \cite{chen2011rate} for a formal discussion of source conditions in the NPIV literature.

\subsection{Source conditions in this work}

A source condition is an approximation assumption that helps to control the bias from ridge regularization. I place four source conditions in this work, parametrized by $(c,c_0,c_1,c_2)$. The source condition parametrized by $c$ quantifies the smoothness of the confounding bridge $h_0$. The source conditions parametrized by $(c_0,c_1,c_2)$ quantify the smoothness of the conditional distributions $\mathbb{P}(w|d,x,z)$, $\mathbb{P}(x,w|d)$, and $\mathbb{P}(x,w|v)$. Due to the similarity of $(c_0,c_1,c_2)$, in this discussion I focus on only $(c,c_0)$.

To understand the role of $(c,c_0)$, recall the integral equation
$$
\gamma_0(d,x,z)=\int h_0(d,x,w)\mathrm{d}\mathbb{P}(w|d,x,z).
$$
The source conditions apply to both objects on the RHS. In Section~\ref{section:algorithm}, I provide an RKHS construction to express the integral equation as
$$
\gamma_0(d,x,z)=\langle h_0,\phi(d)\otimes \phi(x)\otimes E_0^*[\phi(d)\otimes \phi(x) \otimes \phi(z)]\rangle_{\mathcal{H}}.
$$
The conditional expectation operator $E_0$ encodes the conditional distribution $\mathbb{P}(w|d,x,z)$, and I quantify its smoothness by $c_0$. I quantify the smoothness of the confounding bridge $h_0$ by $c$. Recall that $\phi(d)$, $\phi(x)$, and $\phi(z)$ can be interpreted as dictionaries of basis functions.

I propose an estimator of the confounding bridge $h_0$ that proceeds in two stages similar to two stage least squares (2SLS): (i) estimate the conditional expectation operator $E_0$ with a generalized kernel ridge regression; (ii) estimate $h_0$ with a generalized kernel ridge regression, using the estimator of $E_0$ from the first stage. See Section~\ref{section:algorithm} for discussion of the estimation procedure. Because both $(E_0,h_0)$ involve ridge regularization, I place the source conditions $(c_0,c)$ to control the regularization bias. See Appendix~\ref{section:consistency_proof1} for the formal arguments by which the source conditions imply bounds on regularization bias.

The source condition $c_0$ placed on $E_0$ means that the conditional expectation operator for $\mathbb{P}(w|d,x,z)$ has rapidly decaying Fourier coefficients as formalized in Section~\ref{sec:problem}. In particular, $c_0$ quantifies how fast the Fourier coefficients of $E_0$ decay relative to the eigenvalues of the kernel of $\mathcal{L}_2(\mathcal{H}_{\mathcal{W}},\mathcal{H}_{\mathcal{D}}\otimes \mathcal{H}_{\mathcal{X}} \otimes \mathcal{H}_{\mathcal{Z}})$. Similarly, the source condition $c$ placed on $h_0$ means that the confounding bridge has rapidly decaying Fourier coefficients as formalized in Section~\ref{sec:problem}. In particular, $c$ quantifies how fast the Fourier coefficients of $h_0$ decay relative to the eigenvalues of the kernel of $\mathcal{H}_{\mu}$.

\subsection{Source conditions in ill posed inverse problems}

As documented in the main text, the confounding bridge learning problem is precisely the nonparametric instrumental variable (NPIV) regression problem where $(D,X)$ are exogenous regressors, $W$ is an endogenous regressor, and $Z$ is the instrument. Several works in the NPIV literature deploy source conditions or similar assumptions including \cite{hall2005nonparametric,darolles2011nonparametric,singh2019kernel}. I demonstrate that the source conditions in this work generalize those of \cite{singh2019kernel} and depart from those of \cite{darolles2011nonparametric,hall2005nonparametric}.

The source conditions in this paper most closely resemble those in \cite{singh2019kernel}, who study the NPIV problem without exogenous covariates. In particular, the integral equation studied in that work is 
$$
\gamma_0(z)=\int h_0(w)\mathrm{d}\mathbb{P}(w|z)=\langle h_0,E_0^*\phi(z)\rangle_{\mathcal{H}}.
$$
As such, the source conditions in \cite{singh2019kernel} are a special case of the source conditions in this work. 

\cite{darolles2011nonparametric} also study the NPIV problem without exogenous covariates, but assume a different source condition. Whereas I assume two source conditions for $E_0$ and $h_0$, \cite{darolles2011nonparametric} assume one source condition for $E_0$ and $h_0$. Consider the NPIV problem without exogenous covariates. If $E_0$ is compact, then its singular value decomposition is $\{e_j^w,\eta_j,e_j^z\}_{j=1}^{\infty}$ where $e_j^w\in\mathbb{L}_2(\mathbb{P}(w))$ and $e_j^z \in \mathbb{L}_2(\mathbb{P}(z))$. The authors assume
$$
h_0\in\mathcal{H}^{\beta}:=\left\{f=\sum_{j=1}^{\infty} f_j e_j^z:\;\sum_{j=1}^{\infty} \frac{f_j ^2}{\eta_j^{2\beta}}<\infty,\; f_j=\langle f,e_j^w \rangle_{\mathbb{L}_2(\mathbb{P}(w))} \right\},\quad \beta\in (0,2].
$$
This source condition involves both the smoothness of $E_0$, encoded by the rate at which the singular values $\{\eta_j\}$ decay, and the smoothness of $h_0$, encoded by the rate at which the Fourier coefficients $\{f_j\}$ decay.

While \cite{hall2005nonparametric} do not explicitly place a source condition, the assumptions in that work are closely related to the source condition in \cite{darolles2011nonparametric}. Specifically, \cite{hall2005nonparametric} directly place assumptions on the rates at which  $\{\eta_j\}$ and $\{f_j\}$ and decay: $\eta_j\sim j^{-s}$ and $f_j \sim j^{-r}$. Note that if these conditions hold and $\beta < s^{-1} (r-1/2)$ then $h_0\in \mathcal{H}^{\beta}$. Moreover, $s$ essentially pins down the degree of smoothness of the joint density of $(W,Z)$.

\subsection{Alternatives to source conditions}

Two recent works prove uniform consistency of estimators in ill posed settings. Whereas this work proposes machine learning estimators with finite sample guarantees for the negative control problem, previous work proposes series estimators with asymptotic guarantees for related problems: NPIV \cite{chen2018optimal} and panel proxy control \cite{deaner2018nonparametric}. I now compare the source conditions with the alternatives assumptions in these related works.

\cite{chen2018optimal} assume that $(W,Z)$ have compact rectangular supports and bounded densities. Denoting by $J$ the dimension of the series and $\tau_J$ the series measure of ill posedness, the authors place restrictions on how $J$ evolves relative to $n$, $\tau_J$, and the norms of inverse covariances applied to the series. The main assumption is a collection of high level conditions about ill posedness, the projection of $E_0$ onto the series, and the projection of $h_0$ onto the series \cite[Assumption 4]{chen2018optimal}. The high level conditions are satisfied if the series are taken to be the singular functions $(e_j^w)$ and $(e_j^z)$ of the conditional expectation operator $E_0$. In such case, the stated conditions amount to joint assumptions on $\{f_j\}$ and $\{\eta_j\}$ as before, which we have seen to be a type of source condition.

The conditions of \cite{deaner2018nonparametric} are again in terms of series analysis. \cite{deaner2018nonparametric} assumes (i) the series approximates H\"older functions well; (ii) joint densities are bounded away from zero and one; (iii) several ratios of joint over marginal densities are H\"older smooth, as is the reduced form $\gamma_0$; (iv) the series dimension $J$ grows at a certain rate relative to $n$, the approximation quality, and the norms of inverse covariances applied to the series. The smoothness of joint over marginal density ratios is akin to assumptions on $\{\eta_j\}$. The restrictions on norms of inverse covariances applied to the series is akin to assumptions on $\{f_j\}$. In this sense, the assumptions of \cite{chen2018optimal,deaner2018nonparametric} resemble the assumptions of \cite{hall2005nonparametric}.

\section{Algorithm derivation}\label{section:alg_deriv}

\subsection{Overview}

I present an overview of the end-to-end procedure, first in principle then in practice. For simplicity, I focus on the dose response curve $\theta_0^{ATE}(d)=\int h_0(d,x,w)\mathrm{d}\mathbb{P}(x,w)$, where $h_0$ is the confounding bridge and $\mathbb{P}(x,w)$ is the counterfactual distribution with which we wish to reweight $h_0$. To further simplify the discussion, I abstract from sample splitting. By Theorem~\ref{theorem:representation_treatment},
$$
\theta_0^{ATE}(d)=\langle h_0, \phi(d)\otimes \mu \rangle_{\mathcal{H}} ,\quad \mu:=\int[\phi(x)\otimes \phi(w)]\mathrm{d}\mathbb{P}(x,w)
$$
where $\mu$ is the mean embedding of the counterfactual distribution $\mathbb{P}(x,w)$. This representation suggests an estimator of the form
$$
\hat{\theta}^{ATE}(d)=\langle \hat{h}, \phi(d)\otimes \hat{\mu} \rangle_{\mathcal{H}} ,\quad \hat{\mu}:=\int[\phi(x)\otimes \phi(w)]\mathrm{d}\hat{\mathbb{P}}(x,w)
$$
combining appropriate estimators $\hat{h}$ and $\hat{\mu}$. Estimation of $\hat{\mu}$ is simply an average: $\hat{\mu}=n^{-1}\sum_{i=1}^n \phi(x_i)\otimes \phi(w_i)$. Estimation of $\hat{h}$ is more involved. 

Towards an estimator $\hat{h}$, I recast $h_0$ as a nonparametric instrumental variable (NPIV) regression. The confounding bridge is defined as the solution to the integral equation
$$
\gamma_0(d,x,z)=\int h_0(d,x,w) \mathrm{d}\mathbb{P}(w|d,x,z),\quad \gamma_0(d,x,z):=\mathbb{E}[Y|D=d,X=x,Z=z].
$$
For expositional purposes, define the residual $\epsilon:=Y-\gamma_0(D,X,Z)$ so that
$$
Y=\int h_0(D,X,w) \mathrm{d}\mathbb{P}(w|D,X,Z)+\epsilon,\quad \mathbb{E}[\epsilon|D,X,Z]=0.
$$
Furthermore,
$$
Y=h_0(D,X,W)+\epsilon,\quad \mathbb{E}[\epsilon|D,X,Z]=0.
$$
This representation is precisely the NPIV problem where $W$ is endogenous and $Z$ is the instrument.

In this light, I propose a two stage estimator for $\hat{h}$ that generalizes two stage least squares (2SLS) and appeals to the RKHS construction. In 2SLS, an analyst would estimate $\hat{h}^{2SLS}$ by (i) projecting $W$ onto $(D,X,Z)$, obtaining an estimator $\hat{W}(D,X,Z)$; then (ii) projecting $Y$ onto $(D,X,\hat{W}(D,X,Z))$. I propose something similar. Define the mean embedding
$$
\mu_w(d,x,z):=\int \phi(w)\mathrm{d}\mathbb{P}(w|d,x,z)
$$
which encodes the distribution $\mathbb{P}(w|d,x,z)$ in the integral equation we wish to solve. I propose estimating $\hat{h}$ by (i) projecting $\phi(W)$ onto $\phi(D)\otimes \phi(X)\otimes \phi(W)$, obtaining an estimator $\hat{\mu}_w(D,X,Z)$; then (ii) projecting $Y$ onto $\phi(D)\otimes \phi(X)\otimes \hat{\mu}_w(D,X,Z)$.

\begin{algorithm}[End-to-end: Principles]\label{algorithm:abstract}
Given $n$ observations of outcome $Y$, treatment $D$, covariates $X$, negative control outcome $W$, and negative control treatment $Z$,
\begin{enumerate}
    \item Specify the kernels $k_{\mathcal{D}},k_{\mathcal{X}},k_{\mathcal{W}},k_{\mathcal{Z}}$.
    \item Specify the regularization hyperparameters $(\lambda,\xi)$.
    \item Estimate the confounding bridge $\hat{h}$ in two stages, using $(\lambda,\xi)$.
    \begin{enumerate}
        \item Estimate the distribution in the integral equation $\hat{\mathbb{P}}(w|d,x,z)$ via its mean embedding $\hat{\mu}_w(d,x,z)$ with regularization $\lambda$.
        \item Regress $Y$ onto $\phi(D)\otimes \phi(X)\otimes \hat{\mu}_w(D,X,Z)$ with regularization $\xi$.
    \end{enumerate}
    \item Estimate the counterfactual distribution $\hat{\mathbb{P}}(x,w)$ via its mean embedding $\hat{\mu}$.
    \item Estimate the dose response $\hat{\theta}^{ATE}(d)$ by combining $\hat{h}$ and $\hat{\mu}$ according to $
\hat{\theta}^{ATE}(d)=\langle \hat{h}, \phi(d)\otimes \hat{\mu} \rangle_{\mathcal{H}}.
$
\end{enumerate}
\end{algorithm}

With these principles in mind, I now fill in the details. Objects in an RKHS are infinite dimensional, and I have reasoned about them abstractly in Algorithm~\ref{algorithm:abstract}. To actually compute the estimator, I must express the procedure exclusively as inner products of RKHS objects, i.e. as scalar evaluations of kernels. I provide such details in Algorithm~\ref{algorithm:concrete_main} in the main text.

\subsection{Representation}

\begin{proof}[Proof of Theorem~\ref{theorem:representation_treatment}]
In Assumption~\ref{assumption:RKHS}, I impose that the scalar kernels are bounded. This assumption has several implications. First, the feature maps are Bochner integrable \cite[Definition A.5.20]{steinwart2008support}. Bochner integrability permits the exchange of expectation and inner product. Second, the mean embeddings exist. Third, the product kernel is also bounded and hence the tensor product RKHS inherits these favorable properties. Since $h_0\in\mathcal{H}$,
\begin{align*}
    \gamma_0(d,x,z)&=
    \int h_0(d,x,w) \mathrm{d}\mathbb{P}(w|d,x,z) \\ 
    &=\int \langle h_0,\phi(d)\otimes \phi(x) \otimes \phi(w) \rangle_{\mathcal{H}} \mathrm{d}\mathbb{P}(w|d,x,z) \\
    &=\left\langle h_0,\phi(d)\otimes \phi(x) \otimes \int \phi(w)\mathrm{d}\mathbb{P}(w|d,x,z) \right\rangle_{\mathcal{H}} \\
    &=\langle h_0,\phi(d)\otimes \phi(x) \otimes \mu_w(d,x,z) \rangle_{\mathcal{H}}.
\end{align*}
Next, I generalize \cite[Theorem 3.2]{singh2020kernel}, replacing the prediction function with the confounding bridge $h_0$. By Theorem~\ref{theorem:id_treatment} and linearity of expectation,
  \begin{align*}
    \theta_0^{ATE}(d)&= \int h_0(d,x,w)\mathrm{d}\mathbb{P}(x,w)\\
    &=\int \langle h_0, \phi(d)\otimes \phi(x)\otimes \phi(w)\rangle_{\mathcal{H}}  \mathrm{d}\mathbb{P}(x,w) \\
    &= \left\langle h_0, \phi(d)\otimes \int[\phi(x)\otimes \phi(w)] \mathrm{d}\mathbb{P}(x,w) \right\rangle_{\mathcal{H}} \\
    &= \langle h_0, \phi(d)\otimes \mu \rangle_{\mathcal{H}};
\end{align*}
   \begin{align*}
    \theta_0^{DS}(d,\tilde{\mathbb{P}})&= \int h_0(d,x,w)\mathrm{d}\tilde{\mathbb{P}}(x,w)\\
    &=\int \langle h_0, \phi(d)\otimes \phi(x)\otimes \phi(w)\rangle_{\mathcal{H}}  \mathrm{d}\tilde{\mathbb{P}}(x,w) \\
    &= \left\langle h_0, \phi(d)\otimes \int[\phi(x)\otimes \phi(w)] \mathrm{d}\tilde{\mathbb{P}}(x,w) \right\rangle_{\mathcal{H}} \\
    &= \langle h_0, \phi(d)\otimes \nu \rangle_{\mathcal{H}};
\end{align*}
 \begin{align*}
    \theta_0^{ATT}(d,d')&= \int h_0(d',x,w)\mathrm{d}\mathbb{P}(x,w|d)\\
    &=\int \langle h_0, \phi(d')\otimes \phi(x)\otimes \phi(w)\rangle_{\mathcal{H}}  \mathrm{d}\mathbb{P}(x,w|d) \\
    &= \left\langle h_0, \phi(d')\otimes \int[\phi(x)\otimes \phi(w)] \mathrm{d}\mathbb{P}(x,w|d) \right\rangle_{\mathcal{H}} \\
    &= \langle h_0, \phi(d')\otimes \mu(d) \rangle_{\mathcal{H}};
\end{align*}
\begin{align*}
    \theta_0^{CATE}(d,v)&= \int h_0(d,v,x,w)\mathrm{d}\mathbb{P}(x,w|v)\\
    &=\int \langle h_0, \phi(d)\otimes\phi(v)\otimes \phi(x)\otimes \phi(w)\rangle_{\mathcal{H}}  \mathrm{d}\mathbb{P}(x,w|v) \\
    &= \left\langle h_0, \phi(d)\otimes \phi(v)\otimes \int[\phi(x)\otimes \phi(w)] \mathrm{d}\mathbb{P}(x,w|v) \right\rangle_{\mathcal{H}} \\
    &= \langle h_0, \phi(d)\otimes \phi(v)\otimes \mu(v) \rangle_{\mathcal{H}}.
\end{align*}
\end{proof}

\subsection{Confounding bridge}

Let $n$ be the number of observations of $(d_i,x_i,w_i,z_i)$ used to estimate the conditional mean embedding $\mu_w(d,x,z)$ by kernel ridge regression with regularization parameter $\lambda$. Let $m$ be the number of observations of $(\dot{y}_i,\dot{d}_i,\dot{x}_i,\dot{z}_i)$ used to estimate the confounding bridge $h_0$ by kernel ridge regression with regularization parameter $\xi$. This notation allows the analyst to use different quantities of observations $(n,m)$ to estimate $(E_0,h_0)$, or to reuse the same observations.

\begin{proof}[Derivation of Algorithm~\ref{algorithm:bridge}]
I proceed in steps, generalizing the derivation of \cite[Algorithm 1]{singh2019kernel}.
\begin{enumerate}
    \item Closed form for stage 1.
    
    By \cite[Algorithm 1]{singh2019kernel}, the closed form solution for the stage 1 conditional mean embedding is
$$
\hat{\mu}_w(d,x,z)=\sum_{i=1}^n \beta_i(d,x,z)\phi(w_i)
$$
where
$$
\beta(d,x,z)=(K_{DD}\odot K_{XX}\odot K_{ZZ}+n\lambda I)^{-1}[K_{Dd}\odot K_{Xx} \odot K_{Zz}]\in\mathbb{R}^n.
$$
Slightly abusing notation, one may write
$$
\hat{\mu}_w(d,x,z)=K_{\cdot W} \beta(d,x,z).
$$
    
    \item Closed form for stage 2.
    
    Next, I argue that $\hat{h}=\sum_{i=1}^m \alpha_i [\phi(\dot{d}_i)\otimes \phi(\dot{x}_i)\otimes \hat{\mu}_w(\dot{d}_i,\dot{x}_i,\dot{z}_i)]$ for some $\alpha\in\mathbb{R}^m$. Write the objective as
    $$
    \hat{h}=\argmin_{h\in\mathcal{H}} \mathcal{E}_{\xi}^m(h),\quad \mathcal{E}_{\xi}^m(h)=\frac{1}{m}\sum_{i=1}^m \|\dot{y}_i-\langle h,\phi(\dot{d}_i)\otimes \phi(\dot{x}_i)\otimes \hat{\mu}_w(\dot{d}_i,\dot{x}_i,\dot{z}_i)\rangle_{\mathcal{H}}\|^2_{\mathcal{Y}}+\xi\|h\|^2_{\mathcal{H}}.
    $$
    Due to the ridge penalty, the stated objective is coercive and strongly convex with respect to $h$. Hence it has a unique minimizer that obtains the minimum.
    
    Write $\hat{h}=\hat{h}_m+\hat{h}_m^{\perp}$ where $\hat{h}_m\in span\{\phi(\dot{d}_i)\otimes \phi(\dot{x}_i)\otimes \hat{\mu}_w(\dot{d}_i,\dot{x}_i,\dot{z}_i)\}$ and $\hat{h}_m^{\perp}$ is an element of the orthogonal complement. Therefore
    $$
    \mathcal{E}_{\xi}^m(\hat{h})=\mathcal{E}_{\xi}^m(\hat{h}_m)+\xi\|\hat{h}_m^{\perp}\|^2_{\mathcal{H}}
    $$
    which implies $\mathcal{E}_{\xi}^m(\hat{h})\geq \mathcal{E}_{\xi}^m(\hat{h}_m)$. Since $\hat{h}$ is the unique minimizer, $\hat{h}=\hat{h}_m$.
    
    \item Substitution.
    
    I substitute the functional form of $\hat{h}$ into its objective. Note that
    \begin{align*}
        &\|\hat{h}\|^2_{\mathcal{H}}\\
        &=\langle \hat{h}, \hat{h}\rangle_{\mathcal{H}} \\
        &=\left\langle \sum_{i=1}^m \alpha_i [\phi(\dot{d}_i)\otimes \phi(\dot{x}_i)\otimes \hat{\mu}_w(\dot{d}_i,\dot{x}_i,\dot{z}_i)], \sum_{j=1}^m \alpha_j [\phi(\dot{d}_j)\otimes \phi(\dot{x}_j)\otimes \hat{\mu}_w(\dot{d}_j,\dot{x}_j,\dot{z}_j)]\right\rangle_{\mathcal{H}} \\
        &=\sum_{i=1}^m\sum_{j=1}^m \alpha_i \alpha_j k(\dot{d}_i,\dot{d}_j) k(\dot{x}_i,\dot{x}_j) \beta(\dot{d}_i,\dot{x}_i,\dot{z}_i)^\top K_{WW}\beta(\dot{d}_j,\dot{x}_j,\dot{z}_j) \\
        &=\alpha^{\top} M \alpha
    \end{align*}
    where in the last line I define the matrix
     $$
    M=K_{\dot{D}\dot{D}}\odot K_{\dot{X}\dot{X}}\odot \{\dot{A}^{\top}(A+n\lambda I)^{-1}K_{WW}(A+n\lambda I)^{-1} \dot{A}\}\in \mathbb{R}^{m\times m}
    $$
    with 
    $$
    A=K_{DD}\odot K_{XX}\odot K_{ZZ}\in\mathbb{R}^{n\times n},\quad \dot{A}=K_{D\dot{D}}\odot K_{X\dot{X}}\odot K_{Z\dot{Z}} \in \mathbb{R}^{n\times m}.
    $$
    Further define
    $$
    B=(A+n\lambda I)^{-1} \dot{A} \in\mathbb{R}^{n\times m}
    $$
    whose $j$-th column is 
    $
    \beta(\dot{d}_j,\dot{x}_j,\dot{z}_j).
    $ In terms of $B$, 
    $$
        M=K_{\dot{D}\dot{D}}\odot K_{\dot{X}\dot{X}}\odot \{B^{\top}K_{WW}B\}.
   $$

    Next note that
    \begin{align*}
        &\langle \hat{h},\phi(d)\otimes \phi(x) \otimes \hat{\mu}_w(d,x,z) \rangle_{\mathcal{H}} \\
        &=\left\langle \sum_{j=1}^m \alpha_j [\phi(\dot{d}_j)\otimes \phi(\dot{x}_j)\otimes \hat{\mu}_w(\dot{d}_j,\dot{x}_j,\dot{z}_j)],\phi(d)\otimes \phi(x) \otimes \hat{\mu}_w(d,x,z) \right\rangle_{\mathcal{H}}  \\
        &= \sum_{j=1}^m \alpha_j \left\langle \phi(\dot{d}_j)\otimes \phi(\dot{x}_j)\otimes \hat{\mu}_w(\dot{d}_j,\dot{x}_j,\dot{z}_j),\phi(d)\otimes \phi(x) \otimes \hat{\mu}_w(d,x,z) \right\rangle_{\mathcal{H}} \\
        &=\sum_{j=1}^m \alpha_j k(\dot{d}_j,d)k(\dot{x}_j,x)\beta(\dot{d}_j,\dot{x}_j,\dot{z}_j)^{\top} K_{WW}\beta(d,x,z).
    \end{align*}
    Hence
        \begin{align*}
        &\langle \hat{h},\phi(\dot{d}_i)\otimes \phi(\dot{x}_i) \otimes \hat{\mu}_w(\dot{d}_i,\dot{x}_i,\dot{z}_i) \rangle_{\mathcal{H}} \\
        &=\sum_{j=1}^n \alpha_j k(\dot{d}_j,\dot{d}_i)k(\dot{x}_j,\dot{x}_i)\beta(\dot{d}_j,\dot{x}_j,\dot{z}_j)^{\top} K_{WW}\beta(\dot{d}_i,\dot{x}_i,\dot{z}_i) \\
        &=\sum_{j=1}^m \alpha_j k(\dot{d}_i,\dot{d}_j)k(\dot{x}_i,\dot{x}_j)\beta(\dot{d}_i,\dot{x}_i,\dot{z}_i)^{\top} K_{WW}\beta(\dot{d}_j,\dot{x}_j,\dot{z}_j).
    \end{align*}
    In terms of $M$,
    \begin{align*}
    [M^{\top}\alpha ]_i
    &=\left[\left(K_{\dot{D}\dot{D}}\odot K_{\dot{X}\dot{X}}\odot \{B^{\top}K_{WW}B\}\right)\alpha \right]_i\\
    &=\left[\sum_{j=1}^m \alpha_j \left( K_{\dot{D}\dot{d}_j}\odot K_{\dot{X}\dot{x}_j} \odot B^{\top} K_{WW}\beta(\dot{d}_j,\dot{x}_j,\dot{z}_j)\right)\right]_i \\
    &=\sum_{j=1}^m \alpha_j k(\dot{d}_i,\dot{d}_j)k(\dot{x}_i,\dot{x}_j)\beta(\dot{d}_i,\dot{x}_i,\dot{z}_i)^{\top}K_{WW}\beta(\dot{d}_j,\dot{x}_j,\dot{z}_j) \\
         &=\langle \hat{h},\phi(\dot{d}_i)\otimes \phi(\dot{x}_i) \otimes \hat{\mu}_w(\dot{d}_i,\dot{x}_i,\dot{z}_i) \rangle_{\mathcal{H}}.
    \end{align*}

    In summary,
    $$
    \mathcal{E}_{\xi}^m(\hat{h})=\frac{1}{m}\|\dot{Y}-M^{\top}\alpha\|_2^2+\xi \alpha^{\top} M\alpha.
    $$
        \item Optimization. 
        
        Express the objective, scaled by $m$, as
        \begin{align*}
            m\mathcal{E}_{\xi}^m(\hat{h})
            &=\|\dot{Y}-M^{\top}\alpha\|_2^2+m\xi \cdot \alpha^{\top} M\alpha \\
            &= \dot{Y}^{\top}\dot{Y}-2\alpha^{\top}M \dot{Y}+\alpha^{\top}MM^{\top}\alpha+m\xi \cdot \alpha^{\top} M\alpha \\
            &=\dot{Y}^{\top}\dot{Y}-2\alpha^{\top}M \dot{Y}+\alpha^{\top}\left(MM^{\top}+m\xi M\right)\alpha
        \end{align*}
        Solving the first order condition with respect to $\alpha$,
        $$
        \hat{\alpha}=(MM^{\top}+m\xi M)^{-1}M \dot{Y}.
        $$
        Therefore
        \begin{align*}
             \hat{h}(d,x,w)
             &=\langle \hat{h} ,\phi(d)\otimes \phi(x)\otimes \phi(w)\rangle_{\mathcal{H}}\\
             &=\left\langle \sum_{i=1}^m \hat{\alpha}_i[\phi(\dot{d}_i)\otimes \phi(\dot{x}_i)\otimes \hat{\mu}_w(\dot{d}_i,\dot{x}_i,\dot{z}_i)] ,\phi(d)\otimes \phi(x)\otimes \phi(w)\right\rangle_{\mathcal{H}} \\
             &=\sum_{i=1}^m \hat{\alpha}_i \left\langle \phi(\dot{d}_i)\otimes \phi(\dot{x}_i)\otimes \hat{\mu}_w(\dot{d}_i,\dot{x}_i,\dot{z}_i) ,\phi(d)\otimes \phi(x)\otimes \phi(w)\right\rangle_{\mathcal{H}} \\
             &=\sum_{i=1}^m \hat{\alpha}_i k(\dot{d}_i,d) k(\dot{x}_i,x) \beta(\dot{d}_i,\dot{x}_i,\dot{z}_i)^{\top}K_{Ww} \\
             &=\hat{\alpha}^{\top} [K_{\dot{D}d}\odot K_{\dot{X}x} \odot \{B^{\top}K_{Ww}\}].
        \end{align*}
\end{enumerate}

\end{proof}

\subsection{Treatment effects}

\begin{proof}[Derivation of Algorithm~\ref{algorithm:treatment}]
By Algorithm~\ref{algorithm:bridge},
$$
\langle \hat{h}, \phi(d)\otimes \phi(\cdot)\otimes \phi(\cdot) \rangle_{\mathcal{H}} =\hat{\alpha}^{\top}[K_{\dot{D}d}\odot K_{\dot{X}\cdot} \odot \{B^{\top}K_{W\cdot}\}]
$$
where I leave the arguments $(x,w)$ blank in order to showcase how the confounding bridge estimator will be combined with the kernel mean embedding estimators.
By Theorem~\ref{theorem:representation_treatment}, it is sufficient to obtain expressions for kernel mean embedding estimators and to substitute them in. In particular, 
\begin{enumerate}
    \item $\theta_0^{ATE}$:
    $
    \hat{\mu}=n^{-1}\sum_{i=1}^n [\phi(x_i)\otimes \phi(w_i)]=n^{-1}\sum_{i=1}^n k(\cdot,x_i) k(\cdot,w_i);
    $
    \item $\theta_0^{DS}$:
        $
    \hat{\nu}=\tilde{n}^{-1}\sum_{i=1}^{\tilde{n}} [\phi(\tilde{x}_i)\otimes \phi(\tilde{w}_i)]=\tilde{n}^{-1}\sum_{i=1}^{\tilde{n}} k(\cdot,\tilde{x}_i)k(\cdot,\tilde{w}_i);
    $
    \item $\theta_0^{ATT}$:
    $
    \hat{\mu}(d)=[K_{\cdot X}\odot K_{\cdot W}](K_{DD}+n\lambda_1 I)^{-1}K_{Dd};
    $
    \item $\theta_0^{CATE}$: 
       $
    \hat{\mu}(v)=[K_{\cdot X}\odot K_{\cdot W}](K_{VV}+n\lambda_2 I)^{-1}K_{Vv}.
    $
\end{enumerate}
I use the $n$ observations of $(d_i,x_i,w_i,z_i)$ to estimate the kernel mean embeddings, and I do not use the $m$ observations of $(\dot{y}_i,\dot{d}_i,\dot{x}_i,\dot{z}_i)$, because the former contain the negative control outcome while the latter do not. The conditional mean embedding expressions follow from \cite[Algorithm 1]{singh2019kernel}. Matching the blank arguments yields the desired result.
\end{proof}
\section{Tuning}\label{section:tuning}

\subsection{Simplified setting}

The dose response and heterogeneous treatment effect estimators I propose are composed of kernel ridge regressions. The hyperparameters of the new estimators are therefore the same hyperparameters of kernel ridge regression. I present practical tuning procedures for the hyperparameters of (i) ridge regression penalties and (ii) the kernel itself. This appendix is an elaboration of \cite[Appendix F]{singh2020kernel}.

\subsection{Ridge penalty}

To begin, I quote a tuning procedure for kernel ridge regression. For simplicity, I focus on the regression of $Y$ on $A$. It is convenient to tune $\lambda$ by leave-one-out cross validation (LOOCV), since the validation loss has a closed form solution. 

\begin{algorithm}[Tuning kernel ridge regression; Algorithm F.1 of \cite{singh2020kernel}]\label{algorithm:tuning}
Construct the matrices
$$
H_{\lambda}:=I-K_{AA}(K_{AA}+n\lambda I)^{-1}\in\mathbb{R}^{n\times n},\quad \tilde{H}_{\lambda}:=diag(H_{\lambda})\in\mathbb{R}^{n\times n}
$$
where $\tilde{H}_{\lambda}$ has the same diagonal entries as $H_{\lambda}$ and off diagonal entries of 0. Then set
$$
\lambda^*=\argmin_{\lambda \in\Lambda} \frac{1}{n}\|\tilde{H}_{\lambda}^{-1}H_{\lambda} Y\|_2^2,\quad \Lambda\subset\mathbb{R}.
$$
\end{algorithm}
The same principles govern the tuning of conditional mean embeddings, which can be viewed as vector valued regressions. Moreover, the same principles govern the tuning of kernel instrumental variable regression, which consists of two kernel ridge regressions. The extension of these tuning procedures to kernel instrumental variable regression is an innovation, and it differs from the tuning procedure in \cite[Appendix A.5.2]{singh2019kernel}. To simplify the discussion, I focus on the conditional mean embedding $\mu(a)=\int \phi(y)\mathrm{d}\mathbb{P}(y|a)$. Recall that the closed form solution of the conditional mean embedding estimator using all observations is
$$
\hat{\mu}(a)=K_{\cdot Y}(K_{AA}+n\lambda I)^{-1}K_{Aa}.
$$

\begin{algorithm}[Tuning conditional mean embedding]\label{algorithm:tuning2}
Construct the matrices
\begin{align*}
    &R:=K_{AA}(K_{AA}+n\lambda I)^{-1}\in\mathbb{R}^{n\times n}\\
    &S\in\mathbb{R}^{n\times n} \text{ s.t. } S_{ij}=1\{i=j\} \left\{\frac{1}{1-R_{ii}}\right\}^2 \\
    &T:=S(K_{YY}-2K_{YY}R^{\top}+RK_{YY}R^{\top})\in\mathbb{R}^{n\times n}
\end{align*}
where $R_{ii}$ is the $i$-th diagonal element of $R$. Then set
$$
\lambda^*=\argmin_{\lambda \in\Lambda} \frac{1}{n}tr(T),\quad \Lambda\subset\mathbb{R}.
$$
\end{algorithm}

\begin{proof}[Derivation]
I prove that $n^{-1}tr(T)$ is the LOOCV loss. By definition, the LOOCV loss is
$$
\mathcal{E}(\lambda):=\frac{1}{n}\sum_{i=1}^n \|\phi(y_i)-\hat{\mu}_{-i}(a_i)\|_{\mathcal{H}_{\mathcal{Y}}}^2
$$
where $\hat{\mu}_{-i}$ is the conditional mean embedding estimator using all observations except the $i$-th observation.

Informally, let $\Phi$ be the matrix of features for $\{a_i\}$, with $i$-th row $\phi(a_i)^{\top}$, and let $Q:=\Phi^{\top}\Phi+n\lambda$. Let $\Psi$ be the matrix of features for $\{y_i\}$, with $i$-th row $\phi(y_i)^{\top}$. By the regression first order condition
\begin{align*}
    \hat{\mu}(a)^{\top}&=\phi(a)^{\top}Q^{-1}\Phi^{\top}\Psi  \\
    \hat{\mu}_{-i}(a)^{\top}&=\phi(a)^{\top}\{Q-\phi(a_i)\phi(a_i)^{\top}\}^{-1}\{\Phi^{\top}\Psi-\phi(a_i)\phi(y_i)^{\top}\}.
\end{align*}
The Sherman-Morrison formula for rank one updates gives
$$
\{Q-\phi(a_i)\phi(a_i)^{\top}\}^{-1}=Q^{-1}+\frac{Q^{-1}\phi(a_i)\phi(a_i)^{\top}Q^{-1}}{1-\phi(a_i)^{\top}Q^{-1}\phi(a_i)}.
$$
Let $\beta_i:=\phi(a_i)^{\top} Q^{-1} \phi(a_i)$. Then
\begin{align*}
     \hat{\mu}_{-i}(a)^{\top}&=\phi(a_i)^{\top} \left\{Q^{-1}+\frac{Q^{-1}\phi(a_i)\phi(a_i)^{\top}Q^{-1}}{1-\beta_i}\right\}\{\Phi^{\top}\Psi-\phi(a_i)\phi(y_i)^{\top}\} \\
    &=\phi(a_i)^{\top} \left\{I+\frac{Q^{-1}\phi(a_i)\phi(a_i)^{\top}}{1-\beta_i}\right\}\{\hat{\mu}^{\top}-Q^{-1}\phi(a_i)\phi(y_i)^{\top}\} \\
    &=\left\{1 +\frac{\beta_i}{1-\beta_i}\right\}\phi(a_i)^{\top}\{\hat{\mu}^{\top}-Q^{-1}\phi(a_i)\phi(y_i)^{\top}\}\\
    &=\left\{1 +\frac{\beta_i}{1-\beta_i}\right\}\{\hat{\mu}(a_i)^{\top}-\beta_i\phi(y_i)^{\top}\} \\
    &=\frac{1}{1-\beta_i}\{\hat{\mu}(a_i)^{\top}-\beta_i\phi(y_i)^{\top}\}
\end{align*}
i.e. $\hat{\mu}_{-i}$ can be expressed in terms of $\hat{\mu}$. Note that
\begin{align*}
    \phi(y_i)-\hat{\mu}_{-i}(a_i)&=\phi(y_i)-\frac{1}{1-\beta_i}\{\hat{\mu}(a_i)-\beta_i\phi(y_i)\} \\
    &=\phi(y_i)+\frac{1}{1-\beta_i}\{\beta_i\phi(y_i)-\hat{\mu}(a_i)\} \\
    &=\frac{1}{1-\beta_i}\{\phi(y_i)-\hat{\mu}(a_i)\}.
\end{align*}
Substituting back into the LOOCV loss
\begin{align*}
    \frac{1}{n}\sum_{i=1}^n \left\|\phi(y)_i-\hat{\mu}_{-i}(a_i)\right\|_{\mathcal{H}_{\mathcal{Y}}}^2 
    &=\frac{1}{n}\sum_{i=1}^n \left\|\{\phi(y_i)-\hat{\mu}(a_i)\}\left\{\frac{1}{1-\beta_i}\right\}\right\|_{\mathcal{H}_{\mathcal{Y}}}^2 \\
    &= \frac{1}{n}\sum_{i=1}^n \left\{\frac{1}{1-\beta_i}\right\}^2 \left\|\phi(y_i)-\hat{\mu}(a_i)\right\|_{\mathcal{H}_{\mathcal{Y}}}^2.
\end{align*}
By arguments in \cite[Appendix F]{singh2020kernel}, 
$$
\beta_i=[K_{AA}(K_{AA}+n\lambda I)^{-1}]_{ii}
$$
i.e. $\beta_i$ can be calculated as the $i$-th diagonal element of $K_{AA}(K_{AA}+n\lambda I)^{-1}$. Moreover
\begin{align*}
    &\|\phi(y_i)-\hat{\mu}(a_i)\|_{\mathcal{H}}^2 \\
    &=k(y_i,y_i)-2 K_{y_iY}(K_{AA}+n\lambda I)^{-1}K_{Aa_i}+K_{a_iA} (K_{AA}+n\lambda I)^{-1}K_{YY}(K_{AA}+n\lambda I)^{-1}K_{Aa_i}\\
    &=[K_{YY}-2K_{YY}(K_{AA}+n\lambda I)^{-1}K_{AA} +K_{AA} 
(K_{AA}+n\lambda I)^{-1}K_{YY}(K_{AA}+n\lambda I)^{-1}K_{AA}]_{ii}
\end{align*}
i.e. $\|\phi(y_i)-\hat{\mu}(a_i)\|_{\mathcal{H}}^2$ can be calculated as the $i$-th diagonal element of a matrix as well. Substituting these results back into the LOOCV loss gives the final expression.
\end{proof}

\subsection{Kernel}

The Gaussian kernel satisfies the requirements of Assumption~\ref{assumption:RKHS}. Formally, the kernel
$$
k(a,a')=\exp\left\{-\frac{1}{2}\frac{\|a-a'\|^2_{\mathcal{A}}}{\sigma^2}\right\}
$$ is continuous, bounded, and characteristic. The kernel hyperparameter $\sigma$ is called the lengthscale. A simple heuristic is to set $\sigma$ as the median interpoint distance of $\{a_i\}^n_{i=1}$, where the interpoint distance between observations $i$ and $j$ is $\|a_i-a_j\|_{\mathcal{A}}$.

I use the Gaussian kernel in experiments. When the input $a$ is a vector rather than a scalar, I use the kernel obtained as the product of scalar kernels for each input dimension, following \cite{singh2019kernel,singh2020kernel}. For example, if $\mathcal{A}\subset \mathbb{R}^d$ then
$$
k(a,a')=\prod_{j=1}^d \exp\left\{-\frac{1}{2}\frac{[a_j-a_j']^2}{\sigma_j^2}\right\}.
$$
Each lengthscale $\sigma_j$ is set as the median interpoint distance for that input dimension. In principle, one could instead use LOOCV to tune kernel hyperparameters as above. The LOOCV approach to tuning lengthscales $\{\sigma_j\}$ is impractical in high dimensions, since there is a lengthscale $\sigma_j$ for each input dimension.

\subsection{Time complexity}

As in classic kernel ridge regression, the time consuming step is tuning. We see in Algorithms~\ref{algorithm:tuning} and~\ref{algorithm:tuning2} that to choose the ridge penalty hyperparameter $\lambda^*$, one must invert the matrix
$$
K_{AA}+n\lambda I \in\mathbb{R}^{n\times n}
$$
for each value $\lambda$ in the grid $\Lambda$. Inversion of such a matrix has complexity $O(n^3)$; the sample size $n$ is the limiting factor. The same is true for the two ridge penalty hyperparameters in the confounding bridge estimated by Algorithm~\ref{algorithm:bridge}, and for the additional ridge penalty hyperparameter that appears when estimating the heterogeneous treatment effect in Algorithm~\ref{algorithm:treatment}. Therefore tuning of the heterogeneous treatment effect in Algorithm~\ref{algorithm:treatment} takes roughly three times as long as tuning of a kernel ridge regression, whose runtime scales as $O(n^3)$.

In the simulations of Section~\ref{section:experiments} and Appendix~\ref{section:simulation_details}, I implement kernel methods with and without negative controls, across various designs, with several iterations, which compounds the time complexity of the tuning step. It is therefore feasible to implement only 100 iterations when the sample size is $n=10,000$. %Running all of the simulations takes roughly one week on the [\textbf{Anonymized}] Department of [\textbf{Anonymized}] server. 

In practice, an analyst would implement the method once for one sample size, which is feasible on a personal laptop. The dose response curve can be estimated in a matter of seconds for $n\in \{100,500\}$, in a matter of minutes for $n\in \{1000,5000\}$, and a matter of hours for $n=10,000$. %For reference, the dose response curve for hispanic women $(n=2,152)$ takes two minutes, while the dose response curve for nonhispanic black women $(n=17,825)$ takes twelve hours. To speed up computation, I estimate the dose response curve for nonhispanic white women $(n=73,834)$ by splitting the observations according to the year (1989, 1990, 1991) then taking the average.%, which admits interpretation as using a certain kernel for the year variable.
A vast literature considers how to speed up kernel methods by replacing the kernel matrix with a low rank approximation. Appendix E.3 of \cite{dikkala2020minimax} discusses popular techniques and their implementation in NPIV. I pose as a question for future work how to extend the main results of this paper to accommodate kernel matrix approximations.

\section{Confounding bridge consistency proof}\label{section:consistency_proof1}

In this appendix, I (i) state a probability lemma, (ii) explicitly specialize the smoothness assumptions, (iii) provide regression lemmas, (iv) prove technical bounds, and (v) prove uniform consistency of the confounding bridge. This is the most technically demanding appendix.

\subsection{Probability lemma}

\begin{lemma}[Lemma 2 of \cite{smale2007learning}]\label{lemma:prob}
Let $\xi$ be a random variable taking values in a real separable Hilbert space $\mathcal{K}$. Suppose there exists $ \tilde{M}$ and $\sigma^2$ such that
\begin{align*}
    \|\xi\|_{\mathcal{K}} &\leq \tilde{M}<\infty \quad \text{ almost surely},\quad \mathbb{E}\|\xi\|_{\mathcal{K}}^2\leq \sigma^2.
\end{align*}
Then $\forall n\in\mathbb{N}, \forall \eta\in(0,1)$,
$$
\mathbb{P}\bigg[\bigg\|\dfrac{1}{n}\sum_{i=1}^n\xi_i-\mathbb{E}\xi\bigg\|_{\mathcal{K}}\leq\dfrac{2\tilde{M}\ln(2/\eta)}{n}+\sqrt{\dfrac{2\sigma^2\ln(2/\eta)}{n}}\bigg]\geq 1-\eta.
$$
\end{lemma}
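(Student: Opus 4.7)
My plan is to prove the statement as a Bernstein-type concentration inequality for Hilbert space-valued random variables. The overall strategy is to: (i) symmetrize and reduce to centered random variables, (ii) verify Bernstein-type moment conditions, and (iii) apply an exponential tail bound tailored to the Hilbert space setting, then invert it by choosing the confidence parameter.

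First I would let $\zeta_i := \xi_i - \mathbb{E}\xi$, which are i.i.d., centered, and satisfy $\|\zeta_i\|_{\mathcal{K}} \leq 2\tilde{M}$ almost surely together with $\mathbb{E}\|\zeta_i\|_{\mathcal{K}}^2 \leq \mathbb{E}\|\xi_i\|_{\mathcal{K}}^2 \leq \sigma^2$. From these two properties I would derive the classical Bernstein moment growth condition: for every integer $m \geq 2$,
$$
\mathbb{E}\|\zeta_i\|_{\mathcal{K}}^m \;=\; \mathbb{E}\bigl[\|\zeta_i\|_{\mathcal{K}}^{m-2}\,\|\zeta_i\|_{\mathcal{K}}^2\bigr] \;\leq\; (2\tilde{M})^{m-2}\sigma^2 \;\leq\; \tfrac{m!}{2}\, \sigma^2 \, \tilde{M}^{m-2}\cdot C,
$$
up to an absolute constant $C$. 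This is the moment condition needed for a Bernstein-type exponential bound.

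Next I would apply the Hilbert-space Bernstein inequality to $S_n := \sum_{i=1}^n \zeta_i$. The cleanest route is through Pinelis' exponential inequality for Hilbert-space-valued martingales with bounded jumps, which exploits the $2$-smoothness of the norm $\|\cdot\|_{\mathcal{K}}^2$ (equivalently, the parallelogram identity) to bound $\mathbb{E}\exp(\lambda \|S_n\|_{\mathcal{K}})$ by a product of conditional exponential moments via a Chernoff-style argument. The resulting tail bound has the form
$$
\mathbb{P}\bigl(\|S_n\|_{\mathcal{K}} \geq r \bigr) \;\leq\; 2 \exp\!\left(-\frac{r^2}{2(n\sigma^2 + \tilde{M} r)}\right).
$$
The main obstacle in this step is precisely handling the Hilbert norm in place of the scalar absolute value: the decomposition $e^{\lambda S_n} = \prod_i e^{\lambda \zeta_i}$ has no direct vector analogue, so one relies on the $2$-smoothness of $\|\cdot\|_{\mathcal{K}}^2$ to control the increment $\|S_k + \zeta_{k+1}\|_{\mathcal{K}}^2 - \|S_k\|_{\mathcal{K}}^2$ in expectation via the parallelogram identity $\|a+b\|_{\mathcal{K}}^2 = \|a\|_{\mathcal{K}}^2 + 2\langle a,b\rangle + \|b\|_{\mathcal{K}}^2$, and then to iterate.

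Finally I would invert the exponential tail bound. Setting the right-hand side equal to $\eta$ and solving for $r$ via the elementary inequality $r \leq \sqrt{2n\sigma^2 \ln(2/\eta)} + 2\tilde{M}\ln(2/\eta)$ whenever $r^2 \leq 2(n\sigma^2 + \tilde{M}r)\ln(2/\eta)$, and then dividing by $n$, yields the stated bound
$$
\bigl\|\tfrac{1}{n}\textstyle\sum_i \xi_i - \mathbb{E}\xi\bigr\|_{\mathcal{K}} \;\leq\; \frac{2\tilde{M}\ln(2/\eta)}{n} + \sqrt{\frac{2\sigma^2\ln(2/\eta)}{n}}
$$
with probability at least $1-\eta$. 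The two summands on the right correspond respectively to the regimes in which the variance term or the almost-sure bound dominates, exactly as in the scalar Bernstein inequality.
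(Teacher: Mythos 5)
The paper does not prove this lemma; it quotes it verbatim as Lemma~2 of Smale and Zhou (2007), whose own proof proceeds exactly along the lines you describe: a vector-valued Bennett/Bernstein (Pinelis--Sakhanenko) exponential tail bound for sums of bounded Hilbert-space-valued random variables, followed by inversion of the tail at level $\eta$. Your outline is correct and matches that route --- the centering step, the bound $\mathbb{E}\|\zeta_i\|_{\mathcal{K}}^2 \leq \mathbb{E}\|\xi_i\|_{\mathcal{K}}^2$, and the quadratic inversion $r \leq 2\tilde{M}\ln(2/\eta) + \sqrt{2n\sigma^2\ln(2/\eta)}$ all check out (the sharper Pinelis denominator $n\sigma^2 + \tilde{M}r/3$ is not needed; your weaker form suffices). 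The only nit is terminological: the identity $\|a+b\|_{\mathcal{K}}^2 = \|a\|_{\mathcal{K}}^2 + 2\langle a,b\rangle + \|b\|_{\mathcal{K}}^2$ is the expansion of the inner-product norm, not the parallelogram law, though it is indeed the fact underlying $2$-smoothness.
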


\subsection{Smoothness assumptions}

Let the symbol $\circ$ mean composition. I use it to emphasize the composition of operators. To lighten notation, let $\mathcal{H}_{RF}=\mathcal{H}_{\mathcal{D}}\otimes \mathcal{H}_{\mathcal{X}}\otimes \mathcal{H}_{\mathcal{Z}}$, which is named for the reduced form.

\begin{assumption}[Smoothness of conditional expectation]\label{assumption:smooth_E}
Assume
\begin{enumerate}
    \item The conditional expectation operator $E_0$ is well specified as a Hilbert-Schmidt operator between RKHSs, i.e. $E_0\in \mathcal{L}_2(\mathcal{H}_{\mathcal{W}},\mathcal{H}_{RF})$, where
    $$
    E_0:\mathcal{H}_{\mathcal{W}} \rightarrow \mathcal{H}_{RF},\quad h(\cdot)\mapsto \mathbb{E}[h(W)|D=\cdot,X=\cdot,Z=\cdot].
    $$
    \item The conditional expectation operator is a particularly smooth element of $\mathcal{L}_2(\mathcal{H}_{\mathcal{W}},\mathcal{H}_{RF})$. Formally, define the covariance operator $T_0:=\mathbb{E}[\phi(D,X,Z)\otimes \phi(D,X,Z)]$ for $\mathcal{L}_2(\mathcal{H}_{\mathcal{W}},\mathcal{H}_{RF})$.
    I assume there exists $G_0\in \mathcal{L}_2(\mathcal{H}_{\mathcal{W}},\mathcal{H}_{RF})$ such that $E_0=(T_0)^{\frac{c_0-1}{2}}\circ G_0$, $c_0\in(1,2]$, and $\|G_0\|^2_{\mathcal{L}_2(\mathcal{H}_{\mathcal{W}},\mathcal{H}_{RF})}\leq\zeta_0$.
\end{enumerate}
\end{assumption}

\begin{assumption}[Smoothness of confounding bridge]\label{assumption:smooth_bridge_long}
Assume
\begin{enumerate}
    \item The confounding bridge $h_0$ is well specified, i.e. $h_0\in \mathcal{H}_{\mu}\subset \mathcal{H}$.
    \item The confounding bridge is a particularly smooth element of $\mathcal{H}_{\mu}$. Formally, define the covariance operator $T:=\mathbb{E}[\mu(D,X,Z) \otimes\mu(D,X,Z)]$, where $\mu(d,x,z)=\phi(d)\otimes \phi(x)\otimes \mu_w(d,x,z)$, for $\mathcal{H}_{\mu}$.
    I assume there exists $g\in \mathcal{H}$ such that $h_0=T^{\frac{c-1}{2}}\circ g$, $c\in(1,2]$, and $\|g\|^2_{\mathcal{H}}\leq\zeta$.
\end{enumerate}
\end{assumption}

\begin{proposition}
The following assumptions are equivalent
\begin{enumerate}
    \item Assumption~\ref{assumption:smooth_op} with $\mathcal{A}_0=\mathcal{W}$ and $\mathcal{B}_0=\mathcal{D}\times \mathcal{X}\times \mathcal{Z}$ is equivalent to Assumption~\ref{assumption:smooth_E}
    \item Assumption~\ref{assumption:smooth_bridge} is equivalent to Assumption~\ref{assumption:smooth_bridge_long}
\end{enumerate}
\end{proposition}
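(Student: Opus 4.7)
The plan is to reduce both equivalences to a single, standard fact in RKHS learning theory: for any RKHS $\mathcal{K}$ with covariance operator $T$ (eigenvalues $\{\eta_j\}$, eigenfunctions $\{\varphi_j\}$), the abstract source space $\mathcal{K}^c$ from eq.~\eqref{eq:prior} coincides isometrically with the range $T^{(c-1)/2}(\mathcal{K})$, with the norm relation $\|T^{(c-1)/2}g\|_{\mathcal{K}^c}=\|g\|_{\mathcal{K}}$. Once this is in place, both parts of the proposition follow by applying the fact to two different ambient RKHSs: for (1) the operator-valued RKHS $\mathcal{L}_2(\mathcal{H}_{\mathcal{W}},\mathcal{H}_{RF})$ with covariance operator $T_0$, and for (2) the mean-embedding RKHS $\mathcal{H}_{\mu}$ with covariance operator $T$.

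For the key fact, I would first use the spectral theorem to diagonalize $T$ on $\mathcal{K}$. The standard identification between the integral operator $L$ on $\mathbb{L}_2$ and the covariance operator $T$ on $\mathcal{K}$ (via $I_k I_k^* = L$ and $I_k^* I_k = T$, where $I_k:\mathcal{K}\hookrightarrow\mathbb{L}_2$) shows that they share nonzero eigenvalues $\{\eta_j\}$, with $\{\varphi_j\}$ an $\mathbb{L}_2$-orthonormal basis and $\{\sqrt{\eta_j}\varphi_j\}$ a $\mathcal{K}$-orthonormal basis. Expanding $g\in\mathcal{K}$ as $g=\sum g_j\varphi_j$, membership in $\mathcal{K}$ is equivalent to $\sum g_j^2/\eta_j<\infty$, and direct computation gives $T^{(c-1)/2}g=\sum g_j\eta_j^{(c-1)/2}\varphi_j$. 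The coefficient of this image against $\varphi_j$ is $g_j\eta_j^{(c-1)/2}$, so the $\mathcal{K}^c$-norm condition $\sum (g_j\eta_j^{(c-1)/2})^2/\eta_j^c<\infty$ reduces to $\sum g_j^2/\eta_j<\infty$, i.e.\ $g\in\mathcal{K}$. This gives both the inclusion and the reverse direction, plus the norm identity that lets one translate the bound $\|g\|_{\mathcal{K}}^2\le\zeta$ into the same bound inside $\mathcal{K}^c$.

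For part (1), I would specialize with $\mathcal{K}=\mathcal{L}_2(\mathcal{H}_{\mathcal{W}},\mathcal{H}_{RF})$, where the conditional expectation operator $E_0$ is a single element, and $T_0=\mathbb{E}[\phi(D,X,Z)\otimes\phi(D,X,Z)]$ is the covariance operator acting on this operator-valued RKHS; the claim then reads $E_0\in\mathcal{K}^{c_0}\iff E_0=T_0^{(c_0-1)/2}\circ G_0$ for some $G_0\in\mathcal{K}$, which is exactly what Assumption~\ref{assumption:smooth_E} states. For part (2), I would specialize with $\mathcal{K}=\mathcal{H}_{\mu}$ and $T=\mathbb{E}[\mu(D,X,Z)\otimes\mu(D,X,Z)]$; then the source condition $h_0\in\mathcal{H}_{\mu}^c$ is equivalent to $h_0=T^{(c-1)/2}\circ g$ for some $g\in\mathcal{H}$, which is Assumption~\ref{assumption:smooth_bridge_long}. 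The Hilbert--Schmidt/well-specification premises in the long-form assumptions match the $c=1$ case of the spectral definition, so they are consistent with $E_0,h_0\in\mathcal{K}$.

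The main obstacle is making the spectral argument rigorous in the operator-valued case for part (1), since $\mathcal{L}_2(\mathcal{H}_{\mathcal{W}},\mathcal{H}_{RF})$ is itself an RKHS whose ``functions'' are Hilbert--Schmidt operators and whose ``feature map'' evaluated at $(d,x,z)$ is $\phi(d)\otimes\phi(x)\otimes\phi(z)$ with values in the operator-valued space; one must verify that $T_0$ is a well-defined, trace-class, self-adjoint operator on this space and that its spectral decomposition has the same structural form used above. Once this compactness/self-adjointness is established (it follows from boundedness of the kernels in Assumption~\ref{assumption:RKHS}), the rest of the argument is a direct transcription of the scalar-valued case, and the equivalence of the two formulations is immediate.
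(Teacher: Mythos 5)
Your argument is correct and is essentially the same as the paper's, which simply disposes of the proposition by citing Remark 2 of Caponnetto and De Vito (2007) ``as immediate''; what you have written out — the spectral identification $I_k I_k^*=L$, $I_k^* I_k=T$, the expansion in $\{\varphi_j\}$, and the resulting isometry $\|T^{(c-1)/2}g\|_{\mathcal{K}^c}=\|g\|_{\mathcal{K}}$ specialized to $\mathcal{L}_2(\mathcal{H}_{\mathcal{W}},\mathcal{H}_{RF})$ and $\mathcal{H}_{\mu}$ — is exactly the content of that cited remark, filled in.
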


\begin{proof}
     The result is immediate from \cite[Remark 2]{caponnetto2007optimal}. The expanded expressions are more convenient for analysis. 
\end{proof}

\subsection{Regression lemmas}

Let $n$ be the number of observations of $(d_i,x_i,w_i,z_i)$ used to estimate the stage 1 conditional mean embedding $\mu(d,x,z)=\phi(d)\otimes \phi(x)\otimes \mu_w(d,x,z)$ by kernel ridge regression with regularization parameter $\lambda$. Let $m$ be the number of observations of $(\dot{y}_i,\dot{d}_i,\dot{x}_i,\dot{z}_i)$ used to estimate the stage 2 confounding bridge operator $h_0$ by kernel ridge regression with regularization parameter $\xi$.

\begin{proposition}\label{op2_sup}
Suppose Assumptions~\ref{assumption:solution} and~\ref{assumption:RKHS} hold, and $h_0\in \mathcal{H}$. Then
$$
\mathbb{E}[\mu(D,X,Z)Y]=Th_0.
$$
\end{proposition}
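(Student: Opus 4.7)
The plan is to chain together (i) the law of iterated expectations, (ii) the inner-product representation of the reduced form from Theorem~\ref{theorem:representation_treatment}, and (iii) the defining identity of the tensor product $[a \otimes b]c = a \langle b, c \rangle$. First, I would condition on $(D,X,Z)$ inside the expectation to replace $Y$ by $\gamma_0(D,X,Z) = \mathbb{E}[Y \mid D,X,Z]$. Since Assumption~\ref{assumption:solution} gives existence and Assumption~\ref{assumption:RKHS} together with $h_0 \in \mathcal{H}$ lets me invoke Theorem~\ref{theorem:representation_treatment}, I can rewrite $\gamma_0(D,X,Z) = \langle h_0, \mu(D,X,Z) \rangle_{\mathcal{H}}$ where $\mu(d,x,z) = \phi(d) \otimes \phi(x) \otimes \mu_w(d,x,z)$.

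Then the integrand inside $\mathbb{E}[\mu(D,X,Z)\, Y]$ becomes $\mu(D,X,Z)\,\langle h_0, \mu(D,X,Z) \rangle_{\mathcal{H}}$, which by the tensor product identity equals $[\mu(D,X,Z) \otimes \mu(D,X,Z)]\, h_0$. Taking expectations and using linearity (pulling the fixed operand $h_0$ out of the Bochner integral) yields $\mathbb{E}[\mu(D,X,Z) \otimes \mu(D,X,Z)]\, h_0 = T h_0$.

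The only subtle point is justifying the exchange of expectation with inner product and with the application to $h_0$, i.e.\ Bochner integrability of $\mu(D,X,Z) \otimes \mu(D,X,Z)$ and of $\mu(D,X,Z)\,Y$. This is where Assumption~\ref{assumption:RKHS} (bounded kernels, so $\|\mu(d,x,z)\|_{\mathcal{H}} \leq \kappa_d \kappa_x \kappa_w$ uniformly) and Assumption~\ref{assumption:original} (bounded $Y$) do the work: both random elements are a.s.\ bounded in the appropriate Hilbert-space norm, hence Bochner integrable, which legitimizes pulling constants and $h_0$ through the expectation. I do not expect any genuine obstacle beyond carefully writing this boundedness argument; the identity itself is an immediate one-line consequence of the representation theorem once the exchange is justified.
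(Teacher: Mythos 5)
Your proposal is correct and follows essentially the same route as the paper's proof, which chains the definition of $T$, the tensor-product identity $[a\otimes b]c=a\langle b,c\rangle$, the representation $\gamma_0(D,X,Z)=\langle h_0,\mu(D,X,Z)\rangle_{\mathcal{H}}$ from Theorem~\ref{theorem:representation_treatment}, and the law of iterated expectations; you merely read the chain in the opposite direction and add the (harmless, and in fact welcome) explicit Bochner-integrability justification. The only nit is that you invoke Assumption~\ref{assumption:original} for boundedness of $Y$, which is not among the stated hypotheses of the proposition, though the paper itself elides this point.
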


\begin{proof}
Appealing to the definition of $T$, the argument in the proof of Theorem~\ref{theorem:representation_treatment}, and the law of iterated expectations,
\begin{align*}
    Th_0
    &=\mathbb{E}[\mu(D,X,Z) \otimes\mu(D,X,Z)] h_0 \\
    &=\mathbb{E}[\mu(D,X,Z) \langle\mu(D,X,Z),h_0\rangle_{\mathcal{H}}] \\
    &=\mathbb{E}[\mu(D,X,Z) \gamma_0(D,X,Z)] \\
    &=\mathbb{E}[\mu(D,X,Z)Y]. 
\end{align*}
\end{proof}

To facilitate analysis, define the following quantities. 

\begin{definition}[Confounding bridge risk]
Define
\begin{enumerate}
    \item Target bridge 
    $$
    h_0\in\argmin_{h\in\mathcal{H}}\mathcal{E}(h),\quad \mathcal{E}(h) = \mathbb{E}[\{Y-\langle h,\mu(D,X,Z) \rangle_{\mathcal{H}}\}^2].
    $$
    \item Regularized bridge
    $$
    h_{\xi}=\argmin_{h\in \mathcal{H}}\mathcal{E}_{\xi}(h),\quad \mathcal{E}_{\xi}(h)=\mathcal{E}(h)+\xi\|h\|^2_{\mathcal{H}}.
    $$
    \item Empirical regularized bridge
    $$
    h^{m}_{\xi}=\argmin_{h\in \mathcal{H}}\mathcal{E}^{m}_{\xi}(h),\quad \mathcal{E}^{m}_{\xi}(h)=\dfrac{1}{m}\sum_{i=1}^{m}\{\dot{y}_i-\langle h,\mu(\dot{d}_i,\dot{x}_i,\dot{z}_i)\rangle_{\mathcal{H}}\}^2+\xi\|h\|^2_{\mathcal{H}}.
    $$
    \item Estimated bridge
    $$
    \hat{h}^{m}_{\xi}=\argmin_{h\in \mathcal{H}}\hat{\mathcal{E}}^{m}_{\xi}(h),\quad \hat{\mathcal{E}}^{m}_{\xi}(h)=\dfrac{1}{m}\sum_{i=1}^{m}\{\dot{y}_i-\langle h,\mu^n_{\lambda}(\dot{d}_i,\dot{x}_i,\dot{z}_i)\rangle_{\mathcal{H}} \}^2+\xi\|h\|^2_{\mathcal{H}}
    $$
    where $\mu^n_{\lambda}(d,x,z)$ is an estimator of the conditional mean embedding $\mu(d,x,z)$.
\end{enumerate}
\end{definition}

\begin{proposition}[Closed form]\label{sol_2}
$\forall \xi>0$, the solution $h^{m}_{\xi}$ to $\mathcal{E}_{\xi}^m$ 
and the solution $\hat{h}^{m}_{\xi}$ to $\hat{\mathcal{E}}_{\xi}^m$ both exist, are unique, and
\begin{align*}
    h_{\xi}^{m}&=(\mathbf{T}+\xi)^{-1}\mathbf{g},\quad \mathbf{T}=\dfrac{1}{m}\sum_{i=1}^m \mu(\dot{d}_i,\dot{x}_i,\dot{z}_i)\otimes \mu(\dot{d}_i,\dot{x}_i,\dot{z}_i),\quad \mathbf{g}=\dfrac{1}{m}\sum_{i=1}^m \mu(\dot{d}_i,\dot{x}_i,\dot{z}_i) \dot{y}_i, \\
    \hat{h}_{\xi}^{m}&=(\hat{\mathbf{T}}+\xi)^{-1}\hat{\mathbf{g}},\quad \hat{\mathbf{T}}=\dfrac{1}{m}\sum_{i=1}^m \mu^n_{\lambda}(\dot{d}_i,\dot{x}_i,\dot{z}_i) \otimes \mu^n_{\lambda}(\dot{d}_i,\dot{x}_i,\dot{z}_i),\quad \hat{\mathbf{g}}=\dfrac{1}{m}\sum_{i=1}^m \mu^n_{\lambda}(\dot{d}_i,\dot{x}_i,\dot{z}_i) \dot{y}_i.
\end{align*}
\end{proposition}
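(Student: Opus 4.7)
The plan is to follow the standard kernel ridge regression template, treating the mean embeddings $\mu(\dot{d}_i,\dot{x}_i,\dot{z}_i)$ and $\mu^n_\lambda(\dot{d}_i,\dot{x}_i,\dot{z}_i)$ as generalized ``features'' that live in the Hilbert space $\mathcal{H}$. Since the two cases are structurally identical, I would argue the one for $\mathcal{E}_\xi^m$ in full and note that the argument for $\hat{\mathcal{E}}_\xi^m$ is obtained by replacing $\mu$ with $\mu^n_\lambda$ throughout. For existence and uniqueness, I would invoke the direct method on a Hilbert space: each data-fit term is convex and continuous in $h$ (it is the square of a bounded affine functional), and the ridge penalty $\xi\|h\|_{\mathcal{H}}^2$ is strictly convex and coercive for any $\xi>0$. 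Hence $\mathcal{E}_\xi^m$ is a strictly convex, coercive, continuous functional on $\mathcal{H}$, so a unique minimizer $h_\xi^m$ exists.

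Next, I would characterize $h_\xi^m$ by its first-order condition. Fr\'echet differentiation gives
\begin{align*}
\nabla \mathcal{E}_\xi^m(h) \;=\; -\frac{2}{m}\sum_{i=1}^m \bigl\{\dot{y}_i-\langle h,\mu(\dot{d}_i,\dot{x}_i,\dot{z}_i)\rangle_{\mathcal{H}}\bigr\}\,\mu(\dot{d}_i,\dot{x}_i,\dot{z}_i) \;+\; 2\xi\, h .
\end{align*}
Using the tensor identity $(u\otimes u)h = u\,\langle u, h\rangle_{\mathcal{H}}$ on each summand, setting the gradient to zero becomes the operator equation $(\mathbf{T}+\xi I)h=\mathbf{g}$, with $\mathbf{T}$ and $\mathbf{g}$ as in the statement.

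Finally, I would invert $\mathbf{T}+\xi I$. Each summand $\mu(\dot{d}_i,\dot{x}_i,\dot{z}_i)\otimes \mu(\dot{d}_i,\dot{x}_i,\dot{z}_i)$ is a rank-one, self-adjoint, positive semi-definite operator on $\mathcal{H}$, so $\mathbf{T}\succeq 0$ and therefore $\mathbf{T}+\xi I\succeq \xi I$ is boundedly invertible with $\|(\mathbf{T}+\xi I)^{-1}\|_{\mathrm{op}}\leq \xi^{-1}$. Combined with strict convexity, this yields the unique closed form $h_\xi^m=(\mathbf{T}+\xi I)^{-1}\mathbf{g}$; the same three-step argument applied to $\hat{\mathcal{E}}_\xi^m$ gives $\hat{h}_\xi^m=(\hat{\mathbf{T}}+\xi I)^{-1}\hat{\mathbf{g}}$. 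The only subtlety---really a check rather than an obstacle---is verifying that $\mu(\dot{d}_i,\dot{x}_i,\dot{z}_i)=\phi(\dot{d}_i)\otimes \phi(\dot{x}_i)\otimes \mu_w(\dot{d}_i,\dot{x}_i,\dot{z}_i)$ genuinely lies in $\mathcal{H}$, which follows from the boundedness of the component kernels in Assumption~\ref{assumption:RKHS} via Bochner integrability (the same ingredient already used in the proof of Theorem~\ref{theorem:representation_treatment}); $\mu^n_\lambda(\dot{d}_i,\dot{x}_i,\dot{z}_i)$ is a finite linear combination of such tensors and so also belongs to $\mathcal{H}$. The result is thus the infinite-dimensional analog of the normal equations for ridge regression, and I anticipate no genuine difficulty.
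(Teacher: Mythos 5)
Your proposal is correct, and the argument you give is the standard Tikhonov/ridge closed-form derivation that the cited result also rests on. The paper itself does not spell this out: its entire proof is the one-line remark that the result ``is a simplification of \cite[Theorem 3]{singh2019kernel},'' which is the corresponding closed-form solution for kernel instrumental variable regression; your write-up supplies a fully self-contained version of that same reasoning. All the ingredients check out: strict convexity plus coercivity on a Hilbert space gives a unique minimizer; the Fr\'echet first-order condition together with the identity $(u\otimes u)h = u\langle u,h\rangle_{\mathcal{H}}$ yields the operator normal equation $(\mathbf{T}+\xi I)h=\mathbf{g}$; positivity of $\mathbf{T}$ makes $\mathbf{T}+\xi I$ boundedly invertible with norm at most $\xi^{-1}$; and boundedness of the kernels (Assumption~\ref{assumption:RKHS}) via Bochner integrability ensures the mean embeddings $\mu(\dot d_i,\dot x_i,\dot z_i)$ and $\mu^n_\lambda(\dot d_i,\dot x_i,\dot z_i)$ genuinely lie in $\mathcal{H}$. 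The case for $\hat{\mathcal{E}}^m_\xi$ is indeed structurally identical under the substitution $\mu\mapsto\mu^n_\lambda$. The only difference between your treatment and the paper's is presentational, not mathematical: you prove what the paper cites.
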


\begin{proof}
The result is a simplification of \cite[Theorem 3]{singh2019kernel}.
\end{proof}

\subsection{Bias and variance of second stage}

\begin{proposition}[Bias]\label{approx_sup}
Suppose Assumptions~\ref{assumption:solution}, \ref{assumption:RKHS}, \ref{assumption:original},  and \ref{assumption:smooth_bridge_long} hold. Then
$$
\|h_{\xi}-h_0\|_{\mathcal{H}}\leq \xi^{\frac{c-1}{2}} \sqrt{\zeta}.
$$
\end{proposition}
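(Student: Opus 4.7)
The plan is to derive an explicit formula for $h_\xi - h_0$ by exploiting the first-order condition from Proposition~\ref{lambda_min_sup}, substitute the source condition from Assumption~\ref{assumption:smooth_bridge_long}, and then control the resulting operator norm by elementary spectral calculus.

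First, by Proposition~\ref{lambda_min_sup}, the regularized bridge satisfies
$$
h_{\xi} = \argmin_{h\in\mathcal{H}} \mathbb{E}[\langle h-h_0,\mu(D,X,Z)\rangle_{\mathcal{H}}^2] + \xi\|h\|_{\mathcal{H}}^2.
$$
Differentiating in $h$ and using $T = \mathbb{E}[\mu(D,X,Z)\otimes \mu(D,X,Z)]$, the first-order condition reads $T(h_\xi - h_0) + \xi h_\xi = 0$, so $h_\xi = (T + \xi I)^{-1} T h_0$ and therefore
$$
h_\xi - h_0 = -\xi (T + \xi I)^{-1} h_0.
$$
Now I would invoke the source condition $h_0 = T^{(c-1)/2} g$ with $\|g\|_{\mathcal{H}}^2 \leq \zeta$ to obtain
$$
h_\xi - h_0 = -\xi (T + \xi I)^{-1} T^{(c-1)/2} g.
$$
Taking norms and pulling $g$ out of the operator gives
$$
\|h_\xi - h_0\|_{\mathcal{H}} \leq \bigl\|\xi (T + \xi I)^{-1} T^{(c-1)/2}\bigr\|_{\mathrm{op}}\, \|g\|_{\mathcal{H}}.
$$

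The main step, and the only mildly technical one, is to show that $\bigl\|\xi (T + \xi I)^{-1} T^{(c-1)/2}\bigr\|_{\mathrm{op}} \leq \xi^{(c-1)/2}$. Since Assumption~\ref{assumption:RKHS} ensures that the kernels are bounded, $T$ is a bounded self-adjoint positive operator on $\mathcal{H}$ and spectral calculus applies. It therefore suffices to bound the scalar function
$$
\varphi(t) := \frac{\xi\, t^{(c-1)/2}}{t + \xi}, \qquad t \geq 0,
$$
uniformly in $t$. Writing $u = t/\xi$ gives $\varphi(t) = \xi^{(c-1)/2} \cdot u^{(c-1)/2}/(u+1)$, and since $(c-1)/2 \in (0, 1/2]$ for $c \in (1,2]$, one has $u^{(c-1)/2} \leq 1 + u$ for all $u \geq 0$ (because $u^\alpha \leq \max(1,u) \leq 1+u$ when $\alpha \in [0,1]$). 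Hence $\varphi(t) \leq \xi^{(c-1)/2}$ uniformly, which yields $\bigl\|\xi (T+\xi I)^{-1} T^{(c-1)/2}\bigr\|_{\mathrm{op}} \leq \xi^{(c-1)/2}$.

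Combining the operator-norm bound with $\|g\|_{\mathcal{H}} \leq \sqrt{\zeta}$ gives $\|h_\xi - h_0\|_{\mathcal{H}} \leq \xi^{(c-1)/2}\sqrt{\zeta}$, as claimed. The argument is essentially classical bias analysis for ridge regularization in Hilbert spaces; the only point that requires care is verifying the elementary inequality $u^{(c-1)/2} \leq 1+u$ on $[0,\infty)$, which is the obstacle but is immediate given the range $c \in (1,2]$.
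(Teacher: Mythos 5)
Your proof is correct and follows essentially the same route as the paper: both start from $h_\xi - h_0 = -\xi(T+\xi I)^{-1}h_0$, insert the source condition $h_0 = T^{(c-1)/2}g$, and reduce to the scalar bound $\sup_{t\ge 0}\,\xi t^{(c-1)/2}/(t+\xi)\le \xi^{(c-1)/2}$. The paper writes this as an explicit eigenbasis computation (multiplying and dividing by powers of $\xi$, $\eta_k$, and $\eta_k+\xi$ so that the leftover factors $(\xi/(\eta_k+\xi))^{3-c}(\eta_k/(\eta_k+\xi))^{c-1}$ are each at most one), whereas you package the identical inequality as an operator-norm bound via functional calculus; the two are algebraically equivalent.
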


\begin{proof}
I generalize \cite[Theorem 4]{smale2005shannon}. By Assumption~\ref{assumption:smooth_bridge_long}, there exists a function $g\in\mathcal{H}$ such that
$$
    g=T^{\frac{1-c}{2}} h_0   
     =\sum_k \eta_k^{\frac{1-c}{2}} e_k\langle e_k, h_0 \rangle_{\mathcal{H}}
$$
where $\{\eta_k\}$ are the eigenvalues and $\{e_k\}$ are the eigenfunctions of $T$. By Proposition~\ref{op2_sup}, write
$$
    h_{\xi}-h_0=[(T+\xi)^{-1}T-I]  h_0 
     =\sum_k \bigg(\dfrac{\eta_k}{\eta_k+\xi}-1\bigg) e_k\langle e_k, h_0 \rangle_{\mathcal{H}}.
$$
Therefore
\begin{align*}
     \|h_{\xi}-h_0\|^2_{\mathcal{H}}&=\sum_k \bigg(\dfrac{\eta_k}{\eta_k+\xi}-1\bigg)^2 \langle e_k, h_0 \rangle^2_{\mathcal{H}} \\
     &=\sum_k \bigg(\dfrac{\xi}{\eta_k+\xi}\bigg)^2 \langle e_k, h_0 \rangle^2_{\mathcal{H}} \\
     &=\sum_k \bigg(\dfrac{\xi}{\eta_k+\xi}\bigg)^2 \langle e_k, h_0 \rangle^2_{\mathcal{H}} \bigg(\dfrac{\xi}{\xi}\cdot \dfrac{\eta_k}{\eta_k}\cdot \dfrac{\eta_k+\xi}{\eta_k+\xi} \bigg)^{c-1}\\ 
     &=\xi^{c-1} \sum_k  \eta_k^{1-c} \langle e_k, h_0 \rangle^2_{\mathcal{H}}\bigg(\dfrac{\xi}{\eta_k+\xi}\bigg)^{3-c} \bigg(\dfrac{\eta_k}{\eta_k+\xi}\bigg)^{c-1} \\
     &\leq \xi^{c-1} \sum_k  \eta_k^{1-c} \langle e_k, h_0 \rangle^2_{\mathcal{H}} \\
     &=\xi^{c-1}\|g\|^2_{\mathcal{H}} \\
     &\leq \xi^{c-1}  \zeta.
\end{align*}
\end{proof}

\begin{lemma}[Helpful bounds]\label{lemma:bounds}
Suppose Assumptions~\ref{assumption:RKHS},~\ref{assumption:original}, and~\ref{assumption:smooth_bridge} hold. I adopt the language of \cite{caponnetto2007optimal}.
\begin{enumerate}
   \item The generalized reconstruction error is $\mathcal{B}(\xi)=\|h_{\xi}-h_0\|^2_{\mathcal{H}} \leq \zeta \cdot \xi^{c-1}$.
    \item The generalized effective dimension is $\mathcal{N}(\xi)=\text{\normalfont tr}\{(T+\xi)^{-1}T\}\leq C (\pi/b) \{\sin(\pi/b)\}^{-1}\xi^{-1/b}$.
\end{enumerate}
\end{lemma}

\begin{proof}
     The first result is a corollary of Proposition~\ref{approx_sup}. The second result follows from \cite[eq. f]{sutherland2017fixing}, appealing to the effective dimension condition in Assumption~\ref{assumption:smooth_bridge}.
\end{proof}

\begin{lemma}[Decomposition of variance]\label{lemma:decomp}
The following bound holds:
\begin{align*}
    \|h^m_{\xi}-h_{\xi}\|_{\mathcal{H}}
    &\leq \|(T+\xi)^{-1/2} \{\mathbf{g}-(\mathbf{T}+\xi)h_{\xi}\}\|_{\mathcal{H}} \\
    &\quad \cdot \|(T+\xi)^{1/2}(\mathbf{T}+\xi)^{-1}(T+\xi)^{1/2}\|_{op} \\
    &\quad \cdot \|(T+\xi)^{-1/2}\|_{op}.
\end{align*}
Moreover, in the first factor,
\begin{align*}
    (T+\xi)^{-1/2}\{\mathbf{g}-(\mathbf{T}+\xi)h_{\xi}\} &=\frac{1}{m}\sum_{i=1}^m \dot{\xi}_i-\mathbb{E}[\dot{\xi}]
\end{align*}
where 
\begin{align*}
    \dot{\xi}_i&=(T+\xi)^{-1/2}\{\mu(\dot{d}_i,\dot{x}_i,\dot{z}_i)\dot{y}_i-\mu(\dot{d}_i,\dot{x}_i,\dot{z}_i)\otimes \mu(\dot{d}_i,\dot{x}_i,\dot{z}_i)h_{\xi}\}\\
    &=(T+\xi)^{-1/2}\{\mu(\dot{d}_i,\dot{x}_i,\dot{z}_i)[\dot{y}_i-\langle h_{\xi},\mu(\dot{d}_i,\dot{x}_i,\dot{z}_i)\rangle_{\mathcal{H}}]\}.
\end{align*}

\end{lemma}

\begin{proof}
The result mirrors \cite[eq. 44]{fischer2017sobolev}. For the decomposition of the first factor, the definitions in Proposition~\ref{sol_2} give
$$
\frac{1}{m}\sum_{i=1}^m \dot{\xi}_i=(T+\xi)^{-1/2}(\mathbf{g}-\mathbf{T}  h_{\xi}).
$$
Meanwhile
\begin{align*}
    \mathbb{E}[\dot{\xi}]&=(T+\xi)^{-1/2}\{\mathbb{E}[\mu(D,X,Z)Y]-Th_{\xi}\} \\
    &=(T+\xi)^{-1/2}\{\mathbb{E}[\mu(D,X,Z)Y]-Th_{\xi}-\xi h_{\xi} + \xi h_{\xi}\} \\
    &=(T+\xi)^{-1/2}\{\mathbb{E}[\mu(D,X,Z)Y]-(T+\xi)h_{\xi} + \xi h_{\xi}\} \\
    &=(T+\xi)^{-1/2}\{\mathbb{E}[\mu(D,X,Z)Y]-\mathbb{E}[\mu(D,X,Z)Y]+\xi h_{\xi}\}  \\
    &=(T+\xi)^{-1/2}\{\xi h_{\xi}\}
\end{align*}
as desired. 
\end{proof}

\begin{lemma}[Bounding the first factor]\label{lemma:1}
Suppose Assumptions~\ref{assumption:RKHS} and~\ref{assumption:original} hold. Then with probability $1-\delta/2$, the first factor in Lemma~\ref{lemma:decomp} is bounded as 
\begin{align*}
  &\|(T+\xi)^{-1/2} \{\mathbf{g}-(\mathbf{T}+\xi)h_{\xi}\}\|_{\mathcal{H}} \\
  &\leq 
    4\log(4/\delta) \left\{\frac{\kappa C+\kappa^2 \|h_0\|_{\mathcal{H}}}{m\xi^{1/2}}+\frac{\kappa^2\mathcal{B}(\xi)^{1/2}}{m\xi^{1/2}}+\frac{(C+\kappa\|h_0\|_{\mathcal{H}}) \mathcal{N}(\xi)^{1/2}}{m^{1/2}}+\frac{\kappa \mathcal{B}(\xi)^{1/2}\mathcal{N}(\xi)^{1/2}}{m^{1/2}}\right\}.
\end{align*}
\end{lemma}

\begin{proof}
I verify the conditions of Lemma~\ref{lemma:prob}. Let
$$
\dot{\xi}_i=(T+\xi)^{-1/2}\{\mu(\dot{d}_i,\dot{x}_i,\dot{z}_i)[\dot{y}_i-\langle h_{\xi},\mu(\dot{d}_i,\dot{x}_i,\dot{z}_i)\rangle_{\mathcal{H}}]\}.
$$
I proceed in steps.
\begin{enumerate}
    \item First moment.
    
    Observe that 
\begin{align*}
    \|\dot{\xi}_i\|_{\mathcal{H}}
    &=\|(T+\xi)^{-1/2}\mu(\dot{d}_i,\dot{x}_i,\dot{z}_i)[\dot{y}_i-\langle h_{\xi},\mu(\dot{d}_i,\dot{x}_i,\dot{z}_i)\rangle_{\mathcal{H}}]\|_{\mathcal{H}} \\
    &\leq \| (T+\xi)^{-1/2}\mu(\dot{d}_i,\dot{x}_i,\dot{z}_i) \|_{\mathcal{H}} 
    \cdot |\dot{y}_i-\langle h_{\xi},\mu(\dot{d}_i,\dot{x}_i,\dot{z}_i)\rangle_{\mathcal{H}}|.
\end{align*}
Moreover
$$
\| (T+\xi)^{-1/2}\mu(\dot{d}_i,\dot{x}_i,\dot{z}_i) \|_{\mathcal{H}} \leq \|(T+\xi)^{-1/2}\|_{op} \| \mu(\dot{d}_i,\dot{x}_i,\dot{z}_i)  \|_{\mathcal{H}}\leq \frac{\kappa}{\xi^{1/2}}
$$
and
\begin{align*}
 |\dot{y}_i-\langle h_{\xi},\mu(\dot{d}_i,\dot{x}_i,\dot{z}_i)\rangle_{\mathcal{H}}|
 &\leq |\dot{y}_i-\langle h_{0},\mu(\dot{d}_i,\dot{x}_i,\dot{z}_i)\rangle_{\mathcal{H}}|
 + |\langle h_0-h_{\xi},\mu(\dot{d}_i,\dot{x}_i,\dot{z}_i)\rangle_{\mathcal{H}}| \\
 &\leq C+\kappa\|h_0\|_{\mathcal{H}} +\kappa\|h_0-h_{\xi}\|_{\mathcal{H}} \\
 &\leq C+\kappa\{\|h_0\|_{\mathcal{H}} +\mathcal{B}(\xi)^{1/2}\}.
\end{align*}
In summary,
$$
\|\dot{\xi}_i\|_{\mathcal{H}} \leq \frac{\kappa}{\xi^{1/2}} [C+\kappa\{\|h_0\|_{\mathcal{H}} +\mathcal{B}(\xi)^{1/2}\}].
$$
    \item Second moment.
    
    Next, write
    \begin{align*}
   &\mathbb{E}(\|\dot{\xi}_i\|^2_{\mathcal{H}}) \\
   &= \int [\dot{y}_i-\langle h_{\xi},\mu(\dot{d}_i,\dot{x}_i,\dot{z}_i)\rangle_{\mathcal{H}}]^2 \langle \mu(\dot{d}_i,\dot{x}_i,\dot{z}_i) , (T+\xi)^{-1} \mu(\dot{d}_i,\dot{x}_i,\dot{z}_i)\rangle_{\mathcal{H}}  \mathrm{d}\mathbb{P}(\dot{y}_i,\dot{d}_i,\dot{x}_i,\dot{z}_i) \\
   &\leq \sup_{y,d,x,z} [y-\langle h_{\xi},\mu(d,x,z)\rangle_{\mathcal{H}}]^2 
   \int \langle \mu(\dot{d}_i,\dot{x}_i,\dot{z}_i) , (T+\xi)^{-1} \mu(\dot{d}_i,\dot{x}_i,\dot{z}_i)\rangle_{\mathcal{H}}  \mathrm{d}\mathbb{P}(\dot{d}_i,\dot{x}_i,\dot{z}_i).
%   &=\int \text{\normalfont tr}[ \{\phi(a)-\mu^{\xi}_a(b)\} \langle \phi(b), (T+\xi)^{-1} \phi(b)\rangle_{\mathcal{H}} \langle \{\phi(a)-\mu^{\xi}_a(b)\},\cdot \rangle_{\mathcal{H}_{\mathcal{A}}}] \mathrm{d}\mathbb{P}(a,b)  \\
%   &=\int \text{\normalfont tr}[ \langle \phi(b), (T+\xi)^{-1} \phi(b)\rangle_{\mathcal{H}} \langle \{\phi(a)-\mu^{\xi}_a(b)\} ,\{\phi(a)-\mu^{\xi}_a(b)\} \rangle_{\mathcal{H}_{\mathcal{A}}}] \mathrm{d}\mathbb{P}(a,b)  \\
%   &\leq \sup_{a,b} \|\phi(a)-\mu^{\xi}_a(b)\|^2_{\mathcal{H}_{\mathcal{A}}} \cdot \int \text{\normalfont tr}\{ \langle \phi(b), (T+\xi)^{-1} \phi(b)\rangle_{\mathcal{H}}\} \mathrm{d}\mathbb{P}(b).
    \end{align*}
    Focusing on the former factor, as argued above,
    $$
    \sup_{y,d,x,z} [y-\langle h_{\xi},\mu(d,x,z)\rangle_{\mathcal{H}}]^2  \leq \left[C+\kappa\{\|h_0\|_{\mathcal{H}} +\mathcal{B}(\xi)^{1/2}\} \right]^2.
    $$
    Focusing on the latter factor, 
    \begin{align*}
       &\int \langle \mu(\dot{d}_i,\dot{x}_i,\dot{z}_i) , (T+\xi)^{-1} \mu(\dot{d}_i,\dot{x}_i,\dot{z}_i)\rangle_{\mathcal{H}}  \mathrm{d}\mathbb{P}(\dot{d}_i,\dot{x}_i,\dot{z}_i) \\
        &=\int \text{\normalfont tr}[(T+\xi)^{-1} \{\mu(\dot{d}_i,\dot{x}_i,\dot{z}_i) \otimes\mu(\dot{d}_i,\dot{x}_i,\dot{z}_i) \}] \mathrm{d}\mathbb{P}(\dot{d}_i,\dot{x}_i,\dot{z}_i) \\
        &= \text{\normalfont tr}\{(T+\xi)^{-1} T\} \\
        &=\mathcal{N}(\xi).
    \end{align*}
    In summary,
    $$
    \mathbb{E}(\|\dot{\xi}_i\|^2_{\mathcal{H}}) \leq \mathcal{N}(\xi)\left[C+\kappa\{\|h_0\|_{\mathcal{H}} +\mathcal{B}(\xi)^{1/2}\} \right]^2.
    $$
    \item Concentration.
    
    Therefore with probability $1-\delta/2$,
    \begin{align*}
        &\left\|\frac{1}{m}\sum_{i=1}^m \dot{\xi}_i-\mathbb{E}[\dot{\xi}] \right\|_{\mathcal{H}}  \\
        &\leq \frac{2 \log(4/\delta)}{m} \frac{\kappa}{\xi^{1/2}} [C+\kappa\{\|h_0\|_{\mathcal{H}} +\mathcal{B}(\xi)^{1/2}\}] + \left[\frac{2 \log(4/\delta)}{m} \mathcal{N}(\xi)\left[C+\kappa\{\|h_0\|_{\mathcal{H}} +\mathcal{B}(\xi)^{1/2}\} \right]^2 \right]^{1/2} \\
        &\leq 4\log(4/\delta) \left\{\frac{\kappa C+\kappa^2 \|h_0\|_{\mathcal{H}}}{m\xi^{1/2}}+\frac{\kappa^2\mathcal{B}(\xi)^{1/2}}{m\xi^{1/2}}+\frac{(C+\kappa\|h_0\|_{\mathcal{H}}) \mathcal{N}(\xi)^{1/2}}{m^{1/2}}+\frac{\kappa \mathcal{B}(\xi)^{1/2}\mathcal{N}(\xi)^{1/2}}{m^{1/2}}\right\}.
     \end{align*}

\end{enumerate}

\end{proof}

\begin{remark}[Sufficiently large $m$]\label{remark:big_n}
In the finite sample, I assume a certain inequality holds when bounding the second factor: 
\begin{equation}\label{eq:n_big}
  m\geq 8\kappa^2 \log(4/\delta) \cdot \xi \cdot 
\log
\left\{ 2e\cdot \mathcal{N}(\xi)\frac{\|T\|_{op}+\xi}{\|T\|_{op}}
\right\},\quad \kappa=\kappa_d\cdot \kappa_x\cdot \kappa_w.  
\end{equation}
Ultimately, I will choose $\xi=m^{-1/(c+1/b)}$ in Theorem~\ref{theorem:consistency_treatment}. This choice of $\xi$ together with the bound on generalized effective dimension $\mathcal{N}(\xi)$ in Lemma~\ref{lemma:bounds} imply that there exists an $m_0$ such that for all $m\geq m_0$,~\eqref{eq:n_big} holds, as argued by \cite[Proof of Theorem 1]{fischer2017sobolev}. I use the phrase ``$m$ sufficiently large'' when I appeal to this logic, and I summarize the final bound using $O(\cdot)$ notation.
\end{remark}

\begin{lemma}[Bounding the second factor]\label{lemma:2}
Suppose Assumptions~\ref{assumption:RKHS} and~\ref{assumption:original} hold. Further assume~\eqref{eq:n_big} holds. Then probability $1-\delta/2$, the second factor in Lemma~\ref{lemma:decomp} is bounded as 
$$
\|(T+\xi)^{1/2}(\mathbf{T}+\xi)^{-1}(T+\xi)^{1/2}\|_{op} \leq 3.
$$
\end{lemma}

\begin{proof}
The result follows from \cite[eq. 44b, 47]{fischer2017sobolev}. In particular, my assumptions suffice for the properties used in \cite[Lemma 17]{fischer2017sobolev} to hold, using the same argument as \cite[Lemma I.4]{singh2020kernel}. Separability of original spaces together with boundedness of kernels imply that $\mathcal{H}$ is separable \cite[Lemma 4.33]{steinwart2008support}. Next, I verify the assumptions called EMB, EVD, and SRC. Boundedness of the kernel implies EMB with $a=1$. EVD is the assumption I call effective dimension, parametrized by $b\geq 1$. SRC is the assumption I call the source condition, parametrized by $c\in(1,2]$. 
\end{proof}

\begin{lemma}[Bounding the third factor]\label{lemma:3}
With probability one, the third factor in Lemma~\ref{lemma:decomp} is bounded as
$$
    \|(T+\xi)^{-1/2}\|_{op} \leq \xi^{-1/2}.
$$
\end{lemma}

\begin{proof}
The result follows from the definition of operator norm.
\end{proof}

\begin{proposition}[Variance]\label{sampling_sup}
Suppose Assumptions~\ref{assumption:solution},~\ref{assumption:RKHS},~\ref{assumption:original}, and~\ref{assumption:smooth_op} hold. Then $\forall \delta\in(0,1)$, for $m$ sufficiently large, the following holds with probability $1-\delta$:
$$
\|h^m_{\xi}-h_{\xi}\|_{\mathcal{H}}\leq C \log(4/\delta) \left\{ \frac{1}{m\xi}+\frac{1}{m^{1/2} \xi^{\frac{1}{2b}+\frac{1}{2}}}\right\}.
$$
\end{proposition}

\begin{proof}
I combine the previous lemmas to generalize \cite[Theorem 16]{fischer2017sobolev}. By Lemmas~\ref{lemma:decomp},~\ref{lemma:1},~\ref{lemma:2}, and~\ref{lemma:3}, if~\eqref{eq:n_big} holds, then with probability $1-\delta$
\begin{align*}
    &\|h^m_{\xi}-h_{\xi}\|_{\mathcal{H}}  \\
    &\leq \frac{12\log(4/\delta) }{\xi^{1/2}} \left\{\frac{\kappa C+\kappa^2 \|h_0\|_{\mathcal{H}}}{m\xi^{1/2}}+\frac{\kappa^2\mathcal{B}(\xi)^{1/2}}{m\xi^{1/2}}+\frac{(C+\kappa\|h_0\|_{\mathcal{H}}) \mathcal{N}(\xi)^{1/2}}{m^{1/2}}+\frac{\kappa \mathcal{B}(\xi)^{1/2}\mathcal{N}(\xi)^{1/2}}{m^{1/2}}\right\}.
\end{align*}
Next, recall the bounds in Lemma~\ref{lemma:bounds}. When $\xi\leq 1$,
     $$
     \mathcal{B}(\xi)^{1/2} \leq \zeta^{1/2} \xi^{\frac{c-1}{2}} \leq \zeta^{1/2}.
     $$
     For brevity, write
     $$
     \mathcal{N}(\xi)^{1/2}\leq C'\xi^{-\frac{1}{2b}}.
     $$
     Therefore when $\xi\leq 1$ the bound simplifies as
     \begin{align*}
         \|h^m_{\xi}-h_{\xi}\|_{\mathcal{H}} 
    \leq C\log(4/\delta) \left\{\frac{1}{m\xi}+\frac{1}{m^{1/2}\xi^{1/(2b)+1/2}}\right\}.
     \end{align*}
\end{proof}

\subsection{Bounds}

\begin{proposition}\label{prop:bound_KIV}
Suppose Assumption~\ref{assumption:RKHS} holds. Assume that $\forall d\in\mathcal{D}, x\in\mathcal{X}, z\in\mathcal{Z}$, $\|\mu^n_{\lambda}(d,x,z)-\mu(d,x,z)\|_{\mathcal{H}}\leq r_{\mu}(n,\delta,b_0,c_0)$.
\begin{enumerate}
    \item Then $
 \|\hat{\mathbf{T}}-\mathbf{T}\|_{\mathcal{L}(\mathcal{H})}\leq \left\{2 \kappa+r_{\mu}(n,\delta,b_0,c_0) \right\}r_{\mu}(n,\delta,b_0,c_0).
$
    \item If in addition Assumption~\ref{assumption:original} holds then $\|\hat{\mathbf{g}}-\mathbf{g}\|_{\mathcal{H}}\leq  Cr_{\mu}(n,\delta,b_0,c_0)$.
\end{enumerate}
\end{proposition}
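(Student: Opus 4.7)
\textbf{Proof proposal for Proposition~\ref{prop:bound_KIV}.} The plan is to bound each of the two quantities by writing their difference as an empirical average and then controlling each summand pointwise by the hypothesis $\|\mu^n_\lambda(d,x,z) - \mu(d,x,z)\|_{\mathcal{H}} \leq r_\mu(n,\delta,c_0)$ combined with the uniform bound $\|\mu(d,x,z)\|_{\mathcal{H}} \leq \kappa$. The latter follows from Assumption~\ref{assumption:RKHS} together with Jensen's inequality, since $\mu(d,x,z) = \phi(d)\otimes \phi(x)\otimes \mu_w(d,x,z)$ and $\|\mu_w(d,x,z)\|_{\mathcal{H}_{\mathcal{W}}} \leq \int \|\phi(w)\|_{\mathcal{H}_{\mathcal{W}}}\mathrm{d}\mathbb{P}(w\mid d,x,z) \leq \kappa_w$, whence $\|\mu(d,x,z)\|_{\mathcal{H}} \leq \kappa_d\kappa_x\kappa_w = \kappa$.

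For Part 1, the key identity is the polarization-style decomposition: for any $a,b$ in a Hilbert space,
\[
a\otimes a - b\otimes b = a \otimes (a-b) + (a-b)\otimes b.
\]
Applying this with $a = \mu^n_\lambda(\dot{d}_i,\dot{x}_i,\dot{z}_i)$ and $b = \mu(\dot{d}_i,\dot{x}_i,\dot{z}_i)$, and taking operator norms using $\|u\otimes v\|_{\mathcal{L}(\mathcal{H})} = \|u\|_{\mathcal{H}}\|v\|_{\mathcal{H}}$, each summand satisfies
\[
\|a\otimes a - b\otimes b\|_{\mathcal{L}(\mathcal{H})} \leq \|a\|_{\mathcal{H}}\|a-b\|_{\mathcal{H}} + \|a-b\|_{\mathcal{H}}\|b\|_{\mathcal{H}} \leq (\kappa + r_\mu)r_\mu + r_\mu \kappa = (2\kappa + r_\mu)r_\mu,
\]
where I bounded $\|a\|_{\mathcal{H}} \leq \|b\|_{\mathcal{H}} + \|a-b\|_{\mathcal{H}} \leq \kappa + r_\mu$ and $\|b\|_{\mathcal{H}} \leq \kappa$. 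Averaging over $i=1,\dots,m$ and using the triangle inequality for the operator norm yields the stated bound on $\|\hat{\mathbf{T}}-\mathbf{T}\|_{\mathcal{L}(\mathcal{H})}$.

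For Part 2, I write
\[
\hat{\mathbf{g}} - \mathbf{g} = \frac{1}{m}\sum_{i=1}^m \bigl[\mu^n_\lambda(\dot{d}_i,\dot{x}_i,\dot{z}_i) - \mu(\dot{d}_i,\dot{x}_i,\dot{z}_i)\bigr] \dot{y}_i,
\]
and apply the triangle inequality in $\mathcal{H}$. By Assumption~\ref{assumption:original}, $|\dot{y}_i| \leq C$ almost surely, and by hypothesis each difference is bounded in norm by $r_\mu(n,\delta,c_0)$, giving $\|\hat{\mathbf{g}} - \mathbf{g}\|_{\mathcal{H}} \leq C r_\mu(n,\delta,c_0)$. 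There is no real obstacle here; the argument is a short deterministic bound once the pointwise first-stage rate $r_\mu$ is taken as given. The substantive work lies elsewhere, namely in establishing $r_\mu$ itself (done by invoking Assumption~\ref{assumption:smooth_E} and standard kernel regression rates, e.g. adapting \cite{smale2007learning}), which is presumably the content of a companion lemma that feeds into this proposition.
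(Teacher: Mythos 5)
Your proposal is correct and follows essentially the same route as the paper: the identity $a\otimes a-b\otimes b=a\otimes(a-b)+(a-b)\otimes b$ applied termwise with $\|a\|_{\mathcal H}\leq\kappa+r_\mu$ and $\|b\|_{\mathcal H}\leq\kappa$ for Part 1, and the direct triangle-inequality bound with $|\dot y_i|\leq C$ for Part 2. Your explicit justification of $\|\mu(d,x,z)\|_{\mathcal H}\leq\kappa_d\kappa_x\kappa_w$ via the tensor-product structure is a detail the paper leaves implicit, but the argument is otherwise identical.
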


\begin{proof}
   For the former result, write
   \begin{align*}
       \hat{\mathbf{T}}-\mathbf{T}
       &=\dfrac{1}{m}\sum_{i=1}^m \{\mu^n_{\lambda}(\dot{d}_i,\dot{x}_i,\dot{z}_i) \otimes \mu^n_{\lambda}(\dot{d}_i,\dot{x}_i,\dot{z}_i) - \mu(\dot{d}_i,\dot{x}_i,\dot{z}_i)\otimes \mu(\dot{d}_i,\dot{x}_i,\dot{z}_i)\} \\
       &=\dfrac{1}{m}\sum_{i=1}^m \{\mu^n_{\lambda}(\dot{d}_i,\dot{x}_i,\dot{z}_i) \otimes \mu^n_{\lambda}(\dot{d}_i,\dot{x}_i,\dot{z}_i)
       -\mu^n_{\lambda}(\dot{d}_i,\dot{x}_i,\dot{z}_i) \otimes \mu(\dot{d}_i,\dot{x}_i,\dot{z}_i) \\
       &\quad +\mu^n_{\lambda}(\dot{d}_i,\dot{x}_i,\dot{z}_i) \otimes \mu(\dot{d}_i,\dot{x}_i,\dot{z}_i)
       - \mu(\dot{d}_i,\dot{x}_i,\dot{z}_i)\otimes \mu(\dot{d}_i,\dot{x}_i,\dot{z}_i)\}.
   \end{align*}
   Hence
   \begin{align*}
       \|\mathbf{T}-\hat{\mathbf{T}}\|_{\mathcal{L}(\mathcal{H})}
       &\leq \dfrac{1}{m}\sum_{i=1}^m \|\mu^n_{\lambda}(\dot{d}_i,\dot{x}_i,\dot{z}_i) \otimes \mu^n_{\lambda}(\dot{d}_i,\dot{x}_i,\dot{z}_i)
       -\mu^n_{\lambda}(\dot{d}_i,\dot{x}_i,\dot{z}_i) \otimes \mu(\dot{d}_i,\dot{x}_i,\dot{z}_i)\|_{\mathcal{L}(\mathcal{H})} \\
       &\quad +\|\mu^n_{\lambda}(\dot{d}_i,\dot{x}_i,\dot{z}_i) \otimes \mu(\dot{d}_i,\dot{x}_i,\dot{z}_i)
       - \mu(\dot{d}_i,\dot{x}_i,\dot{z}_i)\otimes \mu(\dot{d}_i,\dot{x}_i,\dot{z}_i)\|_{\mathcal{L}(\mathcal{H})}.
   \end{align*}
   In the first term, 
   \begin{align*}
       &\|\mu^n_{\lambda}(\dot{d}_i,\dot{x}_i,\dot{z}_i) \otimes \mu^n_{\lambda}(\dot{d}_i,\dot{x}_i,\dot{z}_i)
       -\mu^n_{\lambda}(\dot{d}_i,\dot{x}_i,\dot{z}_i) \otimes \mu(\dot{d}_i,\dot{x}_i,\dot{z}_i)\|_{\mathcal{L}(\mathcal{H})} \\
       &= \|\mu^n_{\lambda}(\dot{d}_i,\dot{x}_i,\dot{z}_i)\|_{\mathcal{H}} \cdot \|\mu^n_{\lambda}(\dot{d}_i,\dot{x}_i,\dot{z}_i)-\mu(\dot{d}_i,\dot{x}_i,\dot{z}_i)\|_{\mathcal{H}} \\
       &\leq \left(\|\mu(\dot{d}_i,\dot{x}_i,\dot{z}_i)\|_{\mathcal{H}}+\|\mu^n_{\lambda}(\dot{d}_i,\dot{x}_i,\dot{z}_i)-\mu(\dot{d}_i,\dot{x}_i,\dot{z}_i)\|_{\mathcal{H}} \right)\|\mu^n_{\lambda}(\dot{d}_i,\dot{x}_i,\dot{z}_i)-\mu(\dot{d}_i,\dot{x}_i,\dot{z}_i)\|_{\mathcal{H}} \\
       &\leq \left\{\kappa+r_{\mu}(n,\delta,b_0,c_0) \right\}r_{\mu}(n,\delta,b_0,c_0).
   \end{align*}
   In the second term
   \begin{align*}
       &\|\mu^n_{\lambda}(\dot{d}_i,\dot{x}_i,\dot{z}_i) \otimes \mu(\dot{d}_i,\dot{x}_i,\dot{z}_i)
       - \mu(\dot{d}_i,\dot{x}_i,\dot{z}_i)\otimes \mu(\dot{d}_i,\dot{x}_i,\dot{z}_i)\|_{\mathcal{L}(\mathcal{H})} \\
       &=\|\mu^n_{\lambda}(\dot{d}_i,\dot{x}_i,\dot{z}_i)-\mu(\dot{d}_i,\dot{x}_i,\dot{z}_i)\|_{\mathcal{H}}\cdot \|\mu(\dot{d}_i,\dot{x}_i,\dot{z}_i)\|_{\mathcal{H}} \\
       &\leq \kappa \cdot r_{\mu}(n,\delta,b_0,c_0).
   \end{align*}
   In summary,
   $$
   \|\mathbf{T}-\hat{\mathbf{T}}\|_{\mathcal{L}(\mathcal{H})}
       \leq \left\{2 \kappa+r_{\mu}(n,\delta,b_0,c_0) \right\}r_{\mu}(n,\delta,b_0,c_0).
   $$
   For the latter result, write
   $$
   \hat{\mathbf{g}}-\mathbf{g}=\dfrac{1}{m}\sum_{i=1}^m \{\mu^n_{\lambda}(\dot{d}_i,\dot{x}_i,\dot{z}_i)-\mu(\dot{d}_i,\dot{x}_i,\dot{z}_i)\} \dot{y}_i.
   $$
   Hence
   $$
   \|\hat{\mathbf{g}}-\mathbf{g}\|_{\mathcal{H}} \leq \dfrac{1}{m}\sum_{i=1}^m \|\{\mu^n_{\lambda}(\dot{d}_i,\dot{x}_i,\dot{z}_i)-\mu(\dot{d}_i,\dot{x}_i,\dot{z}_i)\}\|_{\mathcal{H}}\cdot |\dot{y}_i|\leq Cr_{\mu}(n,\delta,b_0,c_0).
   $$
\end{proof}

\begin{proposition}\label{bound_diff}
Suppose Assumption~\ref{assumption:RKHS} holds. If $\forall d\in\mathcal{D}, x\in\mathcal{X}, z\in\mathcal{Z}$, $\|\mu^n_{\lambda}(d,x,z)-\mu(d,x,z)\|_{\mathcal{H}}\leq r_{\mu}(n,\delta,b_0,c_0)$ then
$$
 \|(\hat{\mathbf{T}}+\xi)^{-1}
-(\mathbf{T}+\xi)^{-1}\|_{\mathcal{L}(\mathcal{H})} \leq \frac{\left\{2 \kappa+r_{\mu}(n,\delta,b_0,c_0) \right\}r_{\mu}(n,\delta,b_0,c_0)}{\xi^2}.
$$
\end{proposition}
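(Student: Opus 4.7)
The plan is to apply the second resolvent identity, which reduces the problem to bounding (i) the operator norms of the two resolvents $(\hat{\mathbf{T}}+\xi)^{-1}$ and $(\mathbf{T}+\xi)^{-1}$, and (ii) the perturbation $\|\hat{\mathbf{T}}-\mathbf{T}\|_{\mathcal{L}(\mathcal{H})}$. The first ingredient is immediate from the positive semi-definiteness of the empirical covariance operators, and the second is exactly what the first part of Proposition~\ref{prop:bound_KIV} supplies.

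First I would write
$$
(\hat{\mathbf{T}}+\xi)^{-1}-(\mathbf{T}+\xi)^{-1}
=(\hat{\mathbf{T}}+\xi)^{-1}\bigl[(\mathbf{T}+\xi)-(\hat{\mathbf{T}}+\xi)\bigr](\mathbf{T}+\xi)^{-1}
=(\hat{\mathbf{T}}+\xi)^{-1}(\mathbf{T}-\hat{\mathbf{T}})(\mathbf{T}+\xi)^{-1},
$$
which is the standard resolvent identity and is valid since both $\hat{\mathbf{T}}+\xi I$ and $\mathbf{T}+\xi I$ are invertible for $\xi>0$. Taking operator norms and using sub-multiplicativity gives
$$
\|(\hat{\mathbf{T}}+\xi)^{-1}-(\mathbf{T}+\xi)^{-1}\|_{\mathcal{L}(\mathcal{H})}
\leq \|(\hat{\mathbf{T}}+\xi)^{-1}\|_{\mathcal{L}(\mathcal{H})}\cdot\|\mathbf{T}-\hat{\mathbf{T}}\|_{\mathcal{L}(\mathcal{H})}\cdot\|(\mathbf{T}+\xi)^{-1}\|_{\mathcal{L}(\mathcal{H})}.
$$

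Second, I would bound the two resolvent norms by $1/\xi$. Both $\mathbf{T}$ and $\hat{\mathbf{T}}$ are empirical second moment operators of the form $m^{-1}\sum_i v_i\otimes v_i$, hence positive, self-adjoint, and bounded. Their spectra lie in $[0,\infty)$, so the spectra of $\mathbf{T}+\xi I$ and $\hat{\mathbf{T}}+\xi I$ lie in $[\xi,\infty)$, yielding $\|(\mathbf{T}+\xi)^{-1}\|_{\mathcal{L}(\mathcal{H})}\leq 1/\xi$ and likewise for $\hat{\mathbf{T}}$.

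Finally, I would invoke the first claim of Proposition~\ref{prop:bound_KIV} to substitute $\|\mathbf{T}-\hat{\mathbf{T}}\|_{\mathcal{L}(\mathcal{H})}\leq \{2\kappa+r_\mu(n,\delta,c_0)\}r_\mu(n,\delta,c_0)$, producing exactly the desired bound $\xi^{-2}\{2\kappa+r_\mu(n,\delta,c_0)\}r_\mu(n,\delta,c_0)$. There is no serious obstacle here; the only care required is to note that the resolvent identity is being applied to bounded self-adjoint positive operators, so that all three pieces are well-defined and the $1/\xi$ control of the inverses is sharp.
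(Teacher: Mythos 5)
Your proof is correct and follows essentially the same route as the paper: the resolvent identity to expose $\mathbf{T}-\hat{\mathbf{T}}$, the spectral bound $\|(\cdot+\xi I)^{-1}\|_{\mathcal{L}(\mathcal{H})}\le 1/\xi$ for positive self-adjoint operators, and Proposition~\ref{prop:bound_KIV} for the perturbation. The only cosmetic difference is that the paper writes the identity as $(\mathbf{T}+\xi)^{-1}(\mathbf{T}-\hat{\mathbf{T}})(\hat{\mathbf{T}}+\xi)^{-1}$ with the resolvents in the opposite order, which is immaterial.
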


\begin{proof}
Since $A^{-1}-B^{-1}=B^{-1}(B-A)A^{-1}$,
$$
(\hat{\mathbf{T}}+\xi)^{-1}
-(\mathbf{T}+\xi)^{-1}=(\mathbf{T}+\xi)^{-1}(\mathbf{T}-\hat{\mathbf{T}})(\hat{\mathbf{T}}+\xi)^{-1}.
$$
Therefore
$$
 \|(\hat{\mathbf{T}}+\xi)^{-1}
-(\mathbf{T}+\xi)^{-1}\|_{\mathcal{L}(\mathcal{H})} 
\leq \frac{1}{\xi^2}\|\mathbf{T}-\hat{\mathbf{T}}\|_{\mathcal{L}(\mathcal{H})}\leq \frac{\left\{2 \kappa+r_{\mu}(n,\delta,b_0,c_0) \right\}r_{\mu}(n,\delta,b_0,c_0)}{\xi^2}
$$
where the final inequality appeals to Proposition~\ref{prop:bound_KIV}.
\end{proof}

\begin{proposition}\label{bound_g}
Suppose Assumptions~\ref{assumption:RKHS} and~\ref{assumption:original} hold. Then
$$
\|\mathbf{g}\|_{\mathcal{H}}\leq \kappa C.
$$
\end{proposition}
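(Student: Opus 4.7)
The plan is to bound $\|\mathbf{g}\|_{\mathcal{H}}$ directly by applying the triangle inequality to its defining sum and then bounding each summand using the two assumptions. Recall from Proposition~\ref{sol_2} that
$$
\mathbf{g}=\frac{1}{m}\sum_{i=1}^m \mu(\dot{d}_i,\dot{x}_i,\dot{z}_i)\,\dot{y}_i,
$$
so the triangle inequality in $\mathcal{H}$ gives
$$
\|\mathbf{g}\|_{\mathcal{H}} \leq \frac{1}{m}\sum_{i=1}^m \|\mu(\dot{d}_i,\dot{x}_i,\dot{z}_i)\|_{\mathcal{H}}\cdot |\dot{y}_i|.
$$
The task then reduces to bounding each factor uniformly.

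For the outcome factor, Assumption~\ref{assumption:original} directly yields $|\dot{y}_i|\leq C$ almost surely. For the embedding factor, recall $\mu(d,x,z)=\phi(d)\otimes\phi(x)\otimes\mu_w(d,x,z)$ lives in the tensor product RKHS $\mathcal{H}$, so its norm factorizes as $\|\phi(d)\|_{\mathcal{H}_{\mathcal{D}}}\cdot\|\phi(x)\|_{\mathcal{H}_{\mathcal{X}}}\cdot\|\mu_w(d,x,z)\|_{\mathcal{H}_{\mathcal{W}}}$. The first two factors are bounded by $\kappa_d$ and $\kappa_x$ under Assumption~\ref{assumption:RKHS}. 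For the conditional mean embedding, Jensen's inequality (valid because Bochner integrability holds under bounded kernels, as noted in the proof of Theorem~\ref{theorem:representation_treatment}) gives
$$
\|\mu_w(d,x,z)\|_{\mathcal{H}_{\mathcal{W}}} = \left\|\int \phi(w)\,\mathrm{d}\mathbb{P}(w|d,x,z)\right\|_{\mathcal{H}_{\mathcal{W}}} \leq \int \|\phi(w)\|_{\mathcal{H}_{\mathcal{W}}}\,\mathrm{d}\mathbb{P}(w|d,x,z) \leq \kappa_w.
$$
Therefore $\|\mu(d,x,z)\|_{\mathcal{H}}\leq \kappa_d\kappa_x\kappa_w = \kappa$ uniformly in $(d,x,z)$.

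Combining these two uniform bounds and summing gives $\|\mathbf{g}\|_{\mathcal{H}} \leq m^{-1}\sum_{i=1}^m \kappa \cdot C = \kappa C$, which is the claim. There is no genuine obstacle here; the only thing to be careful about is justifying the Jensen step via Bochner integrability, which was already established earlier in the paper under Assumption~\ref{assumption:RKHS}.
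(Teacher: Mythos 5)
Your proof is correct and follows essentially the same route as the paper's: triangle inequality on the defining sum, then the uniform bounds $|\dot{y}_i|\leq C$ and $\|\mu(\dot{d}_i,\dot{x}_i,\dot{z}_i)\|_{\mathcal{H}}\leq \kappa_d\kappa_x\kappa_w=\kappa$. The only difference is that you spell out the Jensen/Bochner justification for $\|\mu_w(d,x,z)\|_{\mathcal{H}_{\mathcal{W}}}\leq\kappa_w$, which the paper leaves implicit.
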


\begin{proof}
$$
 \|\mathbf{g}\|_{\mathcal{H}}
    =\left\|\frac{1}{m}\sum_{i=1}^m\mu(\dot{d}_i,\dot{x}_i,\dot{z}_i)\dot{y}_i\right\|_{\mathcal{H}}
    \leq \frac{1}{m}\sum_{i=1}^m \left\|\mu(\dot{d}_i,\dot{x}_i,\dot{z}_i)\right\|_{\mathcal{H}}|\dot{y}_i|  \\
    \leq \kappa C.
$$
\end{proof}

\subsection{Main result}

\begin{theorem}[Collecting results]\label{sup_consistency}
Suppose Assumptions~\ref{assumption:solution}, \ref{assumption:RKHS}, \ref{assumption:original}, and \ref{assumption:smooth_bridge} hold. Suppose $\forall d\in\mathcal{D}, x\in\mathcal{X}, z\in\mathcal{Z}$,  $\|\mu^n_{\lambda}(d,x,z)-\mu(d,x,z)\|_{\mathcal{H}}\leq r_{\mu}(n,\delta,b_0,c_0)$ with probability $1-\delta$. Then $\forall \delta\in(0,1)$ and $\forall \eta\in (0,1)$, the following holds with probability $1-\eta-\delta$:
\begin{align*}
    &\|\hat{h}_{\xi}^m-h_0\|_{\mathcal{H}}\\
    &\leq r_h(n,\delta,c_0;m,\eta,c) \\
    &:=\frac{Cr_{\mu}(n,\delta,b_0,c_0)}{\xi} +\frac{\kappa C }{\xi^2}\left\{2 \kappa+r_{\mu}(n,\delta,b_0,c_0) \right\}r_{\mu}(n,\delta,b_0,c_0)\\
    &\quad +C \log(4/\eta) \left\{ \frac{1}{m\xi}+\frac{1}{m^{1/2} \xi^{\frac{1}{2b}+\frac{1}{2}}}\right\}+\xi^{\frac{c-1}{2}} \sqrt{\zeta}.
\end{align*}
\end{theorem}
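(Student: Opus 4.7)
The plan is to prove the theorem by a three-term error decomposition of $\hat{h}_{\xi}^m - h_0$ that isolates (i) the first-stage plug-in error from using $\mu^n_{\lambda}$ instead of $\mu$, (ii) the second-stage sampling error $h_{\xi}^m - h_{\xi}$, and (iii) the regularization (approximation) error $h_{\xi} - h_0$. Concretely, I would write
\[
\hat{h}_{\xi}^m - h_0 \;=\; \bigl(\hat{h}_{\xi}^m - h_{\xi}^m\bigr) \;+\; \bigl(h_{\xi}^m - h_{\xi}\bigr) \;+\; \bigl(h_{\xi} - h_0\bigr),
\]
apply the triangle inequality in $\mathcal{H}$, and bound the three pieces separately. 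The second piece is exactly Proposition~\ref{sampling_sup} (with $\delta$ relabeled to $\eta$), giving the $4\kappa(C+\kappa\|h_0\|_{\mathcal{H}})\ln(2/\eta)/(\sqrt{m}\xi)$ term, and the third piece is exactly Proposition~\ref{approx_sup}, giving the $\xi^{(c-1)/2}\sqrt{\zeta}$ term.

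The first piece, $\hat{h}_{\xi}^m - h_{\xi}^m$, is where the action is. Using the closed forms from Proposition~\ref{sol_2}, I would further split
\[
\hat{h}_{\xi}^m - h_{\xi}^m \;=\; (\hat{\mathbf{T}}+\xi)^{-1}(\hat{\mathbf{g}}-\mathbf{g}) \;+\; \bigl[(\hat{\mathbf{T}}+\xi)^{-1}-(\mathbf{T}+\xi)^{-1}\bigr]\mathbf{g}.
\]
For the first summand, $\hat{\mathbf{T}}$ is positive semidefinite (empirical second moment), so $\|(\hat{\mathbf{T}}+\xi)^{-1}\|_{\mathcal{L}(\mathcal{H})}\leq 1/\xi$, and Proposition~\ref{prop:bound_KIV}(2) gives $\|\hat{\mathbf{g}}-\mathbf{g}\|_{\mathcal{H}}\leq C r_{\mu}(n,\delta,c_0)$, yielding the $C r_{\mu}(n,\delta,c_0)/\xi$ term. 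For the second summand, Proposition~\ref{bound_diff} supplies the operator-norm bound $\bigl\{2\kappa + r_{\mu}(n,\delta,c_0)\bigr\} r_{\mu}(n,\delta,c_0)/\xi^2$, while Proposition~\ref{bound_g} supplies $\|\mathbf{g}\|_{\mathcal{H}}\leq \kappa C$, producing the $\kappa C \{2\kappa + r_{\mu}\} r_{\mu}/\xi^2$ term.

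Finally, I would invoke a union bound: the hypothesis $\|\mu^n_{\lambda}(d,x,z)-\mu(d,x,z)\|_{\mathcal{H}}\leq r_{\mu}(n,\delta,c_0)$ for all $(d,x,z)$ holds with probability $1-\delta$, and the sampling bound from Proposition~\ref{sampling_sup} holds with probability $1-\eta$; on the intersection, which has probability at least $1-\delta-\eta$, all four estimates hold simultaneously and summing them gives the stated expression $r_h(n,\delta,c_0;m,\eta,c)$. There is no serious technical obstacle here since every ingredient is already in place; the only mildly delicate point is the operator-norm bound $\|(\hat{\mathbf{T}}+\xi)^{-1}\|\leq 1/\xi$, which must be justified by noting that $\hat{\mathbf{T}}$ inherits positive semi-definiteness even when built from the estimated embeddings $\mu^n_{\lambda}$, so that the smallest eigenvalue of $\hat{\mathbf{T}}+\xi I$ is at least $\xi$ uniformly in the data.
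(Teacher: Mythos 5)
Your proposal is correct and follows essentially the same route as the paper: the same decomposition $(\hat{\mathbf{T}}+\xi)^{-1}(\hat{\mathbf{g}}-\mathbf{g}) + [(\hat{\mathbf{T}}+\xi)^{-1}-(\mathbf{T}+\xi)^{-1}]\mathbf{g} + (h^m_\xi - h_\xi) + (h_\xi - h_0)$, the same invocations of Propositions~\ref{prop:bound_KIV}, \ref{bound_diff}, \ref{bound_g}, \ref{sampling_sup}, and \ref{approx_sup}, and the same union bound yielding probability $1-\delta-\eta$. Your explicit remark that $\hat{\mathbf{T}}$ remains positive semidefinite (being an empirical sum of rank-one outer products of $\mu^n_\lambda$) is a welcome clarification of a step the paper uses without comment.
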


\begin{proof}
I begin with a decomposition using Proposition~\ref{sol_2}.
$$
\hat{h}_{\xi}^m-h_0
=\left[(\hat{\mathbf{T}}+\xi)^{-1}\hat{\mathbf{g}}
-(\hat{\mathbf{T}}+\xi)^{-1}\mathbf{g}\right]
+\left[(\hat{\mathbf{T}}+\xi)^{-1}\mathbf{g}
-(\mathbf{T}+\xi)^{-1}\mathbf{g}\right]
+\left[(\mathbf{T}+\xi)^{-1}\mathbf{g}
-h_0\right].
$$

Consider the first term. 
$$
\|(\hat{\mathbf{T}}+\xi)^{-1}(\hat{\mathbf{g}}-\mathbf{g})\|_{\mathcal{H}}\leq \frac{1}{\xi}\|\hat{\mathbf{g}}-\mathbf{g}\|_{\mathcal{H}}\leq \frac{1}{\xi} Cr_{\mu}(n,\delta,b_0,c_0)
$$
by Proposition~\ref{prop:bound_KIV}.

Consider the second term.
\begin{align*}
    \left\|\left[(\hat{\mathbf{T}}+\xi)^{-1}
-(\mathbf{T}+\xi)^{-1}\right]\mathbf{g}\right\|_{\mathcal{H}}
&\leq \|(\hat{\mathbf{T}}+\xi)^{-1}
-(\mathbf{T}+\xi)^{-1}\|_{\mathcal{L}(\mathcal{H})}\|\mathbf{g}\|_{\mathcal{H}}\\
&\leq \frac{1}{\xi^2}\left\{2 \kappa+r_{\mu}(n,\delta,b_0,c_0) \right\}r_{\mu}(n,\delta,b_0,c_0) \cdot \kappa C
\end{align*}
with probability $1-\delta$ by Propositions~\ref{bound_diff} and ~\ref{bound_g}.

Consider the third term.
$$
\left\|h_{\xi}^m
-h_0\right\|_{\mathcal{H}}\leq \left\|h_{\xi}^m-h_{\xi}\right\|_{\mathcal{H}}+\left\|h_{\xi}
-h_0\right\|_{\mathcal{H}}
\leq C \log(4/\eta) \left\{ \frac{1}{m\xi}+\frac{1}{m^{1/2} \xi^{\frac{1}{2b}+\frac{1}{2}}}\right\}+\xi^{\frac{c-1}{2}} \sqrt{\zeta}
$$
with probability $1-\eta$, appealing to triangle inequality and Propositions~\ref{approx_sup} and~\ref{sampling_sup}.
\end{proof}

\begin{theorem}[Conditional mean embedding]\label{theorem:conditional}
Suppose Assumptions~\ref{assumption:RKHS},~\ref{assumption:original}, and~\ref{assumption:smooth_op} hold. Set $\lambda=n^{-\frac{1}{c_0+1/b_0}}$. Then with probability $1-\delta$, $\forall d\in\mathcal{D}, x\in\mathcal{X}, z\in\mathcal{Z}$
    $$
  \|\hat{\mu}(d,x,z)-\mu(d,x,z)\|_{\mathcal{H}}\leq r_{\mu}(n,\delta,b_0,c_0)
    $$
    where  $
   \kappa_{RF}:=\kappa_d\kappa_x\kappa_z
    $ and
    $$
    r_{\mu}(n,\delta,b_0,c_0):=\kappa_d\kappa_x\cdot \kappa_{RF}\cdot
  C \log(4/\delta) n^{-\frac{1}{2}\frac{c_0-1}{c_0+1/b_0}}.
    $$
\end{theorem}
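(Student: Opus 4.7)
The plan is to reduce the pointwise bound on $\|\hat{\mu}(d,x,z)-\mu(d,x,z)\|_{\mathcal{H}}$ to an operator-norm bound on $\|\hat{E}_0-E_0\|_{\mathcal{L}_2(\mathcal{H}_{\mathcal{W}},\mathcal{H}_{RF})}$, and then to prove the latter by repeating, in the operator-valued RKHS $\mathcal{L}_2(\mathcal{H}_{\mathcal{W}},\mathcal{H}_{RF})$, the same bias--variance decomposition that Propositions~\ref{sampling_sup} and~\ref{approx_sup} carry out in $\mathcal{H}$. First I would observe that by the mean-embedding identity $\mu_w(d,x,z)=E_0^*[\phi(d)\otimes\phi(x)\otimes\phi(z)]$ the increment factors as $\hat{\mu}(d,x,z)-\mu(d,x,z)=\phi(d)\otimes\phi(x)\otimes(\hat{E}_0^*-E_0^*)[\phi(d)\otimes\phi(x)\otimes\phi(z)]$, so Assumption~\ref{assumption:RKHS} and submultiplicativity of tensor norms yield
$$\|\hat{\mu}(d,x,z)-\mu(d,x,z)\|_{\mathcal{H}}\;\leq\;\kappa_d\kappa_x\kappa_{RF}\,\|\hat{E}_0-E_0\|_{\mathcal{L}_2}.$$
This reduction accounts for the leading $\kappa_d\kappa_x\kappa_{RF}$ prefactor that appears in the claimed $r_\mu(n,\delta,c_0)$.

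Next I would recognize $\hat{E}_0$ as a kernel ridge regression in $\mathcal{L}_2(\mathcal{H}_{\mathcal{W}},\mathcal{H}_{RF})$, with closed form $\hat{E}_0=(\hat{T}_0+\lambda)^{-1}\hat{g}_0$ where $\hat{T}_0$ and $\hat{g}_0$ are empirical analogues of $T_0=\mathbb{E}[\phi(D,X,Z)\otimes\phi(D,X,Z)]$ and $g_0=\mathbb{E}[\phi(D,X,Z)\otimes\phi(W)]$, and with population regularized counterpart $E_\lambda=(T_0+\lambda)^{-1}g_0$. The triangle inequality splits the error into sampling and approximation parts. For the sampling part, I would replay the argument of Proposition~\ref{sampling_sup} with $\mathcal{H}$ replaced by $\mathcal{L}_2$, $Y$ replaced by the Hilbert-space-valued response $\phi(W)$ (almost surely of norm at most $\kappa_w$), $\mu(D,X,Z)$ replaced by $\phi(D,X,Z)$ (norm at most $\kappa_{RF}$), and Lemma~\ref{lemma:prob} applied to the operator-valued random variable $\xi_i=\phi(d_i,x_i,z_i)\otimes\phi(w_i)-\phi(d_i,x_i,z_i)\otimes\phi(d_i,x_i,z_i)E_\lambda$ in the separable Hilbert space $\mathcal{L}_2$. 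For the approximation part, the source condition in Assumption~\ref{assumption:smooth_E} ($E_0=T_0^{(c_0-1)/2}\circ G_0$, $\|G_0\|^2\leq\zeta_0$) plays the role of Assumption~\ref{assumption:smooth_bridge_long} in Proposition~\ref{approx_sup}. The two bounds become
$$\|\hat{E}_0-E_\lambda\|_{\mathcal{L}_2}\leq\frac{4\kappa_{RF}(\kappa_w+\kappa_{RF}\|E_0\|_{\mathcal{L}_2})\ln(2/\delta)}{\sqrt{n}\,\lambda},\qquad \|E_\lambda-E_0\|_{\mathcal{L}_2}\leq\sqrt{\zeta_0}\,\lambda^{(c_0-1)/2}.$$

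Finally I would minimize the sum $f(\lambda)=a/(\sqrt{n}\lambda)+b\lambda^{(c_0-1)/2}$ with $a=4\kappa_{RF}(\kappa_w+\kappa_{RF}\|E_0\|_{\mathcal{L}_2})\ln(2/\delta)$ and $b=\sqrt{\zeta_0}$. The first order condition gives $\lambda_*^{(c_0+1)/2}=2a/(b(c_0-1)\sqrt{n})$, which scales as $n^{-1/(c_0+1)}$ (the rate labeled in the theorem), and the envelope identity $a/(\sqrt{n}\lambda_*)=\tfrac{c_0-1}{2}b\lambda_*^{(c_0-1)/2}$ collapses $f(\lambda_*)$ to $\tfrac{b(c_0+1)}{2}\lambda_*^{(c_0-1)/2}=\tfrac{b(c_0+1)}{4^{1/(c_0+1)}}(a/(b(c_0-1)\sqrt{n}))^{(c_0-1)/(c_0+1)}$. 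Multiplying by $\kappa_d\kappa_x\kappa_{RF}$ from Step 1 produces exactly the stated $r_\mu(n,\delta,c_0)$. The main obstacle is the operator-valued lifting of the scalar arguments: one must verify that $\|E_\lambda\|_{\mathcal{L}_2}$ is controlled by $\|E_0\|_{\mathcal{L}_2}$ (analogous to the $\|h_0\|_{\mathcal{H}}$ bound on $\|h_\xi\|_{\mathcal{H}}$ inside Proposition~\ref{sampling_sup}) and that $\xi_i$ is almost surely bounded in Hilbert--Schmidt norm; these are mechanical adaptations but constitute the bulk of the technical work.
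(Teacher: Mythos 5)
Your proposal is correct and follows essentially the same route as the paper. Both proofs first peel off the shared $\phi(d)\otimes\phi(x)$ factor to reduce to bounding $\|\hat{\mu}_w(d,x,z)-\mu_w(d,x,z)\|_{\mathcal{H}_\mathcal{W}}$; the paper then invokes Corollary~1 of \cite{singh2019kernel} with the substitutions $E_0:\mathcal{H}_\mathcal{W}\to\mathcal{H}_{RF}$, $\|\phi(w)\|\le\kappa_w$, $\|\phi(d,x,z)\|\le\kappa_{RF}$, while you re-derive that cited result in $\mathcal{L}_2(\mathcal{H}_\mathcal{W},\mathcal{H}_{RF})$ by replaying the bias--variance decomposition and the $\lambda$-optimization, arriving at the same expression.
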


\begin{proof}
Write
\begin{align*}
    \hat{\mu}(d,x,z)-\mu(d,x,z)&=[\phi(d)\otimes \phi(x)\otimes \hat{\mu}_w(d,x,z)]-[\phi(d)\otimes \phi(x)\otimes \mu_w(d,x,z)] \\
    &=\phi(d)\otimes \phi(x)\otimes [\hat{\mu}_w(d,x,z)-\mu_w(d,x,z)]
\end{align*}
so that
$$
\| \hat{\mu}(d,x,z)-\mu(d,x,z)\|_{\mathcal{H}}\leq \kappa_d\kappa_x\cdot \|\hat{\mu}_w(d,x,z)-\mu_w(d,x,z)\|_{\mathcal{H}_{\mathcal{W}}}.
$$
The bound on the final factor follows from \cite[Proposition H.3]{singh2020kernel}, observing that
$$
E_0:\mathcal{H}_{\mathcal{W}}\rightarrow \mathcal{H}_{RF},\quad \|\phi(d,x,z)\|_{\mathcal{H}_{RF}}\leq \kappa_{RF}.
$$
\end{proof}

\begin{proof}[Proof of Theorem~\ref{theorem:consistency_bridge}]
     Summarize the bound in Theorem~\ref{sup_consistency} as
\begin{align*}
    r_h&=O\left(\frac{r_{\mu}}{\xi}+\frac{r_{\mu}}{\xi^2}+\frac{r^2_{\mu}}{\xi^2}
    +\frac{1}{m\xi}+\frac{1}{m^{1/2} \xi^{\frac{1}{2b}+\frac{1}{2}}}+\xi^{\frac{c-1}{2}}\right) \\
    &=O\left(\frac{r_{\mu}}{\xi^2}+\frac{1}{m\xi}+\frac{1}{m^{1/2} \xi^{\frac{1}{2b}+\frac{1}{2}}}+\xi^{\frac{c-1}{2}}\right) \\
    &=O\left(\frac{r_{\mu}}{\xi^2}+\frac{1}{m^{1/2} \xi^{\frac{1}{2b}+\frac{1}{2}}}+\xi^{\frac{c-1}{2}}\right)
\end{align*}
where the last statement holds when $m\xi\geq 1$. Summarize the bound in Theorem~\ref{theorem:conditional} as
$$
    r_{\mu}=O\left(n^{-\frac{1}{2}\frac{c_0-1}{c_0+1/b_0}}\right)=O(m^{-\frac{a}{2}}),\quad 
    n=m^{\frac{a(c_0+1/b_0)}{(c_0-1)}}.
$$
Combining results,
$$
r_h=O\left(\frac{1}{m^{\frac{a}{2}}\xi^2}+\frac{1}{m^{1/2} \xi^{\frac{1}{2b}+\frac{1}{2}}}+\xi^{\frac{c-1}{2}}\right),\quad \text{such that}\quad \xi^2\geq r_{\mu},\quad 
    m\xi^{\frac{1}{b}+1}\geq1.
$$
This choice of $(n,m)$ ratio generalizes the parametrization of \cite[Theorem 4]{singh2019kernel} to allow $b_0>1$.

I have choice over $\xi$ as a function of $m$ to achieve the single stage rate of $m^{-\frac{1}{2}\frac{c-1}{c+1/b}}$. I choose $\xi$ to match the bias $\xi^{\frac{c-1}{2}}$ with the variance $\frac{1}{m^{\frac{a}{2}}\xi^2}+\frac{1}{m^{1/2} \xi^{\frac{1}{2b}+\frac{1}{2}}}$. I set bias equal to each term in the variance.
\begin{enumerate}
    \item $\xi^{\frac{c-1}{2}}=\frac{1}{m^{\frac{a}{2}}\xi^2}$. Rearranging, $\xi=m^{-\frac{a}{c+3}}$. The bias term becomes
    $$
    \xi^{\frac{c-1}{2}}=\left(m^{-\frac{a}{c+3}}\right)^{\frac{c-1}{2}}
    $$
    and the remaining term becomes
    $$
    \frac{1}{\sqrt{m}\xi^{\frac{1}{2b}+\frac{1}{2}}}=\frac{m^{\frac{1}{2}(1+1/b)\frac{a}{c+3}}}{\sqrt{m}}=m^{\frac{(1+1/b)a-(c+3)}{2(c+3)}}.
    $$
    Note that the former dominates the latter if and only if
    $$
    {-\frac{a}{c+3}}{\frac{c-1}{2}}\geq \frac{(1+1/b)a-(c+3)}{2(c+3)} \iff a\leq \frac{c+3}{c+1/b}.
    $$
    \item $\xi^{\frac{c-1}{2}}=\frac{1}{\sqrt{m}\xi^{\frac{1}{2b}+\frac{1}{2}}}$. Rearranging, $\xi=m^{-\frac{1}{c+1/b}}$. The bias term becomes
    $$
    \xi^{\frac{c-1}{2}}=\left(m^{-\frac{1}{c+1/b}}\right)^{\frac{c-1}{2}}
    $$
    and the remaining term becomes
    $$
    \frac{1}{m^{\frac{a}{2}}\xi^2}=m^{-\frac{a}{2}}\left(m^{-\frac{1}{c+1/b}}\right)^{-2}=m^{\frac{4-a(c+1/b)}{2(c+1/b)}}.
    $$
    Note that the former dominates the latter if and only if
    $$
    -\frac{1}{c+1/b} \frac{c-1}{2}\geq \frac{4-a(c+1/b)}{2(c+1/b)}\iff a\geq \frac{c+3}{c+1/b}.
    $$
\end{enumerate}
\end{proof}

\section{Treatment effect consistency proof}\label{section:consistency_proof2}

In this appendix, I (i) explicitly specialize the smoothness assumptions, (ii) provide rates for unconditional mean embeddings, (iv) provide rates for conditional mean embeddings, and (iv) prove uniform consistency of negative control treatment effects.

\subsection{Smoothness assumptions}

\begin{assumption}[Smoothness for $\theta_0^{ATT}$]\label{assumption:smooth_ATT}
Assume
\begin{enumerate}
\item The conditional expectation operator $E_1$ is well specified as a Hilbert-Schmidt operator between RKHSs, i.e. $E_1\in \mathcal{L}_2(\mathcal{H}_{\mathcal{X}}\otimes \mathcal{H}_{\mathcal{W}},\mathcal{H}_{\mathcal{D}})$, where
    $$
    E_1:\mathcal{H}_{\mathcal{X}}\otimes \mathcal{H}_{\mathcal{W}} \rightarrow \mathcal{H}_{\mathcal{D}},\quad f(\cdot,\cdot)\mapsto \mathbb{E}[f(X,W)|D=\cdot].
    $$
    \item The conditional expectation operator is a particularly smooth element of $\mathcal{L}_2(\mathcal{H}_{\mathcal{X}}\otimes \mathcal{H}_{\mathcal{W}},\mathcal{H}_{\mathcal{D}})$. Formally, define the covariance operator $T_1:=\mathbb{E}[\phi(D)\otimes \phi(D)]$ for $\mathcal{L}_2(\mathcal{H}_{\mathcal{X}}\otimes \mathcal{H}_{\mathcal{W}},\mathcal{H}_{\mathcal{D}})$.
    I assume there exists $G_1\in \mathcal{L}_2(\mathcal{H}_{\mathcal{X}}\otimes \mathcal{H}_{\mathcal{W}},\mathcal{H}_{\mathcal{D}})$ such that $E_1=(T_1)^{\frac{c_1-1}{2}}\circ G_1$, $c_1\in(1,2]$, and $\|G_1\|^2_{\mathcal{L}_2(\mathcal{H}_{\mathcal{X}}\otimes \mathcal{H}_{\mathcal{W}},\mathcal{H}_{\mathcal{D}})}\leq\zeta_1$.
    \end{enumerate}
\end{assumption}

\begin{assumption}[Smoothness for $\theta_0^{CATE}$]\label{assumption:smooth_CATE}
Assume
\begin{enumerate}
\item The conditional expectation operator $E_2$ is well specified as a Hilbert-Schmidt operator between RKHSs, i.e. $E_2\in \mathcal{L}_2(\mathcal{H}_{\mathcal{X}}\otimes \mathcal{H}_{\mathcal{W}},\mathcal{H}_{\mathcal{V}})$, where
    $$
    E_2:\mathcal{H}_{\mathcal{X}}\otimes \mathcal{H}_{\mathcal{W}} \rightarrow \mathcal{H}_{\mathcal{V}},\quad f(\cdot,\cdot)\mapsto \mathbb{E}[f(X,W)|V=\cdot].
    $$
    \item The conditional expectation operator is a particularly smooth element of $\mathcal{L}_2(\mathcal{H}_{\mathcal{X}}\otimes \mathcal{H}_{\mathcal{W}},\mathcal{H}_{\mathcal{V}})$. Formally, define the covariance operator $T_2:=\mathbb{E}[\phi(V)\otimes \phi(V)]$ for $\mathcal{L}_2(\mathcal{H}_{\mathcal{X}}\otimes \mathcal{H}_{\mathcal{W}},\mathcal{H}_{\mathcal{V}})$.
    I assume there exists $G_2\in \mathcal{L}_2(\mathcal{H}_{\mathcal{X}}\otimes \mathcal{H}_{\mathcal{W}},\mathcal{H}_{\mathcal{V}})$ such that $E_2=(T_2)^{\frac{c_2-1}{2}}\circ G_2$, $c_2\in(1,2]$, and $\|G_2\|^2_{\mathcal{L}_2(\mathcal{H}_{\mathcal{X}}\otimes \mathcal{H}_{\mathcal{W}},\mathcal{H}_{\mathcal{V}})}\leq\zeta_2$.
    \end{enumerate}
\end{assumption}

\begin{proposition}
The following assumptions are equivalent
\begin{enumerate}
    \item Assumption~\ref{assumption:smooth_op} with $\mathcal{A}_1=\mathcal{X}\times \mathcal{W}$ and $\mathcal{B}_1=\mathcal{D}$ is equivalent to Assumption~\ref{assumption:smooth_ATT}
     \item Assumption~\ref{assumption:smooth_op} with $\mathcal{A}_2=\mathcal{X}\times \mathcal{W}$ and $\mathcal{B}_2=\mathcal{V}$ is equivalent to Assumption~\ref{assumption:smooth_CATE}
\end{enumerate}
\end{proposition}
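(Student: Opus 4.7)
The plan is to mimic exactly the short argument given for the analogous equivalence between Assumption~\ref{assumption:smooth_op} (with $\mathcal{A}_0=\mathcal{W}$, $\mathcal{B}_0=\mathcal{D}\times\mathcal{X}\times\mathcal{Z}$) and Assumption~\ref{assumption:smooth_E}, which in turn invokes \cite[Remark~2]{caponnetto2007optimal}. The content of that remark is that, for a Hilbert--Schmidt operator $E$ between two RKHSs with covariance operator $T$ on the domain side, membership in the interior power $[\mathcal{L}_2(\mathcal{H}_{\mathcal{A}},\mathcal{H}_{\mathcal{B}})]^{c}$ defined via the spectral representation in~\eqref{eq:prior} is equivalent to the existence of a factorization $E=T^{(c-1)/2}\circ G$ with $G$ Hilbert--Schmidt and $\|G\|^2_{\mathcal{L}_2}\le \zeta$. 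Both formulations describe exactly the same smoothness class; the spectral form is compact, while the factorization form is what is actually needed in the approximation-error calculations (cf.\ the proof of Proposition~\ref{approx_sup}).

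Concretely, I would treat each case in parallel. For the first case, set $j=1$, $\mathcal{A}_1=\mathcal{X}\times\mathcal{W}$, $\mathcal{B}_1=\mathcal{D}$. The operator $E_1:\mathcal{H}_{\mathcal{X}}\otimes\mathcal{H}_{\mathcal{W}}\to\mathcal{H}_{\mathcal{D}}$ is the one appearing in Assumption~\ref{assumption:smooth_ATT}, and by Assumption~\ref{assumption:RKHS} (extended to include $k_{\mathcal{V}}$ as needed) the relevant kernels are bounded, so the covariance operator $T_1=\mathbb{E}[\phi(D)\otimes\phi(D)]$ on the domain $\mathcal{H}_{\mathcal{D}}$ is a well-defined trace-class operator. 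Applying Remark~2 with this $(E_1,T_1,\zeta_1,c_1)$ yields that the spectral condition $E_1\in[\mathcal{L}_2(\mathcal{H}_{\mathcal{X}}\otimes\mathcal{H}_{\mathcal{W}},\mathcal{H}_{\mathcal{D}})]^{c_1}$ is equivalent to the factorization $E_1=T_1^{(c_1-1)/2}\circ G_1$ with $\|G_1\|^2\le\zeta_1$. This is precisely the two-part statement of Assumption~\ref{assumption:smooth_ATT}.

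For the second case, set $j=2$, $\mathcal{A}_2=\mathcal{X}\times\mathcal{W}$, $\mathcal{B}_2=\mathcal{V}$. The argument is identical with $E_2$, $T_2=\mathbb{E}[\phi(V)\otimes\phi(V)]$, and $(\zeta_2,c_2,G_2)$ in place of the $j=1$ quantities, again invoking \cite[Remark~2]{caponnetto2007optimal}. Since both equivalences are just two instances of the same abstract fact, no further structure is needed.

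I do not expect a genuine obstacle: both the abstract spectral assumption and the factorization assumption are known in the RKHS learning-theory literature to be restatements of the same source condition, and the only thing to check is that the preconditions of Remark~2 (bounded kernels, so that the covariance operator is Hilbert--Schmidt and its fractional powers are well defined) are implied by Assumption~\ref{assumption:RKHS}. Once that is noted, the proof is a one-line citation, as in the earlier proposition.
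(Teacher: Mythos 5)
Your proposal takes essentially the same route as the paper: the paper's proof is a one-line citation to \cite[Remark~2]{caponnetto2007optimal} (noting that the expanded factorization form is the one used in the approximation-error bounds), and you invoke exactly the same remark, identifying the correct operators $E_j$, covariance operators $T_j$, and source-condition parameters $(c_j,\zeta_j)$ for each of the two cases. The extra detail you supply about boundedness of kernels ensuring well-definedness of the covariance operators and its fractional powers is a reasonable (if implicit in the paper) check and does not change the argument.
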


\begin{proof}
     The result is immediate from \cite[Remark 2]{caponnetto2007optimal}. The expanded expressions are more convenient for analysis.
\end{proof}

\subsection{Unconditional mean embedding}

\begin{theorem}[Unconditional mean embedding]\label{theorem:mean}
Suppose Assumptions~\ref{assumption:RKHS} and~\ref{assumption:original} hold. Then with probability $1-\delta$,
$$
\|\hat{\mu}-\mu\|_{\mathcal{H}_{\mathcal{X}}\otimes \mathcal{H}_{\mathcal{W}}}\leq r_{\mu}(n,\delta):=\frac{4\kappa_x\kappa_w \ln(2/\delta)}{\sqrt{n}}.
$$
Likewise, with probability $1-\delta$
$$
\|\hat{\nu}-\nu\|_{\mathcal{H}_{\mathcal{X}}\otimes \mathcal{H}_{\mathcal{W}}}\leq r_{\nu}(\tilde{n},\delta):=\frac{4\kappa_x\kappa_w \ln(2/\delta)}{\sqrt{\tilde{n}}}.
$$
\end{theorem}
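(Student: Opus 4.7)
The plan is to invoke the Pinelis--Hilbert space concentration inequality quoted as Lemma~\ref{lemma:prob}, applied to the i.i.d.\ tensor-product features $\xi_i := \phi(x_i)\otimes \phi(w_i)$ viewed as random elements of the real separable Hilbert space $\mathcal{K}=\mathcal{H}_{\mathcal{X}}\otimes \mathcal{H}_{\mathcal{W}}$. Since $\hat{\mu}=n^{-1}\sum_{i=1}^n \xi_i$ and $\mathbb{E}\xi_i=\mu$ (the exchange of expectation and Bochner integral is justified by boundedness, just as in the proof of Theorem~\ref{theorem:representation_treatment}), the quantity we want to bound is precisely $\|n^{-1}\sum_i \xi_i - \mathbb{E}\xi\|_{\mathcal{K}}$.

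First I would verify the two moment conditions required by Lemma~\ref{lemma:prob}. By the definition of the tensor-product norm and Assumption~\ref{assumption:RKHS}(1),
\[
\|\xi_i\|_{\mathcal{K}} = \|\phi(x_i)\|_{\mathcal{H}_{\mathcal{X}}}\|\phi(w_i)\|_{\mathcal{H}_{\mathcal{W}}}\leq \kappa_x\kappa_w,
\]
so I may take $\tilde{M}=\kappa_x\kappa_w$ and $\sigma^2=\kappa_x^2\kappa_w^2$. Substituting into the lemma yields, with probability at least $1-\delta$,
\[
\|\hat{\mu}-\mu\|_{\mathcal{K}} \leq \frac{2\kappa_x\kappa_w \ln(2/\delta)}{n} + \sqrt{\frac{2\kappa_x^2\kappa_w^2 \ln(2/\delta)}{n}}.
\]

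To reach the stated rate I would simplify each term. For the first term, $1/n \leq 1/\sqrt{n}$ for $n\geq 1$, so it is bounded by $2\kappa_x\kappa_w\ln(2/\delta)/\sqrt{n}$. For the second term, observe that $\sqrt{2\ln(2/\delta)} \leq 2\ln(2/\delta)$ whenever $2\ln(2/\delta)\geq 1$; since $\delta\in(0,1)$ gives $\ln(2/\delta)\geq \ln 2 > 1/2$, this inequality holds automatically, and the second term is also bounded by $2\kappa_x\kappa_w\ln(2/\delta)/\sqrt{n}$. Adding the two bounds produces the desired expression $r_{\mu}(n,\delta)=4\kappa_x\kappa_w\ln(2/\delta)/\sqrt{n}$. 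The statement for $\hat{\nu}$ is then immediate by repeating the same argument with the $\tilde{n}$ i.i.d.\ draws from $\tilde{\mathbb{P}}$, using Assumption~\ref{assumption:covariate} only insofar as it guarantees these draws are i.i.d.\ from the alternative population. There is no genuine obstacle here: the result is a routine specialization of the Hilbert-space Bernstein bound, with the only substantive input being the uniform boundedness of the product kernel inherited from Assumption~\ref{assumption:RKHS}.
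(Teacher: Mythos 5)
Your proposal is correct and follows exactly the paper's route: apply Lemma~\ref{lemma:prob} to $\xi_i=\phi(x_i)\otimes\phi(w_i)$ with $\tilde{M}=\sigma=\kappa_x\kappa_w$, then absorb both terms into $4\kappa_x\kappa_w\ln(2/\delta)/\sqrt{n}$ (the paper leaves this last simplification implicit; your verification that $\ln(2/\delta)>1/2$ makes it explicit and is accurate). The argument for $\hat{\nu}$ is likewise identical to the paper's.
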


\begin{proof}
The first result follows from Lemma~\ref{lemma:prob} with $\xi_i=\phi(x_i)\otimes \phi(w_i)$, since
\begin{align*}
    \left\|\frac{1}{n}\sum_{i=1}^n [\phi(x_i)\otimes \phi(w_i)]-\mathbb{E}[\phi(X)\otimes \phi(W)]\right\|_{\mathcal{H}_{\mathcal{X}}\otimes \mathcal{H}_{\mathcal{W}}}
&\leq \frac{2 \kappa_x\kappa_w\ln(2/\delta)}{n}+\sqrt{\dfrac{2\kappa^2_x\kappa^2_w\ln(2/\delta)}{n}}\\
&\leq \frac{4\kappa_x\kappa_w \ln(2/\delta)}{\sqrt{n}}.
\end{align*}
The argument for $\nu$ is identical, using $\xi_i=\phi(\tilde{x}_i)\otimes \phi(\tilde{w}_i) $.
\end{proof}

\subsection{Conditional mean embedding}

\begin{theorem}[Conditional mean embedding rate]\label{theorem:conditional2}
Suppose Assumptions~\ref{assumption:RKHS} and~\ref{assumption:original} hold. Set $(\lambda_1,\lambda_2)=(n^{-\frac{1}{c_1+1/b_1}},n^{-\frac{1}{c_2+1/b_2}})$.
\begin{enumerate}
    \item If in addition Assumption~\ref{assumption:smooth_ATT} holds then with probability $1-\delta$, for $n$ sufficiently large, $\forall d\in\mathcal{D}$
    $$
  \|\hat{\mu}(d)-\mu(d)\|_{\mathcal{H}_{\mathcal{X}}\otimes\mathcal{H}_{\mathcal{W}}}\leq r^{ATT}_{\mu}(n,\delta,b_1,c_1)
    $$
    where
    $$
    r^{ATT}_{\mu}(n,\delta,b_1,c_1):=\kappa_{d}\cdot
  C \log(4/\delta) n^{-\frac{1}{2}\frac{c_1-1}{c_1+1/b_1}}.
    $$
     \item If in addition Assumption~\ref{assumption:smooth_CATE} holds then with probability $1-\delta$, for $n$ sufficiently large, $\forall v\in\mathcal{V}$
    $$
   \| \hat{\mu}(v)-\mu(v)\|_{\mathcal{H}_{\mathcal{X}}\otimes\mathcal{H}_{\mathcal{W}}}\leq r^{CATE}_{\mu}(n,\delta,b_2,c_2)
    $$
    where
    $$
    r^{CATE}_{\mu}(n,\delta,b_2,c_2):=\kappa_{v}\cdot  C \log(4/\delta) n^{-\frac{1}{2}\frac{c_2-1}{c_2+1/b_2}}.
    $$
\end{enumerate}
\end{theorem}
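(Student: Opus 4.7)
The plan is to reduce each claim to a single off-the-shelf guarantee for conditional mean embeddings in the RKHS, namely \cite[Corollary 1]{singh2019kernel}, exactly as was done in the proof of Theorem~\ref{theorem:conditional}. The only thing that changes from one invocation to the next is which RKHS plays the role of the ``input'' space, which plays the role of the ``output'' space, and which smoothness/boundedness constants get plugged in.

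For part (1), I would instantiate the abstract setup with input $B_1 = D$ and output $A_1 = (X,W)$. Assumption~\ref{assumption:RKHS} ensures $\|\phi(d)\|_{\mathcal{H}_{\mathcal{D}}}\le \kappa_d$ and $\|\phi(x)\otimes \phi(w)\|_{\mathcal{H}_{\mathcal{X}}\otimes \mathcal{H}_{\mathcal{W}}}\le \kappa_x\kappa_w$, and Assumption~\ref{assumption:smooth_ATT} is precisely the source condition needed for $E_1\in\mathcal{L}_2(\mathcal{H}_{\mathcal{X}}\otimes \mathcal{H}_{\mathcal{W}},\mathcal{H}_{\mathcal{D}})$ with smoothness $c_1\in(1,2]$. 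Applying \cite[Corollary 1]{singh2019kernel} with the ridge tuning $\lambda_1=n^{-1/(c_1+1)}$ yields a uniform-in-$d$ bound of the stated form, with the constants $\kappa_d$, $\kappa_x\kappa_w$, $\|E_1\|_{\mathcal{L}_2}$, and $\zeta_1$ substituted into the generic expression. For part (2), the argument is identical with $B_2=V$ and the constants $(\kappa_v,\kappa_x\kappa_w,\|E_2\|_{\mathcal{L}_2},\zeta_2)$.

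The only mildly nontrivial verification is that the corollary's bound really is uniform in the conditioning variable. This follows because in \cite{singh2019kernel} the pointwise error at a generic input $b$ is controlled via $\|\hat{\mu}(b)-\mu(b)\|\le \|\phi(b)\|\cdot\|\hat{E}-E\|_{\mathcal{L}_2}$, and the operator-norm bound $\|\hat{E}-E\|_{\mathcal{L}_2}$ is independent of $b$; the factor $\|\phi(b)\|$ is then bounded by $\kappa_d$ or $\kappa_v$ uniformly, which is exactly where those leading constants appear in the stated rate.

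I do not expect a genuine obstacle here: this theorem is a book-keeping corollary of Theorem~\ref{theorem:conditional} applied to two different conditional expectation operators, and the main content of the paper (uniform consistency of $\hat{h}$ and then of the treatment effects) is what actually uses it. The only care needed is matching the source-condition indices $(c_1,c_2)$ to the correct operator and making sure the boundedness constants for the output RKHS $\mathcal{H}_{\mathcal{X}}\otimes\mathcal{H}_{\mathcal{W}}$ are combined multiplicatively, i.e. $\kappa_x\kappa_w$, in the final rate expression.
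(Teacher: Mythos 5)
Your proposal matches the paper's proof exactly: the paper also cites \cite[Corollary 1]{singh2019kernel} directly, identifying $E_1:\mathcal{H}_{\mathcal{X}}\otimes\mathcal{H}_{\mathcal{W}}\rightarrow\mathcal{H}_{\mathcal{D}}$ and $E_2:\mathcal{H}_{\mathcal{X}}\otimes\mathcal{H}_{\mathcal{W}}\rightarrow\mathcal{H}_{\mathcal{V}}$ with the boundedness constants $\kappa_x\kappa_w$ on the output side and $\kappa_d$ (resp.\ $\kappa_v$) on the input side, just as you describe. Your additional remark on why the bound is uniform in the conditioning variable is a helpful gloss that the paper leaves implicit, but the route is the same.
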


\begin{proof}
The proof immediately follows from \cite[Proposition H.3]{singh2020kernel}, observing that
$$
E_1:\mathcal{H}_{\mathcal{X}}\otimes\mathcal{H}_{\mathcal{W}}\rightarrow \mathcal{H}_{\mathcal{D}},\quad \|\phi(d)\|_{\mathcal{H}_{\mathcal{D}}}\leq \kappa_{d}
$$
and
$$
E_2:\mathcal{H}_{\mathcal{X}}\otimes\mathcal{H}_{\mathcal{W}}\rightarrow \mathcal{H}_{\mathcal{V}},\quad \|\phi(v)\|_{\mathcal{H}_{\mathcal{V}}}\leq \kappa_{v}.
$$
\end{proof}

\subsection{Main result}

In summary, the rates are
\begin{align*}
    \|\hat{h}-h_0\|_{\mathcal{H}}&=O_p\left(m^{-\frac{1}{2}\frac{c-1}{c+1/b}}\right) \\
    \|\hat{\mu}-\mu\|_{\mathcal{H}_{\mathcal{X}}\otimes \mathcal{H}_{\mathcal{W}} }&=O_p\left(n^{-\frac{1}{2}}\right) \\
    \|\hat{\nu}-\nu\|_{\mathcal{H}_{\mathcal{X}}\otimes \mathcal{H}_{\mathcal{W}} }&=O_p\left(\tilde{n}^{-\frac{1}{2}}\right) \\
    \|\hat{\mu}(d)-\mu(d)\|_{\mathcal{H}_{\mathcal{X}}\otimes \mathcal{H}_{\mathcal{W}} }&=O_p\left(n^{-\frac{1}{2}\frac{c_1-1}{c_1+1/b_1}}\right) \\
     \|\hat{\mu}(v)-\mu(v)\|_{\mathcal{H}_{\mathcal{X}}\otimes \mathcal{H}_{\mathcal{W}} }&=O_p\left(n^{-\frac{1}{2}\frac{c_2-1}{c_2+1/b_2}}\right).
\end{align*}

\begin{proof}[Proof of Theorem~\ref{theorem:consistency_treatment}]
I generalize the argument in \cite[Theorem 3.3]{singh2020kernel}. I write out the finite sample bounds. Consider the decomposition
\begin{align*}
        &\hat{\theta}^{ATE}(d)-\theta_0^{ATE}(d)\\
        &=\langle \hat{h} , \phi(d)\otimes \hat{\mu} \rangle_{\mathcal{H}} - \langle h_0 , \phi(d)\otimes \mu \rangle_{\mathcal{H}} \\
        &=\langle \hat{h} , \phi(d)\otimes[\hat{\mu}-\mu] \rangle_{\mathcal{H}} + \langle [\hat{h}-h_0], \phi(d) \otimes \mu \rangle_{\mathcal{H}} \\
        &=\langle [\hat{h}-h_0], \phi(d)\otimes[\hat{\mu}-\mu] \rangle_{\mathcal{H}} + \langle h_0, \phi(d)\otimes[\hat{\mu}-\mu] \rangle_{\mathcal{H}}+\langle [\hat{h}-h_0], \phi(d) \otimes \mu \rangle_{\mathcal{H}}.
    \end{align*}
    Therefore with probability $1-2\delta-\eta$
   \begin{align*}
       &|\hat{\theta}^{ATE}(d)-\theta_0^{ATE}(d)|\\
       &\leq 
       \|\hat{h}-h_0\|_{\mathcal{H}}\|\phi(d)\|_{\mathcal{H}_{\mathcal{D}}} \|\hat{\mu}-\mu\|_{\mathcal{H}_{\mathcal{X}}\otimes\mathcal{H}_{\mathcal{W}}}
       +
       \|h_0\|_{\mathcal{H}}\|\phi(d)\|_{\mathcal{H}_{\mathcal{D}}}\|\hat{\mu}-\mu\|_{\mathcal{H}_{\mathcal{X}}\otimes\mathcal{H}_{\mathcal{W}}} \\
       &\quad +
       \|\hat{h}-h_0\|_{\mathcal{H}}\|\phi(d)\|_{\mathcal{H}_{\mathcal{D}}} \|\mu\|_{\mathcal{H}_{\mathcal{X}}\otimes\mathcal{H}_{\mathcal{W}}}
      \\
      &\leq \kappa_d \cdot r_h(n,\delta,b_0,c_0;m,\eta,b,c) \cdot r_{\mu}(n,\delta)+\kappa_d\cdot\|h_0\|_{\mathcal{H}} \cdot r_{\mu}(n,\delta)\\
      &\quad +\kappa_d\kappa_x\kappa_w \cdot r_h(n,\delta,b_0,c_0;m,\eta,b,c)\\
      &=O\left(m^{-\frac{1}{2}\frac{c-1}{c+1/b}}+n^{-\frac{1}{2}}\right).
   \end{align*}
By the same argument as for $\theta_0^{ATE}$, with probability $1-2\delta-\eta$
    \begin{align*}
    &|\hat{\theta}^{DS}(d,\tilde{\mathbb{P}})-\theta_0^{DS}(d,\tilde{\mathbb{P}})| \\
    &\leq \kappa_d \cdot r_h(n,\delta,b_0,c_0;m,\eta,b,c) \cdot r_{\nu}(\tilde{n},\delta)+\kappa_d\cdot\|h_0\|_{\mathcal{H}} \cdot r_{\nu}(\tilde{n},\delta) \\
      &\quad +\kappa_d\kappa_x\kappa_w \cdot r_h(n,\delta,b_0,c_0;m,\eta,b,c)\\
      &=O\left( m^{-\frac{1}{2}\frac{c-1}{c+1/b}}+\tilde{n}^{-\frac{1}{2}}\right).
    \end{align*}
Next, I turn to the nonparametric treatment effects with conditional mean embeddings. Consider the decomposition
    \begin{align*}
        &\hat{\theta}^{ATT}(d,d')-\theta_0^{ATT}(d,d')\\
        &=\langle \hat{h} , \phi(d')\otimes \hat{\mu}(d) \rangle_{\mathcal{H}} - \langle h_0 , \phi(d')\otimes \mu(d) \rangle_{\mathcal{H}} \\
        &=\langle \hat{h} , \phi(d')\otimes[\hat{\mu}(d)-\mu(d)] \rangle_{\mathcal{H}} + \langle [\hat{h}-h_0], \phi(d') \otimes \mu(d) \rangle_{\mathcal{H}} \\
        &=\langle [\hat{h}-h_0], \phi(d')\otimes[\hat{\mu}(d)-\mu(d)] \rangle_{\mathcal{H}} + \langle h_0, \phi(d')\otimes[\hat{\mu}(d)-\mu(d)] \rangle_{\mathcal{H}}\\
        &\quad +\langle [\hat{h}-h_0], \phi(d') \otimes \mu(d) \rangle_{\mathcal{H}}.
    \end{align*}
    Therefore with probability $1-2\delta-\eta$
   \begin{align*}
       &|\hat{\theta}^{ATT}(d,d')-\theta_0^{ATT}(d,d')|\\
       &\leq 
       \|\hat{h}-h_0\|_{\mathcal{H}}\|\phi(d')\|_{\mathcal{H}_{\mathcal{D}}} \|\hat{\mu}(d)-\mu(d)\|_{\mathcal{H}_{\mathcal{X}}\otimes\mathcal{H}_{\mathcal{W}}} \\
       &\quad +
       \|h_0\|_{\mathcal{H}}\|\phi(d')\|_{\mathcal{H}_{\mathcal{D}}}\|\hat{\mu}(d)-\mu(d)\|_{\mathcal{H}_{\mathcal{X}}\otimes\mathcal{H}_{\mathcal{W}}} \\
       &\quad+
       \|\hat{h}-h_0\|_{\mathcal{H}}\|\phi(d')\|_{\mathcal{H}_{\mathcal{D}}} \|\mu(d)\|_{\mathcal{H}_{\mathcal{X}}\otimes\mathcal{H}_{\mathcal{W}}}
      \\
      &\leq \kappa_d \cdot r_h(n,\delta,b_0,c_0;m,\eta,b,c) \cdot r_{\mu}^{ATT}(n,\delta,b_1,c_1)+\kappa_d\cdot\|h_0\|_{\mathcal{H}} \cdot r_{\mu}^{ATT}(n,\delta,b_1,c_1)\\
      &\quad +\kappa_d\kappa_x\kappa_w \cdot r_h(n,\delta,b_0,c_0;m,\eta,b,c)
      \\
      &=O\left(m^{-\frac{1}{2}\frac{c-1}{c+1/b}}+n^{-\frac{1}{2}\frac{c_1-1}{c_1+1/b_1}}\right).
   \end{align*}
   Similarly, consider the decomposition
   \begin{align*}
        &\hat{\theta}^{CATE}(d,v)-\theta_0^{CATE}(d,v)\\
        &=\langle \hat{h} , \phi(d)\otimes \phi(v)\otimes \hat{\mu}(v) \rangle_{\mathcal{H}} - \langle h_0 , \phi(d )\otimes \phi(v) \otimes \mu(v) \rangle_{\mathcal{H}} \\
        &=\langle \hat{h} , \phi(d)\otimes \phi(v)\otimes[\hat{\mu}(v)-\mu(v)] \rangle_{\mathcal{H}} + \langle [\hat{h}-h_0], \phi(d)\otimes \phi(v) \otimes \mu(v) \rangle_{\mathcal{H}} \\
        &=\langle [\hat{h}-h_0], \phi(d)\otimes \phi(v)\otimes[\hat{\mu}(v)-\mu(v)] \rangle_{\mathcal{H}} \\
        &\quad + \langle h_0, \phi(d)\otimes \phi(v)\otimes[\hat{\mu}(v)-\mu(v)] \rangle_{\mathcal{H}}\\
        &\quad +\langle [\hat{h}-h_0], \phi(d)\otimes \phi(v) \otimes \mu(v) \rangle_{\mathcal{H}}.
    \end{align*}
    Therefore with probability $1-2\delta-\eta$
   \begin{align*}
       &|\hat{\theta}^{CATE}(d,v)-\theta_0^{CATE}(d,v)|\\
       &\leq 
       \|\hat{h}-h_0\|_{\mathcal{H}}\|\phi(d)\|_{\mathcal{H}_{\mathcal{D}}}\|\phi(v)\|_{\mathcal{H}_{\mathcal{V}}} \|\hat{\mu}(v)-\mu(v)\|_{\mathcal{H}_{\mathcal{X}}\otimes\mathcal{H}_{\mathcal{W}}}\\
      &\quad+
       \|h_0\|_{\mathcal{H}}\|\phi(d)\|_{\mathcal{H}_{\mathcal{D}}}\|\phi(v)\|_{\mathcal{H}_{\mathcal{V}}}\|\hat{\mu}(v)-\mu(v)\|_{\mathcal{H}_{\mathcal{X}}\otimes\mathcal{H}_{\mathcal{W}}} \\
       &\quad+
       \|\hat{h}-h_0\|_{\mathcal{H}}\|\phi(d)\|_{\mathcal{H}_{\mathcal{D}}}\|\phi(v)\|_{\mathcal{H}_{\mathcal{V}}} \|\mu(v)\|_{\mathcal{H}_{\mathcal{X}}\otimes\mathcal{H}_{\mathcal{W}}}
      \\
      &\leq \kappa_d\kappa_{v} \cdot r_h(n,\delta,b_0,c_0;m,\eta,b,c) \cdot r_{\mu}^{CATE}(n,\delta,b_2,c_2)+\kappa_d\kappa_{v}\cdot\|h_0\|_{\mathcal{H}} \cdot r_{\mu}^{CATE}(n,\delta,b_2,c_2)
      \\
      &\quad+\kappa_d\kappa_{v} \kappa_{x}\kappa_w \cdot r_h(n,\delta,b_0,c_0;m,\eta,b,c)
      \\
      &=O\left(m^{-\frac{1}{2}\frac{c-1}{c+1/b}}+n^{-\frac{1}{2}\frac{c_2-1}{c_2+1/b_2}}\right).
   \end{align*}
\end{proof}
\section{Simulation details}\label{section:simulation_details}

\subsection{General simulation design}

A single observation is generated as follows. Recall that $(Y,D)\in\mathbb{R}$, $X\in\mathbb{R}^{dim(X)}$, $Z\in\mathbb{R}^{dim(Z)}$, and $W\in\mathbb{R}^{dim(W)}$.
\begin{enumerate}
    \item Draw unobserved noise as
    $$
    \{\epsilon_i\}_{i\in[3]}\overset{i.i.d}{\sim}\mathcal{N}(0,1),\quad \nu_z\sim \mathcal{U}[-1,1]^{dim(Z)},\quad \nu_w \sim\mathcal{U}[-1,1]^{dim(W)}.
    $$
    \item Set unobserved confounders as
$
u_z=\epsilon_1+\epsilon_3$ and $u_w=\epsilon_2+\epsilon_3
$. 
    \item Set the negative controls as
$$
Z=\nu_z+0.25\cdot u_z\cdot  1_{dim(Z)},\quad W=\nu_w+0.25\cdot u_w \cdot  1_{dim(W)},
$$where $1_p\in\mathbb{R}^p$ is the vector of ones of length $p$.
    \item Draw covariates $X\sim\mathcal{N}(0,\Sigma)$ where the covariance matrix $\Sigma\in\mathbb{R}^{dim(X)\times dim(X)}$ is such that $\Sigma_{ii}=1$ and $\Sigma_{ij}=\frac{1}{2}\cdot 1\{|i-j|=1\}$ for $i\neq j$. 
    \item Then set treatment as
$$
D=\Lambda(3X^{\top}\beta_x+3Z^{\top}\beta_z)+0.25\cdot u_w.
$$
$\beta_x\in\mathbb{R}^{dim(X)}$ and $\beta_z\in\mathbb{R}^{dim(Z)}$ are quadratically decaying coefficients, e.g. $[\beta_x]_j=j^{-2}$. $\Lambda$ is the truncated logistic link function $\Lambda(t)=(0.9-0.1)\frac{\exp(t)}{1+\exp(t)}+0.1$.
\item Finally set the outcome as
$$
Y=\theta_0^{ATE}(D)+ 1.2(X^{\top}\beta_x+W^{\top}\beta_w)+DX_1+0.25\cdot u_z,$$
where $\beta_w\in\mathbb{R}^{dim(W)}$ is a quadratically decaying coefficient, i.e. $[\beta_w]_j=j^{-2}$.
\end{enumerate}

For the quadratic design, $\theta_0^{ATE}(d)=d^2+1.2d$ as in \cite{colangelo2020double}. For the sigmoid design, $\theta_0^{ATE}(d)=\ln(|16d-8|+1)\cdot sign(d-0.5)+1.2d$ similar to \cite{singh2019kernel}. Finally, for the peaked design, $\theta_0^{ATE}(d)=2\{d^4/600+\exp(-4d^2)+d/10-2\}+1.2d$ similar to \cite{singh2019kernel}.

I implement the estimator $\hat{\theta}^{ATE}(d)$ (\verb|N.C.|) described in Section~\ref{section:algorithm}, with the tuning procedure described in Appendix~\ref{section:tuning}. Specifically, I use ridge penalties determined by leave-one-out cross validation, and product Gaussian kernel with lengthscales set by the median heuristic. I implement the continuous treatment effect estimator of \cite{singh2020kernel} (\verb|T.E.|) using the same principles. The latter tuning procedure is simpler; whereas the new estimator involves reweighting a confounding bridge (with two ridge penalty hyperparameters), the previous estimator involves reweighting a regression (with one ridge penalty hyperparameter).

\subsection{Robustness to tuning}

As explained in Appendix~\ref{section:tuning}, the tuning procedure for the ridge penalties involves leave-one-out cross validation. I now confirm that the proposed estimator's performance is robust to improper tuning. Figures~\ref{fig:sim_tuning}(a) and~\ref{fig:sim_tuning}(b) summarize results. 

\begin{figure}[ht]
\begin{centering}
     \begin{subfigure}[b]{0.3\textwidth}
         \centering
         \includegraphics[width=\textwidth]{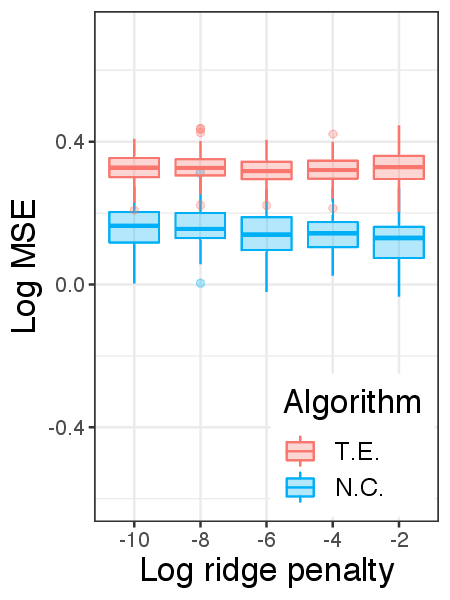}
         \vspace{-15pt}
         \caption{Forcing $\lambda$}
     \end{subfigure}
     \hfill
     \begin{subfigure}[b]{0.3\textwidth}
         \centering
         \includegraphics[width=\textwidth]{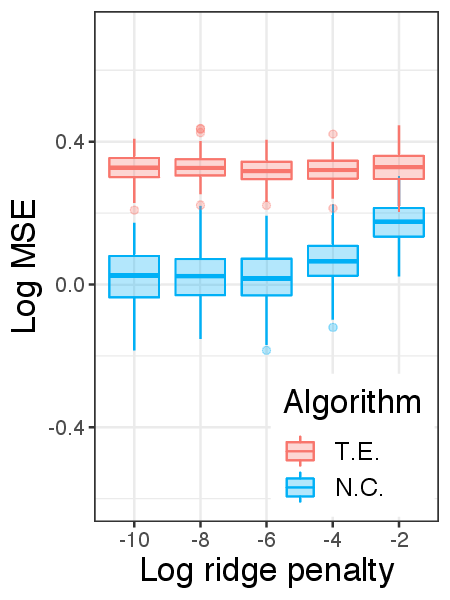}
         \vspace{-15pt}
         \caption{Forcing $\xi$}
     \end{subfigure}\hfill
     \begin{subfigure}[b]{0.3\textwidth}
         \centering
         \includegraphics[width=\textwidth]{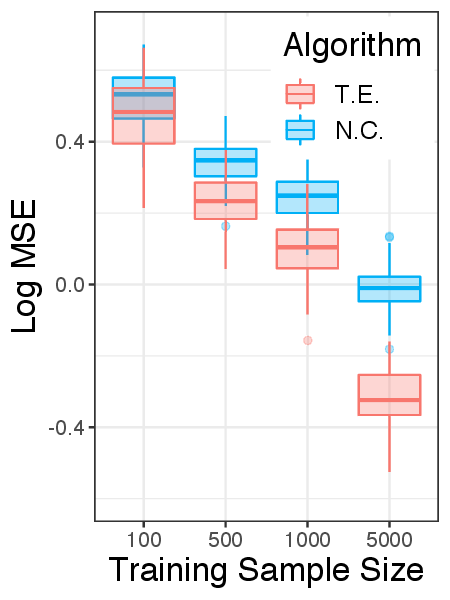}
         \vspace{-15pt}
         \caption{Unfounded design}
     \end{subfigure}
\par
\vspace{-5pt}
\caption{\label{fig:sim_tuning}
Robustness studies}
\end{centering}
\end{figure}

I conduct two robustness studies, each corresponding to a ridge penalty hyperparameter. In the first robustness study (Figure~\ref{fig:sim_tuning}(a)), I force $\lambda$ to take a particular value in a grid, then tune $\xi$ by leave-one-out cross validation: $\xi=\xi^*(\lambda)$. In the second robustness study (Figure~\ref{fig:sim_tuning}(b)), I tune $\lambda=\lambda^*$ by leave-one-out cross validation, then force $\xi$ to take a particular value in a grid. Across all levels of improper tuning of \verb|N.C.|, it continues to outperform the properly tuned \verb|T.E.| estimator.
 
 \subsection{No unobserved confounding}

Next, I study a setting where there is no unobserved confounding. I modify the data generating process as follows.
\begin{enumerate}
    \item Draw unobserved noise as
    $$
    \{\epsilon_i\}_{i\in[4]}\overset{i.i.d}{\sim}\mathcal{N}(0,1),\quad \nu_z\sim \mathcal{U}[-1,1]^{dim(Z)},\quad \nu_w \sim\mathcal{U}[-1,1]^{dim(W)}.
    $$
    \item Set unobserved confounders as
$
u_z=\epsilon_1+\epsilon_2$ and $u_w=\epsilon_3+\epsilon_4
$. 
    \item Set the negative controls as
$$
Z=\nu_z+0.25\cdot u_z\cdot  1_{dim(Z)},\quad W=\nu_w+0.25\cdot u_w \cdot  1_{dim(W)}.
$$
    \item Draw covariates $X\sim\mathcal{N}(0,\Sigma)$.
    \item Then set treatment as
$$
D=\Lambda(3X^{\top}\beta_x)+0.25\cdot u_w.
$$
\item Finally set the outcome as
$$
Y=\theta_0^{ATE}(D)+ 1.2(X^{\top}\beta_x)+DX_1+0.25\cdot u_z.$$
\end{enumerate}

Figure~\ref{fig:sim_tuning}(c) visualizes results. As expected, the previously existing method, which assumes no unobserved confounding, outperforms the proposed method. The proposed method solves an ill posed inverse problem. One pays a cost in terms of statistical efficiency when the ill posed inverse problem is unnecessary. From a practical perspective, an analyst should only use negative controls when the analyst firmly believes that unobserved confounding is present and that the negatives controls satisfy Assumptions~\ref{assumption:negative} and~\ref{assumption:solution}.

\subsection{Discrete treatment}

So far, I have focused on the case with continuous treatment. I now study empirical performance when treatment is discrete. I modify the data generating process as follows.
\begin{enumerate}
    \item Draw unobserved noise as
    $$
    \{\epsilon_i\}_{i\in[3]}\overset{i.i.d}{\sim}\mathcal{N}(0,1),\quad \nu_z\sim \mathcal{U}[-1,1]^{dim(Z)},\quad \nu_w \sim\mathcal{U}[-1,1]^{dim(W)}.
    $$
    \item Set unobserved confounders as
$
u_z=\epsilon_1+\epsilon_3$ and $u_w=\epsilon_2+\epsilon_3
$. 
    \item Set the negative controls as
$$
Z=\nu_z+0.5\cdot u_z\cdot  1_{dim(Z)},\quad W=\nu_w+0.5\cdot u_w \cdot  1_{dim(W)}.
$$
    \item Draw covariates $X\sim\mathcal{N}(0,\Sigma)$.
    \item Then set treatment as
$$
D~\sim Bernoulli(\Lambda(X^{\top}\beta_x+Z^{\top}\beta_z+u_w)).
$$
\item Finally set the outcome as
$$
Y=2.2+ 1.2(X^{\top}\beta_x+W^{\top}\beta_w)+DX_1+0.5\cdot u_z.$$
\end{enumerate}
By construction, $\theta_0^{ATE}(1)-\theta_0^{ATE}(0)=2.2-0=2.2$. 
Table~\ref{tab:discrete} summarizes results. The proposed method (\texttt{N.C.}) outperforms the previous approach (\texttt{T.E.}) that ignores unobserved confounding, when the sample size is sufficiently large. Importantly, the sample size $n$ must be at least 1000 for the estimator to detect and correct for the unobserved confounding. Substantial bias remains, suggesting that the debiased semiparametric estimators subsequently proposed by \cite{kallus2021causal,ghassami2021minimax} may be more appropriate when treatment is discrete.

\begin{table}[ht]
  \centering
    \subfloat[Treatment effect (\texttt{T.E.})]{
  \begin{tabular}{cccc}
       \toprule
    Sample size & Mean & S.D. & M.S.E. \\
    \midrule
     100 &  2.61   &      0.23    &             0.05\\
    500 &  2.59   &      0.11    &             0.01\\
    1000 & 2.55  &       0.06    &            0.00 \\
    5000 & 2.42   &      0.03   &              0.00\\
    \bottomrule
     %\caption{$\theta_0^{ATE}(0)=0$}
    \end{tabular}
  }\quad
  \subfloat[Negative control (\texttt{N.C.})]{\begin{tabular}{cccc}
       \toprule
  Sample size & Mean & S.D. & M.S.E. \\
    \midrule
     100 &  3.07   &     0.27   &             0.07\\
500 & 2.62    &    0.11   &              0.01\\
    1000 & 2.42& 0.09 &   0.01 \\
    5000 & 1.99& 0.05 &  0.00\\
    \bottomrule
       % \caption{$\theta_0^{ATE}(1)=2.2$}
    \end{tabular}}
   
  \caption{Discrete design}%
  \label{tab:discrete}%
\end{table}
\section{Application details}\label{section:application_details}

\subsection{Dose response of cigarette smoking}

As described in the main text, I estimate the dose response curves for the subpopulations of white, black, and Hispanic mothers who smoke. Formally, I estimate $\theta_0^{CATE}(d,v)$ where $D\in\mathbb{R}$ is the number of cigarettes smoked per day, and $V\in\mathbb{R}^2$ concatenates mother's race $V_1$ and mother's smoking status $V_2$. Observe that race is, for our purposes, a discrete variable with three values while smoking status is a binary variable. Consider the subpopulation of white mothers who smoke, i.e. $v=(\text{white},1)$. I implement the product of indicator kernels 
$$
k_{\mathcal{V}}(v,v')=k_{\mathcal{V}_1}(v_1,v_1')k_{\mathcal{V}_2}(v_2,v_2')=\mathbbm{1}\{v_1=v_1'\}\mathbbm{1}\{v_2=v_2'\}.
$$
Therefore the estimator of the confounding bridge $h_0(d,v,x,w)$ at the value $v=(\text{white},1)$ simply zeroes out observations with $V \neq (\text{white},1)$. Since the covariate of interest is discrete rather than continuous, a natural choice of the estimator $\hat{\mu}(v)$ that encodes $\mathbb{P}(x,w|v)$ simply takes the average for the corresponding subpopulation, e.g.
$$
\hat{\mu}(\text{white},1)=\frac{1}{|\{i:V_i=(\text{white},1)\}|}\sum_{i:V_i=(\text{white},1)}\phi(x_i)\otimes \phi(w_i).
$$
Observe that this estimator also zeroes out observations with $V\neq (\text{white},1)$.

In summary, when $V$ is discrete, the described estimator of $\theta_0^{CATE}(d,v)$ is equivalent to the following procedure: (i) subset to the observations such that $V=v$, then (ii) implement $\hat{\theta}^{ATE}(d)$. I implement the estimator $\hat{\theta}^{ATE}(d)$ (\verb|N.C.|) described in Section~\ref{section:algorithm}, with the tuning procedure described in Appendix~\ref{section:tuning}, on the appropriate subset of observations. I implement the continuous treatment effect estimator of \cite{singh2020kernel} (\verb|T.E.|) using the same principles. 

To speed up computation, I incorporate the \verb|kernlab| package in \verb|R|. Due to the large sample size for nonhispanic white women $(n=73,834)$, I split the observations according to the year (1989, 1990, 1991) then take the average.%, which admits interpretation as using a certain kernel for the year variable.

\subsection{Variable classification}

Table~\ref{table:vars} summarizes the variable classification. The reason why this choice of negative controls $(Z,W)$ is appropriate when $U$ is income, but inappropriate when $U$ is stress, is the relevance condition detailed in Proposition~\ref{prop:relevance}. Based on clinical expertise and previously published results such as \cite{hobel2008psychosocial}, we posit that parental education and family size are relevant for income in the sense that variation in income $U$ can be recovered from variation in parental education $Z$ and family size $W$. The same cannot be said for stress; it is less justified by the existing literature that variation in stress $U$ can be recovered from variation in parental education $Z$ and family size $W$. Figure~\ref{dag:nc_smoking2} visualizes a representative DAG.

\begin{table}[H]
\centering
\caption{Variable classification}
\label{table:vars}
\begin{tabular}{lll}
  \toprule
Symbol & Definition & Empirical application \\
\midrule
 $Y$ & outcome & infant birth weight (grams) \\
\midrule
     $D$ & treatment & \# cigarettes smoked per day during pregnancy\\
\midrule
    $U$ & unobserved & income \\
    & confounding & stress \\
\midrule
    $Z$ & n.c. treatment & mother's educational attainment (years)\\
    &  & father's educational attainment (years)\\
\midrule
    $W$ & n.c. outcome & infant birth order  \\
    &  & infant sex  \\
    &  & Rh sensitization \\
\midrule
   $X$& covariates & mother's demographics: age, marriage status, foreign born \\
   && father's demographics: age, race \\
   && alcohol consumption during pregnancy \\
   && \# prenatal visits during pregnancy \\ 
    && trimester of first prenatal care visit \\ 
   && hypertension: chronic, gestational \\
   && other medical: diabetes, herpes, eclampsia, weight gain \\ 
   && county \\ 
   && year \\ 
\midrule
   $V$& covariate of interest & mother's race \\
   && mother's smoking status \\
 \bottomrule
\end{tabular}
\end{table}

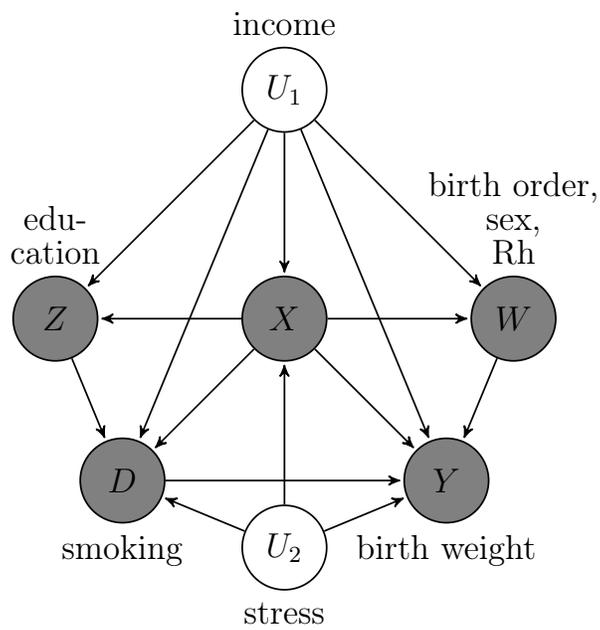
\begin{figure}[H]
    \centering
    \begin{adjustbox}{width=.5\textwidth}
\begin{tikzpicture}[->,>=stealth',shorten >=1pt,auto,node distance=2.8cm,
                    semithick]
  \tikzstyle{every state}=[draw=black,text=black]

  \node[state]         (x) [fill=gray]                   {$X$};
  \node[state]         (w) [right of=x, fill=gray, label={[align=center]above: birth order,\\[-10pt] sex,\\[-10pt] Rh}]       {$W$};
  \node[state]         (z) [left of=x, fill=gray, label={[align=center]above:edu-\\[-10pt]cation}]       {$Z$};
   \node[state]         (d) [below left of=x, fill=gray, label={below:smoking}]       {$D$};
    \node[state]         (y) [below right of=x, fill=gray, label={below:birth weight}]       {$Y$};
   \node[state]         (u1) [above of=x, label={income}]                  {$U_1$};
    \node[state]         (u2) [below of=x, label={below:stress}]                  {$U_2$};

  \path (u1) edge              node {$ $} (z)
             edge           node {$ $} (x)
             edge           node {$ $} (w)
              edge           node {$ $} (d)
             edge           node {$ $} (y)
        (u2) edge              node {$ $} (x)
             edge           node {$ $} (d)
             edge           node {$ $} (y)
        (x) edge              node {$ $} (w)
            edge            node {$ $} (z)
             edge           node {$ $} (d)
             edge           node {$ $} (y)
             (z) edge              node {$ $} (d)
            (w) edge              node {$ $} (y)
            (d) edge              node {$ $} (y);;
\end{tikzpicture}
\end{adjustbox}
\caption{Smoking DAG with stress}
\label{dag:nc_smoking2}
\end{figure}

\end{document}